\newcommand{\squeeze}{}
\newcommand{\norm}[1]{\left\| #1 \right\|}
\newcommand{\inp}[2]{\left\langle#1,#2\right\rangle} 
\newcommand{\parens}[1]{\left( #1 \right)}
\newcommand{\brac}[1]{\left\{ #1 \right\}}
\newcommand{\cB}{\mathcal{B}}
\newcommand{\cC}{\mathcal{C}}
\newcommand{\cD}{\mathcal{D}}
\newcommand{\cI}{\mathcal{I}}
\newcommand{\cO}{\mathcal{O}}
\newcommand{\cS}{\mathcal{S}}
\newcommand{\R}{\mathbb{R}}
\newcommand{\eqdef}{:=} 
\DeclareMathOperator{\sign}{sign}
\newcommand{\Exp}[1]{{\mathbb{E}}\left[#1\right]}
\newcommand{\ExpCond}[2]{{\mathbb{E}}\left[\left.#1\right\vert#2\right]}
\newcommand{\ExpSub}[2]{{\mathbb{E}}_{#1}\left[#2\right]}
\DeclareMathOperator{\tr}{tr}           
\DeclareMathOperator{\diag}{diag}       
\DeclareMathOperator*{\argmin}{arg\,min}
\DeclareMathOperator*{\argmax}{arg\,max}
\definecolor{myred}{RGB}{238,102,119}
\definecolor{myred2}{RGB}{215,60,50}
\definecolor{myblue}{RGB}{68,119,170}
\definecolor{yaleblue}{rgb}{0.06, 0.3, 0.57}
\definecolor{mydarkgreen}{RGB}{39,130,67}
\definecolor{mydarkred}{RGB}{192,47,25}
\definecolor{mydarkorange}{RGB}{39,130,67}
\definecolor{OliveGreen}{RGB}{107,142,35}
\definecolor{RedPrint}{RGB}{217, 95, 2}
\definecolor{CrimsonRed}{RGB}{220, 20, 60}
\newcommand{\green}{\color{mydarkgreen}}
\newcommand{\red}{\color{mydarkred}}
\newcommand{\cmark}{\green\ding{51}}%
\newcommand{\xmark}{\red\ding{55}}%
\newcommand{\algnamesmall}[1]{{\small \sf #1}}
\newcommand{\algnamefootnote}[1]{{\footnotesize \sf #1}}
\newcommand{\newalgsmall}{\algnamesmall{EF21-Muon}}
\newcommand{\newalgfootnote}{\algnamefootnote{EF21-Muon}}
\newcommand{\ef}{{\color{OliveGreen} \small \sf EF21}}
\newcommand{\efp}{{\color{RedPrint} \small \sf EF21-P}}
\newtheorem{assumption}{Assumption}
\newtheorem{lemma}{Lemma}
\newtheorem{theorem}{Theorem}
\newtheorem{corollary}{Corollary}
\theoremstyle{plain}
\newtheorem{remark}[theorem]{Remark}
\theoremstyle{definition}
\newtheorem{definition}[theorem]{Definition}
\definecolor{codegreen}{rgb}{0,0.6,0}
\definecolor{codegray}{rgb}{0.5,0.5,0.5}
\definecolor{codepurple}{rgb}{0.58,0,0.82}
\definecolor{backcolour}{rgb}{0.95,0.95,0.92}
\lstdefinestyle{mystyle}{
    backgroundcolor=\color{backcolour},   
    commentstyle=\color{codegreen},
    keywordstyle=\color{magenta},
    numberstyle=\tiny\color{codegray},
    stringstyle=\color{codepurple},
    basicstyle=\ttfamily\footnotesize,
    breakatwhitespace=false,         
    breaklines=true,                 
    captionpos=b,                    
    keepspaces=true,                 
    numbers=left,                    
    numbersep=5pt,                  
    showspaces=false,                
    showstringspaces=false,
    showtabs=false,                  
    tabsize=2
}
\newcommand{\lmo}[2]{{\rm LMO}_{#1}\left(#2\right)}
\title{Error Feedback for Muon and Friends}
\author{Kaja Gruntkowska \\
        KAUST\thanks{King Abdullah University of Science and Technology} \\
        Center of Excellence for Generative AI\\ 	
        Thuwal, Saudi Arabia \\
        \And
        Alexander Gaponov \\
        KAUST$^*$ \\
        Center of Excellence for Generative AI\\ 	
        Thuwal, Saudi Arabia \\
        \And
        Zhirayr Tovmasyan \\
        KAUST$^*$ \\
        Center of Excellence for Generative AI\\ 	
        Thuwal, Saudi Arabia \\
        \And
        Peter Richt\'{a}rik \\
        KAUST$^*$ \\	
        Center of Excellence for Generative AI\\ 
        Thuwal, Saudi Arabia \\
}
\date{}
\renewcommand*{\backref}[1]{}
\renewcommand*{\backrefalt}[4]{%
   \ifcase #1 %
     \footnotesize{(Not cited.)}%
   \or
     \footnotesize{(Cited on page~#2)}%
   \else
     \footnotesize{(Cited on page~#2)}%
\fi }
\begin{document}
\maketitle

\begin{abstract}
    Recent optimizers like \algnamesmall{Muon}, \algnamesmall{Scion}, and \algnamesmall{Gluon} have pushed the frontier of large-scale deep learning by exploiting layer-wise linear minimization oracles (LMOs) over non-Euclidean norm balls, capturing neural network structure in ways traditional algorithms cannot. Yet, no principled distributed framework exists for these methods, and communication bottlenecks remain unaddressed. The very few distributed variants are heuristic, with no convergence guarantees in sight.
    We introduce {\newalgsmall}, the first communication-efficient, non-Euclidean LMO-based optimizer with rigorous convergence guarantees. {\newalgfootnote} supports stochastic gradients, momentum, and bidirectional compression with error feedback--marking the first extension of error feedback beyond the Euclidean setting. It recovers \algnamefootnote{Muon}/\algnamefootnote{Scion}/\algnamesmall{Gluon} when compression is off and specific norms are chosen, providing the first efficient distributed implementation of this powerful family.
    Our theory covers non-Euclidean smooth and the more general $(L^0, L^1)$--smooth setting, matching best-known Euclidean rates and enabling faster convergence under suitable norm choices. We further extend the analysis to layer-wise (generalized) smoothness regimes, capturing the anisotropic structure of deep networks. Experiments on \texttt{NanoGPT} benchmarking {\newalgsmall} against uncompressed \algnamesmall{Muon}/\algnamesmall{Scion}/\algnamesmall{Gluon} demonstrate up to $7\times$ communication savings with no accuracy degradation.
\end{abstract}

\section{Introduction}\label{sec:intro}

Over the past decade, \algnamesmall{Adam} and its variants \citep{kingma2015adam, loshchilov2019decoupled} have established themselves as the cornerstone of optimization in deep learning.
Yet, emerging evidence suggests that this dominance may be giving way to a new class of optimizers better suited to the geometry and scale of modern deep networks.
Leading this shift are \algnamesmall{Muon} \citep{jordan2024muon} and methods inspired by it--\algnamesmall{Scion} \citep{pethick2025training} and \algnamesmall{Gluon} \citep{riabinin2025gluon}--which replace \algnamesmall{Adam}'s global moment estimation with layer-wise, geometry-aware updates via \emph{linear minimization oracles} (LMOs) over non-Euclidean norm balls.
Though relatively new, these optimizers are already gaining traction--supported by a growing body of theoretical insights, community adoption, and empirical success--particularly in training large language models (LLMs) \citep{liu2025muon, pethick2025training, shah2025practical, therien2025muloco, moonshotai2025}.

Despite this momentum, the development of these algorithms remains less mature than that of more established methods. Significant gaps persist--both in theory and practice--that must be addressed to fully realize their potential and make them truly competitive for the demands of ultra-scale learning.

\subsection{Scaling Up}

Modern machine learning (ML) thrives on scale. Today's state-of-the-art models rely on massive datasets and complex architectures, often requiring weeks or even months of training \citep{touvron2023llama, comanici2025gemini}. This scale imposes new demands on optimization methods, which must not only be effective at navigating complex nonconvex landscapes but also efficient in distributed, resource-constrained environments. Since training on a single machine is no longer feasible \citep{dean2012large, you2017large}, distributed computing has become the default.
Mathematically, this task is commonly modeled as the (generally non-convex) optimization problem
\begin{align}\label{eq:problem}
    \squeeze \min\limits_{X\in\cS} \left\{ f(X) \eqdef \frac{1}{n} \sum_{j=1}^n f_j(X)\right\}, \qquad
    f_j(X) \eqdef \ExpSub{\xi_j\sim\cD_j}{f_j(X; \xi_j)}
\end{align}
where $X\in\cS$ represents the model parameters, $n\geq 1$ is the number of workers/clients/machines, and~$f_j(X)$ is the loss of the model ($X$) on the data ($\cD_j$) stored on worker $j\in[n] \eqdef \{1,\ldots,n\}$. We consider the general heterogeneous setting, where the local objectives $f_j$ may differ arbitrarily across machines, reflecting real-world scenarios such as multi-datacenter pipelines or federated learning \citep{mcmahan2017communication, konevcny2016federated}.
Here, $\cS$ is a $d$-dimensional vector space equipped with an inner product $\inp{\cdot}{\cdot}: \cS \times \cS \to \R$ and the standard Euclidean norm $\norm{\cdot}_2$. Furthermore, we endow $\cS$ with an arbitrary norm $\norm{\cdot}: \cS \to \R_{\geq0}$. The corresponding dual norm $\norm{\cdot}_{\star}: \cS \to \R_{\geq0}$ is defined via $\norm{X}_{\star} \eqdef \sup_{\norm{Z}\leq 1} \inp{X}{Z}$.
The general framework introduced in this work gives rise to a variety of interesting algorithms arising from different norm choices. In matrix spaces, a particularly important class is the family of \emph{operator norms}, defined for any $A \in \R^{m \times n}$ by $\norm{A}_{\alpha \to \beta} \eqdef \sup _{\norm{Z}_\alpha=1} \norm{A Z}_\beta$, where  $\norm{\cdot}_{\alpha}$ and $\norm{\cdot}_{\beta}$ are some  norms on $\R^{n}$ and $\R^{m}$, respectively.

\subsection{Communication: the Cost of Scale}

In client-server architectures, coordination is centralized, with workers performing local computations and periodically synchronizing with the coordinator \citep{seide2014bit, alistarh2017qsgd, khirirat2018distributed, stich2018sparsified, mishchenko201999, karimireddy2019error, mishchenko2024distributed}. While this distributed design unlocks learning at unprecedented scales, it introduces a critical bottleneck: \emph{communication}. The massive size of modern models places a heavy burden on the channels used to synchronize updates across machines, as each step requires transmitting large $d$-dimensional vectors (e.g., parameters or gradients) over links that can be far slower than local computation \citep{kairouz2021advances}. Without communication-efficient strategies, this imbalance makes communication a dominant cost, ultimately limiting the efficiency and scalability of distributed optimization.

\subsection{Distributed Muon: Bridging the Gap}

The case for communication-efficient distributed training is clear, as is the promise of \algnamesmall{Muon} for deep learning. The natural question is: can we merge the two? Perhaps surprisingly, this intersection remains largely unexplored. Nonetheless, three recent efforts are worth noting. \citet{liu2025muon} propose a distributed variant of \algnamesmall{Muon} based on \algnamesmall{ZeRO-1} \citep{rajbhandari2020zero}. \citet{therien2025muloco} show that \algnamesmall{Muon} can be used instead of \algnamesmall{AdamW} as the inner optimizer in \algnamesmall{DiLoCo}. The introduced \algnamesmall{MuLoCo} framework is shown to consistently converge faster than the original \algnamesmall{DiLoCo} \citep{douillard2023diloco} when pre-training a 220M parameter transformer language model. In parallel, \citet{ahn2025dion} introduce \algnamesmall{Dion}, a \algnamesmall{Muon}-inspired algorithm compatible with 3D parallelism that employs low-rank approximations for efficient orthonormalized updates.

While promising empirically, these approaches \emph{lack any formal theoretical guarantees}. Our goal is to bridge this gap by developing a distributed optimizer leveraging non-Euclidean geometry that both \emph{works in practice} and comes with \emph{strong convergence guarantees}. Our central question is: 
\begin{center}
    \emph{Can we efficiently distribute \algnamesmall{Muon} without compromising its theoretical and practical benefits?}
\end{center}
In this work, we provide an affirmative answer through the following \textbf{contributions}:

\begin{table}[t]
    \caption{\small
    Summary of convergence guarantees. 
    \textbf{Algorithm}: Deterministic = {\sf EF21-Muon} with deterministic gradients (\Cref{alg:ef_gluon_det}), Stochastic = {\sf EF21-Muon} with stochastic gradients (Algorithms \ref{alg:ef_muon_m_dist} and \ref{alg:ef_layer_muon_m_dist}); \textbf{Smooth}: {\cmark} = (layer-wise) smooth setting (Assumptions \ref{as:smoothness} and \ref{as:layer_smoothness}), {\xmark} = (layer-wise) generalized smooth setting (Assumptions \ref{as:gen_smoothness} and \ref{as:layer_gen_smoothness}); \textbf{Rate} = rate of convergence to achieve $\min_{k=0,\ldots,K} \Exp{\norm{\nabla f(X^k)}_{\star}}  \leq \varepsilon$; \textbf{Eucl.} = recovers the state-of-the-art guarantees in the Euclidean case; \textbf{Non-comp.} = recovers the state-of-the-art uncompressed guarantees.
    }
    \label{tab:summary}
    \scriptsize
    \centering
    \begin{adjustbox}{width=\columnwidth,center}
    \begin{threeparttable}
    \begin{tabular}{c c c c c c c}
        \toprule
        \textbf{Algorithm} & \textbf{Result} & \textbf{Layer-wise} & \textbf{Smooth} & \textbf{Rate} & \textbf{Eucl.} & \textbf{Non-comp.} \\
        \midrule
        \multirow{4}{*}{Deterministic}
        & \Cref{thm:ef_gluon_det_sq_p1} & \xmark & \multirow{2}{*}{\cmark} & \multirow{4}{*}{$\cO\parens{\frac{1}{K^{1/2}}}$} & \cmark & \cmark \\
        & \Cref{thm:ef_gluon_det_sq} & \cmark & & & \cmark & \cmark \\
        & \Cref{thm:ef_gluon_no_p_l0l1_p1} & \xmark & \multirow{2}{*}{\xmark} & & \cmark & \cmark \\
        & \Cref{thm:ef_gluon_no_p_l0l1} & \cmark & &                                & \cmark & \cmark \\
        \midrule
        \multirow{4}{*}{Stochastic} 
        & \Cref{thm:ef_layer_gluon_m_p1} & \xmark & \multirow{2}{*}{\cmark} & \multirow{4}{*}{$\cO\parens{\frac{1}{K^{1/4}}}$} & \cmark & \cmark \\
        & \Cref{thm:ef_layer_gluon_m} & \cmark &  & & \cmark & \cmark \\
        & \Cref{thm:ef_gluon_m_no_primal_l0l1_2_p1}                                   & \xmark & \multirow{2}{*}{\xmark} & & \cmark & \cmark \\
        & \Cref{thm:ef_gluon_m_no_primal_l0l1_2} & \cmark & & & \cmark & \cmark \\
        \bottomrule
    \end{tabular}
    \end{threeparttable}
    \end{adjustbox}
\end{table}

\begin{enumerate}
    \item \textbf{A framework for compressed non-Euclidean distributed optimization.}
    We propose {\newalgsmall}, an LMO-based distributed optimizer based on bidirectionally compressed updates with error feedback \citep{seide2014bit, richtarik2021ef21}.
    It is communication-efficient (never sending uncompressed messages) and practical, supporting stochasticity and momentum. Parameterized by the norm in the LMO step, {\newalgsmall} recovers a broad class of compressed methods, and for spectral norms yields \emph{the first communication-efficient distributed variants of \algnamesmall{Muon} and \algnamesmall{Scion}}.
    
    \item \textbf{Practical deep learning variant.} The main body of this paper presents a simplified version of {\newalgsmall} that treats all parameters jointly (\Cref{alg:ef_muon_m_dist}), consistent with standard theoretical exposition. Our main algorithms, however, are designed for and analyzed in a \emph{layer-wise} manner (see Algorithms \ref{alg:ef_gluon_det} and \ref{alg:ef_layer_muon_m_dist} for the deterministic and stochastic gradient variants, respectively), explicitly modeling the hierarchical structure of neural networks. This allows us to better align with practice (methods like \algnamesmall{Muon} are applied \emph{per layer}) and to introduce \emph{anisotropic modeling assumptions}.
        
    \item \textbf{Strong convergence guarantees.} {\newalgsmall} comes with strong theoretical guarantees (see \Cref{tab:summary}) under two smoothness regimes: non-Euclidean smoothness (Theorems \ref{thm:ef_gluon_det_sq_p1} and \ref{thm:ef_layer_gluon_m_p1}) and non-Euclidean $(L^0, L^1)$--smoothness (Theorems~\ref{thm:ef_gluon_no_p_l0l1_p1} and~\ref{thm:ef_gluon_m_no_primal_l0l1_2_p1}). In both cases, our bounds match the state-of-the-art rates for {\ef} in the Euclidean setting, while allowing for potentially faster convergence under well-chosen norms. These results are subsumed by a more general analysis of the full layer-wise methods. In Theorems~\ref{thm:ef_gluon_det_sq} and~\ref{thm:ef_layer_gluon_m}, we prove convergence under \emph{layer-wise non-Euclidean smoothness} (\Cref{as:layer_smoothness}), and extend this to \emph{layer-wise non-Euclidean $(L^0, L^1)$--smoothness} (\Cref{as:layer_gen_smoothness}) in Theorems \ref{thm:ef_gluon_no_p_l0l1} and \ref{thm:ef_gluon_m_no_primal_l0l1_2}. This refined treatment allows us to better capture the geometry of deep networks, leading to tighter guarantees.
        
    \item \textbf{Non-Euclidean compressors.} {\newalgsmall} supports standard contractive compressors as well as a new class of non-Euclidean compressors (\Cref{sec:non_e_compressors}), which may be of independent interest.
    
    \item \textbf{Strong empirical performance.} Experiments training a \texttt{NanoGPT} model on the \texttt{FineWeb} dataset systematically compare {\newalgsmall} with multiple compressors against the uncompressed baseline (\algnamesmall{Muon}/\algnamesmall{Scion}/\algnamesmall{Gluon}) and show that compression reduces worker-to-server communication by up to $7\times$ with no loss in accuracy (Sections \ref{sec:experiments} and \ref{sec:appendix_exp_details}).
\end{enumerate}

\subsection{Outline}

\Cref{sec:background} introduces the necessary preliminaries and reviews \algnamesmall{Muon} \citep{jordan2024muon}, placing it within the broader class of LMO-based optimizers. This naturally raises the central question of our work: how can such methods be distributed efficiently? We highlight the main challenges and motivate compression and error feedback as practical solutions (with deeper motivation and an extensive literature review deferred to \Cref{sec:lit_rev}).
\Cref{sec:algo} presents our proposed method, {\newalgsmall}.  
In \Cref{sec:conv_results}, we present convergence results in both deterministic and stochastic settings, under two smoothness regimes: standard (non-Euclidean) and $(L^0, L^1)$--smoothness.
Finally, \Cref{sec:experiments} provides empirical validation, demonstrating the practical benefits of our approach.  

\begin{algorithm}[t]
    \caption{{\newalgsmall} (simplified)}
    \label{alg:ef_muon_m_dist}
    \begin{algorithmic}[1]
    \State \textbf{Parameters:} radii $t^k > 0$; momentum parameter $\beta\in(0,1]$; initial iterate $X^0 \in \cS$ {\color{gray}(stored on the server)}; initial iterate shift $W^0 = X^0$  {\color{gray}(stored on the server and the workers)}; initial gradient estimators $G_j^0$ {\color{gray}(stored on the workers)}; $G^0 = \frac{1}{n}\sum_{j = 1}^n G_j^0$ {\color{gray}(stored on the server)}; initial momentum $M_j^0$ {\color{gray}(stored on the workers)}; worker compressors $\cC_j^k$; server compressors $\cC^k$
    \For{$k = 0, 1, \dots, K - 1$}
        \State $X^{k+1} = \lmo{\cB(X^k,t^k)}{G^k}$ \hfill {\scriptsize \color{gray} Take  LMO-type step}
        \State {\color{RedPrint}$S^k = \cC^k(X^{k+1} - W^k)$} \hfill {\scriptsize \color{gray} Compress shifted model on the server}
        \State {\color{RedPrint}$W^{k+1} = W^k + S^k$} \hfill {\scriptsize \color{gray} Update model shift}
        \State {\color{RedPrint}Broadcast $S^k$ to all workers}
        \For{$j = 1, \dots, n$ {\bf in parallel}}
            \State $W^{k+1} = W^k + S^k$ \hfill  {\scriptsize \color{gray} Update model shift}
            \State $M_j^{k+1} = (1-\beta) M_j^k + \beta \nabla f_j(W^{k+1}; \xi_j^{k+1})$ \hfill  {\scriptsize \color{gray} Compute momentum}
            \State {\color{OliveGreen}$R_j^{k+1} = \cC_j^k(M_j^{k+1} - G_j^k)$} \hfill {\scriptsize \color{gray} Compress shifted gradient}
            \State {\color{OliveGreen}$G_j^{k+1} = G_j^k + R_j^{k+1}$}
            \State {\color{OliveGreen}Broadcast $R_j^{k+1}$ to the server}
        \EndFor
        \State $G^{k+1} = \frac{1}{n}\sum_{j = 1}^n G_j^{k+1} = G^k + \frac{1}{n}\sum_{j = 1}^n R_j^{k+1}$ \hfill {\scriptsize \color{gray} Compute gradient estimator}
    \EndFor
    \end{algorithmic}
\end{algorithm}

\section{Background}\label{sec:background}

We frame problem \eqref{eq:problem} in an abstract vector space $\cS$. In several of our results, the specific structure of $\cS$ does not matter. One may simply flatten the model parameters into a $d\times 1$ vector and view $\cS$ as $\R^d$. However, in the context of deep learning, it is often useful to \emph{explicitly model the layer-wise structure} (see \Cref{sec:layer_setup}). Then, $X\in \cS$ represents the collection of matrices $X_i \in \cS_i \eqdef \R^{m_i \times n_i}$ of trainable parameters across all layers $i \in [p]$ of the network with a total number $d\eqdef \sum_{i=1}^p m_i n_i $ of parameters. Accordingly, $\cS$ is the $d$-dimensional product space $\cS \eqdef \bigotimes_{i = 1}^{p} \cS_i \equiv \cS_1 \otimes \cdots \otimes \cS_p$, where each $\cS_i$ is associated with the trace inner product $\inp{X_i}{Y_i}_{(i)} \eqdef \operatorname{tr}(X_i^{\top} Y_i)$ for $X_i,Y_i \in \cS_i$, and a norm $\norm{\cdot}_{(i)}$ (not necessarily induced by this inner product). We write $X = [X_1, \ldots, X_p]$.

\paragraph{What is Muon?}

\algnamesmall{Muon}, introduced by \citet{jordan2024muon}, is an optimizer for the hidden layers of neural networks.\footnote{The first and last layers are typically optimized using other optimizers, such as \algnamesmall{AdamW} \citep{loshchilov2019decoupled}--see \Cref{sec:muon_scion_gluon} for details.} For clarity of exposition, let us assume that the parameters $X$ represent a single layer of the network (a full layer-wise description is provided in \Cref{sec:muon_scion_gluon}). In this setting, \algnamesmall{Muon} updates $X^{k+1} =  X^k - t^k U^k (V^k)^{\top}$, where $t^k>0$ and the matrices $U^k, V^k$ are derived from the SVD of the momentum matrix $G^k = U^k \Sigma^k (V^k)^{\top}$.
This update rule is, in fact, a special case of a more general one, based on the norm-constrained \emph{linear minimization oracle} (LMO)
\begin{align}\label{eq:lmo_muon}
    \squeeze X^{k+1}
    = X^k + t^k \lmo{\cB(0,1)}{G^k},
\end{align}
where $\cB(X,t) \eqdef \{Z\in \cS : \norm{Z-X} \leq t\}$ and $\lmo{\cB(X,t)}{G} \eqdef \argmin_{Z \in \cB(X,t)} \inp{G}{Z}$.
\algnamesmall{Muon} corresponds to the case where $\norm{\cdot} = \norm{\cdot}_{2\to2}$ is the spectral (operator) norm, in which case $\lmo{\cB(0, 1)}{G^k} = - U^k (V^k)^T$. Consequently, its recent analyses \citep{pethick2025training, kovalev2025understanding, riabinin2025gluon} have shifted focus to the general form \eqref{eq:lmo_muon}.
Among them, \citet{pethick2025training} introduce \algnamesmall{Scion}, which extends the LMO update across layers, and \citet{riabinin2025gluon} develop \algnamesmall{Gluon}--a general LMO-based framework that subsumes \algnamesmall{Muon} and \algnamesmall{Scion} as special cases while providing stronger convergence guarantees.
We adopt this unifying viewpoint by treating all three algorithms as instances of \algnamesmall{Gluon}, which we use as the umbrella term for the entire class.

\paragraph{The challenges of distributing the LMO.}

Distributing \eqref{eq:lmo_muon} is far from trivial, as the limited literature suggests. Even in the relatively well-structured special case of spectral norms, \algnamesmall{Muon} relies on the Newton–Schulz iteration \citep{kovarik1970some, bjorck1971iterative}, a procedure requiring dense matrix operations that are incompatible with standard parameter-sharding schemes used in LLM training \citep{ahn2025dion}.
To illustrate the difficulty, consider a deterministic version of~\eqref{eq:lmo_muon}, where $G^k$ is replaced by the exact gradient $\nabla f(X^k)$. Applied to problem~\eqref{eq:problem}, the iteration becomes
\begin{align*}
    \squeeze X^{k+1} = X^k + \lmo{\cB(0,t^k)}{\frac{1}{n} \sum\limits_{j=1}^n \nabla f_j(X^k)}.
\end{align*}
The most basic approach to distributing this update consists of the following four main steps:

\begin{enumerate}
    \item Each worker computes its local gradient $\nabla f_j(X^k)$ at iteration $k$.
    \item {\color{OliveGreen}\bf w2s:} The workers send their gradients $\nabla f_j(X^k)$ to the central server.
    \item The server averages these gradients and computes the LMO update.
    \item {\color{RedPrint}\bf s2w:} The server sends $X^{k+1}$ (or $\lmo{\cB(0,t^k)}{\cdot}$) back to the workers.
\end{enumerate}

This scheme involves two potentially costly phases: (1) workers-to-server (= {\color{OliveGreen} w2s}) and (2) server-to-workers (= {\color{RedPrint} s2w}) communication. As each transmitted object resides in $\cS$, every iteration involves exchanging dense, $d$-dimensional data, imposing substantial communication overhead that can quickly overwhelm available resources.
This is where compression techniques come into play.

\paragraph{Compression.}

Compression is one of the two main strategies for improving communication efficiency in distributed optimization (the other being \emph{local training} \citep{povey2014parallel, moritz2015sparknet, mcmahan2017communication}), extensively studied in the Euclidean regime \citep{alistarh2017qsgd, horvoth2022natural, richtarik2021ef21}. It is typically achieved by applying an operator $\cC$ mapping the original dense message $X$ to a more compact representation $\cC(X)$. We work with general \emph{biased} (or \emph{contractive}) compressors.

\begin{definition}[Contractive compressor]\label{def:contractive_compressor}
    A (possibly randomized) mapping $\cC:\cS \to \cS$ is a \textit{contractive compression operator} with parameter $\alpha \in (0, 1]$ if
    \begin{align}\label{eq:contractive_compressor}
        \squeeze \Exp{\norm{\cC(X) - X}^2} \leq (1 - \alpha) \norm{X}^2 \qquad \forall X \in \cS.
    \end{align}
\end{definition}

\begin{remark}\label{rem:compresor_def}
    The classical definition of a contractive compressor is based on the Euclidean norm, i.e., $\norm{\cdot} = \norm{\cdot}_2$ in~\eqref{eq:contractive_compressor}. A canonical example in this setting is the Top$K$ compressor, which retains the $K$ largest-magnitude entries of the input vector.
    In \eqref{eq:contractive_compressor}, we generalize this to arbitrary norms for greater flexibility. \Cref{sec:non_e_compressors} provides examples of such compressors (to our knowledge, not studied in this context before). Depending on the compression objective, we apply~\eqref{eq:contractive_compressor} with respect to $\norm{\cdot}$, $\norm{\cdot}_\star$, or $\norm{\cdot}_2$, denoting the respective families of compressors as $\mathbb{B}(\alpha)$, $\mathbb{B}_\star(\alpha)$, and $\mathbb{B}_2(\alpha)$.
\end{remark}

\paragraph{Error Feedback.}\label{sec:ef}

To address the communication bottleneck, a natural approach is to apply biased compressors to transmitted gradients. However, this ``enhancement'' can result in exponential divergence, even in the simple case of minimizing the average of three strongly convex quadratics \citep[Example 1]{beznosikov2020biased}.
A remedy, \emph{Error Feedback} (\algnamesmall{EF}), was introduced by \citet{seide2014bit} and for years remained a heuristic with limited theory. This changed with \citet{richtarik2021ef21}, who proposed {\ef}, the first method to achieve the desirable $\cO(\nicefrac{1}{\sqrt{K}})$ rate for expected gradient norms under standard assumptions. Since then, {\ef} has inspired many extensions, including {\efp} \citep{gruntkowska2023ef21}, a primal variant targeting {\color{RedPrint} s2w} communication.

For a deeper dive into compression and \algnamesmall{EF}, we refer the reader to Appendices \ref{sec:compression_rev} and \ref{sec:ef_rev}.

\section{Non-Euclidean Distributed Training}\label{sec:algo}

Marrying geometry-aware updates of \algnamesmall{Gluon} with the communication efficiency enabled by compression promises a potentially high-yield strategy. Yet, from a theoretical standpoint, their compatibility is far from obvious--nothing a priori ensures that these two paradigms can be meaningfully unified.

Most importantly, it is unclear what kind of descent lemma to use. The analysis of {\ef} relies on a recursion involving squared Euclidean norms $\norm{\cdot}_2^2$, while LMO-based methods naturally yield descent bounds in terms of first powers of norms $\norm{\cdot}$--a structure common to all existing analyses \citep{kovalev2025understanding, pethick2025training, riabinin2025gluon}. We initially adopted the latter approach, but the resulting guarantees failed to recover those of {\ef} in the Euclidean case.
The pivot point came from reformulating the update via \emph{sharp operators} \citep{nesterov2012efficiency, kelner2014almost}. For any $G\in\cS$, the sharp operator is defined as $G^{\sharp} \eqdef \argmax_{X \in \cS} \{\inp{G}{X} - \frac{1}{2} \norm{X}^2\}$, which is connected to the LMO via the identity $\norm{G}_{\star} \lmo{\cB(0,1)}{G} = - G^{\sharp}$. Hence, \eqref{eq:lmo_muon} is equivalent to
\begin{eqnarray}\label{eq:muon_sharp}
    \squeeze X^{k+1} = X^k + t^k \lmo{\cB(0,1)}{G^k} = X^k - \frac{t^k}{\norm{G^k}_{\star}} \parens{G^k}^{\sharp},
\end{eqnarray}
i.e., a normalized steepest descent step with stepsize $\gamma^k \eqdef \nicefrac{t^k}{\norm{G^k}_{\star}}$. We alternate between the sharp operator and LMO formulations, depending on the assumptions at play. Theorems \ref{thm:ef_gluon_det_sq_p1} and \ref{thm:ef_layer_gluon_m_p1} use the former; Theorems \ref{thm:ef_gluon_no_p_l0l1_p1} and \ref{thm:ef_gluon_m_no_primal_l0l1_2_p1}, the latter.
We explore this and other reformulations in \Cref{sec:reformulations}.

\paragraph{The algorithm.}

Working with compression in non-Euclidean geometry presents several challenges. In addition to the lack of a standard descent lemma, further complications arise from interactions between gradient stochasticity and compression and unknown variance behavior under biased compression.
Yet, we develop the \emph{first communication-efficient variant of \algnamesmall{Gluon}} (and by extension, its special cases \algnamesmall{Muon} and \algnamesmall{Scion}), called {\newalgsmall}, that combines biased compression, gradient stochasticity, and momentum, all while enjoying \emph{strong theoretical guarantees}. 
A simplified version of the algorithm, applied globally to $X$, is shown in \Cref{alg:ef_muon_m_dist}. A more general, deep learning-oriented layer-wise variant operating in the product space $\cS \eqdef \bigotimes_{i = 1}^{p} \R^{m_i \times n_i}$ is given in \Cref{alg:ef_layer_muon_m_dist}. For clarity, we focus on the simplified variant throughout the main text; all theoretical results presented here are special cases of the general layer-wise guarantees provided in \Cref{sec:conv_proofs}.

While the pseudocode is largely self-explanatory (for a more detailed description, see \Cref{sec:layer_gluon}), we highlight the most important components:

\begin{itemize}
    \item \textbf{Role of Compression.}
    Compression is key for reducing communication overhead in distributed training. \Cref{alg:ef_muon_m_dist} adheres to this principle by transmitting the compressed messages {\color{RedPrint}$S^k$} and {\color{OliveGreen}$R_j^k$} only, never the full dense updates. When compression is disabled (i.e., $\cC_j^k$, $\cC^k$ are identity mappings) and in the single-node setting ($n=1$), {\newalgsmall} reduces exactly to \algnamesmall{Gluon}, which in turn recovers \algnamesmall{Muon} and \algnamesmall{Scion} (all of which were originally designed for non-distributed settings).

    \item \textbf{Role of Error Feedback.}
    Even in the Euclidean setup, biased compression can break distributed \algnamesmall{GD} unless some form of error feedback is used (see \Cref{sec:ef}). To remedy this, we adopt a modern strategy inspired by {\ef} \citep{richtarik2021ef21} for the {\color{OliveGreen}w2s} direction. Its role is to stabilize training and prevent divergence. To reduce {\color{RedPrint}s2w} communication overhead, we further incorporate the primal compression mechanism of {\efp} \citep{gruntkowska2023ef21}.
    
    \item \textbf{Role of Gradient Stochasticity.}
    In large-scale ML, computing full gradients $\nabla f_j(x)$ is typically computationally infeasible. In practice, they are replaced with stochastic estimates, which drastically reduces per-step computational cost and makes the method scalable to practical workloads.
    
    \item \textbf{Role of Momentum.}
    Stochastic gradients inevitably introduce noise into the optimization process. Without further stabilization, this leads to convergence to a neighborhood of the solution only. Momentum mitigates this issue, reducing the variance in the updates and accelerating convergence.
\end{itemize}

\section{Convergence Results}\label{sec:conv_results}

To support our convergence analysis, we adopt standard lower-boundedness assumptions on the global objective 
$f$, and, in certain cases, also on the local functions $f_j$.
\begin{assumption}\label{as:lower_bound}
   There exist $f^{\star} \in \R$ such that $f(X) \geq f^{\star}$ for all $X \in \cS$.
\end{assumption}
\begin{assumption}\label{as:worker_lower_bound}
   For all $j\in[n]$, there exist $f_j^{\star} \in \R$ such that $f_j(X) \geq f_j^{\star}$ for all $X \in \cS$.
\end{assumption}
We study two smoothness regimes. The first, standard $L$--smoothness generalized to arbitrary norms (used in Theorems \ref{thm:ef_gluon_det_sq_p1} and \ref{thm:ef_layer_gluon_m_p1}), is the default in virtually all convergence results for \algnamesmall{Muon} and \algnamesmall{Scion} \citep{kovalev2025understanding, pethick2025training, li2025note}.
\begin{assumption}\label{as:smoothness}
    The function $f$ is $L$--smooth, i.e.,
    \begin{align*}
        \norm{\nabla f(X) - \nabla f(Y)}_{\star} \leq L \norm{X - Y}
    \end{align*}
    for all $X, Y \in \cS$.
    Moreover, the functions $f_j$ are $L_j$--smooth for all $j \in [n]$.
    We define\footnote{In theoretical results, $\tilde{L}$ could potentially be improved to the arithmetic mean--see \citet{richtarik2024error}.} $\tilde{L}^2 \eqdef \frac{1}{n} \sum_{j=1}^n L_j^2$.
\end{assumption}
To our knowledge, the only exception departing from this standard setting is the recent work on \algnamesmall{Gluon} \citep{riabinin2025gluon}. The authors argue that layer-wise optimizers are designed specifically for deep learning, where the classical smoothness assumption is known to fail \citep{zhang2020why}. Instead, they build upon the $(L^0, L^1)$--smoothness model introduced by \citet{zhang2020why}\footnote{The original $(L^0, L^1)$–smoothness assumption of \citet{zhang2020why} was defined for twice-differentiable functions via Hessian norms. This notion and our \Cref{as:gen_smoothness} are closely related--see \citet{chen2023generalized}.} (\Cref{as:gen_smoothness}), a strictly weaker alternative motivated by empirical observations from NLP training dynamics. \citet{riabinin2025gluon} introduce a \emph{layer-wise} variant (\Cref{as:layer_gen_smoothness}), arguing that heterogeneity across network layers requires smoothness constants to vary accordingly.
Consistent with this line of work, we provide convergence guarantees under the \emph{layer-wise $(L^0, L^1)$--smoothness} assumption (Theorems \ref{thm:ef_gluon_no_p_l0l1} and \ref{thm:ef_gluon_m_no_primal_l0l1_2}). For clarity, the main text treats the case of a generic vector space $\cS$, without delving into the product space formulation (see \Cref{sec:layer_setup}), in which case the assumption reduces to a non-Euclidean variant of asymmetric $(L^0, L^1)$--smoothness from \citet{chen2023generalized}.

\begin{assumption}\label{as:gen_smoothness}
    The function $f: \cS \mapsto \R$ is $(L^0, L^1)$--smooth, i.e., there exist $L^0,L^1 >0$ such that
    \begin{eqnarray*}
        \norm{\nabla f(X) - \nabla f(Y)}_{\star} \leq \parens{L^0 + L^1 \norm{\nabla f(X)}_{\star}} \norm{X - Y}
    \end{eqnarray*}
    for all $X, Y \in \cS$.
    Moreover, the functions $f_j$, $j \in [n]$, are $(L_j^0, L_j^1)$--smooth.
    We define $L^1_{\max} \eqdef \max_{j\in[n]} L^1_j$ and $\bar{L}^0 \eqdef \frac{1}{n} \sum_{j=1}^n L^0_j$.
\end{assumption}
\Cref{as:gen_smoothness} is strictly more general than \Cref{as:smoothness}, as it allows the smoothness constant to grow with the norm of the gradient, a key property observed in deep learning \citep{zhang2020why}.

\subsection{Deterministic Setting}\label{sec:conv_set}

As a warm-up, we first present the convergence guarantees of  \Cref{alg:ef_gluon_det}--a deterministic counterpart of \Cref{alg:ef_muon_m_dist} using deterministic gradients without momentum (though stochasticity may still arise from compression). The first theorem addresses the smooth setting.

\begin{theorem}\label{thm:ef_gluon_det_sq_p1}
    Let Assumptions \ref{as:lower_bound} and \ref{as:smoothness} hold. Let $\{X^k\}_{k=0}^{K-1}$, $K \geq 1$, be the iterates of \Cref{alg:ef_gluon_det} (with $p=1$) initialized with $X^0 = W^0$, $G_j^0=\nabla f_j(X^0)$, $j\in[n]$, and run with $\cC^k \in \mathbb{B}(\alpha_P)$, $\cC_j^k \in \mathbb{B}_\star(\alpha_D)$ and $0 < \gamma^k \equiv \gamma \leq \parens{2 L + \nicefrac{4}{\alpha_D} \sqrt{12 + \nicefrac{66}{\alpha_P^2}} \tilde{L}}^{-1}$. Then
    \begin{eqnarray*}
        \squeeze \frac{1}{K} \sum\limits_{k=0}^{K-1} \Exp{\norm{\nabla f(X^k)}^2_{\star}}
        \leq \frac{4 \parens{f(X^0) - f^{\star}}}{K \gamma}.
    \end{eqnarray*}
\end{theorem}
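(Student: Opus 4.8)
The plan is to run an {\ef}-style potential-function argument, carried out entirely in the non-Euclidean geometry of $\norm{\cdot}$ and its dual $\norm{\cdot}_\star$. With $p=1$, no momentum, and exact gradients, \Cref{alg:ef_gluon_det} specializes --- via the sharp-operator rewriting \eqref{eq:muon_sharp} with $t^k=\gamma\norm{G^k}_\star$ --- to $X^{k+1}=X^k-\gamma\,(G^k)^{\sharp}$, $W^{k+1}=W^k+\cC^k(X^{k+1}-W^k)$, and $G_j^{k+1}=G_j^k+\cC_j^k(\nabla f_j(W^{k+1})-G_j^k)$ (with $G^k=\tfrac1n\sum_j G_j^k$). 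Besides $f(X^k)-f^\star$ I would track two error sequences: the \emph{primal displacement error} $\cW^k\eqdef\norm{W^k-X^k}^2$ (the {\efp} piece --- the workers differentiate at the compressed shift $W^k$, not at $X^k$) and the \emph{dual gradient-tracking error} $\cE^k\eqdef\tfrac1n\sum_{j=1}^n\norm{G_j^k-\nabla f_j(W^k)}_\star^2$ (the {\ef} piece). The prescribed initialization gives $\cW^0=\cE^0=0$.

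First I would prove a descent inequality. Using $L$--smoothness of $f$ (\Cref{as:smoothness}), the sharp-operator identities $\inp{G}{G^{\sharp}}=\norm{G}_\star^2=\norm{G^{\sharp}}^2$ (which follow from $\norm{G}_\star\,\lmo{\cB(0,1)}{G}=-G^{\sharp}$), the pairing bound $\inp{\nabla f(X^k)-G^k}{(G^k)^{\sharp}}\le\norm{\nabla f(X^k)-G^k}_\star\norm{G^k}_\star$, and (asymmetric) Young's inequalities, one obtains, for $\gamma\le\tfrac1{2L}$,
\begin{equation*}
    f(X^{k+1})\le f(X^k)-\tfrac{\gamma}{4}\norm{\nabla f(X^k)}_\star^2+\tfrac{3\gamma}{2}\,\norm{G^k-\nabla f(X^k)}_\star^2 .
\end{equation*}
I would then bound the estimation error by the tracked quantities: from $\norm{G^k-\nabla f(X^k)}_\star\le\tfrac1n\sum_j\norm{G_j^k-\nabla f_j(W^k)}_\star+\tfrac1n\sum_j\norm{\nabla f_j(W^k)-\nabla f_j(X^k)}_\star$, Jensen's inequality, the $L_j$--smoothness of $f_j$, and Cauchy--Schwarz ($\tfrac1n\sum_j L_j\le\tilde L$), one gets $\norm{G^k-\nabla f(X^k)}_\star^2\le 2\cE^k+2\tilde L^2\cW^k$, and hence also $\norm{G^k}_\star^2\le 2\norm{\nabla f(X^k)}_\star^2+4\cE^k+4\tilde L^2\cW^k$.

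Next I would derive the two error recursions (all expectations first conditional on the history, then total). For the primal error, $W^{k+1}-X^{k+1}=\cC^k(X^{k+1}-W^k)-(X^{k+1}-W^k)$, so $\cC^k\in\mathbb{B}(\alpha_P)$ (contractive in $\norm{\cdot}$) together with $X^{k+1}-W^k=(X^k-W^k)-\gamma(G^k)^{\sharp}$ and Young's inequality with parameter $\alpha_P/2$ gives $\Exp{\cW^{k+1}}\le(1-\tfrac{\alpha_P}{2})\Exp{\cW^k}+\tfrac{3\gamma^2}{\alpha_P}\Exp{\norm{G^k}_\star^2}$. For the dual error, $G_j^{k+1}-\nabla f_j(W^{k+1})=\cC_j^k(\nabla f_j(W^{k+1})-G_j^k)-(\nabla f_j(W^{k+1})-G_j^k)$, so $\cC_j^k\in\mathbb{B}_\star(\alpha_D)$ (contractive in $\norm{\cdot}_\star$), the bound $\norm{\nabla f_j(W^{k+1})-\nabla f_j(W^k)}_\star\le L_j\norm{W^{k+1}-W^k}$, Young's with parameter $\alpha_D/2$, and $\Exp{\norm{W^{k+1}-W^k}^2}\le 4\,\Exp{\norm{X^{k+1}-W^k}^2}\le 8\gamma^2\Exp{\norm{G^k}_\star^2}+8\Exp{\cW^k}$ give $\Exp{\cE^{k+1}}\le(1-\tfrac{\alpha_D}{2})\Exp{\cE^k}+\tfrac{3\tilde L^2}{\alpha_D}\big(8\gamma^2\Exp{\norm{G^k}_\star^2}+8\Exp{\cW^k}\big)$. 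Note that the two recursions are coupled --- the dual one feeds on $\cW^k$ because the workers differentiate at $W^k$, and $\norm{G^k}_\star^2$ feeds both --- so the potential must weight them jointly.

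Finally I would form the potential $\Phi^k\eqdef\Exp{f(X^k)}-f^\star+A\,\Exp{\cW^k}+B\,\Exp{\cE^k}$ with constants $A,B>0$ of order $B\asymp\gamma/\alpha_D$ and $A\asymp\tilde L^2\gamma/(\alpha_P\alpha_D^2)$. Adding the descent inequality to the two recursions and eliminating $\norm{G^k}_\star^2$ via the bound above, these choices of $A,B$ make the coefficients of $\Exp{\cW^k}$ and $\Exp{\cE^k}$ nonpositive, while the stepsize constraint $\gamma\le(2L+\tfrac4{\alpha_D}\sqrt{12+\tfrac{66}{\alpha_P^2}}\,\tilde L)^{-1}$ is exactly what keeps the leftover cross-terms (in particular the feedback of $\norm{G^k}_\star^2$ into the recursions) small enough to leave the one-step decrease $\Phi^{k+1}\le\Phi^k-\tfrac{\gamma}{4}\Exp{\norm{\nabla f(X^k)}_\star^2}$; the constants $12,66$ drop out of optimizing the various Young's parameters. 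Telescoping over $k=0,\dots,K-1$ and using $\Phi^0=f(X^0)-f^\star$ (by $\cW^0=\cE^0=0$) together with $\Phi^K\ge0$ (\Cref{as:lower_bound}) yields the claim after dividing by $K\gamma/4$. The step I expect to be the main obstacle is making these pieces interlock: the {\ef} machinery wants squared-norm contractions, whereas the LMO/sharp step natively produces descent in terms of $\norm{G^k}_\star$, and the server and worker compressors are contractive in the \emph{different} norms $\norm{\cdot}$ and $\norm{\cdot}_\star$; one must route the descent so that every error term it spawns lands in whichever of the two (now mutually coupled) recursions carries the matching norm, and then tune $A,B$ and the Young's parameters so the constants collapse to exactly the stated stepsize and the factor $4$. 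This is the difficulty that, by the authors' account, motivated the pivot from the first-power descent of earlier \algnamesmall{Muon}/\algnamesmall{Scion} analyses to the sharp-operator reformulation.
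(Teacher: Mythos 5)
Your plan is, in its architecture, the same as the paper's: the paper proves this theorem as the $p=1$ case of \Cref{thm:ef_gluon_det_sq}, via an {\ef}/{\efp}-style Lyapunov function $f(X^k)-f^\star+A\,\tfrac1n\sum_j\norm{\nabla f_j(X^k)-G_j^k}_\star^2+B\,\norm{X^k-W^k}^2$ (with $A=\tfrac{6\gamma}{\alpha_D}$, $B=\tfrac{66\gamma}{\alpha_D^2}\parens{\tfrac{2}{\alpha_P}-1}\tilde L^2$), a sharp-operator descent lemma (\Cref{lemma:descent1}), and exactly the two contraction recursions you describe, with the compressors contractive in $\norm{\cdot}$ and $\norm{\cdot}_\star$ respectively. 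Your two deviations are: (i) you anchor the dual error at $W^k$, i.e.\ $\cE^k=\tfrac1n\sum_j\norm{G_j^k-\nabla f_j(W^k)}_\star^2$, while the paper anchors it at $X^k$; this is legitimate and even slightly cleaner (the compressor acts exactly on $\nabla f_j(W^{k+1})-G_j^k$, so contraction is immediate), at the price of the extra bridge $\norm{G^k-\nabla f(X^k)}_\star^2\le 2\cE^k+2\tilde L^2\cW^k$ in the descent step; and (ii) you drop the $\norm{G^k}_\star^2$ term from the descent lemma (valid since $\gamma\le\tfrac1{2L}$) and later \emph{eliminate} the $\gamma^2\norm{G^k}_\star^2$ terms generated by the recursions via $\norm{G^k}_\star^2\le 2\norm{\nabla f(X^k)}_\star^2+4\cE^k+4\tilde L^2\cW^k$.

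Deviation (ii) is where the plan, as written, cannot produce the stated constants. Your descent inequality already has gradient coefficient $-\tfrac\gamma4$ (it is \Cref{lemma:descent1} with $s=1$, $r=\tfrac12$ and the nonpositive term $-\parens{\tfrac1{4\gamma}-\tfrac L2}\gamma^2\norm{G^k}_\star^2$ discarded). Eliminating the recursions' $\gamma^2\norm{G^k}_\star^2$ terms then adds back a strictly positive multiple $2c_G\,\norm{\nabla f(X^k)}_\star^2$ with $c_G$ of order $A\gamma^2/\alpha_P+B\tilde L^2\gamma^2/\alpha_D>0$ (plus extra $\cE^k,\cW^k$ contributions that further inflate $A,B$), so the per-step decrease is strictly weaker than $-\tfrac\gamma4\Exp{\norm{\nabla f(X^k)}_\star^2}$ no matter how $A$, $B$ and the Young parameters are tuned; recovering a $-\tfrac\gamma4$ coefficient would force a different $s$ in the descent lemma, which changes the error coefficient and hence all downstream constants. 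So the claim that ``the constants $12$, $66$ drop out'' is not substantiated by your route: in the paper they come from \emph{retaining} $-\parens{\tfrac1{4\gamma}-\tfrac L2}\gamma^2\norm{G^k}_\star^2$ and using it to absorb the $\gamma^2\norm{G^k}_\star^2$ terms $\tfrac{8A}{\alpha_D}\tilde L^2$ and $\tfrac{4B}{\alpha_P(2-\alpha_P)}$; nonnegativity of the absorbed coefficient is precisely the stated stepsize condition, since $2\sqrt{\tfrac{48\tilde L^2}{\alpha_D^2}+\tfrac{264\tilde L^2}{\alpha_P^2\alpha_D^2}}=\tfrac4{\alpha_D}\sqrt{12+\tfrac{66}{\alpha_P^2}}\,\tilde L$. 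As it stands, your argument proves a bound of the same form, $\cO\parens{\tfrac{f(X^0)-f^\star}{K\gamma}}$ under $\gamma\lesssim\min\brac{\tfrac1L,\tfrac{\alpha_P\alpha_D}{\tilde L}}$, but with a worse numerical factor and/or a smaller admissible stepsize than the theorem asserts; to hit the exact statement, keep the negative $\norm{G^k}_\star^2$ term and follow the absorption route.
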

\Cref{thm:ef_gluon_det_sq_p1} is a special case of the general layer-wise result in \Cref{thm:ef_gluon_det_sq}. To our knowledge, no prior work analyzes comparable compressed methods under general non-Euclidean geometry. In the Euclidean case, our guarantees recover known results (up to constants): without primal compression ($\alpha_P = 1$), they match the rate of \citet[Theorem 1]{richtarik2021ef21}; with primal compression, they align with the rate of \algnamesmall{EF21-BC} from \citet[Theorem 21]{fatkhullin2021ef21} (though {\newalgsmall} and \algnamesmall{EF21-BC} differ algorithmically, and the former does not reduce to the latter in the Euclidean case).

In the generalized smooth setup, we establish convergence without primal compression. However, as we argue in \Cref{sec:lmo_compressors}, the {\color{RedPrint}s2w} communication \emph{can} still be made efficient through appropriate norm selection. Indeed, we find that LMOs under certain norms naturally induce \emph{compression-like behavior}.

\begin{theorem}\label{thm:ef_gluon_no_p_l0l1_p1}
    Let Assumptions \ref{as:lower_bound}, \ref{as:worker_lower_bound} and \ref{as:gen_smoothness} hold and let $\{X^k\}_{k=0}^{K-1}$, $K \geq 1$, be the iterates of \Cref{alg:ef_gluon_det} (with $p=1$) initialized with $G_j^0=\nabla f_j(X^0)$, $j\in[n]$, and run with $\cC^k \equiv \cI$ (the identity compressor), $\cC_j^k \in \mathbb{B}_\star(\alpha_D)$, and $t^k \equiv \frac{\eta}{\sqrt{K+1}}$
    for some $\eta > 0$. Then,
    \begin{align*}
        \squeeze \min\limits_{k=0,\ldots,K} \Exp{\norm{\nabla f(X^k)}_{\star}} 
        \leq\, \frac{\exp\parens{4 \eta^2 C L_{\max}^1}}{\eta \sqrt{K+1}} \delta^0
        + \frac{\eta \parens{4 C \frac{1}{n} \sum\limits_{j=1}^n L_j^1 \parens{f^{\star} - f_j^{\star}} + C \frac{1}{n} \sum\limits_{j=1}^n \frac{L_j^0}{L_j^1} + D}}{\sqrt{K+1}},
    \end{align*}
    where $\delta^0 \eqdef f(X^0) - f^{\star}$, $C \eqdef \frac{L^1}{2} + \frac{2 \sqrt{1-\alpha_D} L^1_{\max}}{1 - \sqrt{1-\alpha_D}}$ and $D \eqdef \frac{L^0}{2} + \frac{2 \sqrt{1-\alpha_D} \bar{L}^0}{1 - \sqrt{1-\alpha_D}}$.
\end{theorem}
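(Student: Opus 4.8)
The plan is to run the standard single-level descent argument for LMO-type steps under $(L^0,L^1)$--smoothness, but with the gradient estimator $G^k$ in place of $\nabla f(X^k)$, and to control the estimator error $\norm{G^k - \nabla f(X^k)}_\star$ via an {\ef}-style contraction recursion driven by the dual-norm compressors $\cC_j^k \in \mathbb{B}_\star(\alpha_D)$. First I would establish the descent inequality. Since $\cC^k \equiv \cI$, we have $X^{k+1} = \lmo{\cB(X^k,t^k)}{G^k} = X^k + t^k\lmo{\cB(0,1)}{G^k}$, so $\norm{X^{k+1}-X^k} = t^k$ and $\inp{G^k}{X^{k+1}-X^k} = -t^k\norm{G^k}_\star$. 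Using the $(L^0,L^1)$--smooth descent lemma (the generalized-smoothness analogue of the usual quadratic upper bound; this is where Assumption \ref{as:gen_smoothness} enters, and it is legitimate because $t^k$ is small relative to $1/L^1$), one gets
\begin{align*}
    f(X^{k+1}) \leq f(X^k) - t^k\norm{\nabla f(X^k)}_\star + 2t^k\norm{G^k - \nabla f(X^k)}_\star + \tfrac{(t^k)^2}{2}\parens{L^0 + L^1\norm{\nabla f(X^k)}_\star},
\end{align*}
after bounding $\inp{G^k - \nabla f(X^k)}{X^{k+1}-X^k} \leq t^k\norm{G^k-\nabla f(X^k)}_\star$ and $\norm{G^k}_\star \leq \norm{\nabla f(X^k)}_\star + \norm{G^k-\nabla f(X^k)}_\star$. (A factor-of-2 slack here is why the constant $4$, not $2$, appears in the final $C$.)

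Next I would set up the error-feedback recursion. Define $E_j^k \eqdef \norm{G_j^k - \nabla f_j(X^k)}_\star$ and $E^k \eqdef \norm{G^k - \nabla f(X^k)}_\star \leq \tfrac1n\sum_j E_j^k$ by the triangle inequality. Since $G_j^{k+1} = G_j^k + \cC_j^k(\nabla f_j(X^{k+1}) - G_j^k)$ (in the deterministic variant the momentum reduces to the exact gradient), the contraction property \eqref{eq:contractive_compressor} in $\norm{\cdot}_\star$ gives $\norm{G_j^{k+1} - \nabla f_j(X^{k+1})}_\star \leq \sqrt{1-\alpha_D}\,\norm{\nabla f_j(X^{k+1}) - G_j^k}_\star$, and then $\norm{\nabla f_j(X^{k+1}) - G_j^k}_\star \leq E_j^k + \norm{\nabla f_j(X^{k+1}) - \nabla f_j(X^k)}_\star \leq E_j^k + (L_j^0 + L_j^1\norm{\nabla f_j(X^k)}_\star)t^k$ using the per-worker $(L_j^0,L_j^1)$--smoothness and $\norm{X^{k+1}-X^k}=t^k$. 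This yields a linear recursion $E^{k+1} \leq \sqrt{1-\alpha_D}\parens{E^k + t^k\cdot(\text{gradient-dependent term})}$; unrolling the geometric series (constant step $t^k \equiv \eta/\sqrt{K+1}$) bounds $\sum_k E^k$ by $\tfrac{\sqrt{1-\alpha_D}}{1-\sqrt{1-\alpha_D}}$ times a sum over $k$ of the gradient-norm terms plus the $\nabla f_j(X^k)\leftrightarrow\nabla f(X^k)$ discrepancy — the latter handled by the lower-bound Assumptions \ref{as:lower_bound}, \ref{as:worker_lower_bound}, which is the standard trick for converting $\norm{\nabla f_j}_\star$ into $f_j^\star - f^\star$-type quantities via the descent inequality applied to $f_j$ (this is exactly why Assumption \ref{as:worker_lower_bound} is needed here but not in Theorem \ref{thm:ef_gluon_det_sq_p1}).

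Then I would telescope. Summing the descent inequality over $k=0,\dots,K$, substituting the bound on $\sum_k E^k$ into $2\sum_k t^k E^k$, and collecting the $\norm{\nabla f(X^k)}_\star$ coefficients, I obtain
\begin{align*}
    \parens{1 - 4\eta^2 C L_{\max}^1 \cdot \tfrac{1}{K+1}\cdot(K+1)/\eta^2\cdot\ldots}\sum_k \norm{\nabla f(X^k)}_\star \;\lesssim\; \ldots
\end{align*}
— more precisely, the accumulated coefficient of $\norm{\nabla f(X^k)}_\star$ on the right is of order $\eta^2 C L_{\max}^1/(K+1)$ per step, and since the $L^1$-term couples multiplicatively with the gradient norm one does not get a clean subtraction but rather must absorb it; the paper's $\exp(4\eta^2 C L_{\max}^1)$ prefactor signals that the right move is a discrete Grönwall / "self-bounding" argument: isolate $\min_k\norm{\nabla f(X^k)}_\star \leq \tfrac{1}{K+1}\sum_k\norm{\nabla f(X^k)}_\star$, move the gradient-dependent error terms to the left, and divide, which introduces the multiplicative $\exp$ factor rather than an additive one. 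Dividing through by $(K+1)\eta/\sqrt{K+1} = \eta\sqrt{K+1}$ and identifying $\delta^0 = f(X^0)-f^\star$, $D = \tfrac{L^0}{2} + \tfrac{2\sqrt{1-\alpha_D}\bar L^0}{1-\sqrt{1-\alpha_D}}$, and $C = \tfrac{L^1}{2} + \tfrac{2\sqrt{1-\alpha_D}L_{\max}^1}{1-\sqrt{1-\alpha_D}}$ gives the claimed bound.

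The main obstacle I anticipate is the coupling between the $L^1$-proportional error terms and the gradient norm we are trying to bound: unlike the $L$--smooth case (Theorem \ref{thm:ef_gluon_det_sq_p1}), one cannot simply telescope and read off a $1/\sqrt K$ rate, because the error-feedback discrepancy $E^k$ itself scales with $\norm{\nabla f_j(X^k)}_\star$, and these feed back into the descent inequality multiplied by $L^1$. Making the geometric-series bound on $\sum_k E^k$ and the telescoped descent inequality close consistently — so that the gradient-norm terms can be absorbed at the cost of only the $\exp(4\eta^2 C L_{\max}^1)$ factor, and the step-size condition (implicit in requiring $\eta$ moderate) is never actually needed as a hard constraint — is the delicate bookkeeping step. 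Everything else (the descent lemma, the contraction estimate, the $f_j^\star$ substitution) is routine given the machinery already set up in the excerpt.
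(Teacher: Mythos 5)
Your proposal follows essentially the same route as the paper's proof: the $(L^0,L^1)$--descent inequality evaluated with $G^k$ (\Cref{lemma:descent2}), the dual-norm error-feedback contraction (\Cref{lemma:ng_rec_l0l1}, where the paper passes through expectations via Jensen's inequality since the compressors may be randomized), conversion of $\norm{\nabla f_j(X^k)}_{\star}$ into $f^{\star}-f_j^{\star}$ gaps using the worker lower bounds (\Cref{lemma:layer_gen_smooth2}), and a Gr\"onwall-type weighted recursion (\Cref{lemma:rec2}) that produces the $\exp\parens{4\eta^2 C L^1_{\max}}$ factor without any hard constraint on $\eta$. The only real difference is bookkeeping: instead of telescoping first and then absorbing (which by itself would leave an uncontrolled $\sum_l \parens{f(X^l)-f^{\star}}$ term), the paper keeps a per-step recursion on the Lyapunov function $\Psi^k = f(X^k)-f^{\star} + \frac{2t}{1-\sqrt{1-\alpha_D}}\frac{1}{n}\sum_{j=1}^n\norm{\nabla f_j(X^k)-G_j^k}_{\star}$ and applies the weighted recursion lemma directly, which is exactly the self-bounding argument your final paragraph gestures at.
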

\Cref{thm:ef_gluon_no_p_l0l1_p1}, a corollary of the layer-wise result in \Cref{thm:ef_gluon_no_p_l0l1}, achieves the same desirable $\cO(\nicefrac{1}{\sqrt{K}})$ rate for expected gradient norms as \Cref{thm:ef_gluon_det_sq_p1}, but with radii $t^k$ that are \emph{independent of problem-specific constants}. If smoothness constants are known in advance, they can be incorporated into the choice of~$\eta$ to improve the dependence on these constants in the final rate. In the Euclidean case, our guarantee matches that of \algnamesmall{$\|$EF21$\|$} under $(L^0, L^1)$--smoothness established by \citet{khirirat2024error}.

\subsection{Stochastic Setting}\label{sec:conv_stoch}

We now turn to the convergence guarantees of our practical variant of {\newalgsmall} (Algorithms \ref{alg:ef_muon_m_dist} and \ref{alg:ef_layer_muon_m_dist}), which incorporates noisy gradients and momentum. We assume access to a standard stochastic gradient oracles $\nabla f_j(\cdot; \xi_j)$, $\xi_j \sim \cD_j$ with bounded variance.
\begin{assumption}\label{as:bounded_var}
    The stochastic gradient estimators $\nabla f_j(\cdot; \xi_j): \cS \mapsto \cS$ are unbiased and have bounded variance. That is, $\ExpSub{\xi_j \sim \cD_j}{\nabla f_j(X; \xi_j)} = \nabla f_j(X)$ for all $X \in \cS$ and there exists $\sigma \geq 0$ such that
    \begin{align*}
        \ExpSub{\xi_j \sim \cD_j}{\norm{\nabla f_j(X; \xi_j) - \nabla f_j(X)}^2_2} \leq \sigma^2
    \end{align*}
    for all $X \in \cS$.
\end{assumption}
Note that the variance bound in \Cref{as:bounded_var} is expressed in terms of the Euclidean norm rather than $\norm{\cdot}$ to facilitate the bias-variance decomposition. Nevertheless, since $\cS$ is finite-dimensional, the magnitudes measured in $\norm{\cdot}_2$ can be related to quantities measured in $\norm{\cdot}$ via norm equivalence. That is, there exist $\underline{\rho}, \bar{\rho} > 0$ such that $\underline{\rho} \norm{X} \leq \norm{X}_2 \leq \bar{\rho} \norm{X}$ for all $X\in\cS$.

As in the deterministic setting, we begin by analyzing the smooth case.

\begin{theorem}\label{thm:ef_layer_gluon_m_p1}
    Let Assumptions \ref{as:lower_bound}, \ref{as:smoothness} and \ref{as:bounded_var} hold. Let $\{X^k\}_{k=0}^{K-1}$, $K \geq 1$, be the iterates of \Cref{alg:ef_muon_m_dist} initialized with $X^0 = W^0$, $G_j^0=M_j^0=\nabla f_j(X^0; \xi_j^0)$, $j\in[n]$, and run with $\cC^k \in \mathbb{B}(\alpha_P)$, $\cC_j^k \in \mathbb{B}_2(\alpha_D)$, any $\beta \in (0,1]$, and $0 \leq \gamma^k \equiv \gamma \leq \parens{2 \sqrt{\zeta} + 2 L}^{-1}$,
    where $\zeta \eqdef \frac{\bar{\rho}^2}{\underline{\rho}^2} \parens{\frac{12}{\beta^2} L^2 + \frac{24 \parens{\beta+2}}{\alpha_P^2} L^2 + \frac{36 \parens{\beta^2+4}}{\alpha_D^2} \tilde{L}^2 + \frac{144 \beta^2 \parens{2 \beta + 5}}{\alpha_P^2 \alpha_D^2} \tilde{L}^2}$. Then
    \begin{align*}
        \squeeze \frac{1}{K} \sum\limits_{k=0}^{K-1} \Exp{\norm{\nabla f(X^k)}^2_{\star}}
        \leq \frac{4 \delta^0}{K \gamma}
        + \frac{24}{K} \parens{\frac{1}{\sqrt{n} \beta} + \frac{12 \beta}{\alpha_D^2}} \sigma \bar{\rho}^2
        + 24 \parens{\frac{1}{n} + \frac{(1-\alpha_D) \beta}{\alpha_D} + \frac{12 \beta^2}{\alpha_D^2}} \sigma^2 \bar{\rho}^2 \beta,
    \end{align*}
    where $\delta^0 \eqdef f(X^0) - f^{\star}$.
\end{theorem}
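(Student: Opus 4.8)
The plan is to carry out the whole analysis on the \emph{sharp-operator} form of the update, which by \eqref{eq:muon_sharp} reads $X^{k+1} = X^k - \gamma\,(G^k)^{\sharp}$, so that the argument mirrors the Euclidean EF21 proof, and to control a composite Lyapunov function tracking the objective together with the three error terms generated by the two compression channels and by momentum. First I would establish a one-step descent inequality. From \Cref{as:smoothness}, $f(X^{k+1}) \le f(X^k) - \gamma\inp{\nabla f(X^k)}{(G^k)^{\sharp}} + \tfrac{L\gamma^2}{2}\norm{(G^k)^{\sharp}}^2$; using the standard sharp-operator identities $\norm{(G^k)^{\sharp}} = \norm{G^k}_{\star}$ and $\inp{G^k}{(G^k)^{\sharp}} = \norm{G^k}_{\star}^2$ (consequences of $\norm{G}_{\star}\lmo{\cB(0,1)}{G} = -G^{\sharp}$ and the definition of $G^{\sharp}$), splitting $\inp{\nabla f(X^k)}{(G^k)^{\sharp}} = \norm{G^k}_{\star}^2 + \inp{\nabla f(X^k) - G^k}{(G^k)^{\sharp}}$, bounding the cross term by $-\norm{\nabla f(X^k) - G^k}_{\star}\norm{G^k}_{\star}$ via the $\norm{\cdot}_{\star}/\norm{\cdot}$ Hölder pairing, and then applying Young's inequality together with $\norm{G^k}_{\star}^2 \ge \tfrac12\norm{\nabla f(X^k)}_{\star}^2 - \norm{\nabla f(X^k) - G^k}_{\star}^2$, I would obtain $f(X^{k+1}) \le f(X^k) - c_1\gamma\norm{\nabla f(X^k)}_{\star}^2 + c_2\gamma\norm{\nabla f(X^k) - G^k}_{\star}^2 + c_3 L\gamma^2\norm{G^k}_{\star}^2$, the last term to be absorbed once $\gamma$ is small. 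Everything then reduces to bounding the averaged estimation error $\tfrac1K\sum_k\Exp{\norm{\nabla f(X^k) - G^k}_{\star}^2}$.

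To control that error, I would use the decomposition dictated by the algorithm: $G^k = \tfrac1n\sum_j G_j^k$ with $G_j^k$ tracking the momentum $M_j^k$ through the worker-to-server error feedback, $M_j^k$ a momentum estimate of $\nabla f_j(W^k)$, and $W^k$ tracking $X^k$ through the server-to-worker primal mechanism, so that $\nabla f(X^k) - G^k$ equals $(\nabla f(X^k) - \nabla f(W^k)) + \tfrac1n\sum_j(\nabla f_j(W^k) - M_j^k) + \tfrac1n\sum_j(M_j^k - G_j^k)$; bounding $\norm{\cdot}_{\star}^2$ by three times the sum of squared pieces, the first is at most $3L^2\norm{X^k - W^k}^2$ by smoothness, the second I would pass to the Euclidean norm via $\norm{\cdot}_{\star} \le \bar\rho\norm{\cdot}_2$ so that independence of the per-worker noises gives the $\tfrac1n$ variance reduction (this is precisely why \Cref{as:bounded_var} is stated in $\norm{\cdot}_2$), and the third is at most $\tfrac3n\sum_j\norm{M_j^k - G_j^k}_{\star}^2$ by convexity of $\norm{\cdot}_{\star}^2$.

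Next I would derive contraction recursions for $\norm{X^k - W^k}^2$, for the momentum error $\tfrac1n\sum_j\norm{\nabla f_j(W^k) - M_j^k}_2^2$ together with its averaged-inside-the-norm counterpart $\norm{\tfrac1n\sum_j(\nabla f_j(W^k) - M_j^k)}_2^2$ (kept as a separate potential to preserve the $\tfrac1n$ factor), and for $\tfrac1n\sum_j\norm{M_j^k - G_j^k}_2^2$, each of the shape (contraction factor)$\cdot$(previous value) $+$ (forcing). Concretely: $\cC^k \in \mathbb{B}(\alpha_P)$ gives $\Exp{\norm{X^{k+1} - W^{k+1}}^2} \le (1 - \alpha_P)\norm{X^{k+1} - W^k}^2$, and combining with $\norm{X^{k+1} - W^k} \le \gamma\norm{G^k}_{\star} + \norm{X^k - W^k}$ and a tuned Young split produces contraction $1 - \tfrac{\alpha_P}{2}$ and forcing of order $\alpha_P^{-1}\gamma^2\norm{G^k}_{\star}^2$; the momentum identity $M_j^{k+1} - \nabla f_j(W^{k+1}) = (1-\beta)(M_j^k - \nabla f_j(W^k)) + (1-\beta)(\nabla f_j(W^k) - \nabla f_j(W^{k+1})) + \beta(\nabla f_j(W^{k+1}) - \nabla f_j(W^{k+1};\xi_j^{k+1}))$ yields contraction $1 - \tfrac{\beta}{2}$ with forcing of order $\beta^{-1}L_j^2\norm{W^{k+1} - W^k}^2$ plus a noise injection of order $\beta^2\sigma^2$ per worker and $\beta^2\sigma^2/n$ for the average (by independence); and $\cC_j^k \in \mathbb{B}_2(\alpha_D)$ gives, after converting to $\norm{\cdot}_{\star}$ through $\bar\rho,\underline\rho$, contraction $1 - \tfrac{\alpha_D}{2}$ and forcing of order $\alpha_D^{-1}\Exp{\norm{M_j^{k+1} - M_j^k}_2^2}$, where $M_j^{k+1} - M_j^k = \beta(\nabla f_j(W^{k+1};\xi_j^{k+1}) - M_j^k)$ is itself controlled by the momentum error, $L_j\norm{W^{k+1} - W^k}$, and $\sigma$. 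The loop closes through $\norm{W^{k+1} - W^k} = \norm{\cC^k(X^{k+1} - W^k)} \le 2(\gamma\norm{G^k}_{\star} + \norm{X^k - W^k})$ and $\norm{G^k}_{\star}^2 \le 2\norm{\nabla f(X^k)}_{\star}^2 + 2\norm{\nabla f(X^k) - G^k}_{\star}^2$.

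Finally I would assemble a Lyapunov function of the form $\Psi^k = (f(X^k) - f^{\star}) + A\,\Exp{\norm{X^k - W^k}^2} + B\cdot\tfrac1n\sum_j\Exp{\norm{\nabla f_j(W^k) - M_j^k}_2^2} + B'\,\Exp{\norm{\tfrac1n\sum_j(\nabla f_j(W^k) - M_j^k)}_2^2} + C\cdot\tfrac1n\sum_j\Exp{\norm{M_j^k - G_j^k}_2^2}$ and choose the weights $A, B, B', C$ (on the scales of the relevant $L^2$, $\alpha^{-1}$, $\beta^{-1}$ combinations) so that each cross-coupling forcing term is dominated by the contraction slack of the potential it feeds into; this should give $\Psi^{k+1} \le \Psi^k - \tfrac{\gamma}{4}\Exp{\norm{\nabla f(X^k)}_{\star}^2} + (\text{pure noise})$ exactly when $\gamma(2\sqrt{\zeta} + 2L) \le 1$, with $\zeta$ being the aggregate of the $\bar\rho^2\underline\rho^{-2}L^2\beta^{-2}$, $\bar\rho^2\underline\rho^{-2}L^2\alpha_P^{-2}$, $\tilde L^2\alpha_D^{-2}$, and $\tilde L^2\alpha_P^{-2}\alpha_D^{-2}\beta^2$ contributions appearing in the statement. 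Telescoping over $k = 0,\dots,K-1$, dividing by $\gamma K$, and inserting the initialization ($X^0 = W^0$ kills the $A$-term, $G_j^0 = M_j^0$ kills the $C$-term, $M_j^0 = \nabla f_j(X^0;\xi_j^0)$ makes the $B$- and $B'$-terms at most $\sigma^2$ and $\sigma^2/n$) yields the leading $4\delta^0/(K\gamma)$ term together with the $\mathcal{O}(1/K)$ and $\mathcal{O}(\beta\sigma^2)$ remainders. The hardest part will be this last step — forcing the coupled system (descent plus three error recursions, each forcing term referencing the others and $\norm{\nabla f(X^k)}_{\star}^2$) to close under a \emph{single} stepsize condition, which requires a careful ordering of the absorptions and, above all, a disciplined propagation of the Euclidean $\leftrightarrow \norm{\cdot}/\norm{\cdot}_{\star}$ conversions through the momentum-variance and dual-compression parts; that bookkeeping is what produces the intricate $\zeta$ and the constants in the bound. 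Alternatively, one can bypass the entire derivation by observing that the statement is the $p = 1$ special case of the layer-wise \Cref{thm:ef_layer_gluon_m}.
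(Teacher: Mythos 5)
Your proposal is correct and follows essentially the same route as the paper: the paper proves this statement as the $p=1$ case of the layer-wise \Cref{thm:ef_layer_gluon_m}, whose proof is exactly your plan — a sharp-operator descent inequality (Descent Lemma~\ref{lemma:descent1}), contraction recursions for the primal error $\norm{X^k-W^k}^2$ (\Cref{lemma:xw_sq_rec_sharp}), the compression error $\norm{M_j^k-G_j^k}_2^2$ (\Cref{lemma:mg_sq_rec_m_sharp2}), and both the per-worker and averaged momentum errors kept as separate potentials to retain the $1/n$ factor (\Cref{lemma:nm_sq_rec_m_sharp2}), all combined into a four-weight Lyapunov function closed under the single stepsize condition $\gamma \le (2\sqrt{\zeta}+2L)^{-1}$ and telescoped, with the initialization controlling $\Psi^0$. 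The only (cosmetic) difference is that you anchor the momentum error at $\nabla f_j(W^k)$ and split $\nabla f(X^k)-G^k$ into three pieces in the descent step, whereas the paper anchors it at $\nabla f_j(X^k)$ and uses a two-piece split, letting the $X$–$W$ discrepancy enter only through the forcing terms of the recursions; this changes constants but not the argument.
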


\Cref{thm:ef_layer_gluon_m_p1} is a special case of \Cref{thm:ef_layer_gluon_m}. By choosing the stepsize $\gamma= \parens{2 \sqrt{\zeta} + 2 L}^{-1}$ and the momentum parameter $\beta = \min\brac{1, \parens{\frac{\delta^0 L n}{\underline{\rho}^2 \sigma^2 K}}^{\nicefrac{1}{2}}, \parens{\frac{\delta^0 L \alpha_D}{\underline{\rho}^2 \sigma^2 K}}^{\nicefrac{1}{3}}, \parens{\frac{\delta^0 L \alpha_D^2}{\underline{\rho}^2 \sigma^2 K}}^{\nicefrac{1}{4}}}$,
it guarantees that
\begin{align*}
    &\squeeze \frac{1}{K} \sum\limits_{k=0}^{K-1} \Exp{\norm{\nabla f(X^k)}_{\star}^2} 
     = \cO \parens{
    \frac{\delta^0 \bar{\rho}^2 \tilde{L}^0}{\underline{\rho}^2 \alpha_P \alpha_D K}
    + \parens{\frac{\delta^0 \bar{\rho}^4 \sigma^2 L}{\underline{\rho}^2 n K}}^{\nicefrac{1}{2}}
    + \parens{\frac{\delta^0 \bar{\rho}^3 \sigma L}{\underline{\rho}^2 \sqrt{\alpha_D} K}}^{\nicefrac{2}{3}}
    + \parens{\frac{\delta^0 \underline{\rho}^{8/3} \sigma^{2/3} L}{\bar{\rho}^2 \alpha_D^{2/3} K}}^{\nicefrac{3}{4}}}
\end{align*}
(see \Cref{cor:ef_gluon_m_p1}).
In the absence of stochasticity and momentum ($\sigma^2 = 0$, $\beta = 1$), \Cref{alg:ef_muon_m_dist} reduces to \Cref{alg:ef_gluon_det} (with $p = 1$), and the guarantee in \Cref{thm:ef_layer_gluon_m_p1} recovers that of \Cref{thm:ef_gluon_det_sq_p1}, up to constants (\Cref{rem:ef_layer_gluon_m_det}). In the Euclidean case without primal compression ($\bar{\rho}^2 = \underline{\rho}^2 = \alpha_P = 1$), \Cref{thm:ef_layer_gluon_m_p1} matches the rate of \algnamesmall{EF21-SDGM} established by \citet[Theorem 3]{fatkhullin2023momentum}, again up to constants (\Cref{rem:ef_layer_gluon_m1}). Finally, one may employ compressors $\cC^k \in \mathbb{B}_2(\alpha_P)$ instead of $\cC^k \in \mathbb{B}(\alpha_P)$, though this introduces an additional dependence on $\bar{\rho}^2$ in the constant $\zeta$ (\Cref{rem:ef_layer_gluon_m_comp}).

As in \Cref{thm:ef_gluon_no_p_l0l1_p1}, in the $(L^0, L^1)$--smooth setup, we set $\cC^k \equiv \cI$.

\begin{theorem}\label{thm:ef_gluon_m_no_primal_l0l1_2_p1}
    Let Assumptions \ref{as:lower_bound}, \ref{as:worker_lower_bound}, \ref{as:gen_smoothness} and \ref{as:bounded_var} hold. Let $\{X^k\}_{k=0}^{K-1}$, $K \geq 1$, be the iterates of \Cref{alg:ef_muon_m_dist} initialized with $M_j^0 = \nabla f_j(X^0; \xi_j^0)$, $G_j^0 = \cC_j^0(\nabla f_j(X^0; \xi_j^0))$, $j\in[n]$, and run with $\cC^k \equiv \cI$ (the identity compressor), $\cC_j^k \in \mathbb{B}_2(\alpha_D)$, $\beta = \nicefrac{1}{(K+1)^{1/2}}$ and $0 \leq t^k \equiv t = \nicefrac{\eta}{(K+1)^{3/4}}$, where $\eta^2 \leq \min\brac{\frac{(K+1)^{1/2}}{6 (L^1)^2}, \frac{(1 - \sqrt{1-\alpha_D}) \underline{\rho}}{24 (K+1)^{1/2} \sqrt{1-\alpha_D} \bar{\rho} (L^1_{\max})^2}, \frac{\underline{\rho}}{24 \bar{\rho} (L_{\max}^1)^2}, 1}$.
    Then
    \begin{align*}
        \squeeze \min\limits_{k=0,\ldots,K} \Exp{\norm{\nabla f(X^k)}_{\star}}
        \leq \squeeze \frac{3 \parens{f(X^0) - f^{\star}}}{\eta (K+1)^{1/4}} + \frac{\eta L^0}{(K+1)^{3/4}} + \frac{16 \sqrt{1-\alpha_D} \bar{\rho} \sigma}{(1 - \sqrt{1-\alpha_D}) (K+1)^{1/2}} + \frac{8 \bar{\rho} \sigma}{\sqrt{n} (K+1)^{1/4}} &\\
        \squeeze + \frac{\eta \bar{\rho}}{\underline{\rho}} \parens{\frac{8}{(K+1)^{1/4}} + \frac{8 \sqrt{1-\alpha_D}}{(1 - \sqrt{1-\alpha_D}) (K+1)^{3/4}}} \parens{\frac{1}{n} \sum_{j=1}^n (L_j^1)^2 \parens{f^{\star} - f_j^{\star}} + \bar{L}^0}.&
    \end{align*}
\end{theorem}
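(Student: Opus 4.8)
Since $\cC^k\equiv\cI$, the server lines give $S^k=X^{k+1}-W^k$ and $W^{k+1}=W^k+S^k=X^{k+1}$, so (with $W^0=X^0$) $W^k=X^k$ for all $k$: the {\efp}-style channel is inactive and every stochastic gradient is taken at the current iterate. Writing the update through \eqref{eq:muon_sharp}, $X^{k+1}=X^k+\lmo{\cB(0,t)}{G^k}$ with $\norm{X^{k+1}-X^k}\leq t$, I would apply the descent lemma implied by \Cref{as:gen_smoothness} (valid because $L^1 t\leq 1/\sqrt6$ by the first bound on $\eta^2$),
\[
f(X^{k+1})\leq f(X^k)+\inp{\nabla f(X^k)}{X^{k+1}-X^k}+\tfrac12\parens{L^0+L^1\norm{\nabla f(X^k)}_{\star}}\norm{X^{k+1}-X^k}^2,
\]
and bound the linear term by $\inp{G^k}{\lmo{\cB(0,t)}{G^k}}+\inp{\nabla f(X^k)-G^k}{\lmo{\cB(0,t)}{G^k}}\leq -t\norm{G^k}_{\star}+t\norm{G^k-\nabla f(X^k)}_{\star}$, then $-\norm{G^k}_{\star}\leq-\norm{\nabla f(X^k)}_{\star}+\norm{G^k-\nabla f(X^k)}_{\star}$. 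Rearranging and taking expectations gives, for an absolute constant $c>0$,
\[
c\,t\,\Exp{\norm{\nabla f(X^k)}_{\star}}\leq\Exp{f(X^k)}-\Exp{f(X^{k+1})}+2t\,\Exp{\norm{G^k-\nabla f(X^k)}_{\star}}+\tfrac12L^0t^2.
\]

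\textbf{Coupled compression and momentum recursions.} To control $\Exp{\norm{G^k-\nabla f(X^k)}_{\star}}$ I would split $G^k-\nabla f(X^k)=\frac1n\sum_j(G_j^k-M_j^k)+\frac1n\sum_j(M_j^k-\nabla f_j(X^k))$, move to $\norm{\cdot}_2$ via the norm equivalence $\norm{\cdot}_{\star}\leq\bar\rho\norm{\cdot}_2$, and track the compression error $\cE^k\eqdef\frac1n\sum_j\Exp{\norm{G_j^k-M_j^k}_2^2}$ and momentum error $\cM^k\eqdef\frac1n\sum_j\Exp{\norm{M_j^k-\nabla f_j(X^k)}_2^2}$. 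For $\cE^{k+1}$, the identity $G_j^{k+1}-M_j^{k+1}=(\cC_j^k-\cI)(M_j^{k+1}-G_j^k)$, the $\mathbb{B}_2(\alpha_D)$ contraction and a Young split tuned to $\sqrt{1-\alpha_D}$ give $\cE^{k+1}\leq\sqrt{1-\alpha_D}\,\cE^k+\tfrac{\sqrt{1-\alpha_D}}{1-\sqrt{1-\alpha_D}}\cdot\frac1n\sum_j\Exp{\norm{M_j^{k+1}-M_j^k}_2^2}$; since $M_j^{k+1}-M_j^k=\beta(\nabla f_j(X^{k+1};\xi_j^{k+1})-M_j^k)$, this displacement decomposes into noise (bounded by $\sigma^2$ via \Cref{as:bounded_var}), a drift $\norm{\nabla f_j(X^{k+1})-\nabla f_j(X^k)}_2\leq\underline\rho^{-1}(L_j^0+L_j^1\norm{\nabla f_j(X^k)}_{\star})t$ (by \Cref{as:gen_smoothness} and norm equivalence), and a term feeding $\cM^k$. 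For $\cM^{k+1}$, conditioning on the $\sigma$-algebra $\cF^k$ (under which $X^{k+1}$ is deterministic and the fresh noise is mean-zero) yields $\cM^{k+1}\leq(1-\beta)\cM^k+\tfrac2\beta\underline\rho^{-2}t^2\cdot\frac1n\sum_j\Exp{(L_j^0+L_j^1\norm{\nabla f_j(X^k)}_{\star})^2}+\beta^2\sigma^2$, the factor $1/n$ surviving because the noises are averaged over the $n$ workers.

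\textbf{Taming the generalized-smoothness terms via a Lyapunov function.} The quantities $\norm{\nabla f_j(X^k)}_{\star}$ in both recursions do not contract; I would bound them through the standard consequence of $(L^0,L^1)$--smoothness with \Cref{as:worker_lower_bound}, $\norm{\nabla f_j(X)}_{\star}^2\leq c_1(L_j^0+(L_j^1)^2(f_j(X)-f_j^{\star}))(f_j(X)-f_j^{\star})$, carrying the $f_j^{\star}$ through the telescoping so that only $\frac1n\sum_j(L_j^1)^2(f^{\star}-f_j^{\star})+\bar L^0$ survives. Forming $\Phi^k\eqdef\Exp{f(X^k)}-f^{\star}+a\,t\,\cE^k+b\,t\,\cM^k$ and picking the weights $a,b$ so that the second and third bounds on $\eta^2$ make, respectively, the $\tfrac{\sqrt{1-\alpha_D}}{1-\sqrt{1-\alpha_D}}$-weighted compression channel and the $\norm{\nabla f_j}_{\star}$-channel self-absorbing, one obtains $\Phi^{k+1}\leq\Phi^k-c'\,t\,\Exp{\norm{\nabla f(X^k)}_{\star}}+\tfrac12L^0t^2+(\text{terms in }\beta^2\sigma^2\text{ and }t^2/\beta)$. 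The first bound on $\eta^2$ is what forces $c'>0$ and replaces the exponential prefactor $\exp(4\eta^2CL^1_{\max})$ of the deterministic \Cref{thm:ef_gluon_no_p_l0l1_p1} by an absolute constant (the $3$ in the statement).

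\textbf{Telescoping and substitution.} Summing $\Phi^{k+1}\leq\Phi^k-\cdots$ over $k=0,\dots,K$, using that $\Phi^{K+1}$ is bounded below by \Cref{as:lower_bound} and the initial values $\cM^0\leq\sigma^2$, $\cE^0\leq(1-\alpha_D)\cdot(\text{a bound in }\sigma^2\text{ and }\norm{\nabla f_j(X^0)}_{\star}^2)$, then dividing by $(K+1)c't$ turns $\frac1{K+1}\sum_k\Exp{\norm{\nabla f(X^k)}_{\star}}$ (hence $\min_k$) into $\tfrac{\Phi^0}{(K+1)c't}+\tfrac1{c'}(\cdots)$. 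Plugging in $t=\eta(K+1)^{-3/4}$ gives $\tfrac1{(K+1)t}=\tfrac1\eta(K+1)^{-1/4}$ (the leading $\tfrac{3\delta^0}{\eta(K+1)^{1/4}}$) and $tL^0=\eta L^0(K+1)^{-3/4}$; with $\beta=(K+1)^{-1/2}$ the variance-reduced momentum channel yields $\tfrac{\bar\rho\sigma}{\sqrt n}(K+1)^{-1/4}$, the initial compression error yields $\tfrac{\sqrt{1-\alpha_D}\bar\rho\sigma}{(1-\sqrt{1-\alpha_D})(K+1)^{1/2}}$, and the $\norm{\nabla f_j}_{\star}^2$ contributions, split between the momentum and compression channels, yield $\tfrac{\eta\bar\rho}{\underline\rho}\parens{\tfrac{8}{(K+1)^{1/4}}+\tfrac{8\sqrt{1-\alpha_D}}{(1-\sqrt{1-\alpha_D})(K+1)^{3/4}}}\parens{\tfrac1n\sum_j(L_j^1)^2(f^{\star}-f_j^{\star})+\bar L^0}$; collecting these five pieces is exactly the claimed bound. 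The whole argument is carried out at the layer-wise level (as the $p$-layer \Cref{thm:ef_gluon_m_no_primal_l0l1_2}) and specialized to $p=1$. \emph{The main obstacle} is the bookkeeping in the third step: disentangling the non-contracting $\norm{\nabla f_j(X^k)}_{\star}$ terms inside the coupled $\cE^k$/$\cM^k$ recursions and tuning the three-part $\eta^2$ constraint so that every cross-term collapses into a clean absolute constant rather than an exponentially growing factor.
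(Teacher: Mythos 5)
Your overall architecture matches the paper's: descent via the generalized-smoothness inequality with first powers of norms (the paper's \Cref{lemma:descent2}), coupled recursions for the compression error and the momentum error, reduction of the stray gradient norms $\norm{\nabla f_j(X^k)}_{\star}$ to function gaps using \Cref{as:worker_lower_bound}, a Lyapunov telescope, and the same parameter substitution. However, the sketch has genuine gaps in exactly the places you flag as "bookkeeping". First, you track \emph{second} moments $\cE^k,\cM^k$ of the errors, whereas the paper deliberately tracks \emph{first} moments ($\tilde S^k$, $\tilde P^k$, $P^k$ in Lemmas \ref{lemma:mg_rec_m_l0l1_no_p} and \ref{lemma:nm_rec_m_l0l12}, with Jensen applied inside the compressor expectation to retain the $\sqrt{1-\alpha_D}$ contraction). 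With squared quantities, your $\cM$-recursion produces $(L_j^1)^2\Exp{\norm{\nabla f_j(X^k)}_{\star}^2}$, and the available consequence of $(L^0,L^1)$--smoothness (the paper's \Cref{lemma:layer_gen_smooth}) bounds $\norm{\nabla f_j}^2_{\star}$ only by $2(L^0_j+L^1_j\norm{\nabla f_j}_{\star})(f_j-f_j^{\star})$, which still contains a gradient norm; the clean reduction to $\frac1n\sum_j (L_j^1)^2(f^{\star}-f_j^{\star})+\bar L^0$ (Lemmas \ref{lemma:layer_gen_smooth3}, \ref{lemma:layer_gen_smooth2}) works for \emph{linear} combinations $\sum_i x_i\norm{\nabla_i f_j}_{(i)\star}$, i.e.\ for first powers, which is why the paper's moment choice is not cosmetic. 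Relatedly, your $\cM^k$ is the average of \emph{per-worker} squared errors, for which there is no $\nicefrac{1}{n}$ variance reduction; the $\sigma/\sqrt n$ terms come only from the error of the \emph{averaged} momentum $M^k$, so you need both quantities (the paper's $P^k$ and $\tilde P^k$), the per-worker one feeding the compression channel with a full $\sigma$ — your sketch conflates them.

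Second, the claimed one-step inequality $\Phi^{k+1}\leq\Phi^k-c'\,t\,\Exp{\norm{\nabla f(X^k)}_{\star}}+\dots$ with a clean nonexpansive Lyapunov term is not attainable: after replacing $\norm{\nabla f_j(X^k)}_{\star}$ by function gaps, the term $f_j(X^k)-f^{\star}$ averages to $\delta^k\leq\Psi^k$, so the recursion necessarily takes the form $\Psi^{k+1}\leq(1+C_1)\Psi^k+C_2\sum_{l<k}(1-\beta)^{k-l}\Psi^l-\sum_i t_i\Exp{\norm{\nabla_i f(X^k)}_{(i)\star}}+\dots$, where the history sum comes from the unrolled (Cutkosky--Mehta-style) momentum recursion for $P^k$. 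One must then run a weighted/Gr\"onwall argument (the paper's $w^k$ sequence), and the role of the $\eta^2$ constraints is precisely to force $(K+1)(C_1+C_2/\beta_{\min})\leq 1$ so that the resulting exponential factor is at most $e\leq 3$ — that is where the constant $3$ in the statement comes from, not from a margin $c'>0$ in a monotone descent. Your first $\eta^2$ bound does let you absorb the $\frac{L^1}{2}\norm{\nabla f(X^k)}_{\star}t^2$ term, but it does not eliminate the multiplicative growth of the potential; without the weighting-sequence step the telescoping in your final paragraph does not go through as written.
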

Analogously to \Cref{thm:ef_layer_gluon_m_p1}, \Cref{thm:ef_gluon_m_no_primal_l0l1_2_p1} (a corollary of \Cref{thm:ef_gluon_m_no_primal_l0l1_2}) establishes an $\cO(\nicefrac{1}{K^{1/4}})$ convergence rate, matching state-of-the-art guarantees for \algnamesmall{SGD}-type methods in the non-convex setting \citep{cutkosky2020momentum, sun2023momentum}. Among the terms with the worst scaling in~$K$, $\nicefrac{3 \parens{f(X^0) - f^{\star}}}{\eta (K+1)^{1/4}}$ is standard and reflects the impact of the initial suboptimality. $\nicefrac{8 \bar{\rho} \sigma}{\sqrt{n} (K+1)^{1/4}}$ captures gradient stochasticity, scaling linearly with the standard deviation~$\sigma$, but decaying with the square root of the number of clients $n$. The term $\frac{1}{n} \sum_{j=1}^n (L_j^1)^2 \parens{f^{\star} - f_j^{\star}}$ quantifies client heterogeneity and vanishes when local optima $f_j^{\star}$ coincide with the global minimum $f^{\star}$, and otherwise scales with the local smoothness constants $L_j^1$. All compression-driven error terms vanish when compression is disabled ($\alpha_D=1$). Finally, in the Euclidean case ($\bar{\rho}^2 = \underline{\rho}^2 = 1$), the rate recovers that of \algnamesmall{$\|$EF21-SDGM$\|$} from \citet[Theorem 2]{khirirat2024error}, up to constants.

\section{Experiments}\label{sec:experiments}

We present key experimental results below, with additional details and extended experiments available in \Cref{sec:appendix_exp_details}.\footnote{Code for experiments is available \href{https://anonymous.4open.science/r/EF21_MUON-BB5B/README.md}{here}.}

\paragraph{Experimental setup.}

All experiments are conducted on 4 NVIDIA Tesla V100-SXM2-32GB GPUs or 4 NVIDIA A100-SXM4-80GB in a Distributed Data Parallel (DDP) setup. The dataset is evenly partitioned across workers, with one worker node acting as the master, aggregating compressed updates from the others. Training and evaluation are implemented in PyTorch,\footnote{PyTorch Documentation: \url{https://pytorch.org/docs/stable/index.html}} extending open-source codebases \citep{code_scion,code_gluon,karpathy2023nanogpt}.

We train a \texttt{NanoGPT} model \citep{karpathy2023nanogpt} with 124M parameters on the \texttt{FineWeb10B} dataset \citep{penedo2024fineweb}, using input sequences of length $1024$ and a batch size of $256$.
Optimization is performed with {\newalgsmall}, using spectral norm LMOs for hidden layers and $\ell_\infty$ norm LMOs for embedding and output layers (which coincide due to weight sharing), following the approach of \citet{pethick2025training}. For spectral norm LMOs, inexact updates are computed with 5 Newton–Schulz iterations \citep{kovarik1970some, bjorck1971iterative}, as in \citet{jordan2024muon}.

\begin{figure}[t]
    \centering  
    \begin{minipage}{0.77\linewidth}
    \includegraphics[width=\linewidth]{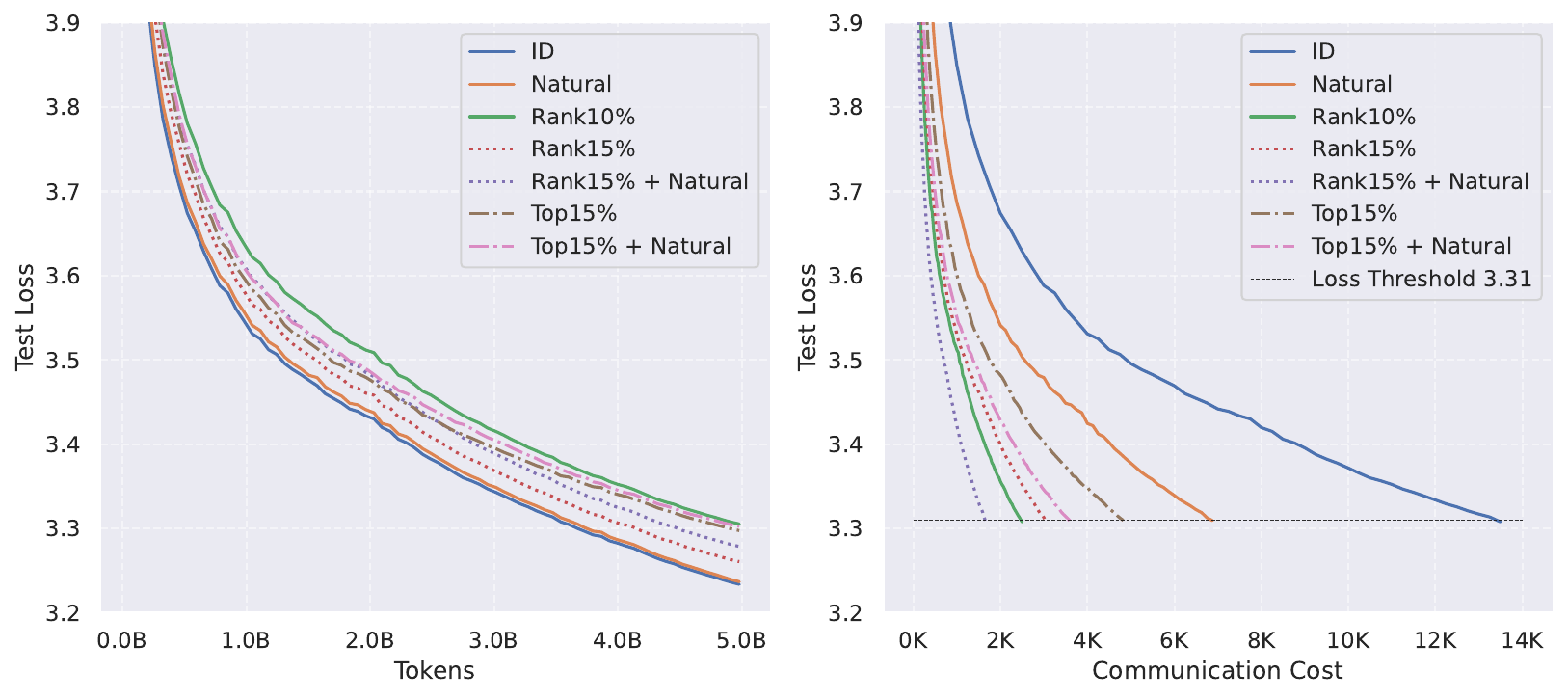}
    \end{minipage}
    \hfill
    \begin{minipage}{0.2\linewidth}
        \caption{Left: Test loss vs. $\#$ of tokens processed. Right: Test loss vs. $\#$ of bytes sent to the server from each worker normalized by model size to reach test loss $3.31$. Rank$X\%$/Top$X\%$ = Rank$K$/Top$K$ compressor with sparsification level $X\%$;  ID = no compression.}
    \label{fig:5B_tokens_to_reach_main}
    \end{minipage}
\end{figure}

\begin{figure}[h]
    \centering  
    \begin{minipage}{0.38\linewidth}
        \includegraphics[width=\linewidth]{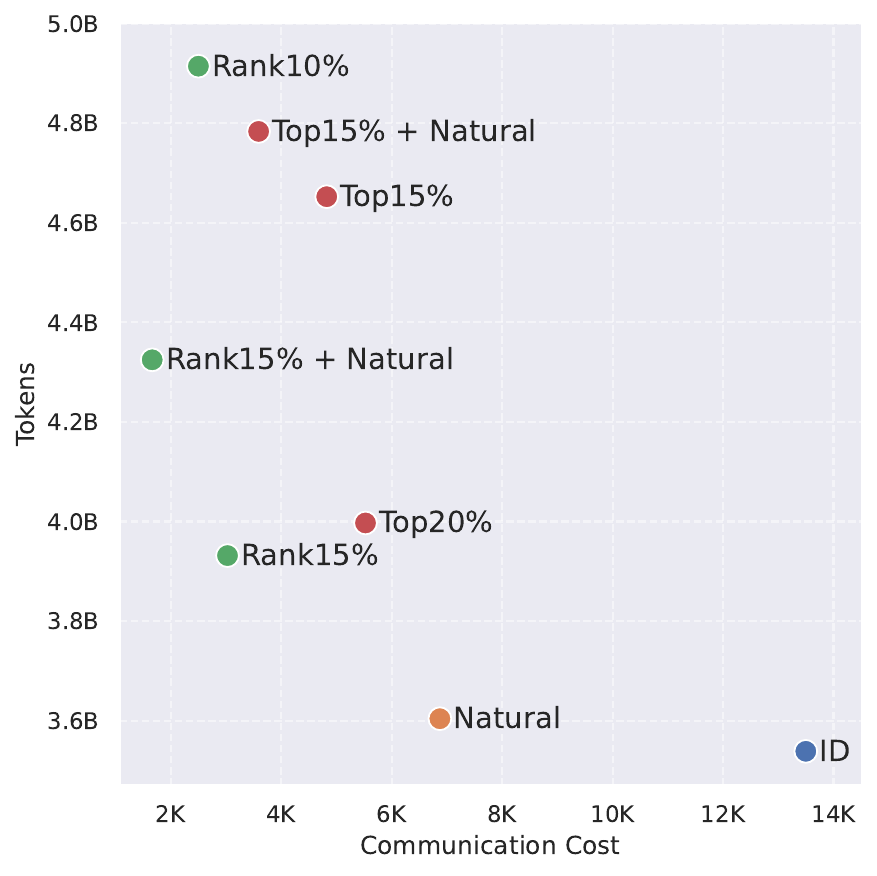}
        \caption{Trade-off between token efficiency and communication cost for different compression setups at a target test loss of $3.31$.}
        \label{fig:tokens_vs_cost}
    \end{minipage}
    \hfill
    \begin{minipage}{0.57\linewidth}
        Following common practice in communication compression literature, we assume that broadcasting is free and focus on {\color{OliveGreen}w2s} communication. Thus, the server-side compressor is fixed to $\cI$, while worker compressors vary among Top$K$, Rank$K$ \citep{safaryan2021fednl}, Natural compressor \citep{horvoth2022natural} and combinations thereof: Top$K$ + Natural compressor of selected elements, and Rank$K$ + Natural compressor applied to all components of the low rank decomposition. These are tested under multiple compression levels and compared against an uncompressed baseline (i.e., standard \algnamesmall{Scion}/\algnamesmall{Gluon}; see \Cref{sec:algo}).
        Learning rates are tuned per optimizer and experimental setting, initialized from the values in the \algnamesmall{Gluon} repository \citep{code_gluon} (see \Cref{app:learning_rate}). We adopt the same learning rate scheduler as \citet{karpathy2023nanogpt} and fix the momentum parameter to $0.9$.
        Model and optimizer hyperparameters are summarized in Tables  \ref{tab:model_config} and \ref{tab:training_config}, respectively.

        \paragraph{Results.}

        For Rank$K$ and Top$K$ compressors, we evaluate multiple compression levels (in plots, Rank$X\%$ and Top$X\%$ denote a Rank$K$ and Top$K$ compressor with compression level $X\%$, respectively). We report experimental results for a 5B-token training budget ($>40\times$ model size) in \Cref{fig:5B_tokens_to_reach_main} (left), and to reach a strong loss threshold of $3.31$ in Figures \ref{fig:5B_tokens_to_reach_main} (right) and \ref{fig:tokens_vs_cost}.
    \end{minipage}
\end{figure}

\begin{figure}[h!]
\centering
\begin{minipage}{0.42\textwidth}
    \centering
    \captionof{table}{Communication cost per round (in bytes), normalized relative to the identity compressor.}
    \label{tab:bytes_per_comm}
    \begin{tabular}{lr} 
        \toprule
        \textbf{Compressor} & \textbf{Relative Cost} \\ 
        \midrule ID & 1.0000 \\ 
        Natural & 0.5000 \\ 
        Rank20\% & 0.2687 \\ 
        Rank15\% & 0.2019 \\ 
        Rank15\% + Natural & 0.1010 \\ 
        Rank10\% & 0.1335 \\ 
        Rank10\% + Natural & 0.0667 \\ 
        Rank5\% & 0.0667 \\ 
        Top20\% & 0.3625 \\ 
        Top15\% & 0.2718 \\ 
        Top15\% + Natural & 0.1969 \\ 
        Top10\% & 0.1812 \\ 
        Top10\% + Natural & 0.1312 \\ 
        Top5\% & 0.0906 \\ 
        \bottomrule
    \end{tabular}
\end{minipage}%
\hfill
\begin{minipage}{0.53\textwidth}
    The number of tokens required to reach a target loss depends on the compressor. \Cref{fig:tokens_vs_cost} provides a comparison of the numbers of tokens used in the training run to reach a strong test loss threshold of $3.31$ plotted against the communication cost (reported as the number of bits transmitted from each worker to the server normalized by the model size), plotted against the {\color{OliveGreen}w2s} communication cost.
    Shorter 2.5B-token runs are reported in \Cref{app:2.5B_exps} to assess performance under limited training budgets.

    In \Cref{fig:5B_tokens_to_reach_main}, we plot test loss vs.~tokens processed, as well as the {\color{OliveGreen}w2s} communication cost required to reach the $3.31$ loss threshold. For each compressor, we report its most competitive configuration (see \Cref{sec:compression_ablation} for a detailed ablation). As expected, compression slows convergence in terms of number of training steps, but substantially reduces per-step communication cost (\Cref{tab:bytes_per_comm}). Overall, this yields significant \textbf{communication savings}---\textbf{up to $7\times$} for Rank$15\%$ + Natural compressor, and roughly $4\times$ for Top$15\%$ + Natural compressor---relative to the uncompressed baseline.
\end{minipage}
\end{figure}

\newpage

\section*{Acknowledgements}

The research reported in this publication was supported by funding from King Abdullah University of Science and Technology (KAUST): i) KAUST Baseline Research Scheme, ii) CRG Grant ORFS-CRG12-2024-6460, and iii) Center of Excellence for Generative AI, under award number 5940. For computer time, this research used Ibex managed by the Supercomputing Core Laboratory at King Abdullah University of Science and Technology (KAUST) in Thuwal, Saudi Arabia.
The authors thank Konstantin Burlachenko for useful discussion.

\bibliographystyle{plainnat}
\bibliography{references}

\newpage

\appendix

\section*{APPENDIX}

\tableofcontents

\newpage

\section{Related work}\label{sec:lit_rev}

\subsection{Compression}\label{sec:compression_rev}

The ML community has developed two dominant strategies to address the communication bottleneck. The first is \emph{compression}, implemented through techniques such as sparsification or quantization \citep{seide2014bit, alistarh2017qsgd, beznosikov2020biased, szlendak2021permutation, horvoth2022natural}, which reduce communication costs by transmitting lossy representations of dense updates. Compression techniques have been extensively studied in the Euclidean regime. The other approach is \emph{local training} \citep{mangasarian1995parallel, povey2014parallel, mcmahan2017communication}, which lowers communication frequency by synchronizing with the server only periodically, after several local updates on the clients. These two approaches can be combined, yielding additional provable benefits by leveraging both mechanisms \citep{condat2024locodl}. In this work, we focus on compression. Local training introduces a distinct set of challenges and trade-offs, and is orthogonal to our approach.

There are two primary compression objectives in distributed optimization: {\color{OliveGreen} workers-to-server (w2s)} (= uplink) and {\color{RedPrint} server-to-workers (s2w)} (= downlink) communication. A large body of prior work focuses exclusively on {\color{OliveGreen} w2s} compression, assuming that broadcasting from the server to the workers is either free or negligible \citep{gorbunov2021marina, szlendak2021permutation, tyurin2022dasha, pirau2024preconditioning}. This assumption is partly due to analytical convenience, but can also be justified in settings where the server has significantly higher bandwidth, greater computational resources, or when the network topology favors fast downlink speeds \citep{kairouz2021advances}.
However, in many communication environments, this asymmetry does not hold. For instance, in 4G LTE and 5G networks, the upload and download speeds can be comparable, with the ratio between {\color{OliveGreen} w2s} and {\color{RedPrint} s2w} bandwidths bounded within an order of magnitude \citep{huang2012close, narayanan2021variegated}. In such cases, {\color{RedPrint} s2w} communication costs become non-negligible, and optimizing for both directions is essential for practical efficiency \citep{zheng2019communication, liu2020double, philippenko2021preserved, fatkhullin2021ef21, gruntkowska2023ef21, tyurin20232direction, gruntkowska2024improving}.

\subsection{Error Feedback}\label{sec:ef_rev}

To address the communication bottleneck, a natural approach is to apply biased compressors to the transmitted gradients. For the standard (Euclidean) \algnamesmall{GD}, which iterates $$X^{k+1} = X^k - \gamma^k \nabla f(X^k) = X^k - \gamma^k \parens{\frac{1}{n} \sum_{j=1}^n \nabla f_j(X^k)},$$ where $\gamma^k>0$ is the stepsize, this would yield the update rule $$X^{k+1} = X^k - \gamma^k \parens{\frac{1}{n} \sum_{j=1}^n \cC_j^k(\nabla f_j(X^k))}.$$ Sadly, this ``enhancement'' can result in exponential divergence, even in simplest setting of minimizing the average of three strongly convex quadratic functions \citep[Example 1]{beznosikov2020biased}.
Empirical evidence of such instability appeared much earlier, prompting \citet{seide2014bit} to propose a remedy in the form of an \emph{error feedback} (\algnamesmall{EF}) mechanism, which we refer to as \algnamesmall{EF14}.

Initial theoretical insights into \algnamesmall{EF14} were established in the simpler single-node setting \citep{stich2018sparsified, alistarh2018convergence}. The method was subsequently analyzed in the convex case by \citet{karimireddy2019error, beznosikov2020biased, gorbunov2020linearly}. Next, \citet{qian2021error} showed that error feedback methods can be combined with Nesterov-style acceleration \citep{nesterov2003introductory}, though at the cost of incorporating additional unbiased compression, leading to increased communication overhead per iteration. These analyses were later extended to the nonconvex regime by \citet{stich2019error}. This motivated a series of extensions combining error feedback with additional algorithmic components, such as bidirectional compression \citep{tang2019doublesqueeze}, decentralized training protocols \citep{koloskova2019decentralized}, and the incorporation of momentum either on the client \citep{zheng2019communication} or server side \citep{xie2020cser}. While these works advanced the state of the art, their guarantees relied on strong regularity assumptions, such as bounded gradients (BG) or bounded gradient similarity (BGS), which may be difficult to justify in practical deep learning scenarios.

The limitations of \algnamesmall{EF14} and its successors were partially overcome by \citet{richtarik2021ef21}, who proposed a refined variant termed {\ef}. {\ef} eliminates the need for strong assumptions such as BG and BGS, relying only on standard assumptions (smoothness of the local functions $f_j$ and the existence of a global lower bound on $f$), while improving the iteration complexity to the desirable $\cO(\nicefrac{1}{\sqrt{K}})$ in the deterministic setting. Building on this foundation, a series of extensions and generalizations followed. These include adaptations to partial participation, variance-reduction, proximal setting, and bidirectional compression \citep{fatkhullin2021ef21}, a generalization from contractive to three-point compressors \citep{richtarik20223pc}, support for adaptive compression schemes \citep{makarenko2022adaptive}, and {\efp}--a modification of {\ef} from gradient compression to model compression \citep{gruntkowska2023ef21}. Further developments used {\ef} in the design of Byzantine robust methods \citep{rammal2024communication}, applied it to Hessian communication \citep{islamov2023distributed}, and extended the theoretical analysis to the $(L^0, L^1)$--smooth regime \citep{khirirat2024error}.

With this historical overview in place, we now narrow our focus to two developments in the error feedback literature that are particularly relevant to this work: {\ef} \citep{richtarik2021ef21} and {\efp} \citep{gruntkowska2023ef21}.

{\ef} is a method for {\color{OliveGreen} w2s} communication compression. It aims to solve problem \eqref{eq:problem} via the iterative process
\begin{align*}
    X^{k+1} &= X^k - \gamma G^k, \\
    G_j^{k+1} &= G_j^k + \cC_j^k(\nabla f_j(X^{k+1}) - G_j^k), \\
    G^{k+1} &= \frac{1}{n} \sum_{j=1}^n G_j^{k+1},
\end{align*}
where $\gamma>0$ is the stepsize and $\cC_j^k \in \mathbb{B}_2(\alpha_D)$ are independent contractive compressors.
In the {\ef} algorithm, each client $j$ keeps track of a gradient estimator $G_j^k$. At each iteration, the clients compute their local gradient $\nabla f_j(X^{k+1})$, subtract the stored estimator $G_j^k$, and then compresses this difference using a biased compression operator. The compressed update is sent to the server, which aggregates updates from all clients and uses them to update the global model. Concurrently, each client updates its error feedback vector by using the same compressed residual. Importantly, {\ef} compresses only the uplink communication (i.e., vectors sent from clients to the server), while downlink communication remains uncompressed. That is, the global model $X^{k+1}$ is transmitted in full precision from the server to all clients, under the assumption that downlink communication is not a bottleneck.

A complementary approach is proposed in the follow-up work of \citet{gruntkowska2023ef21}, which introduces a primal variant of {\ef}, referred to as {\efp}. Unlike {\ef}, which targets uplink compression (from workers to server), {\efp} is explicitly designed for {\color{RedPrint} s2w} compression. The method proceeds via the iterative scheme
\begin{align*}
    X^{k+1} &= X^k - \gamma \nabla f(W^k) = X^k - \gamma \frac{1}{n} \sum_{j=1}^n \nabla f_j(W^k), \\
    W^{k+1} &= W^k + \cC^k(X^{k+1} - W^k), 
\end{align*}
where $\gamma>0$ is the stepsize and $\cC^k \in \mathbb{B}_2(\alpha_P)$ are independent contractive compressors.
Analogous to {\ef}, the {\efp} method employs error feedback to compensate for the distortion introduced by compression. However, rather than correcting gradient estimates, {\efp} maintains and updates an estimate of the model parameters, $W^k$. The server computes the update $X^{k+1}$, but broadcasts only a compressed difference $\cC^k(X^{k+1} - W^k)$ to the clients.

In its basic form, {\efp} assumes dense uplink communication--i.e., the clients transmit full gradients $\nabla f_j(W^k)$ to the server. Nonetheless, {\efp} can be naturally extended to bidirectional compression by integrating it with an uplink compression mechanism, enabling full communication efficiency \citep{gruntkowska2023ef21}.

\subsection{Generalized Smoothness}\label{sec:gen_smooth_rev}

A standard assumption in the convergence analysis of gradient-based methods is Lipschitz smoothness of the gradient (\Cref{as:smoothness}). However, many modern learning problems--especially in deep learning--violate this assumption. Empirical evidence has shown non-smoothness in a variety of architectures and tasks, including \texttt{LSTM} language modeling, image classification with \texttt{ResNet20} \citep{zhang2020why}, and transformer models \citep{crawshaw2022robustness}. These observations motivated the search for alternative smoothness models that better reflect the behavior of practical objectives.

One such model is $(L^0, L^1)$--smoothness, introduced by \citet{zhang2020why} for twice continuously differentiable functions in the Euclidean setting. The authors define a function $f: \R^d \to \R$ to be $(L^0, L^1)$--smooth if
\begin{align*}
    \norm{\nabla^2 f(X)}_2 \leq L^0 + L^1 \norm{\nabla f(X)}_2 \qquad\forall X\in\R^d.
\end{align*}
This condition generalizes standard Lipschitz smoothness and has been shown empirically to capture deep learning loss landscapes more faithfully than the classical model \citep{zhang2020why, crawshaw2022robustness}.
Subsequent works extended the above condition beyond the twice differentiable case \citep{li2023convex, chen2023generalized}. In particular, \citet{chen2023generalized} introduced asymmetric and symmetric variants of $(L^0, L^1)$--smoothness, where the asymmetric form (a special case of \Cref{as:gen_smoothness} restricted to Euclidean norms) is given by
\begin{eqnarray*}
    \norm{\nabla f(X) - \nabla f(Y)}_2 \leq \parens{L^0 + L^1 \norm{\nabla f(X)}_2} \norm{X - Y}_2 \quad \forall X, Y \in \R^d.
\end{eqnarray*}
This framework has since been used in the non-Euclidean setting \citep{pethick2025generalized} and adapted to the layer-wise structure of deep networks by \citet{riabinin2025gluon}, who introduced non-Euclidean \emph{layer-wise} $(L^0, L^1)$--smoothness assumption (\Cref{as:layer_gen_smoothness}). This layer-aware view aligns naturally with LMO-based optimizers that operate on individual parameter groups.

The idea of accounting for the heterogeneous structure of parameters is not unique to the work of \citet{riabinin2025gluon}. Anisotropic smoothness conditions, where smoothness constants can vary across coordinates or parameter blocks, have been studied extensively, for example in the context of coordinate descent methods \citep{nesterov2012efficiency, richtarik2014iteration, nutini2017let}. Variants of coordinate-wise or block-wise (generalized) smoothness assumptions have also been used to analyze algorithms such as \algnamesmall{signSGD} \citep{bernstein2018signsgd, crawshaw2022robustness}, \algnamesmall{AdaGrad} \citep{jiang2024convergence, liu2024AdaGrad}, and \algnamesmall{Adam} \citep{xie2024adam}.
These works collectively reinforce the need for smoothness models that reflect the anisotropic geometry of modern neural networks.

\newpage

\section{Layer-Wise Setup}\label{sec:layer_setup}

So far, we have been operating in an abstract vector space $\cS$, without assuming any particular structure. This is the standard approach in the vast majority of the theoretical optimization literature in machine learning, where model parameters are typically flattened into vectors in $\R^d$.
However, modern deep networks are inherently structured objects, with a clear \emph{layer-wise} organization. While treating parameters as flat vectors can still yield meaningful convergence guarantees, explicitly modeling this layer-wise structure allows us to formulate assumptions that more accurately reflect the underlying geometry of the model \citet{nesterov2012efficiency, richtarik2014iteration, crawshaw2022robustness, jiang2024convergence}. This, in turn, can lead to improved theoretical results \citep{liu2024AdaGrad, riabinin2025gluon}.

A further motivation for adopting the layer-wise perspective is that the algorithms that inspired this work--\algnamesmall{Muon}, \algnamesmall{Scion}, and \algnamesmall{Gluon}--are themselves \emph{layer-wise by design}. Rather than operating on the entire parameter vector, they apply separate LMO updates to each layer or building block independently. This modular treatment is one of the main reasons for their strong empirical performance.

With this motivation in mind, we now turn to solving the optimization problem \eqref{eq:problem} in a setting where the parameter vector $X \in \cS$ represents a collection of matrices $X_i \in \cS_i \eqdef \R^{m_i \times n_i}$ corresponding to the trainable parameters of each layer $i \in \{1,\ldots, p\}$ in a neural network. For notational convenience, we write $X = [X_1, \ldots, X_p]$ and $\nabla f(X) = [\nabla_1 f(X), \dots, \nabla_p f(X)]$, where $\nabla_i f(X)$ is the gradient component corresponding to the $i$th layer. Accordingly, $\cS$ is the $d$-dimensional product space
\begin{align*}
    \squeeze \cS \eqdef \bigotimes_{i = 1}^{p} \cS_i \equiv \cS_1 \otimes \cdots \otimes \cS_p,
\end{align*}
where $d \eqdef \sum_{i=1}^p m_i n_i $. Each component space $\cS_i$ is equipped with the trace inner product, defined as  $\langle X_i, Y_i \rangle_{(i)} \eqdef \tr(X_i^{\top} Y_i)$ for $X_i,Y_i \in \cS_i$, and an arbitrary norm $\norm{\cdot}_{(i)}$, not necessarily induced by this inner product. We use $\norm{\cdot}_{(i) \star}$ to denote the dual norm associated with~$\norm{\cdot}_{(i)}$ (i.e., $\|X_i\|_{(i) \star} \eqdef \sup_{\norm{Z_i}_{(i)} \leq 1} \inp{X_i}{Z_i}_{(i)}$ for any $X_i\in \cS_i$). Furthermore, we use $\underline{\rho}_i, \bar{\rho}_i > 0$ to denote the norm equivalence constants such that
\begin{align*}
    \underline{\rho}_i \norm{X_i}_{(i)} \leq \norm{X_i}_2 \leq \bar{\rho}_i \norm{X_i}_{(i)} \qquad \forall X_i\in\cS_i,
\end{align*}
(or, equivalently, $\underline{\rho}_i \norm{X_i}_2 \leq \norm{X_i}_{(i) \star} \leq \bar{\rho}_i \norm{X_i}_2$).

\begin{remark}
    In the case of \algnamesmall{Muon}, the norms $\norm{\cdot}_{(i)}$ are taken to be the spectral norms, i.e., $\norm{\cdot}_{(i)} = \norm{\cdot}_{2\to2}$. Since for any matrix $X_i$ of rank at most $r$, we have
    \begin{eqnarray*}
        \norm{X_i}_{2\to2} \leq \norm{X_i}_F \leq \sqrt{r} \norm{X_i}_{2\to2},
    \end{eqnarray*}
    in this setting, $\underline{\rho}_i = 1$ and $\bar{\rho}_i = \sqrt{r}$.
\end{remark}

Given the block structure of $X$ across layers, the smoothness assumptions in \Cref{as:smoothness} can be made more precise by assigning separate constants to each layer.
\begin{assumption}[Layer-wise smoothness]
    \label{as:layer_smoothness}
    The function $f: \cS \mapsto \R$ is layer-wise $L^0$--smooth with constants $L^0 \eqdef (L_1^0, \ldots, L_p^0) \in \R_+^p$, i.e.,
    \begin{eqnarray*}
       \norm{\nabla_i f(X) - \nabla_i f(Y)}_{(i) \star} \leq L^0_i \norm{X_i - Y_i}_{(i)}
    \end{eqnarray*}
    for all $i=1,\dots,p$ and all $X = [X_1, \ldots, X_p]\in \cS$, $Y = [Y_1, \ldots, Y_p] \in \cS$.
\end{assumption}

\begin{assumption}[Local layer-wise smoothness]
    \label{as:workers_layer_smoothness}
    The functions $f_j: \cS \mapsto \R$, $j \in [n]$, are layer-wise $L^0_j$--smooth with constants $L^0_j \eqdef (L_{1,j}^0, \ldots, L_{p,j}^0) \in \R_+^p$, i.e., 
    \begin{eqnarray*}
       \norm{\nabla_i f_j(X) - \nabla_i f_j(Y)}_{(i) \star} \leq L^0_{i,j} \norm{X_i - Y_i}_{(i)}
    \end{eqnarray*}
    for all $i=1,\dots,p$ and all $X = [X_1, \ldots, X_p]\in \cS$, $Y = [Y_1, \ldots, Y_p] \in \cS$.
    We define $(\tilde{L}_i^0)^2 \eqdef \frac{1}{n} \sum_{j=1}^n (L_{i,j}^0)^2$.
\end{assumption}
We invoke Assumptions~\ref{as:layer_smoothness} and~\ref{as:workers_layer_smoothness} in Appendices~\ref{sec:layer_smooth_det} and~\ref{sec:layer_smooth_stoch} to extend Theorems~\ref{thm:ef_gluon_det_sq_p1} and~\ref{thm:ef_layer_gluon_m_p1} to the more general setting.

Smoothness is the standard assumption used in virtually all convergence results for \algnamesmall{Muon} and \algnamesmall{Scion} \citep{kovalev2025understanding, pethick2025training, li2025note} (except for the recent work on \algnamesmall{Gluon} \citep{riabinin2025gluon}). However, as discussed in \Cref{sec:conv_results} and \Cref{sec:gen_smooth_rev}, this assumption often fails to hold in modern deep learning settings. To address this, we adopt a more flexible and expressive condition: the \emph{layer-wise $(L^0, L^1)$--smoothness} assumption \citep{riabinin2025gluon}.

\begin{assumption}[Layer-wise $(L^0, L^1)$--smoothness]\label{as:layer_gen_smoothness}
    The function $f: \cS \mapsto \R$ is layer-wise $(L^0, L^1)$--smooth with constants $L^0 \eqdef (L^0_1, \ldots, L^0_p)\in \R^p_+$ and $L^1 \eqdef (L^1_1, \ldots, L^1_p)\in \R^p_+$, i.e.,
    \begin{eqnarray*}
        \norm{\nabla_i f(X) - \nabla_i f(Y)}_{(i) \star} \leq \parens{L^0_i + L^1_i \norm{\nabla_i f(X)}_{(i) \star}} \norm{X_i - Y_i}_{(i)}
    \end{eqnarray*}
    for all $i=1,\dots,p$ and all $X = [X_1, \ldots, X_p]\in \cS$, $Y = [Y_1, \ldots, Y_p] \in \cS$.
\end{assumption}

Since, unlike \algnamesmall{Gluon}, we operate in the distributed setting, we will also need an analogous assumption on the local functions $f_j$.
\begin{assumption}[Local layer-wise $(L^0, L^1)$--smoothness]\label{as:workers_layer_gen_smoothness}
    The functions $f_j$, $j \in [n]$, are layer-wise $(L_j^0, L_j^1)$--smooth with constants $L^0_j \eqdef (L_{1,j}^0, \ldots, L_{p,j}^0) \in \R_+^p$ and $L^1_j \eqdef (L_{1,j}^1, \ldots, L_{p,j}^1) \in \R_+^p$, i.e., 
    \begin{eqnarray*}
        \norm{\nabla_i f_j(X) - \nabla_i f_j(Y)}_{(i) \star} \leq \parens{L^0_{i,j} + L^1_{i,j} \norm{\nabla_i f_j(X)}_{(i) \star}} \norm{X_i - Y_i}_{(i)}
    \end{eqnarray*}
    for all $i=1,\dots,p$ and all $X = [X_1, \ldots, X_p]\in \cS$, $Y = [Y_1, \ldots, Y_p] \in \cS$.
    
    For $O\in\{0,1\}$, we define $L^O_{\max,j} \eqdef \max_{i\in[p]} L^O_{i,j}$, $L^O_{i,\max} \eqdef \max_{j\in[n]} L^O_{i,j}$ and $\bar{L}^0_i \eqdef \frac{1}{n} \sum_{j=1}^n L^0_{i,j}$.
\end{assumption}

\citet{riabinin2025gluon} present empirical evidence showing that this more flexible, layer-wise approach is essential for accurately modeling the network’s underlying structure. They demonstrate that the layer-wise $(L^0, L^1)$–smoothness condition approximately holds along the training trajectory of \algnamesmall{Gluon} in experiments on the \texttt{NanoGPT} language modeling task. Motivated by these findings, in Appendices~\ref{sec:gen_smooth_det} and~\ref{sec:gen_smooth_stoch}, we provide an analysis within this generalized framework, offering a \emph{full generalization of \algnamesmall{Gluon} to bidirectional compression}.

In the stochastic setting, we will also require a layer-wise analogue of \Cref{as:bounded_var}.

\begin{assumption}\label{as:bounded_layer_var}
    The stochastic gradient estimators $\nabla f_j(\cdot; \xi_j): \cS \mapsto \cS$ are unbiased and have bounded variance. That is, $\ExpSub{\xi_j \sim \cD_j}{\nabla f_j(X; \xi_j)} = \nabla f_j(X)$ for all $X \in \cS$ and there exist $\sigma_i \geq 0$ such that
    \begin{align*}
        \ExpSub{\xi_j \sim \cD_j}{\norm{\nabla_i f_j(X; \xi_j) - \nabla_i f_j(X)}^2_2} \leq \sigma_i^2, \quad \forall X \in \cS, \, i = 1, \dots, p.
    \end{align*}
\end{assumption}
We permit layer-dependent variance parameters $\sigma_i^2$, motivated by empirical evidence that variance is not uniform across layers. For example, \citet{glentis2025minimalist} observe that, during training of \texttt{LLaMA 130M} with \algnamesmall{SGD} and column-wise normalization (i.e., \algnamesmall{Gluon} using the $\norm{\cdot}_{1 \to 2}$ norm), the final and embedding layers display significantly higher variance.

\subsection{Muon, Scion and Gluon}\label{sec:muon_scion_gluon}

\algnamesmall{Muon}, introduced by \citet{jordan2024muon}, is an optimizer for the hidden layers of neural networks (the first and last layers are trained with \algnamesmall{AdamW}). Unlike traditional element-wise gradient methods, it updates each weight matrix as a whole. Given a layer $X_i$ and the corresponding (stochastic) gradient $G_i$, \algnamesmall{Muon} selects an update that maximizes the alignment with the gradient to reduce loss, while constraining the update's size to avoid excessive model perturbation. This is formulated as a constrained optimization problem over the spectral norm ball:
\begin{align}\label{eq:const_muon}
    \argmin_{\Delta X_i} \inp{G_i}{\Delta X_i} \quad \text{s.t.} \quad \|\Delta X_i\|_{2\to2} \leq t_i,
\end{align}
where the radius $t_i>0$ plays a role similar to a stepsize. The optimal update $\Delta X_i$ is obtained by orthogonalizing $G_i$ via its singular value decomposition $G_i = U_i \Sigma_i V_i^T$, leading to
\begin{align*}
    \Delta X_i = - t_i U_i V_i^T.
\end{align*}
This yields the basic update
\begin{align}\label{eq:muon2}
    X_i^{k+1} = X_i^k + \Delta X_i^k
    = X_i^k - t_i^k U_i^k (V_i^k)^T.
\end{align}
In practice, computing the SVD exactly at every step is expensive and not GPU-friendly.
\algnamesmall{Muon} instead uses Newton–Schulz iterations \citep{kovarik1970some, bjorck1971iterative} to approximate the orthogonalization. Combined with momentum, the practical update is
\begin{align*}
    M_i^k = (1-\beta_i) M_i^{k-1} + \beta_i G_i^k, \qquad
    X_i^{k+1} = X_i^k - t_i^k \text{NewtonSchulz}(M_i^k),
\end{align*}
where $\beta_i \in (0,1]$ is the momentum parameter and $M_i^k$ is the momentum-averaged gradient.

While Newton–Schulz iterations and momentum are crucial for practical efficiency, the essence of \algnamesmall{Muon} lies in solving \eqref{eq:const_muon}--that is, computing the linear minimization oracle (LMO) over the spectral norm ball.
Recall that $\lmo{\cB(X,t)}{G} \eqdef \argmin_{Z \in \cB(X,t)} \inp{G}{Z}$. Then
\begin{align*}
    \Delta X_i = \argmin_{Z_i \in \cB_i^{2\to2}(0,t_i)} \inp{G_i}{Z_i}
    = \lmo{\cB_i^{2\to2}(0,t_i)}{G_i}
\end{align*}
where $\cB_i^{2\to2}(0,t_i) \eqdef \{Z_i \in \cS_i: \norm{Z_i}_{2\to2} \leq t_i\}$ is the spectral norm ball of radius $t_i$ around $0$. Thus, the update \eqref{eq:muon2} can equivalently be written as
\begin{align}\label{eq:muon3}
    X_i^{k+1} = X_i^k + \lmo{\cB_i^{2\to2}(0,t_i^k)}{G_i^k},
\end{align}
where $G_i^k$ may be replaced with a momentum term.

Crucially, nothing in this formulation ties us to the spectral norm. The same update structure can be defined over any norm ball, opening the door to an entire family of optimizers whose properties depend on the underlying geometry. This insight has led to several \algnamesmall{Muon}-inspired methods with provable convergence guarantees \citep{pethick2025training, kovalev2025understanding, riabinin2025gluon}.
\algnamesmall{Scion} \citep{pethick2025training} removes the restriction to matrix-shaped layers by applying LMO-based updates to all layers, pairing the spectral norm for hidden layers with the $\norm{\cdot}_{1\to\infty}$ norm elsewhere.
\algnamesmall{Gluon} \citep{riabinin2025gluon} expands the view even further: it provides a general convergence analysis for LMO updates over arbitrary norm balls, supported by a layer-wise $(L^0, L^1)$-–smoothness assumption that captures the heterogeneity of deep learning loss landscapes more accurately than standard smoothness.

\subsection{Layer-Wise EF21-Muon}\label{sec:layer_gluon}

\begin{algorithm}[t]
    \caption{Deterministic {\newalgsmall}}
    \label{alg:ef_gluon_det}
    \begin{algorithmic}[1]
    \State \textbf{Parameters:} radii $t_i^k > 0$ / stepsizes $\gamma_i^k$; initial iterate $X^0 = [X_1^0, \dots, X_p^0] \in \cS$ {\color{gray}(stored on the server)}; initial iterate shift $W^0 = X^0$  {\color{gray}(stored on the server and the workers)}; initial gradient estimators $G_j^0 = [G_{1,j}^0, \dots, G_{p,j}^0] = [\nabla_1 f_j(X^0), \dots, \nabla_p f_j(X^0)] \in \cS$ {\color{gray}(stored on the workers)}, $G^0 = \frac{1}{n}\sum_{j = 1}^n G_j^0$ {\color{gray}(stored on the server)}; worker compressors $\cC_i^k$; server compressors $\cC^k$
    \For{$k = 0, 1, \dots, K - 1$}
        \For{$i = 1, 2, \dots, p$}
            \State \label{step:lmo} $X_i^{k+1} = \lmo{\cB(X_i^k,t_i^k)}{G_i^k} = X_i^k - \gamma_i^k \parens{G_i^k}^{\sharp}$ \hfill {\scriptsize \color{gray} Take  LMO-type step}
            \State {\color{RedPrint}$S_i^k = \cC_i^k(X_i^{k+1} - W_i^k)$} \hfill {\scriptsize \color{gray} Compress shifted model on the server}
            \State {\color{RedPrint}$W_i^{k+1} = W_i^k + S_i^k$} \hfill {\scriptsize \color{gray} Update model shift}
            \State {\color{RedPrint}Broadcast $S^k = [S_1^k, \ldots, S_p^k]$ to all workers}
        \EndFor
        \For{$j = 1, \dots, n$ {\bf in parallel}}
            \For{$i = 1, 2, \dots, p$}
                \State $W_i^{k+1} = W_i^k + S_i^k$ \hfill  {\scriptsize \color{gray} Update model shift}
                \State {\color{OliveGreen}$R_{i,j}^{k+1} = \cC_{i,j}^k(\nabla_i f_j(W^{k+1}) - G_{i,j}^k)$} \hfill {\scriptsize \color{gray} Compress shifted gradient}
                \State {\color{OliveGreen}$G_{i,j}^{k+1} = G_{i,j}^k + R_{i,j}^{k+1}$}
            \EndFor
            \State {\color{OliveGreen}Broadcast $R_j^{k+1} = [R_{1,j}^{k+1}, \ldots, R_{p,j}^{k+1}]$ to the server}
        \EndFor
        \For{$i = 1, \dots, p$}
            \State $G_i^{k+1} = \frac{1}{n}\sum_{j = 1}^n G_{i,j}^{k+1} = G_i^k + \frac{1}{n}\sum_{j = 1}^n R_{i,j}^{k+1}$ \hfill {\scriptsize \color{gray} Compute gradient estimator}
        \EndFor
    \EndFor
    \end{algorithmic}
\end{algorithm}

\begin{algorithm}[t]
    \caption{{\newalgsmall}}
    \label{alg:ef_layer_muon_m_dist}
    \begin{algorithmic}[1]
    \State \textbf{Parameters:} radii $t_i^k > 0$ / stepsizes $\gamma_i^k$; momentum parameters $\beta_i\in(0,1]$; initial iterate $X^0 = [X_1^0, \dots, X_p^0] \in \cS$ {\color{gray}(stored on the server)}; initial iterate shift $W^0 = X^0$  {\color{gray}(stored on the server and the workers)}; initial gradient estimators $G_j^0 = [G_{1,j}^0, \dots, G_{p,j}^0] \in \cS$ {\color{gray}(stored on the workers)}; $G^0 = \frac{1}{n}\sum_{j = 1}^n G_j^0$ {\color{gray}(stored on the server)}; initial momentum $M_j^0=[M_{1,j}^0, \dots, M_{p,j}^0]\in \cS$ {\color{gray}(stored on the workers)}; worker compressors $\cC_{i,j}^k$; server compressors $\cC_i^k$
    \For{$k = 0, 1, \dots, K - 1$}
        \For{$i = 1, 2, \dots, p$}
            \State $X_i^{k+1} = \lmo{\cB(X_i^k,t_i^k)}{G_i^k} = X_i^k - \gamma_i^k \parens{G_i^k}^{\sharp}$ \hfill {\scriptsize \color{gray} Take  LMO-type step}
            \State {\color{RedPrint}$S_i^k = \cC_i^k(X_i^{k+1} - W_i^k)$} \hfill {\scriptsize \color{gray} Compress shifted model on the server}
            \State {\color{RedPrint}$W_i^{k+1} = W_i^k + S_i^k$} \hfill {\scriptsize \color{gray} Update model shift}
            \State {\color{RedPrint}Broadcast $S^k = [S_1^k, \ldots, S_p^k]$ to all workers}
        \EndFor
        \For{$j = 1, \dots, n$ {\bf in parallel}}
            \For{$i = 1, 2, \dots, p$}
                \State $W_i^{k+1} = W_i^k + S_i^k$ \hfill  {\scriptsize \color{gray} Update model shift}
                \State $M_{i,j}^{k+1} = (1-\beta_i) M_{i,j}^k + \beta_i \nabla_i f_j(W^{k+1}; \xi_j^{k+1})$ \hfill  {\scriptsize \color{gray} Compute momentum}
                \State {\color{OliveGreen}$R_{i,j}^{k+1} = \cC_{i,j}^k(M_{i,j}^{k+1} - G_{i,j}^k)$} \hfill {\scriptsize \color{gray} Compress shifted gradient}
                \State {\color{OliveGreen}$G_{i,j}^{k+1} = G_{i,j}^k + R_{i,j}^{k+1}$}
            \EndFor
            \State {\color{OliveGreen}Broadcast $R_j^{k+1} = [R_{1,j}^{k+1}, \ldots, R_{p,j}^{k+1}]$ to the server}
        \EndFor
        \For{$i = 1, 2, \dots, p$}
            \State $G_i^{k+1} = \frac{1}{n}\sum_{j = 1}^n G_{i,j}^{k+1} = G_{i,j}^k + \frac{1}{n} \sum_{j = 1}^n R_{i,j}^{k+1}$ \hfill {\scriptsize \color{gray} Compute gradient estimator}
        \EndFor
    \EndFor
    \end{algorithmic}
\end{algorithm}

The simplified {\newalgsmall} in \Cref{alg:ef_muon_m_dist}, analyzed in \Cref{sec:conv_stoch}, omits the layer-wise treatment introduced above. The full structured variant is given in \Cref{alg:ef_layer_muon_m_dist}. Its deterministic counterpart is formalized in \Cref{alg:ef_gluon_det}, extending the simplified version studied in \Cref{sec:conv_set}.

Both Algorithms \ref{alg:ef_gluon_det} and \ref{alg:ef_layer_muon_m_dist} operate on a per-layer basis. We now briefly describe their structure. For each layer~$i$, the parameters are updated via $X_i^{k+1} = \lmo{\cB(X_i^k,t_i^k)}{G_i^k}$ (equivalently, $X_i^{k+1} = X_i^k - \gamma_i^k \parens{G_i^k}^{\sharp}$, where $\gamma^k = \nicefrac{t^k}{\norm{G^k}_{\star}}$--see \Cref{sec:reformulations}). Next, the algorithms perform the {\color{RedPrint} server-to-workers (s2w)} compression, following a technique inspired by {\efp} \citep{gruntkowska2023ef21}. The resulting compressed messages {\color{RedPrint}$S_i^k = \cC_i^k(X_i^{k+1} - W_i^k)$} are sent to the workers. Each worker then updates the model shift and uses the resulting model estimate $W_i^{k+1}$ to compute the local (stochastic) gradient. This gradient is then used (either directly or within a momentum term) to form the compressed message {\color{OliveGreen}$R_{i,j}^{k+1}$}. This part of the algorithm follows the {\color{OliveGreen} workers-to-server (w2s)} compression strategy of {\ef} \citep{richtarik2021ef21}. The messages {\color{OliveGreen}$R_{i,j}^{k+1}$} are sent back to the server, which updates the layer-wise gradient estimators via $G_i^{k+1} = \frac{1}{n}\sum_{j = 1}^n G_{i,j}^{k+1} = G_i^k + \frac{1}{n}\sum_{j = 1}^n R_{i,j}^{k+1}$. This process is repeated until convergence.

\newpage

\section{LMO in Many Guises}\label{sec:reformulations}

As outlined in \Cref{sec:background}, the update rule \eqref{eq:lmo_muon}
\begin{align*}
    X^{k+1} = X^k + t^k \lmo{\cB(0,1)}{G^k}
\end{align*}
admits several equivalent reformulations.

\paragraph{LMO viewpoint.} The original update \eqref{eq:lmo_muon} is the solution of a simple linear minimization problem over a norm ball
\begin{align*}
    X^{k+1} = \lmo{\cB(X^k,t^k)}{G^k} = \argmin \limits_{X \in \cB(X^k,t^k)} \inp{G^k}{X},
\end{align*}
where $\cB(X,t) \eqdef \{Z\in \cS \;:\; \|Z-X\| \leq t\}$. The LMO satisfies
\begin{align*}
    \inp{G}{\lmo{\cB(X,t)}{G}} = -t \norm{G}_\star.
\end{align*}

\paragraph{Sharp operator viewpoint.} An equivalent perspective is obtained via the \emph{sharp operators} \citep{nesterov2012efficiency, kelner2014almost}.  
Define the function $\phi(X) \eqdef \frac{1}{2} \norm{X}^2$. Its Fenchel conjugate is given by $\phi^{\star}(G) \eqdef \sup_{X\in\cS} \brac{\inp{G}{X} - \phi(X)} = \frac{1}{2} \norm{X}_{\star}^2$, and its subdifferential $\partial \phi^{\star}$ coincides with the sharp operator:
\begin{eqnarray*}
    \partial \phi^{\star}(G) &=& \brac{X \in \cS : \inp{G}{X} = \norm{G}_{\star} \norm{X}, \norm{G}_{\star} = \norm{X}} \\
    &=& - \norm{G}_{\star} \lmo{\cB(0,1)}{G} \\
    &=& G^{\sharp},
\end{eqnarray*}
where $G^{\sharp} \eqdef \argmax_{X \in \cS} \{\inp{G}{X} - \frac{1}{2} \norm{X}^2\}$ is the \emph{sharp operator}. Therefore,
\begin{eqnarray*}
    X^{k+1} = X^k + t^k \lmo{\cB(0,1)}{G^k} = X^k - \frac{t^k}{\norm{G^k}_{\star}} \parens{G^k}^{\sharp},
\end{eqnarray*}
i.e., a normalized steepest descent step with effective stepsize $\gamma^k \eqdef \nicefrac{t^k}{\norm{G^k}_{\star}}$.

Two properties of the sharp operator used later are
\begin{align*}
    \inp{X}{X^\sharp} = \norm{X^\sharp}^2, \qquad
    \norm{X}_{\star} = \norm{X^\sharp}.
\end{align*}

\paragraph{Subdifferential viewpoint.} The negative LMO direction $- \lmo{\cB(0,1)}{A} = \argmax_{\norm{Z}=1} \inp{A}{Z}$ is a subdifferential of the dual norm $\partial\norm{\cdot}_{\star}(A)$, so \eqref{eq:lmo_muon} can also be written as
\begin{align*}
    X^{k+1} = X^k + t^k \lmo{\cB(0,1)}{G^k}
    = X^k - t^k H^k
\end{align*}
for some $H^k \in \partial\norm{\cdot}_{\star}(G^k)$, where by the definition of subdifferential, for any $G^k\neq0$,
\begin{align}\label{eq:subdiff}
    \inp{H^k}{G^k} = \norm{G^k}_{\star}, \quad \norm{H^k} = 1.
\end{align}

\newpage

\section{Non-Euclidean Contractive Compressors}\label{sec:non_e_compressors}

Recall from \Cref{def:contractive_compressor} that a mapping $\cC:\cS \to \cS$ is called a \textit{contractive compression operator} with parameter $\alpha \in (0, 1]$ if, for all $X \in \cS$,
\begin{equation}\label{eq:aergb}
    \Exp{\norm{\cC(X) - X}^2} \leq (1 - \alpha) \norm{X}^2.
\end{equation}
When $\norm{\cdot}$ is the Euclidean norm, a wide range of such compressors is available in the literature \citep{seide2014bit, alistarh2017qsgd, beznosikov2020biased, richtarik2021ef21, szlendak2021permutation, horvoth2022natural}. However, when $\norm{\cdot}$ is a \emph{non-Euclidean} norm, Euclidean contractivity does not in general imply contractivity with respect to $\norm{\cdot}$.  
Indeed, suppose that $\cC$ is contractive with respect to the Euclidean norm. Then, using norm equivalence, for any $X\in\cS$,
\begin{align*}
    \underline{\rho}^2 \Exp{\norm{\cC(X) - X}^2}
    \leq \Exp{\norm{\cC(X) - X}_2^2}
    \leq (1 - \alpha) \norm{X}_2^2
    \leq \bar{\rho}^2 (1 - \alpha) \norm{X}^2.
\end{align*}
Rearranging gives
\begin{align*}
    \Exp{\norm{\cC(X) - X}^2}
    \leq \frac{\bar{\rho}^2}{\underline{\rho}^2} (1 - \alpha) \norm{X}^2,
\end{align*}
and hence $\cC$ is \emph{not} contractive with respect to the norm $\norm{\cdot}$ unless $\alpha > 1 - \nicefrac{\underline{\rho}^2}{\bar{\rho}^2}$.
Consequently, dedicated compressors are needed when working outside the Euclidean setting.

In this section, we first present two simple examples of operators that satisfy condition \eqref{eq:aergb} for \emph{any} norm. These are, however, in general not very practical choices. We then turn to more useful examples of non-Euclidean compressors for several matrix norms of interest.

A simple deterministic example of a contractive compressor is the \emph{scaling} or \emph{damping} operator.

\begin{definition}[Deterministic Damping]
    For any $X\in\cS$, the deterministic damping operator with parameter $\gamma \in (0, 2)$ is defined as
    \begin{align*}
        \cC(X) = \gamma X.
    \end{align*}
\end{definition}
For this operator,
\begin{align*}
    \Exp{\norm{\cC(X) - X}^2}
    = (1-\gamma)^2 \norm{X}^2,
\end{align*}
and thus $\cC$ satisfies \Cref{def:contractive_compressor} with $\alpha = 1 - (1-\gamma)^2$ for any $\gamma \in (0, 2)$.

Despite meeting the definition, the deterministic damping operator is of little use in communication-constrained optimization: it merely scales the entire input vector by a constant, without reducing the amount of data to be transmitted. The fact that it formally satisfies the contractive compressor definition is more of a theoretical curiosity. It highlights that the definition captures a broader mathematical property that does not always align with the practical engineering goal of reducing data transmission.

The \emph{random dropout operator} (whose scaled, unbiased variant appears in the literature as the \emph{Bernoulli compressor} \citep{islamov2021distributed}) is a simple yet more practically relevant example of a contractive compressor that can reduce communication cost.

\begin{definition}[Random Dropout]
    For any $X\in\cS$, the random dropout operator with a probability parameter $p \in (0, 1]$ is defined as
    \begin{align*}
        \cC(X) =
        \begin{cases}
            X & \text{with probability } p, \\
            0 & \text{with probability } 1-p.
        \end{cases}
    \end{align*}
\end{definition}
Then
\begin{align*}
    \Exp{\norm{\cC(X) - X}^2} = (1-p) \norm{X}^2,
\end{align*}
and hence $\cC \in \mathbb{B}(p)$.

The examples of deterministic damping and random dropout apply to any valid norm defined on the space $\cS$. However, one can also design compressors directly for the norm of interest. A natural example for both the spectral norm $\norm{\cdot}_{2\to2}$ and the nuclear norm $\norm{\cdot}_{*}$ is based on truncated SVD.

\begin{definition}[Top$K$ SVD compressor]
    Let $X = U \Sigma V^\top \in \R^{m \times n}$ be a matrix of rank $r$, where $\Sigma = \diag(\sigma_1,\dots,\sigma_r)$ contains the singular values $\sigma_1 \geq \dots \geq \sigma_r > 0$. For $K < r$, the \emph{Top$K$ SVD compressor} is defined by
    \begin{align*}
        \cC(X) \eqdef U \Sigma_K V^\top, 
    \end{align*}
    where $\Sigma_K = \diag(\sigma_1,\dots,\sigma_K, 0, \dots, 0)$ retains the $K$ largest singular values, setting the rest to zero.
\end{definition}

The Top$K$ SVD compressor can be used in conjunction with several commonly used matrix norms:
\begin{itemize}
    \item \textbf{Spectral norm.} The spectral norm, frequently used in LMO-based optimization methods, is defined by $\norm{X}_{2\to2} = \sigma_1$. Under this norm, the compression residual is
    \begin{align*}
        \norm{X - \cC(X)}_{2\to2} = \sigma_{K+1}.
    \end{align*}
    This yields a valid contractive compressor (unless $\sigma_{K+1}^2 = \sigma_1^2$),
    and \Cref{def:contractive_compressor} is satisfied with parameter $\alpha = 1 - \nicefrac{\sigma_{K+1}^2}{\sigma_1^2}$.

    \item \textbf{Nuclear norm.} The nuclear norm, dual to the spectral norm, is given by $\norm{X}_{*} = \sum_{i=1}^r \sigma_i$. In this case,
    \begin{align*}
        \norm{X - \cC(X)}_* = \sum_{i=K+1}^r \sigma_i,
    \end{align*}
    and \Cref{def:contractive_compressor} holds with $\alpha = 1 - \parens{\frac{\sum_{i=K+1}^r \sigma_i}{\sum_{i=1}^r \sigma_i}}^2$.
    
    \item \textbf{Frobenius norm.} The Euclidean norm of the matrix can be expressed as $\norm{X}_F = \sqrt{\sum_{i=1}^r \sigma_i^2}$. Then,
    \begin{align*}
        \norm{X - \cC(X)}_F = \sqrt{\sum_{i=K+1}^r \sigma_i^2}.
    \end{align*}
    and so \Cref{def:contractive_compressor} is satisfied with $\alpha = 1 - \frac{\sum_{i=K+1}^r \sigma_i^2}{\sum_{i=1}^r \sigma_i^2}$.
\end{itemize}

In fact, the Top$K$ SVD compressor is naturally well-suited for a larger family of \emph{Schatten $p$-norm}, defined in terms of the singular values $\sigma_i$ of a matrix $X$ by
\begin{align*}
    \norm{X}_{S_p} = \parens{\sum_{i=1}^r \sigma_i^p}^{1/p}
\end{align*}
Important special cases include the nuclear norm (or trace norm) for $p=1$ (i.e., $\norm{X}_* = \norm{X}_{S_1}$), the Frobenius norm for $p=2$ (i.e., $\norm{X}_F = \norm{X}_{S_2}$), and the spectral norm for $p=\infty$ (i.e., $\norm{X}_{2\to2} = \norm{X}_{S_\infty}$). In general, it is easy to show that the Top$K$ SVD compressor satisfies \Cref{def:contractive_compressor} with respect to the $\norm{\cdot}_{S_p}$ norm with $$\alpha = 1 - \parens{\frac{\sum_{i=K+1}^r \sigma_i^p}{\sum_{i=1}^r \sigma_i^p}}^{2/p}.$$

\begin{remark}
    For large-scale matrices, computing the exact SVD may be computationally prohibitive. In such cases, one may resort to approximate methods to obtain a stochastic compressor $\widetilde{\cC}$ satisfying \Cref{def:contractive_compressor} \emph{in expectation}:
    \begin{align*}
        \Exp{\norm{\widetilde{\cC}(X) - X}^2} \leq (1 - \alpha + \delta) \, \norm{X}^2,
    \end{align*}
    where $\delta > 0$ quantifies the approximation error and can be made arbitrarily small.
\end{remark}

\begin{remark}
    The expressions for $\alpha$ above depend on the singular values of $X$, and hence $\alpha$ is generally \emph{matrix-dependent} rather than a uniform constant. For theoretical guarantees, one may take the minimum $\alpha$ observed over a training run. Alternatively, our framework admits a straightforward extension to iteration-dependent compression parameters.
\end{remark}

Beyond Schatten norms, similar ideas can be applied to other structured non-Euclidean norms. Throughout, we let $X_{i:}$, $X_{:j}$, and $X_{ij}$ denote the $i$th row, $j$th column, and $(i,j)$th entry of the matrix $X \in \R^{m \times n}$, respectively.

\begin{definition}[Column-wise Top$_p K$ compressor]
    The \emph{column-wise Top$_p K$ compressor} keeps the $K$ columns with largest $\ell_p$ norm, setting the rest to zero:
    \begin{align*}
        \cC(X)_{:j} =
        \begin{cases}
            X_{:j}, & j \in \mathcal{I}_K, \\
            0, & \text{otherwise},
        \end{cases}
    \end{align*}
    where $\mathcal{I}_K$ indexes the $K$ columns with the largest $\ell_p$ norm.
\end{definition}

This operator is naturally suited for the mixed $\ell_{p,q}$ norms ($p,q \geq 1$), defined as
\begin{align*}
    \norm{X}_{p,q} \eqdef \parens{\sum_{j=1}^n \parens{\sum_{i=1}^m |X_{ij}|^p}^{q/p}}^{1/q}
    = \parens{\sum_{j=1}^n \norm{X_{:j}}_p^q}^{1/q},
\end{align*}
where $\norm{\cdot}_p$ is the standard (vector) $\ell_p$ norm.
The compression residual satisfies
\begin{align*}
    \norm{X - \cC(X)}_{p,q} 
    = \left( \sum_{j \notin \mathcal{I}_K} \norm{X_{:j}}_p^q \right)^{1/q},
\end{align*}
and hence \Cref{def:contractive_compressor} holds with
\begin{align*}
    \alpha 
    = 1 - \parens{\frac{\sum_{j \notin \mathcal{I}_K} \norm{X_{:j}}_p^q}{\sum_{j=1}^n \norm{X_{:j}}_p^q}}^{2/q}.
\end{align*}
This general formulation recovers, for example, the $\ell_{2,1}$ norm (commonly used in robust data analysis~\citep{feiping2010efficient}) and the $\ell_{2,2}$ norm (Frobenius norm).

\subsection{Compression via Norm Selection}\label{sec:lmo_compressors}

A useful perspective on communication reduction in distributed optimization emerges from the connection between compression operators and mappings such as the \emph{sharp operator} and the LMO. 
Recall that for any norm $\norm{\cdot}$ with dual norm $\norm{\cdot}_\star$, the sharp operator of $G\in\R^{m\times n}$ is defined as
\begin{align*}
    G^{\sharp} \eqdef \argmax_{X \in \R^{m\times n}} \brac{\inp{G}{X} - \frac{1}{2} \norm{X}^2}.
\end{align*}
Since $\norm{G}_{\star} \, \lmo{\cB(0,1)}{G} = - G^{\sharp}$, one can view $G^{\sharp}$ as the LMO over the unit ball of $\norm{\cdot}$, scaled by $\norm{G}_{\star}$.

For many norms, $G^{\sharp}$ naturally acts as a structured compressor. 
Below, we list several such examples.

\begin{itemize}
    \item \textbf{Nuclear norm.} For the nuclear norm (with dual norm $\norm{\cdot}_{2\to 2}$, the operator/spectral norm), the sharp operator is
    \begin{align*}
        G^{\sharp} = \sigma_1 \, u_1 v_1^\top,
    \end{align*}
    where $\sigma_1$, $u_1$, and $v_1$ are the leading singular value and singular vectors of $G$, yielding a \emph{Rank$1$ compression} via truncated SVD. This operator satisfies \Cref{def:contractive_compressor} with parameter $\alpha=\nicefrac{1}{r}$, where $r$ is the rank of $G$.
    
    \item \textbf{Element-wise $\ell_1$ norm.}
    For the norm $\norm{X}_1 = \sum_{i=1}^m \sum_{j=1}^n |X_{ij}|$ (with dual $\norm{X}_\infty = \max_{i,j}|X_{ij}|$), the sharp operator is  
    \begin{align*}
        G^{\sharp}
        = {\rm Top}1(G)
        = \norm{G}_\infty E_{(i^{\star}j^{\star})},
    \end{align*}
    where $(i^\star,j^\star) = \argmax_{i,j} |G_{ij}|$ and $E_{(i^\star j^\star)}$ is the matrix with a $1$ in entry $(i^\star,j^\star)$ and zeros elsewhere.  
    Thus, the sharp operator associated with the $\ell_1$ norm corresponds to \emph{Top$1$ sparsification}, which satisfies \Cref{def:contractive_compressor} with $\alpha = \nicefrac{1}{mn}$.

    \item \textbf{Max row sum norm.}  
    For $\norm{X}_{\infty\to\infty} = \max_{1 \leq i \leq m} \sum_{j=1}^n |X_{ij}|$, the dual norm is $\norm{X}_{1,\infty} = \sum_{j=1}^n \norm{X_{:j}}_{\infty}$, and the sharp operator yields  
    \begin{align*}
        G^{\sharp} = \parens{\sum_{j=1}^n \norm{G_{:j}}_{\infty}} \left[ \sign({\rm Top}1(G_{:1}), \ldots, {\rm Top}1(G_{:n})) \right]
    \end{align*}
    i.e., it keeps a single non-zero entry in each column of $G$, with all of these entries equal across columns.
\end{itemize}

These are only some examples of the compression capabilities of sharp operators. 
They open the door to compressed server-to-worker communication even in the absence of primal compression, as briefly mentioned in \Cref{sec:conv_results}. 
Indeed, instead of broadcasting the compressed messages~{\color{RedPrint}$S^k$} in \Cref{alg:ef_muon_m_dist,alg:ef_gluon_det,alg:ef_layer_muon_m_dist}, the server can compute $G^{\sharp}$, transmit this naturally compressed object, and let the workers perform the model update locally. 
In doing so, we preserve communication efficiency while avoiding the introduction of additional primal compressors.

\newpage

\section{Convergence Analysis}\label{sec:conv_proofs}

\subsection{Descent Lemmas}

We provide two descent lemmas corresponding to the two smoothness regimes. The first applies to the layer-wise smooth setting.

\begin{lemma}[Descent Lemma I]\label{lemma:descent1}
    Let \Cref{as:layer_smoothness} hold and consider the update rule $X_i^{k+1} = X_i^k - \gamma_i^k \parens{G_i^k}^{\sharp}$, $i=1,\ldots,p$, where $X^{k+1} = [X_1^{k+1}, \ldots, X_p^{k+1}], X^k = [X_1^k, \ldots, X_p^k], G^k = [G_1^k, \ldots, G_p^k] \in \cS$ and $\gamma_i^k > 0$. Then
    \begin{eqnarray*}
        f(X^{k+1}) &\leq& f(X^k) + \sum_{i=1}^p \frac{3 \gamma_i^k}{2} \norm{\nabla_i f(X^k) - G_i^k}^2_{(i) \star} - \sum_{i=1}^p \frac{\gamma_i^k}{4} \norm{\nabla_i f(X^k)}^2_{(i) \star} \\
        &&- \sum_{i=1}^p \parens{\frac{1}{4 \gamma_i^k} - \frac{L_i^0}{2}} (\gamma_i^k)^2 \norm{G_i^k}_{(i) \star}^2.
    \end{eqnarray*}
\end{lemma}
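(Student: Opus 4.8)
The plan is to work layer by layer, since the objective decomposes additively across the layer updates once we telescope along coordinates. First I would invoke the standard consequence of \Cref{as:layer_smoothness}: for a layer-wise $L^0$--smooth function, the quadratic upper bound
\begin{align*}
  f(X^{k+1}) \leq f(X^k) + \sum_{i=1}^p \inp{\nabla_i f(X^k)}{X_i^{k+1} - X_i^k}_{(i)} + \sum_{i=1}^p \frac{L_i^0}{2} \norm{X_i^{k+1} - X_i^k}_{(i)}^2
\end{align*}
holds; this follows from integrating $\nabla_i f$ along the segment and using the layer-wise Lipschitz bound, exactly as in the Euclidean case but with the $(i)$-norms. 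Substituting the update $X_i^{k+1} - X_i^k = -\gamma_i^k (G_i^k)^{\sharp}$ turns the last sum into $\sum_i \frac{L_i^0}{2} (\gamma_i^k)^2 \norm{(G_i^k)^{\sharp}}_{(i)}^2 = \sum_i \frac{L_i^0}{2} (\gamma_i^k)^2 \norm{G_i^k}_{(i)\star}^2$, using the identity $\norm{X^{\sharp}} = \norm{X}_{\star}$ recorded in \Cref{sec:reformulations}.

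The heart of the argument is bounding the inner-product (first-order) term $-\gamma_i^k \inp{\nabla_i f(X^k)}{(G_i^k)^{\sharp}}_{(i)}$. I would write $\nabla_i f(X^k) = G_i^k + (\nabla_i f(X^k) - G_i^k)$ and split accordingly. For the $G_i^k$ piece, the sharp-operator identity $\inp{G}{G^{\sharp}} = \norm{G^{\sharp}}^2 = \norm{G}_{\star}^2$ gives exactly $-\gamma_i^k \norm{G_i^k}_{(i)\star}^2 = -\frac{1}{\gamma_i^k}(\gamma_i^k)^2\norm{G_i^k}_{(i)\star}^2$, which is the source of the $-\tfrac{1}{4\gamma_i^k}(\gamma_i^k)^2\norm{G_i^k}_{(i)\star}^2$ term (after we spend $\tfrac34$ of it elsewhere — see below). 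For the error piece $-\gamma_i^k \inp{\nabla_i f(X^k) - G_i^k}{(G_i^k)^{\sharp}}_{(i)}$, apply Cauchy--Schwarz in the dual pairing together with $\norm{(G_i^k)^{\sharp}}_{(i)} = \norm{G_i^k}_{(i)\star}$, then Young's inequality $ab \le \tfrac{c}{2}a^2 + \tfrac{1}{2c}b^2$ with a weight $c$ chosen (e.g. $c = 3/\gamma_i^k$, or absorbing a $1/\gamma_i^k$ factor suitably) so that the $\norm{G_i^k}_{(i)\star}^2$ part it generates is at most $\tfrac34 \gamma_i^k \norm{G_i^k}_{(i)\star}^2$, leaving $\tfrac14\gamma_i^k\norm{G_i^k}_{(i)\star}^2$ intact, and the residual part is controlled by $\tfrac{3\gamma_i^k}{2}\norm{\nabla_i f(X^k) - G_i^k}_{(i)\star}^2$ — matching the claimed coefficient. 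To produce the $-\tfrac{\gamma_i^k}{4}\norm{\nabla_i f(X^k)}_{(i)\star}^2$ term, I would additionally use $\norm{\nabla_i f(X^k)}_{(i)\star}^2 \le 2\norm{G_i^k}_{(i)\star}^2 + 2\norm{\nabla_i f(X^k) - G_i^k}_{(i)\star}^2$ in reverse: i.e. reserve a $\tfrac{\gamma_i^k}{2}\norm{G_i^k}_{(i)\star}^2$ chunk from the $\inp{G_i^k}{(G_i^k)^{\sharp}}$ term and rewrite it downward as $\tfrac{\gamma_i^k}{4}\norm{\nabla_i f(X^k)}_{(i)\star}^2 - \tfrac{\gamma_i^k}{2}\norm{\nabla_i f(X^k)-G_i^k}_{(i)\star}^2 \le \tfrac{\gamma_i^k}{4}\norm{\nabla_i f(X^k)}_{(i)\star}^2$, wait — more carefully, from $\tfrac{\gamma_i^k}{4}\norm{\nabla_i f}_{(i)\star}^2 \le \tfrac{\gamma_i^k}{2}\norm{G_i^k}_{(i)\star}^2 + \tfrac{\gamma_i^k}{2}\norm{\nabla_i f - G_i^k}_{(i)\star}^2$ we get $-\tfrac{\gamma_i^k}{2}\norm{G_i^k}_{(i)\star}^2 \le -\tfrac{\gamma_i^k}{4}\norm{\nabla_i f}_{(i)\star}^2 + \tfrac{\gamma_i^k}{2}\norm{\nabla_i f - G_i^k}_{(i)\star}^2$, so trading a $\tfrac{\gamma_i^k}{2}\norm{G_i^k}_{(i)\star}^2$ chunk yields the desired $-\tfrac{\gamma_i^k}{4}\norm{\nabla_i f}_{(i)\star}^2$ plus a further $\tfrac{\gamma_i^k}{2}\norm{\nabla_i f - G_i^k}_{(i)\star}^2$ contribution to the error term.

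The main obstacle, and the step requiring care, is the exact bookkeeping of how the single "budget" $\gamma_i^k \norm{G_i^k}_{(i)\star}^2$ coming from $\inp{G_i^k}{(G_i^k)^{\sharp}}_{(i)}$ is partitioned into three pieces — one kept as $\tfrac14\gamma_i^k\norm{G_i^k}_{(i)\star}^2$ (which combines with the $L_i^0$ term to give the final $-(\tfrac{1}{4\gamma_i^k} - \tfrac{L_i^0}{2})(\gamma_i^k)^2\norm{G_i^k}_{(i)\star}^2$), one spent against the Young cross-term from the error piece, and one traded down to create the $-\tfrac{\gamma_i^k}{4}\norm{\nabla_i f}_{(i)\star}^2$ term — so that all the error contributions add up to exactly $\tfrac{3\gamma_i^k}{2}\norm{\nabla_i f(X^k) - G_i^k}_{(i)\star}^2$ and no more. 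I would fix the Young weight first by demanding the coefficient of $\norm{\nabla_i f - G_i^k}_{(i)\star}^2$ come out to $3/2$ and check that the induced $\norm{G_i^k}_{(i)\star}^2$ coefficients are feasible (i.e. the remaining budget stays nonnegative), which holds precisely because $1 - \tfrac34 - \text{(Young share)} \ge \tfrac14$ works out. Summing over $i=1,\dots,p$ then gives the stated inequality; no smoothness is needed beyond \Cref{as:layer_smoothness} and the two sharp-operator identities $\inp{X}{X^{\sharp}}=\norm{X^{\sharp}}^2$, $\norm{X}_{\star}=\norm{X^{\sharp}}$.
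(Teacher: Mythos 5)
Your plan is correct and essentially reproduces the paper's own proof: the layer-wise quadratic bound (\Cref{lemma:layer_asym_gen_smooth} with $L_i^1=0$), the split of $\nabla_i f(X^k)$ into $G_i^k$ plus the error, the sharp-operator identities, Fenchel--Young on the cross term, and trading part of the $-\gamma_i^k\norm{G_i^k}_{(i)\star}^2$ budget through $\norm{\nabla_i f(X^k)}_{(i)\star}^2\le 2\norm{G_i^k}_{(i)\star}^2+2\norm{\nabla_i f(X^k)-G_i^k}_{(i)\star}^2$ are exactly the steps the paper uses, organized there with free parameters $s,r$ and then specialized to $s=1$, $r=\nicefrac{1}{2}$. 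Just fix the loose constants in your parentheticals: the consistent partition of the budget $\gamma_i^k\norm{G_i^k}_{(i)\star}^2$ is $\tfrac{\gamma_i^k}{4}$ kept, $\tfrac{\gamma_i^k}{4}$ absorbed by the Young step (whose error contribution is then $\gamma_i^k\norm{\nabla_i f(X^k)-G_i^k}_{(i)\star}^2$, not $\tfrac{3\gamma_i^k}{2}$), and $\tfrac{\gamma_i^k}{2}$ traded down (contributing $\tfrac{\gamma_i^k}{2}\norm{\nabla_i f(X^k)-G_i^k}_{(i)\star}^2$), so the total error coefficient is exactly $\tfrac{3\gamma_i^k}{2}$ --- letting the Young step eat $\tfrac{3}{4}\gamma_i^k\norm{G_i^k}_{(i)\star}^2$ (or choosing the weight ``$c=3/\gamma_i^k$'') would either exceed the budget or leave nothing to trade for the $-\tfrac{\gamma_i^k}{4}\norm{\nabla_i f(X^k)}_{(i)\star}^2$ term.
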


\begin{proof}
    First, for any $s>0$, we have
    \begin{eqnarray*}
        \norm{\nabla_i f(X^k)}^2_{(i) \star} &=& \norm{\nabla_i f(X^k) - G_i^k + G_i^k}^2_{(i) \star} \\
        &\overset{\eqref{eq:young}}{\leq}& \parens{1+s} \norm{\nabla_i f(X^k) - G_i^k}^2_{(i) \star} + \parens{1+\frac{1}{s}} \norm{G_i^k}^2_{(i) \star},
    \end{eqnarray*}
    meaning that
    \begin{eqnarray}\label{eq:oaisbgnv}
        - \norm{G_i^k}^2_{(i) \star} &\leq& \frac{1+s}{1+\frac{1}{s}} \norm{\nabla_i f(X^k) - G_i^k}^2_{(i) \star} - \frac{1}{1+\frac{1}{s}} \norm{\nabla_i f(X^k)}^2_{(i) \star} \nonumber \\
        &=& s \norm{\nabla_i f(X^k) - G_i^k}^2_{(i) \star} - \frac{s}{s+1} \norm{\nabla_i f(X^k)}^2_{(i) \star}.
    \end{eqnarray}
    Then, using layer-wise smoothness of $f$ and \Cref{lemma:layer_asym_gen_smooth} with $L_i^1=0$, we get
    \begin{eqnarray*}
        f(X^{k+1}) &\leq& f(X^k) + \inp{\nabla f(X^k)}{X^{k+1} - X^k} + \sum_{i=1}^p \frac{L^0_i}{2} \norm{X_i^k - X_i^{k+1}}_{(i)}^2 \\
        &=& f(X^k) + \sum_{i=1}^p \inp{\nabla_i f(X^k)}{X_i^{k+1} - X_i^k}_{(i)} + \sum_{i=1}^p \frac{L^0_i}{2} \norm{X_i^k - X_i^{k+1}}_{(i)}^2 \\
        &=& f(X^k) - \sum_{i=1}^p \gamma_i^k \inp{\nabla_i f(X^k) - G_i^k}{\parens{G_i^k}^{\sharp}}_{(i)}
        - \sum_{i=1}^p \gamma_i^k \inp{G_i^k}{\parens{G_i^k}^{\sharp}}_{(i)} \\
        &&+ \sum_{i=1}^p \frac{L^0_i}{2} (\gamma_i^k)^2 \norm{\parens{G_i^k}^{\sharp}}_{(i) \star}^2 \\
        &\overset{\eqref{eq:inpsharp}, \eqref{eq:normsharp}}{=}& f(X^k) - \sum_{i=1}^p \gamma_i^k \inp{\nabla_i f(X^k) - G_i^k}{\parens{G_i^k}^{\sharp}}_{(i)} - \sum_{i=1}^p\frac{\gamma_i^k}{2} \norm{G_i^k}_{(i) \star}^2 \\
        &&- \sum_{i=1}^p\frac{\gamma_i^k}{2} \norm{G_i^k}_{(i) \star}^2 + \sum_{i=1}^p\frac{L^0_i}{2} (\gamma_i^k)^2 \norm{G_i^k}_{(i) \star}^2 \\
        &\overset{\eqref{eq:oaisbgnv}}{\leq}& f(X^k) - \sum_{i=1}^p \gamma_i^k \inp{\nabla_i f(X^k) - G_i^k}{\parens{G_i^k}^{\sharp}} - \sum_{i=1}^p \frac{\gamma_i^k}{2} \norm{G_i^k}_{(i) \star}^2 \\
        &&+ \sum_{i=1}^p \frac{\gamma_i^k}{2} s \norm{\nabla_i f(X^k) - G_i^k}^2_{(i) \star} - \sum_{i=1}^p \frac{\gamma_i^k}{2} \frac{s}{s+1} \norm{\nabla_i f(X^k)}^2_{(i) \star} \\
        &&+ \sum_{i=1}^p \frac{L_i^0}{2} (\gamma_i^k)^2 \norm{G_i^k}_{(i) \star}^2.
    \end{eqnarray*}
    Therefore, applying Fenchel's inequality, we get
    \begin{eqnarray*}
        &&\hspace{-8mm}f(X^{k+1}) \\
        &\overset{\eqref{eq:fenchel}}{\leq}& f(X^k) + \sum_{i=1}^p \left( \frac{\gamma_i^k}{2 r} \norm{\nabla_i f(X^k) - G_i^k}^2_{(i) \star} + \frac{\gamma_i^k r}{2} \norm{\parens{G_i^k}^{\sharp}}^2_{(i) \star} - \frac{\gamma_i^k}{2} \norm{G_i^k}_{(i) \star}^2  \right. \\
        &&\left.+ \frac{\gamma_i^k}{2} s \norm{\nabla_i f(X^k) - G_i^k}^2_{(i) \star} - \frac{\gamma_i^k}{2} \frac{s}{s+1} \norm{\nabla_i f(X^k)}^2_{(i) \star} + \frac{L_i^0}{2} (\gamma_i^k)^2 \norm{G_i^k}_{(i) \star}^2 \right) \\
        &=& f(X^k) + \sum_{i=1}^p \parens{\frac{\gamma_i^k}{2 r} + \frac{\gamma_i^k s}{2}} \norm{\nabla_i f(X^k) - G_i^k}^2_{(i) \star} - \sum_{i=1}^p \frac{\gamma_i^k}{2} \frac{s}{s+1} \norm{\nabla_i f(X^k)}^2_{(i) \star} \\
        &&- \sum_{i=1}^p \parens{\frac{1-r}{2 \gamma_i^k} - \frac{L_i^0}{2}} (\gamma_i^k)^2 \norm{G_i^k}_{(i) \star}^2
    \end{eqnarray*}
    for any $r>0$. Choosing $s=1$ and $r=\nicefrac{1}{2}$ finishes the proof.
\end{proof}

The next lemma is specific to the layer-wise smooth case.

\begin{lemma}[Descent Lemma II]\label{lemma:descent2}
    Let \Cref{as:layer_gen_smoothness} hold and consider the update rule $X_i^{k+1} = \lmo{\cB(X_i^k,t_i^k)}{G_i^k}$, $i=1,\ldots,p$, where $X^{k+1} = [X_1^{k+1}, \ldots, X_p^{k+1}], X^k = [X_1^k, \ldots, X_p^k], G^k = [G_1^k, \ldots, G_p^k] \in \cS$ and $t_i^k > 0$. Then
    \begin{eqnarray*}
        f(X^{k+1}) &\leq& f(X^k) + \sum_{i=1}^p 2 t_i^k \norm{\nabla_i f(X^k) - G_i^k}_{(i) \star} - \sum_{i=1}^p t_i^k \norm{\nabla_i f(X^k)}_{(i) \star} \\
        &&+ \sum_{i=1}^p \frac{L^0_i + L^1_i \norm{\nabla_i f(X^k)}_{(i) \star}}{2} (t_i^k)^2.
    \end{eqnarray*}
\end{lemma}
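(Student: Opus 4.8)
This is the $(L^0,L^1)$ counterpart of \Cref{lemma:descent1}, but since the second-order term now carries the data-dependent factor $L_i^0 + L_i^1\norm{\nabla_i f(X^k)}_{(i) \star}$ rather than a constant, it is cleaner to keep the \emph{LMO form} of the update throughout and to work with first powers of norms, instead of the sharp-operator/squared-norm manipulations used for \Cref{lemma:descent1}. The plan is to start from the layer-wise generalized descent inequality --- that is, \Cref{lemma:layer_asym_gen_smooth} applied to the present update --- which reads
\begin{align*}
    f(X^{k+1}) \leq f(X^k) + \sum_{i=1}^p \inp{\nabla_i f(X^k)}{X_i^{k+1} - X_i^k}_{(i)} + \sum_{i=1}^p \frac{L_i^0 + L_i^1 \norm{\nabla_i f(X^k)}_{(i) \star}}{2} \norm{X_i^{k+1} - X_i^k}_{(i)}^2,
\end{align*}
and then to bound the two sums using the two defining properties of the step $X_i^{k+1} = \lmo{\cB(X_i^k,t_i^k)}{G_i^k}$: feasibility, $\norm{X_i^{k+1} - X_i^k}_{(i)} \leq t_i^k$, and the LMO optimality identity $\inp{G_i^k}{X_i^{k+1} - X_i^k}_{(i)} = -t_i^k \norm{G_i^k}_{(i) \star}$ from \Cref{sec:reformulations}.

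\textbf{Bounding the two sums.} The second sum is immediate: feasibility gives $\norm{X_i^{k+1} - X_i^k}_{(i)}^2 \leq (t_i^k)^2$, so that sum is at most $\sum_{i=1}^p \frac{L_i^0 + L_i^1 \norm{\nabla_i f(X^k)}_{(i) \star}}{2}(t_i^k)^2$, which is exactly the last term in the claim. For the first sum I would insert $\pm G_i^k$ and invoke the optimality identity to obtain, layer by layer, $\inp{\nabla_i f(X^k)}{X_i^{k+1} - X_i^k}_{(i)} = \inp{\nabla_i f(X^k) - G_i^k}{X_i^{k+1} - X_i^k}_{(i)} - t_i^k\norm{G_i^k}_{(i) \star}$. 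The cross term is bounded by H\"older's inequality for the dual pair $(\norm{\cdot}_{(i)},\norm{\cdot}_{(i) \star})$ combined with feasibility, giving at most $t_i^k\norm{\nabla_i f(X^k) - G_i^k}_{(i) \star}$; the $-t_i^k\norm{G_i^k}_{(i) \star}$ term is converted into $-t_i^k\norm{\nabla_i f(X^k)}_{(i) \star}$ by the reverse triangle inequality $\norm{G_i^k}_{(i) \star} \geq \norm{\nabla_i f(X^k)}_{(i) \star} - \norm{\nabla_i f(X^k) - G_i^k}_{(i) \star}$, at the cost of a further $t_i^k\norm{\nabla_i f(X^k) - G_i^k}_{(i) \star}$. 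Adding these bounds yields $\inp{\nabla_i f(X^k)}{X_i^{k+1} - X_i^k}_{(i)} \leq 2 t_i^k \norm{\nabla_i f(X^k) - G_i^k}_{(i) \star} - t_i^k\norm{\nabla_i f(X^k)}_{(i) \star}$; summing over $i$ and adding the quadratic bound gives the statement.

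\textbf{Where care is needed.} There is no serious obstacle; the proof is essentially mechanical once \Cref{lemma:layer_asym_gen_smooth} is available, and in fact the arithmetic is lighter than for \Cref{lemma:descent1} precisely because no Young-type splitting into squares is required (this is also why the gradient-mismatch term ends up with coefficient $2 t_i^k$ rather than the $\tfrac{3}{2}\gamma_i^k$ seen there). The two points worth double-checking are: (i) that \Cref{lemma:layer_asym_gen_smooth} applies to an LMO step with no restriction on its length --- this holds because the asymmetric $(L^0,L^1)$ condition yields an \emph{unconditional} descent inequality, obtained by integrating $\nabla f$ along the segment $[X^k,X^{k+1}]$; and (ii) the degenerate case $G_i^k = 0$, where $\lmo{\cB(X_i^k,t_i^k)}{G_i^k}$ is set-valued, which is handled by the convention $X_i^{k+1} = X_i^k$, under which both feasibility and the optimality identity hold trivially.
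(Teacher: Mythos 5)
Your proposal is correct and follows essentially the same route as the paper's proof: apply \Cref{lemma:layer_asym_gen_smooth}, split the linear term via $\pm G_i^k$, use the LMO identity $\inp{G_i^k}{X_i^{k+1}-X_i^k}_{(i)} = -t_i^k\norm{G_i^k}_{(i)\star}$ together with H\"older/Cauchy--Schwarz and the (reverse) triangle inequality, and bound the quadratic term using the step length $t_i^k$. The only cosmetic differences are that the paper uses $\norm{X_i^{k+1}-X_i^k}_{(i)} = t_i^k$ as an equality where you use feasibility, and your explicit handling of the degenerate $G_i^k=0$ case, neither of which changes the argument.
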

\begin{proof}
    \Cref{as:layer_gen_smoothness} and \Cref{lemma:layer_asym_gen_smooth} give
    \begin{eqnarray*}
        &&\hspace{-7mm}f(X^{k+1}) \\
        &\leq& f(X^k) + \inp{\nabla f(X^k)}{X^{k+1} - X^k} + \sum_{i=1}^p \frac{L^0_i + L^1_i \norm{\nabla_i f(X^k)}_{(i) \star}}{2} \norm{X_i^k - X_i^{k+1}}_{(i)}^2 \\
        &=& f(X^k) + \sum_{i=1}^p \inp{\nabla_i f(X^k)}{X_i^{k+1} - X_i^k}_{(i)} + \sum_{i=1}^p \frac{L^0_i + L^1_i \norm{\nabla_i f(X^k)}_{(i) \star}}{2} \norm{X_i^k - X_i^{k+1}}_{(i)}^2 \\
        &=& f(X^k) + \sum_{i=1}^p \parens{\inp{\nabla_i f(X^k) - G_i^k}{X_i^{k+1} - X_i^k}_{(i)} + \inp{G_i^k}{X_i^{k+1} - X_i^k}_{(i)}} \\
        &&+ \sum_{i=1}^p \frac{L^0_i + L^1_i \norm{\nabla_i f(X^k)}_{(i) \star}}{2} (t_i^k)^2 \\
        &\overset{\eqref{eq:inplmo}}{=}& f(X^k) + \sum_{i=1}^p \parens{\inp{\nabla_i f(X^k) - G_i^k}{X_i^{k+1} - X_i^k}_{(i)} - t_i^k \norm{G_i^k}_{(i) \star}} \\
        &&+ \sum_{i=1}^p \frac{L^0_i + L^1_i \norm{\nabla_i f(X^k)}_{(i) \star}}{2} (t_i^k)^2 \\
        &\leq& f(X^k) + \sum_{i=1}^p \parens{t_i^k \norm{\nabla_i f(X^k) - G_i^k}_{(i) \star} - t_i^k \norm{G_i^k}_{(i) \star} + \frac{L^0_i + L^1_i \norm{\nabla_i f(X^k)}_{(i) \star}}{2} (t_i^k)^2},
    \end{eqnarray*}
    where the last line follows from the Cauchy-Schwarz inequality and the fact that $\norm{X_i^{k+1} - X_i^k}_{(i)} = t_i^k$.
    Therefore, using triangle inequality, we get
    \begin{eqnarray*}
        &&\hspace{-7mm}f(X^{k+1}) \\
        &\leq& f(X^k) + \sum_{i=1}^p \parens{t_i^k \norm{\nabla_i f(X^k) - G_i^k}_{(i) \star} + t_i^k \norm{\nabla_i f(X^k) - G_i^k}_{(i) \star} - t_i^k \norm{\nabla_i f(X^k)}_{(i) \star}} \\
        &&+ \sum_{i=1}^p \frac{L^0_i + L^1_i \norm{\nabla_i f(X^k)}_{(i) \star}}{2} (t_i^k)^2 \\
        &=& f(X^k) + \sum_{i=1}^p \parens{2 t_i^k \norm{\nabla_i f(X^k) - G_i^k}_{(i) \star} - t_i^k \norm{\nabla_i f(X^k)}_{(i) \star}} \\
        &&+ \sum_{i=1}^p \frac{L^0_i + L^1_i \norm{\nabla_i f(X^k)}_{(i) \star}}{2} (t_i^k)^2.
    \end{eqnarray*}
\end{proof}

\subsection{Auxiliary Lemmas}

\begin{lemma}\label{lemma:xw_sq_rec_sharp}
    The iterates of Algorithm \ref{alg:ef_gluon_det} and \ref{alg:ef_layer_muon_m_dist} run with $\cC_i^k \in \mathbb{B}(\alpha_P)$ satisfy
    \begin{eqnarray*}
        \Exp{\norm{X_i^{k+1} - W_i^{k+1}}_{(i)}^2}
        \leq \parens{1 - \frac{\alpha_P}{2}} \Exp{\norm{X_i^k - W_i^k}_{(i)}^2} + \frac{2}{\alpha_P} (\gamma_i^k)^2 \Exp{\norm{G_i^k}_{(i) \star}^2}.
    \end{eqnarray*}
\end{lemma}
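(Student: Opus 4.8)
The plan is to reduce $\norm{X_i^{k+1} - W_i^{k+1}}_{(i)}^2$ to a compression residual. Both Algorithm~\ref{alg:ef_gluon_det} and Algorithm~\ref{alg:ef_layer_muon_m_dist} perform the same server-side steps $X_i^{k+1} = X_i^k - \gamma_i^k \parens{G_i^k}^{\sharp}$, $S_i^k = \cC_i^k(X_i^{k+1} - W_i^k)$ and $W_i^{k+1} = W_i^k + S_i^k$, so
\[
X_i^{k+1} - W_i^{k+1} = \parens{X_i^{k+1} - W_i^k} - \cC_i^k\parens{X_i^{k+1} - W_i^k}.
\]
If $\alpha_P = 1$ the compressor is exact and the left-hand side is zero, so we may assume $\alpha_P < 1$. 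Let $\cF^k$ collect all randomness up to and including the computation of $X_i^{k+1}$ but excluding the draw of $\cC_i^k$; then $X_i^{k+1}$ and $W_i^k$ are $\cF^k$-measurable while $\cC_i^k$ is independent of $\cF^k$, so \eqref{eq:contractive_compressor} (applied with the norm $\norm{\cdot}_{(i)}$, since $\cC_i^k \in \mathbb{B}(\alpha_P)$) gives
\[
\ExpCond{\norm{X_i^{k+1} - W_i^{k+1}}_{(i)}^2}{\cF^k} \leq (1-\alpha_P)\,\norm{X_i^{k+1} - W_i^k}_{(i)}^2.
\]

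Next I would bound $\norm{X_i^{k+1} - W_i^k}_{(i)}^2$ by inserting the step, writing $X_i^{k+1} - W_i^k = \parens{X_i^k - W_i^k} - \gamma_i^k \parens{G_i^k}^{\sharp}$, and applying Young's inequality \eqref{eq:young} with a free parameter $s > 0$:
\[
\norm{X_i^{k+1} - W_i^k}_{(i)}^2 \leq (1+s)\,\norm{X_i^k - W_i^k}_{(i)}^2 + \parens{1+\tfrac1s}(\gamma_i^k)^2\,\norm{\parens{G_i^k}^{\sharp}}_{(i)}^2.
\]
The sharp-operator identity $\norm{\parens{G_i^k}^{\sharp}}_{(i)} = \norm{G_i^k}_{(i)\star}$ from \eqref{eq:normsharp} turns the last term into $\parens{1+\tfrac1s}(\gamma_i^k)^2\norm{G_i^k}_{(i)\star}^2$. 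Combining the two displays and taking total expectation (tower property) yields
\[
\Exp{\norm{X_i^{k+1} - W_i^{k+1}}_{(i)}^2} \leq (1-\alpha_P)(1+s)\,\Exp{\norm{X_i^k - W_i^k}_{(i)}^2} + (1-\alpha_P)\parens{1+\tfrac1s}(\gamma_i^k)^2\,\Exp{\norm{G_i^k}_{(i)\star}^2}.
\]

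It then remains to tune $s$. Taking $s = \tfrac{\alpha_P}{2(1-\alpha_P)}$ makes the contraction coefficient exactly $(1-\alpha_P)(1+s) = 1 - \tfrac{\alpha_P}{2}$, while the gradient coefficient becomes $(1-\alpha_P)\parens{1+\tfrac1s} = (1-\alpha_P)\tfrac{2-\alpha_P}{\alpha_P}$, which is at most $\tfrac{2}{\alpha_P}$ because $(1-\alpha_P)(2-\alpha_P) = 2 - 3\alpha_P + \alpha_P^2 \leq 2$ for $\alpha_P \in (0,1]$. Substituting gives the claimed recursion. I do not anticipate a genuine obstacle; the only delicate points are getting the conditioning right so that \eqref{eq:contractive_compressor} applies verbatim to the random argument $X_i^{k+1} - W_i^k$, and disposing of the degenerate endpoint $\alpha_P = 1$, where the choice of $s$ is undefined but the bound is trivial.
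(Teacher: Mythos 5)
Your proposal is correct and follows essentially the same route as the paper: apply the contractive-compressor bound to $X_i^{k+1}-W_i^k$, split via Young's inequality into $\norm{X_i^k-W_i^k}_{(i)}^2$ and $\norm{X_i^{k+1}-X_i^k}_{(i)}^2$, and use $\norm{(G_i^k)^{\sharp}}_{(i)}=\norm{G_i^k}_{(i)\star}$; the only cosmetic difference is that you tune the Young parameter to $s=\tfrac{\alpha_P}{2(1-\alpha_P)}$ (treating $\alpha_P=1$ separately), whereas the paper takes $t=\tfrac{\alpha_P}{2}$ and invokes the elementary bounds \eqref{eq:ineq1}--\eqref{eq:ineq2}, which handles the endpoint uniformly.
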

\begin{proof}
    Let $\ExpSub{\cC}{\cdot}$ denote the expectation over the randomness introduced by the compressors. Then
    \begin{eqnarray*}
        &&\hspace{-1cm}\ExpSub{\cC}{\norm{X_i^{k+1} - W_i^{k+1}}_{(i)}^2} \\
        &=& \ExpSub{\cC}{\norm{W_i^k + \cC_i^k(X_i^{k+1} - W_i^k) - X_i^{k+1}}_{(i)}^2} \\
        &\overset{\eqref{def:contractive_compressor}}{\leq}& (1-\alpha_P) \norm{X_i^{k+1} - W_i^k}_{(i)}^2 \\
        &\overset{\eqref{eq:young}}{\leq}& (1-\alpha_P) \parens{1 + \frac{\alpha_P}{2}} \norm{X_i^k - W_i^k}_{(i)}^2
        + (1-\alpha_P) \parens{1 + \frac{2}{\alpha_P}} \norm{X_i^{k+1} - X_i^k}_{(i)}^2 \\
        &\overset{\eqref{eq:ineq1}, \eqref{eq:ineq2}}{\leq}& \parens{1 - \frac{\alpha_P}{2}} \norm{X_i^k - W_i^k}_2^2 + \frac{2}{\alpha_P} \norm{X_i^{k+1} - X_i^k}_{(i)}^2.
    \end{eqnarray*}
    It remains to take full expectation and use the fact that $$\norm{X_i^{k+1} - X_i^k}_{(i)} = \gamma_i^k \norm{\parens{G_i^k}^{\sharp}}_{(i)} \overset{\eqref{eq:normsharp}}{=} \gamma_i^k \norm{G_i^k}_{(i) \star}.$$
\end{proof}

\subsubsection{Smooth Case}

\begin{lemma}\label{lemma:mg_sq_rec_m_sharp2}
    Let Assumptions \ref{as:workers_layer_smoothness} and \ref{as:bounded_layer_var} hold. Then, the iterates of \Cref{alg:ef_layer_muon_m_dist} run with $\cC_{i,j}^k \in \mathbb{B}_2(\alpha_P)$ satisfy
    \begin{eqnarray*}
        \Exp{\norm{M_{i,j}^{k+1} - G_{i,j}^{k+1}}^2_2}
        &\leq& \parens{1 - \frac{\alpha_D}{2}} \Exp{\norm{M_{i,j}^k - G_{i,j}^k}^2_2} + \frac{6 \beta_i^2}{\alpha_D} \Exp{\norm{M_{i,j}^k - \nabla_i f_j(X^k)}^2_2} \\
        &&+ \frac{6 \beta_i^2}{\alpha_D \underline{\rho}_i^2} (L_{i,j}^0)^2 (\gamma_i^k)^2 \Exp{\norm{G_i^k}_{\star}^2} \\
        &&+ \frac{6 \beta_i^2}{\alpha_D \underline{\rho}_i^2} (L_{i,j}^0)^2 \Exp{\norm{X_i^{k+1} - W_i^{k+1}}_{(i)}^2} + (1-\alpha_D) \beta_i^2 \sigma_i^2.
    \end{eqnarray*}
\end{lemma}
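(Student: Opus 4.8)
The plan is to run the by-now-standard error-feedback contraction argument for the compressed momentum residual $M_{i,j}^{k+1}-G_{i,j}^{k+1}$: peel off the compressor to expose a $(1-\alpha_D)$ factor, then split the remaining ``error'' into (i) a self-similar copy of $M_{i,j}^k-G_{i,j}^k$, (ii) the momentum error $M_{i,j}^k-\nabla_i f_j(X^k)$, (iii) the gradient drift $\nabla_i f_j(W^{k+1})-\nabla_i f_j(X^k)$, and (iv) a mean-zero stochastic part controlled by $\sigma_i^2$. Throughout I condition first on everything generated up to and including the server step producing $W^{k+1}$, then on the fresh sample $\xi_j^{k+1}$, and finally take the compressor expectation, so the three sources of randomness decouple cleanly. (The compressors $\cC_{i,j}^k$ are taken contractive in the Euclidean norm with parameter $\alpha_D$, consistent with the conclusion.)

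First, since $G_{i,j}^{k+1}=G_{i,j}^k+\cC_{i,j}^k(M_{i,j}^{k+1}-G_{i,j}^k)$, we have $M_{i,j}^{k+1}-G_{i,j}^{k+1}=(M_{i,j}^{k+1}-G_{i,j}^k)-\cC_{i,j}^k(M_{i,j}^{k+1}-G_{i,j}^k)$, so contractivity gives $\ExpSub{\cC}{\norm{M_{i,j}^{k+1}-G_{i,j}^{k+1}}_2^2}\leq(1-\alpha_D)\norm{M_{i,j}^{k+1}-G_{i,j}^k}_2^2$. Next, the momentum recursion $M_{i,j}^{k+1}=(1-\beta_i)M_{i,j}^k+\beta_i\nabla_i f_j(W^{k+1};\xi_j^{k+1})$ lets us write $M_{i,j}^{k+1}-G_{i,j}^k=(M_{i,j}^k-G_{i,j}^k)+\beta_i(n_{i,j}+d_{i,j}+e_{i,j})$, where $n_{i,j}\eqdef\nabla_i f_j(W^{k+1};\xi_j^{k+1})-\nabla_i f_j(W^{k+1})$, $d_{i,j}\eqdef\nabla_i f_j(W^{k+1})-\nabla_i f_j(X^k)$, and $e_{i,j}\eqdef\nabla_i f_j(X^k)-M_{i,j}^k$. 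Taking $\ExpSub{\xi}{\cdot}$ and using unbiasedness (\Cref{as:bounded_layer_var}), $n_{i,j}$ is conditionally mean-zero and independent of $M_{i,j}^k-G_{i,j}^k$, $d_{i,j}$, $e_{i,j}$; the cross terms vanish and $\ExpSub{\xi}{\norm{n_{i,j}}_2^2}\leq\sigma_i^2$, so $(1-\alpha_D)\ExpSub{\xi}{\norm{M_{i,j}^{k+1}-G_{i,j}^k}_2^2}\leq(1-\alpha_D)\norm{(M_{i,j}^k-G_{i,j}^k)+\beta_i(d_{i,j}+e_{i,j})}_2^2+(1-\alpha_D)\beta_i^2\sigma_i^2$, which already yields the last term of the claim.

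For the main quadratic, apply Young's inequality $\norm{u+v}_2^2\leq(1+\epsilon)\norm{u}_2^2+(1+\epsilon^{-1})\norm{v}_2^2$ with $u=M_{i,j}^k-G_{i,j}^k$, $v=\beta_i(d_{i,j}+e_{i,j})$, and $\epsilon=\frac{\alpha_D/2}{1-\alpha_D}$, so that $(1-\alpha_D)(1+\epsilon)=1-\frac{\alpha_D}{2}$ and $(1-\alpha_D)(1+\epsilon^{-1})=\frac{(1-\alpha_D)(2-\alpha_D)}{\alpha_D}\leq\frac{2}{\alpha_D}$; then split $\norm{d_{i,j}+e_{i,j}}_2^2\leq\frac32\norm{d_{i,j}}_2^2+3\norm{e_{i,j}}_2^2$. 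This produces the self-similar term $(1-\frac{\alpha_D}{2})\norm{M_{i,j}^k-G_{i,j}^k}_2^2$, the momentum term $\frac{6\beta_i^2}{\alpha_D}\norm{M_{i,j}^k-\nabla_i f_j(X^k)}_2^2$, and leaves $\frac{3\beta_i^2}{\alpha_D}\norm{d_{i,j}}_2^2$. To bound the drift, use norm equivalence and \Cref{as:workers_layer_smoothness}: $\norm{d_{i,j}}_2\leq\underline{\rho}_i^{-1}\norm{d_{i,j}}_{(i)\star}\leq\underline{\rho}_i^{-1}L_{i,j}^0\norm{W_i^{k+1}-X_i^k}_{(i)}$, hence $\norm{d_{i,j}}_2^2\leq\frac{(L_{i,j}^0)^2}{\underline{\rho}_i^2}\norm{W_i^{k+1}-X_i^k}_{(i)}^2$; then $\norm{W_i^{k+1}-X_i^k}_{(i)}^2\leq2\norm{W_i^{k+1}-X_i^{k+1}}_{(i)}^2+2\norm{X_i^{k+1}-X_i^k}_{(i)}^2$, and since $X_i^{k+1}-X_i^k=-\gamma_i^k(G_i^k)^{\sharp}$, \eqref{eq:normsharp} gives $\norm{X_i^{k+1}-X_i^k}_{(i)}=\gamma_i^k\norm{G_i^k}_{(i)\star}$. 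Substituting, $\frac{3\beta_i^2}{\alpha_D}\norm{d_{i,j}}_2^2\leq\frac{6\beta_i^2(L_{i,j}^0)^2}{\alpha_D\underline{\rho}_i^2}\big((\gamma_i^k)^2\norm{G_i^k}_{(i)\star}^2+\norm{X_i^{k+1}-W_i^{k+1}}_{(i)}^2\big)$.

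Collecting the four groups and taking full expectation yields the claimed recursion. The only delicate part is the constant bookkeeping: the Young parameter $\epsilon=\frac{\alpha_D/2}{1-\alpha_D}$ must be matched against both the $\frac12$-weighted split of $\norm{d_{i,j}+e_{i,j}}_2^2$ and the factor-$2$ triangle inequality on $\norm{W_i^{k+1}-X_i^k}_{(i)}^2$, so that the drift terms land with coefficient exactly $\frac{6\beta_i^2(L_{i,j}^0)^2}{\alpha_D\underline{\rho}_i^2}$ while the momentum-error term lands at $\frac{6\beta_i^2}{\alpha_D}$; one must also order the conditioning correctly (compressor last, $\xi_j^{k+1}$ before it) so the stochastic term is genuinely mean-zero and decouples from the rest.
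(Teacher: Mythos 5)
Your proposal is correct and follows essentially the same route as the paper's proof: peel off the compressor contraction, use the bias--variance (mean-zero noise) decomposition to extract $(1-\alpha_D)\beta_i^2\sigma_i^2$, apply Young's inequality with the $(1-\tfrac{\alpha_D}{2})$/$\tfrac{2}{\alpha_D}$ split, and finish with \Cref{as:workers_layer_smoothness}, norm equivalence, and $\norm{X_i^{k+1}-X_i^k}_{(i)}=\gamma_i^k\norm{G_i^k}_{(i)\star}$. The only (immaterial) difference is that you apply smoothness directly to $\nabla_i f_j(W^{k+1})-\nabla_i f_j(X^k)$ and then split $\norm{W_i^{k+1}-X_i^k}_{(i)}^2$ in iterate space, whereas the paper splits the gradient difference into three terms before invoking smoothness twice; both give exactly the stated constants.
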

\begin{proof}
    Using the definition of contractive compressors and the algorithm's momentum update rule, we get
    \begin{eqnarray*}
        \ExpSub{\cC}{\norm{M_{i,j}^{k+1} - G_{i,j}^{k+1}}^2_2}
        &=& \ExpSub{\cC}{\norm{M_{i,j}^{k+1} - G_{i,j}^k - \cC_{i,j}^k(M_{i,j}^{k+1} - G_{i,j}^k)}^2_2} \\
        &\overset{\eqref{def:contractive_compressor}}{\leq}& (1-\alpha_D) \norm{M_{i,j}^{k+1} - G_{i,j}^k}^2_2,
    \end{eqnarray*}
    where $\ExpSub{\cC}{\cdot}$ denotes the expectation over the randomness introduced by the compressors. Then, letting $\ExpSub{\xi}{\cdot}$ be the expectation over the stochasticity of the gradients, we have
    \begin{eqnarray*}
        &&\hspace{-0.7cm}\Exp{\norm{M_{i,j}^{k+1} - G_{i,j}^{k+1}}^2_2} \\
        &\leq& \Exp{\ExpSub{\cC}{\norm{M_{i,j}^{k+1} - G_{i,j}^{k+1}}^2_2}} \\
        &\leq& (1-\alpha_D) \Exp{\norm{M_{i,j}^{k+1} - G_{i,j}^k}^2_2} \\
        &=& (1-\alpha_D) \Exp{\ExpSub{\xi}{\norm{(1-\beta_i) M_{i,j}^k + \beta_i \nabla_i f_j(W^{k+1}; \xi_j^{k+1}) - G_{i,j}^k}^2_2}} \\
        &\overset{\eqref{lemma:vardecomp}}{=}& (1-\alpha_D) \Exp{\norm{(1-\beta_i) M_{i,j}^k + \beta_i \nabla_i f_j(W^{k+1}) - G_{i,j}^k}^2_2} \\
        &&+ (1-\alpha_D) \beta_i^2 \Exp{\norm{\nabla_i f_j(W^{k+1}; \xi_j^{k+1}) - \nabla_i f_j(W^{k+1})}^2_2} \\
        &\overset{\eqref{eq:young}}{\leq}& (1-\alpha_D) \parens{1 + \frac{\alpha_D}{2}} \Exp{\norm{M_{i,j}^k - G_{i,j}^k}^2_2} \\
        &&+ (1-\alpha_D) \parens{1 + \frac{2}{\alpha_D}} \beta_i^2 \Exp{\norm{M_{i,j}^k - \nabla_i f_j(W^{k+1})}^2_2} + (1-\alpha_D) \beta_i^2 \sigma_i^2,
    \end{eqnarray*}
    where in the last line we used \Cref{as:bounded_layer_var}. Then, \Cref{as:workers_layer_smoothness} gives 
    \begin{eqnarray*}
        &&\hspace{-1.2cm}\Exp{\norm{M_{i,j}^{k+1} - G_{i,j}^{k+1}}^2_2} \\
        &\overset{\eqref{eq:ineq1}, \eqref{eq:ineq2}}{\leq}& \parens{1 - \frac{\alpha_D}{2}} \Exp{\norm{M_{i,j}^k - G_{i,j}^k}^2_2} + \frac{2}{\alpha_D} \beta_i^2 \Exp{\norm{M_{i,j}^k - \nabla_i f_j(W^{k+1})}^2_2} + (1-\alpha_D) \beta_i^2 \sigma_i^2 \\
        &\overset{\eqref{eq:young}}{\leq}& \parens{1 - \frac{\alpha_D}{2}} \Exp{\norm{M_{i,j}^k - G_{i,j}^k}^2_2} + \frac{6 \beta_i^2}{\alpha_D} \Exp{\norm{M_{i,j}^k - \nabla_i f_j(X^k)}^2_2} \\
        &&+ \frac{6 \beta_i^2}{\alpha_D} \Exp{\norm{\nabla_i f_j(X^k) - \nabla_i f_j(X^{k+1})}^2_2} \\
        &&+ \frac{6 \beta_i^2}{\alpha_D} \Exp{\norm{\nabla_i f_j(X^{k+1}) - \nabla_i f_j(W^{k+1})}^2_2} + (1-\alpha_D) \beta_i^2 \sigma_i^2 \\
        &\leq& \parens{1 - \frac{\alpha_D}{2}} \Exp{\norm{M_{i,j}^k - G_{i,j}^k}^2_2} + \frac{6 \beta_i^2}{\alpha_D} \Exp{\norm{M_{i,j}^k - \nabla_i f_j(X^k)}^2_2} \\
        &&+ \frac{6 \beta_i^2}{\alpha_D \underline{\rho}_i^2} (L_{i,j}^0)^2 \Exp{\norm{X_i^k - X_i^{k+1}}_{(i)}^2} \\
        &&+ \frac{6 \beta_i^2}{\alpha_D \underline{\rho}_i^2} (L_{i,j}^0)^2 \Exp{\norm{X_i^{k+1} - W_i^{k+1}}_{(i)}^2} + (1-\alpha_D) \beta_i^2 \sigma_i^2.
    \end{eqnarray*}
    Noting that $\norm{X_i^{k+1} - X_i^k}_{(i)} = \gamma_i^k \norm{\parens{G_i^k}^{\sharp}}_{(i)} \overset{\eqref{eq:normsharp}}{=} \gamma_i^k \norm{G_i^k}_{(i) \star}$ finishes the proof.
\end{proof}

\begin{lemma}\label{lemma:nm_sq_rec_m_sharp2}
    Let Assumptions \ref{as:layer_smoothness}, \ref{as:workers_layer_smoothness} and \ref{as:bounded_layer_var} hold. Then, the iterates of \Cref{alg:ef_layer_muon_m_dist} satisfy
    \begin{eqnarray*}
        &&\hspace{-1cm}\Exp{\norm{\nabla_i f_j(X^{k+1}) - M_{i,j}^{k+1}}^2_2} \\
        &\leq& \parens{1 - \frac{\beta_i}{2}} \Exp{\norm{\nabla_i f_j(X^k) - M_{i,j}^k}^2_2} + \frac{2}{\beta_i \underline{\rho}_i^2} (L_{i,j}^0)^2 (\gamma_i^k)^2 \Exp{\norm{G_i^k}_{(i) \star}^2} \\
        &&+ \frac{\beta_i^2}{\underline{\rho}_i^2} \parens{1 + \frac{2}{\beta_i}} (L_{i,j}^0)^2 \Exp{\norm{X_i^{k+1} - W_i^{k+1}}_{(i)}^2} + \beta_i^2 \sigma_i^2
    \end{eqnarray*}
    and
    \begin{eqnarray*}
        &&\hspace{-1cm}\Exp{\norm{\nabla_i f(X^{k+1}) - M_i^{k+1}}^2_2} \\
        &\leq& \parens{1 - \frac{\beta_i}{2}} \norm{\nabla_i f(X^k) - M_i^k}^2_2 + \frac{2}{\beta_i \underline{\rho}_i^2} (L_i^0)^2 (\gamma_i^k)^2 \Exp{\norm{G_i^k}_{(i) \star}^2} \\
        &&+ \frac{\beta_i^2}{\underline{\rho}_i^2} \parens{1 + \frac{2}{\beta_i}} (L_i^0)^2 \Exp{\norm{X_i^{k+1} - W_i^{k+1}}_{(i)}^2} + \frac{\beta_i^2 \sigma_i^2}{n},
    \end{eqnarray*}
    where $M_i^k \eqdef \frac{1}{n} \sum_{j=1}^n M_{i,j}^k$.
\end{lemma}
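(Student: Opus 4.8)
The plan is to follow the template of the proof of \Cref{lemma:mg_sq_rec_m_sharp2}, adapting it from the discrepancy $M_{i,j}^{k+1}-G_{i,j}^{k+1}$ to the gradient--momentum error $\nabla_i f_j(X^{k+1})-M_{i,j}^{k+1}$; the difference is that no compression step is involved here, so the $1-\beta_i/2$ contraction must be produced purely by the momentum averaging. For the per-worker bound I would condition on everything up to iteration $k$ (so $X^{k+1}$ and $W^{k+1}$ are determined), write the momentum update as $M_{i,j}^{k+1}=(1-\beta_i)M_{i,j}^k+\beta_i\nabla_i f_j(W^{k+1};\xi_j^{k+1})$, and peel off the fresh stochastic-gradient noise via the bias--variance identity \eqref{lemma:vardecomp}. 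The conditional mean of $\nabla_i f_j(X^{k+1})-M_{i,j}^{k+1}$ is
\[
A \eqdef (1-\beta_i)\bigl(\nabla_i f_j(X^{k+1})-M_{i,j}^k\bigr)+\beta_i\bigl(\nabla_i f_j(X^{k+1})-\nabla_i f_j(W^{k+1})\bigr),
\]
so $\Exp{\norm{\nabla_i f_j(X^{k+1})-M_{i,j}^{k+1}}_2^2}\le \Exp{\norm{A}_2^2}+\beta_i^2\sigma_i^2$ by \Cref{as:bounded_layer_var}.

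I would then bound $\norm{A}_2^2$ by two nested applications of Young's inequality \eqref{eq:young}. First, split $A$ at the $\beta_i$-weighted term with parameter $\beta_i/2$, giving $\norm{A}_2^2\le (1+\tfrac{\beta_i}{2})(1-\beta_i)^2\norm{\nabla_i f_j(X^{k+1})-M_{i,j}^k}_2^2+(1+\tfrac{2}{\beta_i})\beta_i^2\norm{\nabla_i f_j(X^{k+1})-\nabla_i f_j(W^{k+1})}_2^2$, and note $(1+\tfrac{\beta_i}{2})(1-\beta_i)^2\le 1-\beta_i$ for $\beta_i\in(0,1]$. Second, write $\nabla_i f_j(X^{k+1})-M_{i,j}^k=(\nabla_i f_j(X^k)-M_{i,j}^k)+(\nabla_i f_j(X^{k+1})-\nabla_i f_j(X^k))$ and apply Young's inequality with parameter $a=\tfrac{\beta_i/2}{1-\beta_i}$, so that $(1-\beta_i)(1+a)=1-\beta_i/2$ and $(1-\beta_i)(1+\tfrac1a)\le \tfrac{2}{\beta_i}$. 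The two residual gradient-difference terms are handled by local layer-wise smoothness (\Cref{as:workers_layer_smoothness}), $\norm{\nabla_i f_j(X^{k+1})-\nabla_i f_j(Y)}_{(i)\star}\le L_{i,j}^0\norm{X_i^{k+1}-Y_i}_{(i)}$, after converting from $\norm{\cdot}_2$ to $\norm{\cdot}_{(i)\star}$ through the norm-equivalence constant $\underline{\rho}_i$ (which is where the $\underline{\rho}_i^{-2}$ factors come from); for $Y=X^k$ one additionally uses $\norm{X_i^{k+1}-X_i^k}_{(i)}=\gamma_i^k\norm{(G_i^k)^\sharp}_{(i)}=\gamma_i^k\norm{G_i^k}_{(i)\star}$ via \eqref{eq:normsharp}. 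Taking full expectation and collecting terms yields the first inequality.

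For the averaged recursion I would repeat the argument verbatim with $M_i^{k+1}=(1-\beta_i)M_i^k+\tfrac{\beta_i}{n}\sum_{j=1}^n\nabla_i f_j(W^{k+1};\xi_j^{k+1})$ and $L_i^0$-smoothness of $f$ (\Cref{as:layer_smoothness}) in place of the local constants. The one genuine change is the variance term: since the oracles $\xi_1^{k+1},\dots,\xi_n^{k+1}$ are independent, the cross terms vanish and $\Exp{\bigl\|\tfrac1n\sum_{j=1}^n(\nabla_i f_j(W^{k+1};\xi_j^{k+1})-\nabla_i f_j(W^{k+1}))\bigr\|_2^2}=\tfrac{1}{n^2}\sum_{j=1}^n\Exp{\norm{\nabla_i f_j(W^{k+1};\xi_j^{k+1})-\nabla_i f_j(W^{k+1})}_2^2}\le\tfrac{\sigma_i^2}{n}$, which is precisely why $\beta_i^2\sigma_i^2$ is replaced by $\beta_i^2\sigma_i^2/n$.

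The main obstacle is bookkeeping rather than conceptual. Because the momentum is evaluated at the compressed shift $W^{k+1}$ rather than at $X^{k+1}$, the standard two-term momentum decomposition must become a three-term split---an additional $\nabla_i f_j(X^{k+1})-\nabla_i f_j(X^k)$ piece that feeds the $\norm{G_i^k}_{(i)\star}^2$ term, and an additional $\nabla_i f_j(X^{k+1})-\nabla_i f_j(W^{k+1})$ piece that feeds the $\norm{X_i^{k+1}-W_i^{k+1}}_{(i)}^2$ term---and the two Young parameters must be coordinated so that the contraction factor stays exactly $1-\beta_i/2$ while the residual coefficients come out as the claimed $\tfrac{2}{\beta_i}$ and $1+\tfrac{2}{\beta_i}$. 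Some care is also needed to invoke smoothness in the dual norm $\norm{\cdot}_{(i)\star}$ before applying norm equivalence, so that the $\underline{\rho}_i$-dependence lands exactly where the statement places it.
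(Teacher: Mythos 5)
Your proposal is correct and follows essentially the same route as the paper's proof: variance decomposition to peel off the fresh noise, two nested Young's inequalities producing the $1-\nicefrac{\beta_i}{2}$ contraction together with the $\nicefrac{2}{\beta_i}$ and $1+\nicefrac{2}{\beta_i}$ coefficients, smoothness in the dual norm converted via $\underline{\rho}_i$, the identity $\norm{X_i^{k+1}-X_i^k}_{(i)}=\gamma_i^k\norm{G_i^k}_{(i)\star}$, and independence of the $\xi_j^{k+1}$ to get $\nicefrac{\sigma_i^2}{n}$ in the averaged bound. Your slightly different second Young parameter $a=\tfrac{\beta_i/2}{1-\beta_i}$ works equally well (giving $\tfrac{(1-\beta_i)(2-\beta_i)}{\beta_i}\le\tfrac{2}{\beta_i}$; the $\beta_i=1$ case is vacuous since the term carries a factor $1-\beta_i$), so this is only a cosmetic deviation from the paper's use of \eqref{eq:ineq1}--\eqref{eq:ineq2}.
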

\begin{proof}
    Using the momentum update rule and letting $\ExpSub{\xi}{\cdot}$ be the expectation over the stochasticity of the gradients, we get
    \begin{eqnarray*}
        &&\hspace{-1cm}\ExpSub{\xi}{\norm{\nabla_i f_j(X^{k+1}) - M_{i,j}^{k+1}}^2_2} \\
        &=& \ExpSub{\xi}{\norm{\nabla_i f_j(X^{k+1}) - (1-\beta_i) M_{i,j}^k - \beta_i \nabla_i f_j(W^{k+1}; \xi_j^{k+1})}^2_2} \\
        &\overset{\eqref{lemma:vardecomp}}{=}& \norm{\nabla_i f_j(X^{k+1}) - (1-\beta_i) M_{i,j}^k - \beta_i \nabla_i f_j(W^{k+1})}^2_2 \\
        &&+ \beta_i^2 \ExpSub{\xi}{\norm{\nabla_i f_j(W^{k+1}; \xi_j^{k+1}) - \nabla_i f_j(W^{k+1})}^2_2} \\
        &\overset{\eqref{eq:young}}{\leq}& (1 - \beta_i)^2 \parens{1 + \frac{\beta_i}{2}} \norm{\nabla_i f_j(X^{k+1}) - M_{i,j}^k}^2_2 \\
        &&+ \beta_i^2 \parens{1 + \frac{2}{\beta_i}} \norm{\nabla_i f_j(X^{k+1}) - \nabla_i f_j(W^{k+1})}^2_2 \\
        &&+ \beta_i^2 \ExpSub{\xi}{\norm{\nabla_i f_j(W^{k+1}; \xi_j^{k+1}) - \nabla_i f_j(W^{k+1})}^2_2} \\
        &\overset{\eqref{eq:ineq1}}{\leq}& (1 - \beta_i) \norm{\nabla_i f_j(X^{k+1}) - M_{i,j}^k}^2_2 \\
        &&+ \beta_i^2 \parens{1 + \frac{2}{\beta_i}} \norm{\nabla_i f_j(X^{k+1}) - \nabla_i f_j(W^{k+1})}^2_2 + \beta_i^2 \sigma_i^2,
    \end{eqnarray*}
    where in the last line we used \Cref{as:bounded_var}. Then, \Cref{as:workers_layer_smoothness} gives
    \begin{eqnarray*}
        &&\hspace{-1cm}\Exp{\norm{\nabla_i f_j(X^{k+1}) - M_{i,j}^{k+1}}^2_2} \\
        &=& \Exp{\ExpSub{\xi}{\norm{\nabla_i f_j(X^{k+1}) - M_{i,j}^{k+1}}^2_2}} \\
        &\overset{\eqref{eq:young}}{\leq}& (1 - \beta_i) \parens{1 + \frac{\beta_i}{2}} \Exp{\norm{\nabla_i f_j(X^k) - M_{i,j}^k}^2_2} \\
        &&+ (1 - \beta_i) \parens{1 + \frac{2}{\beta_i}} \Exp{\norm{\nabla_i f_j(X^{k+1}) - \nabla_i f_j(X^k)}^2_2} \\
        &&+ \beta_i^2 \parens{1 + \frac{2}{\beta_i}} \Exp{\norm{\nabla_i f_j(X^{k+1}) - \nabla_i f_j(W^{k+1})}^2_2} + \beta_i^2 \sigma_i^2 \\
        &\overset{\eqref{eq:ineq1}, \eqref{eq:ineq2}}{\leq}& \parens{1 - \frac{\beta_i}{2}} \Exp{\norm{\nabla_i f_j(X^k) - M_{i,j}^k}^2_2} + \frac{2}{\beta_i \underline{\rho}_i^2} (L_{i,j}^0)^2 \Exp{\norm{X_i^{k+1} - X_i^k}_{(i)}^2} \\
        &&+ \frac{\beta_i^2}{\underline{\rho}_i^2} \parens{1 + \frac{2}{\beta_i}} (L_{i,j}^0)^2 \Exp{\norm{X_i^{k+1} - W_i^{k+1}}_{(i)}^2} + \beta_i^2 \sigma_i^2.
    \end{eqnarray*}

    To prove the second part of the statement, define $\nabla_i f(X; \xi^k) \eqdef \frac{1}{n} \sum_{i=1}^n \nabla_i f_j(X; \xi_j^k)$. Then $M_i^{k+1} = (1-\beta_i) M_i^k + \beta_i \nabla_i f(W^{k+1}; \xi^{k+1})$, so following similar steps as above, we get
    \begin{eqnarray*}
        &&\hspace{-0.8cm}\Exp{\norm{\nabla_i f(X^{k+1}) - M_i^{k+1}}^2_2} \\
        &=& \Exp{\ExpSub{\xi}{\norm{\nabla_i f(X^{k+1}) - (1-\beta_i) M_i^k - \beta_i \nabla_i f(W^{k+1}; \xi^{k+1})}^2_2}} \\
        &\overset{\eqref{lemma:vardecomp}}{=}& \Exp{\norm{\nabla_i f(X^{k+1}) - (1-\beta_i) M_i^k - \beta_i \nabla_i f(W^{k+1})}^2_2} \\
        &&+ \beta_i^2 \Exp{\ExpSub{\xi}{\norm{\nabla_i f(W^{k+1}; \xi^{k+1}) - \nabla_i f(W^{k+1})}^2_2}} \\
        &\overset{\eqref{eq:young}}{\leq}& (1 - \beta_i)^2 \parens{1 + \frac{\beta_i}{2}} \Exp{\norm{\nabla_i f(X^{k+1}) - M_i^k}^2_2} \\
        &&+ \beta_i^2 \parens{1 + \frac{2}{\beta_i}} \Exp{\norm{\nabla_i f(X^{k+1}) - \nabla_i f(W^{k+1})}^2_2} \\
        &&+ \beta_i^2 \Exp{\ExpSub{\xi}{\norm{\nabla_i f(W^{k+1}; \xi^{k+1}) - \nabla_i f(W^{k+1})}^2_2}} \\
        &\overset{\eqref{eq:ineq1}}{\leq}& (1 - \beta_i) \Exp{\norm{\nabla_i f(X^{k+1}) - M_i^k}^2_2} \\
        &&+ \beta_i^2 \parens{1 + \frac{2}{\beta_i}} \Exp{\norm{\nabla_i f(X^{k+1}) - \nabla_i f(W^{k+1})}^2_2} + \frac{\beta_i^2 \sigma_i^2}{n} \\
        &\overset{\eqref{eq:young}}{\leq}& (1 - \beta_i) \parens{1 + \frac{\beta_i}{2}} \Exp{\norm{\nabla_i f(X^k) - M_i^k}^2_2} \\
        &&+ (1 - \beta_i) \parens{1 + \frac{2}{\beta_i}} \Exp{\norm{\nabla_i f(X^{k+1}) - \nabla_i f(X^k)}^2_2} \\
        &&+ \beta_i^2 \parens{1 + \frac{2}{\beta_i}} \Exp{\norm{\nabla_i f(X^{k+1}) - \nabla_i f(W^{k+1})}^2_2} + \frac{\beta_i^2 \sigma_i^2}{n} \\
        &\overset{\eqref{eq:ineq1}, \eqref{eq:ineq2}}{\leq}& \parens{1 - \frac{\beta_i}{2}} \Exp{\norm{\nabla_i f(X^k) - M_i^k}^2_2} + \frac{2}{\beta_i \underline{\rho}_i^2} (L_i^0)^2 \Exp{\norm{X_i^{k+1} - X_i^k}_{(i)}^2} \\
        &&+ \frac{\beta_i^2}{\underline{\rho}_i^2} \parens{1 + \frac{2}{\beta_i}} (L_i^0)^2 \Exp{\norm{X_i^{k+1} - W_i^{k+1}}_{(i)}^2} + \frac{\beta_i^2 \sigma_i^2}{n}.
    \end{eqnarray*}
    It remains to use the fact that $\norm{X_i^{k+1} - X_i^k}_{(i)} = \gamma_i^k \norm{\parens{G_i^k}^{\sharp}}_{(i)} \overset{\eqref{eq:normsharp}}{=} \gamma_i^k \norm{G_i^k}_{(i) \star}$.
\end{proof}

\subsubsection{Generalized Smooth Case}

\begin{lemma}\label{lemma:ng_rec_l0l1}
    Let \Cref{as:workers_layer_gen_smoothness} hold. Then, the iterates of \Cref{alg:ef_gluon_det} run with $\cC_i^k \equiv \cI$ (the identity compressor) and $\cC_{i,j}^k \in \mathbb{B}_\star(\alpha_D)$ satisfy
    \begin{eqnarray*}
        &&\hspace{-1cm}\ExpCond{\norm{\nabla_i f_j(X^{k+1}) - G_{i,j}^{k+1}}_{(i) \star}}{X^{k+1}, G^k} \\
        &\leq& \sqrt{1-\alpha_D} \norm{\nabla_i f_j(X^k) - G_{i,j}^k}_{(i) \star}
        + \sqrt{1-\alpha_D} \parens{L^0_{i,j} + L^1_{i,j} \norm{\nabla_i f_j(X^k)}_{(i) \star}} t_i^k.
    \end{eqnarray*}
\end{lemma}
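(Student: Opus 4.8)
The plan is to proceed in three short steps: (i) unfold the {\EF}-style gradient-estimator update to isolate the compression error, (ii) pass from the second moment to the first moment of that error via Jensen's inequality and the contractivity of $\cC_{i,j}^k \in \mathbb{B}_\star(\alpha_D)$, and (iii) control the resulting term by the triangle inequality together with \Cref{as:workers_layer_gen_smoothness}.

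First I would note that, since $\cC_i^k \equiv \cI$, the model-shift update in \Cref{alg:ef_gluon_det} gives $W_i^{k+1} = X_i^{k+1}$, so the workers evaluate $\nabla_i f_j(X^{k+1})$. Abbreviating $D_{i,j}^{k+1} \eqdef \nabla_i f_j(X^{k+1}) - G_{i,j}^k$ and using $G_{i,j}^{k+1} = G_{i,j}^k + \cC_{i,j}^k(D_{i,j}^{k+1})$, we get the identity $\nabla_i f_j(X^{k+1}) - G_{i,j}^{k+1} = D_{i,j}^{k+1} - \cC_{i,j}^k(D_{i,j}^{k+1})$. Taking the expectation over the worker-compressor randomness only --- i.e. conditioning on all history up to and including the server step at iteration $k$, which renders both $G_{i,j}^k$ and $\nabla_i f_j(X^{k+1})$ fixed (this is what the shorthand ``$X^{k+1}, G^k$'' abbreviates) --- I would first apply Jensen's inequality to pass from the second moment to the first, and then \Cref{def:contractive_compressor} written in the $\norm{\cdot}_{(i)\star}$ norm, which is exactly what membership in $\mathbb{B}_\star(\alpha_D)$ provides. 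This yields
\[
\ExpCond{\norm{\nabla_i f_j(X^{k+1}) - G_{i,j}^{k+1}}_{(i)\star}}{X^{k+1}, G^k}
\;\leq\; \sqrt{1-\alpha_D}\,\norm{\nabla_i f_j(X^{k+1}) - G_{i,j}^k}_{(i)\star}.
\]

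It then remains to bound $\norm{\nabla_i f_j(X^{k+1}) - G_{i,j}^k}_{(i)\star}$. By the triangle inequality this is at most $\norm{\nabla_i f_j(X^{k+1}) - \nabla_i f_j(X^k)}_{(i)\star} + \norm{\nabla_i f_j(X^k) - G_{i,j}^k}_{(i)\star}$; the first term is controlled by \Cref{as:workers_layer_gen_smoothness} by $\parens{L^0_{i,j} + L^1_{i,j}\norm{\nabla_i f_j(X^k)}_{(i)\star}}\norm{X_i^{k+1} - X_i^k}_{(i)}$, and since $X_i^{k+1} = \lmo{\cB(X_i^k,t_i^k)}{G_i^k}$ lands on the sphere of radius $t_i^k$ (or equals $X_i^k$ when $G_i^k = 0$), one has $\norm{X_i^{k+1} - X_i^k}_{(i)} \leq t_i^k$. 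Substituting this back into the display gives exactly the claimed recursion.

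The argument is essentially routine; the only points needing care are the bookkeeping of the conditioning --- so that the sole remaining randomness is that of $\cC_{i,j}^k$ --- and the observation that membership in $\mathbb{B}_\star(\alpha_D)$, rather than $\mathbb{B}_2(\alpha_D)$, is precisely what allows the compression error to be measured directly in the dual norm $\norm{\cdot}_{(i)\star}$, avoiding any norm-equivalence factors. I do not anticipate a substantive obstacle.
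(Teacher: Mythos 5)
Your proposal is correct and follows essentially the same route as the paper's proof: unfold the EF21-style update to isolate the compression error, pass to the first moment via Jensen and the $\mathbb{B}_\star(\alpha_D)$ contractivity in the dual norm, then apply the triangle inequality, \Cref{as:workers_layer_gen_smoothness}, and the fact that the LMO step has $\norm{X_i^{k+1}-X_i^k}_{(i)} \leq t_i^k$. Your explicit remarks on the conditioning and on $W^{k+1}=X^{k+1}$ under the identity server compressor are points the paper leaves implicit, but they do not change the argument.
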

\begin{proof}
    The algorithm's update rule and Jensen's inequality give
    \begin{eqnarray*}
        &&\hspace{-1cm}\ExpCond{\norm{\nabla_i f_j(X^{k+1}) - G_{i,j}^{k+1}}_{(i) \star}}{X^{k+1}, G^k} \\
        &=& \ExpCond{\sqrt{\norm{\nabla_i f_j(X^{k+1}) - G_{i,j}^k - \cC_{i,j}^k(\nabla_i f_j(X^{k+1}) - G_{i,j}^k)}_{(i) \star}^2}}{X^{k+1}, G^k} \\
        &\leq& \sqrt{\ExpCond{\norm{\nabla_i f_j(X^{k+1}) - G_{i,j}^k - \cC_{i,j}^k(\nabla_i f_j(X^{k+1}) - G_{i,j}^k)}_{(i) \star}^2}{X^{k+1}, G^k}} \\
        &\leq& \sqrt{1-\alpha_D} \norm{\nabla_i f_j(X^{k+1}) - G_{i,j}^k}_{(i) \star} \\
        &\leq& \sqrt{1-\alpha_D} \norm{\nabla_i f_j(X^k) - G_{i,j}^k}_{(i) \star} + \sqrt{1-\alpha_D} \norm{\nabla_i f_j(X^{k+1}) - \nabla_i f_j(X^k)}_{(i) \star} \\
        &\leq& \sqrt{1-\alpha_D} \norm{\nabla_i f_j(X^k) - G_{i,j}^k}_{(i) \star} \\
        &&+ \sqrt{1-\alpha_D} \parens{L^0_{i,j} + L^1_{i,j} \norm{\nabla_i f_j(X^k)}_{(i) \star}} \norm{X_i^{k+1} - X_i^k}_{(i)}.
    \end{eqnarray*}
    where $\norm{X_i^{k+1} - X_i^k}_{(i)} = t_i^k$.
\end{proof}

\begin{lemma}\label{lemma:mg_rec_m_l0l1_no_p}
    Let Assumptions \ref{as:workers_layer_gen_smoothness} and \ref{as:bounded_layer_var} hold. Then, the iterates of \Cref{alg:ef_layer_muon_m_dist} run with $\cC_i^k \equiv \cI$ (the identity compressor) and $\cC_{i,j}^k \in \mathbb{B}_2(\alpha_D)$ satisfy
    \begin{eqnarray*}
        &&\hspace{-1cm}\ExpCond{\norm{M_{i,j}^{k+1} - G_{i,j}^{k+1}}_2}{X^{k+1}, M_{i,j}^k, G_{i,j}^k} \\
        &\leq& \sqrt{1-\alpha_D} \norm{M_{i,j}^k - G_{i,j}^k}_2
        + \sqrt{1-\alpha_D} \beta_i \norm{M_{i,j}^k - \nabla_i f_j(X^k)}_2 \\
        &&+ \frac{\sqrt{1-\alpha_D} \beta_i}{\underline{\rho}_i} \parens{L^0_{i,j} + L^1_{i,j} \norm{\nabla_i f_j(X^k)}_{(i) \star}} t_i^k + \sqrt{1-\alpha_D} \beta_i \sigma_i.
    \end{eqnarray*}
\end{lemma}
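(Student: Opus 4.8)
The plan is to mirror the argument of \Cref{lemma:ng_rec_l0l1}, now tracking the momentum buffer $M_{i,j}^k$ in place of the raw gradient and carrying one extra stochastic-noise term. Throughout, I condition on the natural filtration up to and including the computation of $X^{k+1}$; in particular $X^k$, $W^{k+1}$, $M_{i,j}^k$ and $G_{i,j}^k$ are all determined, and since $\cC_i^k \equiv \cI$ we have $W_i^{k+1} = X_i^{k+1}$. The only remaining randomness is the fresh stochastic gradient $\xi_j^{k+1}$ followed by the worker compressor $\cC_{i,j}^k$, in that order, so the stated conditional expectation equals $\ExpSub{\xi}{\ExpSub{\cC}{\cdot}}$.

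First I would peel off the compressor. Since $G_{i,j}^{k+1} = G_{i,j}^k + \cC_{i,j}^k(M_{i,j}^{k+1} - G_{i,j}^k)$, Jensen's inequality (to pass from the first power to a square root of a square, exactly as in \Cref{lemma:ng_rec_l0l1}) together with $\cC_{i,j}^k \in \mathbb{B}_2(\alpha_D)$ gives $\ExpSub{\cC}{\norm{M_{i,j}^{k+1} - G_{i,j}^{k+1}}_2} \le \sqrt{1-\alpha_D}\,\norm{M_{i,j}^{k+1} - G_{i,j}^k}_2$. Taking $\ExpSub{\xi}{\cdot}$, it then remains to bound $\ExpSub{\xi}{\norm{M_{i,j}^{k+1} - G_{i,j}^k}_2}$.

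Next I would expand the momentum update. Rewriting $M_{i,j}^{k+1} - G_{i,j}^k = (M_{i,j}^k - G_{i,j}^k) + \beta_i\bigl(\nabla_i f_j(W^{k+1};\xi_j^{k+1}) - M_{i,j}^k\bigr)$ and applying the triangle inequality splits off the fixed term $\norm{M_{i,j}^k - G_{i,j}^k}_2$. Into the remaining $\beta_i$-weighted term I insert $\pm\,\nabla_i f_j(W^{k+1})$ and $\pm\,\nabla_i f_j(X^k)$ and split by the triangle inequality into three pieces: (i) $\norm{\nabla_i f_j(W^{k+1};\xi_j^{k+1}) - \nabla_i f_j(W^{k+1})}_2$, whose $\xi$-expectation is at most $\sigma_i$ by Jensen and \Cref{as:bounded_layer_var}; (ii) $\norm{\nabla_i f_j(X^{k+1}) - \nabla_i f_j(X^k)}_2$ (using $W^{k+1}=X^{k+1}$), which I bound by first passing to the dual layer norm via the norm-equivalence inequality $\norm{\cdot}_2 \le \underline{\rho}_i^{-1}\norm{\cdot}_{(i) \star}$, then applying layer-wise $(L^0, L^1)$--smoothness of $f_j$ (\Cref{as:workers_layer_gen_smoothness}, with the gradient magnitude measured at $X^k$) together with the identity $\norm{X_i^{k+1} - X_i^k}_{(i)} = t_i^k$ (the LMO step lands on the boundary of $\cB(X_i^k,t_i^k)$), which yields $\underline{\rho}_i^{-1}\bigl(L_{i,j}^0 + L_{i,j}^1\norm{\nabla_i f_j(X^k)}_{(i) \star}\bigr)t_i^k$; and (iii) $\norm{\nabla_i f_j(X^k) - M_{i,j}^k}_2$. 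Re-assembling the three pieces, multiplying the $\beta_i$-weighted sum back in and multiplying through by $\sqrt{1-\alpha_D}$ gives precisely the four terms in the statement.

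I do not anticipate a genuine obstacle: this is a direct extension of \Cref{lemma:ng_rec_l0l1}. The two points that require care are (a) ordering the two independent sources of randomness correctly, so that \Cref{as:bounded_layer_var} is applied to $\nabla_i f_j(W^{k+1};\xi_j^{k+1})$ conditionally on $W^{k+1}$ before the compressor is invoked; and (b) bridging between the Euclidean norm---in which both the compressor contraction and \Cref{as:bounded_layer_var} are stated---and the dual layer norm $\norm{\cdot}_{(i) \star}$ of \Cref{as:workers_layer_gen_smoothness}, a step that is exactly what produces the $\underline{\rho}_i^{-1}$ factor in front of the smoothness term.
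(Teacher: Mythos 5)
Your proposal is correct and follows essentially the same route as the paper's proof: peel off the compressor via Jensen's inequality and the $\mathbb{B}_2(\alpha_D)$ contraction, expand the momentum update (using $W^{k+1}=X^{k+1}$ since $\cC_i^k\equiv\cI$), apply the triangle inequality with the insertions $\pm\nabla_i f_j(X^{k+1})$ and $\pm\nabla_i f_j(X^k)$, bound the noise term by $\sigma_i$ via \Cref{as:bounded_layer_var}, and bound the gradient-difference term by \Cref{as:workers_layer_gen_smoothness} together with $\norm{\cdot}_2\le\underline{\rho}_i^{-1}\norm{\cdot}_{(i)\star}$ and $\norm{X_i^{k+1}-X_i^k}_{(i)}=t_i^k$. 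The only deviation is the order in which the triangle inequality is applied, which is immaterial and yields exactly the same four terms.
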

\begin{proof}
    Using the definition of contractive compressors and triangle inequality, we get
    \begin{eqnarray*}
        &&\hspace{-1cm}\ExpCond{\norm{M_{i,j}^{k+1} - G_{i,j}^{k+1}}_2}{M_{i,j}^{k+1}, G_{i,j}^k} \\
        &=& \ExpCond{\sqrt{\norm{M_{i,j}^{k+1} - G_{i,j}^k - \cC_{i,j}^k(M_{i,j}^{k+1} - G_{i,j}^k)}^2_2}}{M_{i,j}^{k+1}, G_{i,j}^k} \\
        &\leq& \sqrt{\ExpCond{\norm{M_{i,j}^{k+1} - G_{i,j}^k - \cC_{i,j}^k(M_{i,j}^{k+1} - G_{i,j}^k)}^2_2}{M_{i,j}^{k+1}, G_{i,j}^k}} \\
        &\overset{\eqref{def:contractive_compressor}}{\leq}& \sqrt{1-\alpha_D} \norm{M_{i,j}^{k+1} - G_{i,j}^k}_2 \\
        &=& \sqrt{1-\alpha_D} \norm{(1-\beta_i) M_{i,j}^k + \beta_i \nabla_i f_j(X^{k+1}; \xi_j^{k+1}) - G_{i,j}^k}_2.
    \end{eqnarray*}
    Hence,
    \begin{eqnarray*}
        &&\hspace{-0.8cm}\ExpCond{\norm{M_{i,j}^{k+1} - G_{i,j}^{k+1}}_2}{X^{k+1}, M_{i,j}^k, G_{i,j}^k} \\
        &=& \ExpCond{\ExpCond{\norm{M_{i,j}^{k+1} - G_{i,j}^{k+1}}_2}{M_{i,j}^{k+1}, G_{i,j}^k}}{X^{k+1}, M_{i,j}^k, G_{i,j}^k} \\
        &\leq& \sqrt{1-\alpha_D} \ExpCond{\norm{(1-\beta_i) M_{i,j}^k + \beta_i \nabla_i f_j(X^{k+1}; \xi_j^{k+1}) - G_{i,j}^k}_2}{X^{k+1}, M_{i,j}^k, G_{i,j}^k} \\
        &\leq& \sqrt{1-\alpha_D} \ExpCond{\norm{(1-\beta_i) M_{i,j}^k + \beta_i \nabla_i f_j(X^{k+1}) - G_{i,j}^k}_2}{X^{k+1}, M_{i,j}^k, G_{i,j}^k} \\
        &&+ \sqrt{1-\alpha_D} \beta_i \ExpCond{\norm{\nabla_i f_j(X^{k+1}; \xi_j^{k+1}) - \nabla_i f_j(X^{k+1})}_2}{X^{k+1}, M_{i,j}^k, G_{i,j}^k} \\
        &\overset{\eqref{as:bounded_layer_var}}{\leq}& \sqrt{1-\alpha_D} \norm{M_{i,j}^k - G_{i,j}^k}_2 + \sqrt{1-\alpha_D} \beta_i \norm{M_{i,j}^k - \nabla_i f_j(X^{k+1})}_2 + \sqrt{1-\alpha_D} \beta_i \sigma_i \\
        &\leq& \sqrt{1-\alpha_D} \norm{M_{i,j}^k - G_{i,j}^k}_2 + \sqrt{1-\alpha_D} \beta_i \norm{M_{i,j}^k - \nabla_i f_j(X^k)}_2 \\
        &&+ \sqrt{1-\alpha_D} \beta_i \norm{\nabla_i f_j(X^k) - \nabla_i f_j(X^{k+1})}_2 + \sqrt{1-\alpha_D} \beta_i \sigma_i \\
        &\overset{\eqref{as:workers_layer_gen_smoothness}}{\leq}& \sqrt{1-\alpha_D} \norm{M_{i,j}^k - G_{i,j}^k}_2
        + \sqrt{1-\alpha_D} \beta_i \norm{M_{i,j}^k - \nabla_i f_j(X^k)}_2 \\
        &&+ \frac{\sqrt{1-\alpha_D} \beta_i}{\underline{\rho}_i} \parens{L^0_{i,j} + L^1_{i,j} \norm{\nabla_i f_j(X^k)}_{(i) \star}} \norm{X_i^k - X_i^{k+1}}_{(i)} 
        + \sqrt{1-\alpha_D} \beta_i \sigma_i.
    \end{eqnarray*}
    Using the fact that $\norm{X_i^k - X_i^{k+1}} = t_i^k$ finishes the proof.
\end{proof}

\begin{lemma}\label{lemma:nm_rec_m_l0l12}
    Let Assumptions \ref{as:layer_gen_smoothness}, \ref{as:workers_layer_gen_smoothness} and \ref{as:bounded_layer_var} hold. Then, the iterates of \Cref{alg:ef_layer_muon_m_dist} run with $\cC_i^k \equiv \cI$ (the identity compressor) and $t_i^k \equiv t_i$ satisfy
    \begin{align*}
        \Exp{\norm{M_i^{k+1} - \nabla_i f(X^{k+1})}_2}
        \leq& \, (1-\beta_i)^{k+1} \Exp{\norm{M_i^0 - \nabla_i f(X^0)}_2} + \frac{t_i \bar{L}^0_i}{\beta_i \underline{\rho}_i} \\
        &+ \frac{t_i}{\underline{\rho}_i} \frac{1}{n} \sum_{j=1}^n L^1_{i,j} \sum_{l=0}^k (1-\beta_i)^{k+1-l} \Exp{\norm{\nabla_i f_j(X^l)}_{(i) \star}} + \sigma_i \sqrt{\frac{\beta_i}{n}}
    \end{align*}
    and
    \begin{align*}
        \frac{1}{n} \sum_{j=1}^n \Exp{\norm{M_{i,j}^{k+1} - \nabla_i f_j(X^{k+1})}_2}
        \leq& \, (1-\beta_i) \frac{1}{n} \sum_{j=1}^n \Exp{\norm{M_{i,j}^k - \nabla_i f_j(X^k)}_2} + t_i \frac{(1-\beta_i) \bar{L}^0_i}{\underline{\rho}_i} \\
        &+ t_i \frac{1-\beta_i}{\underline{\rho}_i} \frac{1}{n} \sum_{j=1}^n L^1_{i,j} \Exp{\norm{\nabla_i f_j(X^k)}_{(i) \star}} + \beta_i \sigma_i,
    \end{align*}
    where $M_i^k \eqdef \frac{1}{n} \sum_{j=1}^n M_{i,j}^k$.
\end{lemma}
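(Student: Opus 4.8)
The plan is to derive, for both quantities, a single-step identity from the momentum recursion $M_{i,j}^{k+1} = (1-\beta_i) M_{i,j}^k + \beta_i \nabla_i f_j(X^{k+1}; \xi_j^{k+1})$ (valid here because $\cC_i^k \equiv \cI$ forces $W^{k+1} = X^{k+1}$), and then to treat the ``drift'' and the ``stochastic noise'' separately. Writing $E_{i,j}^k \eqdef M_{i,j}^k - \nabla_i f_j(X^k)$ and $\epsilon_{i,j}^{k+1} \eqdef \nabla_i f_j(X^{k+1}; \xi_j^{k+1}) - \nabla_i f_j(X^{k+1})$, adding and subtracting $(1-\beta_i)\nabla_i f_j(X^k)$ gives $E_{i,j}^{k+1} = (1-\beta_i)\bigl(E_{i,j}^k + \nabla_i f_j(X^k) - \nabla_i f_j(X^{k+1})\bigr) + \beta_i \epsilon_{i,j}^{k+1}$, and the analogous identity holds for the averaged quantities with $E_i^k \eqdef M_i^k - \nabla_i f(X^k)$ and $\epsilon_i^{k+1} \eqdef \tfrac1n\sum_{j=1}^n \epsilon_{i,j}^{k+1}$. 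The structural facts I would use are: (i) $X^{k+1}$ is determined by $\xi^1,\dots,\xi^k$ alone (it is computed from $G^k$ before the fresh gradients $\xi^{k+1}$ are drawn), so $\mathbb{E}[\epsilon_{i,j}^{k+1}\mid\mathcal{F}^k]=0$, where $\mathcal{F}^k$ is the natural filtration up to and including $X^{k+1}$; (ii) by \Cref{as:bounded_layer_var}, $\mathbb{E}[\norm{\epsilon_{i,j}^{k+1}}_2^2\mid\mathcal{F}^k]\le\sigma_i^2$, and by conditional independence across $j$, $\mathbb{E}[\norm{\epsilon_i^{k+1}}_2^2\mid\mathcal{F}^k]\le\sigma_i^2/n$; (iii) the LMO step satisfies $\norm{X_i^{k+1}-X_i^k}_{(i)}=t_i^k=t_i$ deterministically, so \Cref{as:workers_layer_gen_smoothness} together with the norm-equivalence bound $\norm{\cdot}_2\le\tfrac1{\underline\rho_i}\norm{\cdot}_{(i) \star}$ gives $\norm{\nabla_i f_j(X^{k+1})-\nabla_i f_j(X^k)}_2\le\tfrac{t_i}{\underline\rho_i}\bigl(L^0_{i,j}+L^1_{i,j}\norm{\nabla_i f_j(X^k)}_{(i) \star}\bigr)$.

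For the per-worker bound (the second display), I would stay at the one-step level. Conditioning on $\mathcal{F}^k$ and using the bias–variance decomposition (the cross term vanishes by (i)), $\mathbb{E}[\norm{E_{i,j}^{k+1}}_2^2\mid\mathcal{F}^k]=(1-\beta_i)^2\norm{E_{i,j}^k+\nabla_i f_j(X^k)-\nabla_i f_j(X^{k+1})}_2^2+\beta_i^2\,\mathbb{E}[\norm{\epsilon_{i,j}^{k+1}}_2^2\mid\mathcal{F}^k]$. Applying $\sqrt{\cdot}$ (Jensen) and $\sqrt{a^2+b^2}\le a+b$, then the triangle inequality on the first factor, then (iii), then full expectation and averaging over $j$ (with $\tfrac1n\sum_j L^0_{i,j}=\bar L^0_i$), produces exactly the claimed recursion, the $\beta_i\sigma_i$ term arising from $\beta_i\sqrt{\mathbb{E}\norm{\epsilon_{i,j}^{k+1}}_2^2}\le\beta_i\sigma_i$.

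For the averaged bound (the first display), the one-step argument is not tight enough: it would yield a $\sigma_i/\sqrt n$ noise term rather than the sharper $\sigma_i\sqrt{\beta_i/n}$. Instead I would unroll the recursion for $E_i^{k+1}$ completely:
\[
E_i^{k+1} = (1-\beta_i)^{k+1}E_i^0 + \sum_{l=0}^k (1-\beta_i)^{k+1-l}\bigl(\nabla_i f(X^l)-\nabla_i f(X^{l+1})\bigr) + \beta_i\sum_{l=0}^k (1-\beta_i)^{k-l}\epsilon_i^{l+1},
\]
then apply the triangle inequality and take expectations. The initial term gives $(1-\beta_i)^{k+1}\mathbb{E}\norm{E_i^0}_2$. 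The drift sum is handled by the triangle inequality in $j$, fact (iii), and the geometric estimate $\sum_{l=0}^k(1-\beta_i)^{k+1-l}\le\tfrac1{\beta_i}$ applied to the $\bar L^0_i$ part (the $L^1_{i,j}$ part is left as the stated un-summed sum $\sum_{l=0}^k(1-\beta_i)^{k+1-l}\norm{\nabla_i f_j(X^l)}_{(i) \star}$, which cannot be simplified further at this stage). The noise term $N^{k+1}\eqdef\beta_i\sum_{l=0}^k(1-\beta_i)^{k-l}\epsilon_i^{l+1}$ is a martingale transform of the differences $\epsilon_i^{l+1}$: by (i) these are orthogonal in $L_2$ across $l$, so $\mathbb{E}\norm{N^{k+1}}_2^2=\beta_i^2\sum_{l=0}^k(1-\beta_i)^{2(k-l)}\mathbb{E}\norm{\epsilon_i^{l+1}}_2^2\le\beta_i^2\cdot\tfrac{\sigma_i^2}{n}\cdot\tfrac{1}{1-(1-\beta_i)^2}\le\tfrac{\beta_i\sigma_i^2}{n}$, hence $\mathbb{E}\norm{N^{k+1}}_2\le\sigma_i\sqrt{\beta_i/n}$ by Jensen. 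Collecting the three contributions yields the first claimed bound.

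The main obstacle is exactly this last step: one must resist the step-by-step triangle inequality on the stochastic part (which loses a $\sqrt{\beta_i}$ factor) and instead unroll the full recursion and exploit the conditional orthogonality of the fresh-gradient noise — which requires care with the filtration, namely that $X^{l+1}$ is measurable with respect to the randomness through step $l$ so that $\epsilon_i^{l+1}$ has zero conditional mean. The remaining work (drift bookkeeping, matching the geometric-sum constants $\tfrac1{\beta_i}$ and the $2-\beta_i\ge1$ estimate) is routine.
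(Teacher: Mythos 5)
Your proposal is correct and follows essentially the same route as the paper's proof: the same error recursion $U_1^{k+1}=(1-\beta_i)U_1^k+(1-\beta_i)U_2^k+\beta_i U_3^{k+1}$, fully unrolled for the averaged bound with the drift handled via layer-wise $(L^0,L^1)$--smoothness, the geometric sum $\le \nicefrac{1}{\beta_i}$, and the noise handled by Jensen plus $L_2$-orthogonality across iterations (yielding $\sigma_i\sqrt{\nicefrac{\beta_i}{n}}$), and a one-step triangle-inequality argument for the per-worker bound. Your minor variation of using a conditional bias--variance split instead of a direct triangle inequality for the per-worker noise term changes nothing substantive.
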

\begin{proof}
    The proof uses techniques similar to those in \citet[Theorem 1]{cutkosky2020momentum}.
    First, using the momentum update rule, we can write
    \begin{eqnarray*}
        M_{i,j}^{k+1} &=& (1-\beta_i) M_{i,j}^k + \beta_i \nabla_i f_j(X^{k+1}; \xi_j^{k+1}) \\
        &=& (1-\beta_i) \parens{M_{i,j}^k - \nabla_i f_j(X^k)} + (1-\beta_i) \parens{\nabla_i f_j(X^k) - \nabla_i f_j(X^{k+1})} \\
        &&+ \beta_i \parens{\nabla_i f_j(X^{k+1}; \xi_j^{k+1}) - \nabla_i f_j(X^{k+1})} + \nabla_i f_j(X^{k+1}),
    \end{eqnarray*}
    and hence
    \begin{eqnarray*}
        U_{1,i,j}^{k+1} = (1-\beta_i) U_{1,i,j}^k + (1-\beta_i) U_{2,i,j}^k + \beta_i U_{3,i,j}^{k+1},
    \end{eqnarray*}
    where we define $U_{1,i,j}^k \eqdef M_{i,j}^k - \nabla_i f_j(X^k)$, $U_{2,i,j}^k \eqdef \nabla_i f_j(X^k) - \nabla_i f_j(X^{k+1})$ and $U_{3,i,j}^k \eqdef \nabla_i f_j(X^k; \xi_j^k) - \nabla_i f_j(X^k)$. Unrolling the recursion gives
    \begin{eqnarray*}
        U_{1,i,j}^{k+1} = (1-\beta_i)^{k+1} U_{1,i,j}^0 + \sum_{l=0}^k (1-\beta_i)^{k+1-l} U_{2,i,j}^l + \beta_i \sum_{l=0}^k (1-\beta_i)^{k-l} U_{3,i,j}^{l+1}.
    \end{eqnarray*}
    Hence, using the triangle inequality,
    \begin{eqnarray}\label{eq:apieghrn}
        &&\hspace{-0.7cm}\Exp{\norm{\frac{1}{n} \sum_{j=1}^n U_{1,i,j}^{k+1}}_2} \nonumber\\
        &\leq& (1-\beta_i)^{k+1} \Exp{\norm{\frac{1}{n} \sum_{j=1}^n U_{1,i,j}^0}_2} + \Exp{\norm{\sum_{l=0}^k (1-\beta_i)^{k+1-l} \frac{1}{n} \sum_{j=1}^n U_{2,i,j}^l}_2} \nonumber \\
        &&+ \beta_i \Exp{\norm{\sum_{l=0}^k (1-\beta_i)^{k-l} \frac{1}{n} \sum_{j=1}^n U_{3,i,j}^{l+1}}_2}.
    \end{eqnarray}
    Let us now bound the last two terms of the inequality above. First, triangle inequality and \Cref{as:workers_layer_gen_smoothness} give
    \begin{eqnarray*}
        &&\hspace{-0.7cm}\Exp{\norm{\sum_{l=0}^k (1-\beta_i)^{k+1-l} \frac{1}{n} \sum_{j=1}^n U_{2,i,j}^l}_2} \\
        &\leq& \frac{1}{n} \sum_{j=1}^n \sum_{l=0}^k (1-\beta_i)^{k+1-l} \Exp{\norm{U_{2,i,j}^l}_2} \\
        &=& \frac{1}{n} \sum_{j=1}^n \sum_{l=0}^k (1-\beta_i)^{k+1-l} \Exp{\norm{\nabla_i f_j(X^l) - \nabla_i f_j(X^{l+1})}_2} \\
        &\overset{\eqref{as:workers_layer_gen_smoothness}}{\leq}& \frac{1}{\underline{\rho}_i} \frac{1}{n} \sum_{j=1}^n \sum_{l=0}^k (1-\beta_i)^{k+1-l} \Exp{\parens{L^0_{i,j} + L^1_{i,j} \norm{\nabla_i f_j(X^l)}_{(i) \star}} \norm{X_i^l - X_i^{l+1}}_{(i)}} \\
        &=& \frac{t_i}{\underline{\rho}_i} \frac{1}{n} \sum_{j=1}^n \sum_{l=0}^k (1-\beta_i)^{k+1-l} L^0_{i,j}
        + \frac{t_i}{\underline{\rho}_i} \frac{1}{n} \sum_{j=1}^n L^1_{i,j} \sum_{l=0}^k (1-\beta_i)^{k+1-l} \Exp{\norm{\nabla_i f_j(X^l)}_{(i) \star}} \\
        &\leq& \frac{t_i \bar{L}^0_i}{\beta_i \underline{\rho}_i}
        + \frac{t_i}{\underline{\rho}_i} \frac{1}{n} \sum_{j=1}^n L^1_{i,j} \sum_{l=0}^k (1-\beta_i)^{k+1-l} \Exp{\norm{\nabla_i f_j(X^l)}_{(i) \star}},
    \end{eqnarray*}
    and using Jensen's inequality, the last term can be bounded as
    \begin{eqnarray*}
        &&\hspace{-0.7cm}\Exp{\norm{\sum_{l=0}^k (1-\beta_i)^{k-l} \frac{1}{n} \sum_{j=1}^n U_{3,i,j}^{l+1}}_2} \\
        &\leq& \sqrt{\Exp{\norm{\sum_{l=0}^k (1-\beta_i)^{k-l} \frac{1}{n} \sum_{j=1}^n U_{3,i,j}^{l+1}}^2_2}} 
        \overset{\eqref{as:bounded_layer_var}}{=} \sqrt{\sum_{l=0}^k (1-\beta_i)^{2(k-l)} \frac{1}{n^2} \sum_{j=1}^n \Exp{\norm{U_{3,i,j}^{l+1}}^2_2}} \\
        &\overset{\eqref{as:bounded_layer_var}}{\leq}& \sqrt{\sum_{l=0}^k (1-\beta_i)^{2(k-l)} \frac{1}{n^2} \sum_{j=1}^n \sigma_i^2}
        = \frac{\sigma_i}{\sqrt{n}} \sqrt{\sum_{l=0}^k (1-\beta_i)^{2 l}}
        \leq \frac{\sigma_i}{\sqrt{n \beta_i (2-\beta_i)}}
        \leq \frac{\sigma_i}{\sqrt{n \beta_i}}.
    \end{eqnarray*}
    Substituting this in \eqref{eq:apieghrn} yields
    \begin{eqnarray*}
        \Exp{\norm{\frac{1}{n} \sum_{j=1}^n U_{1,i,j}^{k+1}}_2}
        &\leq& (1-\beta_i)^{k+1} \Exp{\norm{\frac{1}{n} \sum_{j=1}^n U_{1,i,j}^0}_2} + \frac{t_i \bar{L}^0_i}{\beta_i \underline{\rho}_i} \\
        &&+ \frac{t_i}{\underline{\rho}_i} \frac{1}{n} \sum_{j=1}^n L^1_{i,j} \sum_{l=0}^k (1-\beta_i)^{k+1-l} \Exp{\norm{\nabla_i f_j(X^l)}_{(i) \star}} + \beta_i \frac{\sigma_i}{\sqrt{n \beta_i}}.
    \end{eqnarray*}
    To prove the second inequality, recall that $U_{1,i,j}^{k+1} = (1-\beta_i) U_{1,i,j}^k + (1-\beta_i) U_{2,i,j}^k + \beta_i U_{3,i,j}^{k+1}$. Hence, taking norms, averaging, and using the triangle inequality,
    \begin{eqnarray}\label{eq:pkfgvb}
        \frac{1}{n} \sum_{j=1}^n \Exp{\norm{U_{1,i,j}^{k+1}}_2}
        &\leq& (1-\beta_i) \frac{1}{n} \sum_{j=1}^n \Exp{\norm{U_{1,i,j}^k}_2} + (1-\beta_i) \frac{1}{n} \sum_{j=1}^n \Exp{\norm{U_{2,i,j}^k}_2} \nonumber \\
        &&+ \beta_i \frac{1}{n} \sum_{j=1}^n \Exp{\norm{U_{3,i,j}^{k+1}}_2},
    \end{eqnarray}
    where the last two terms can be bounded as
    \begin{eqnarray*}
        \frac{1}{n} \sum_{j=1}^n \Exp{\norm{U_{2,i,j}^k}_2}
        &=& \frac{1}{n} \sum_{j=1}^n \Exp{\norm{\nabla_i f_j(X^k) - \nabla_i f_j(X^{k+1})}_2} \\
        &\overset{\eqref{as:workers_layer_gen_smoothness}}{\leq}& \frac{1}{\underline{\rho}_i} \frac{1}{n} \sum_{j=1}^n \Exp{\parens{L^0_{i,j} + L^1_{i,j} \norm{\nabla_i f_j(X^k)}_{(i) \star}} \norm{X_i^k - X_i^{k+1}}_{(i)}} \\
        &=& t_i \frac{\bar{L}^0_i}{\underline{\rho}_i} + \frac{t_i}{\underline{\rho}_i} \frac{1}{n} \sum_{j=1}^n L^1_{i,j} \Exp{\norm{\nabla_i f_j(X^k)}_{(i) \star}}
    \end{eqnarray*}
    and
    \begin{eqnarray*}
        \frac{1}{n} \sum_{j=1}^n \Exp{\norm{U_{3,i,j}^{k+1}}_2}
        = \frac{1}{n} \sum_{j=1}^n \Exp{\norm{\nabla_i f_j(X^{k+1}; \xi_j^k) - \nabla_i f_j(X^{k+1})}_2}
        \overset{\eqref{as:bounded_layer_var}}{\leq} \sigma_i.
    \end{eqnarray*}
    It remains to substitute this in \eqref{eq:pkfgvb} to obtain
    \begin{eqnarray*}
        \frac{1}{n} \sum_{j=1}^n \Exp{\norm{U_{1,i,j}^{k+1}}_2} &\leq& (1-\beta_i) \frac{1}{n} \sum_{j=1}^n \Exp{\norm{U_{1,i,j}^k}_2} + t_i \frac{(1-\beta_i) \bar{L}^0_i}{\underline{\rho}_i} \\
        &&+ t_i \frac{1-\beta_i}{\underline{\rho}_i} \frac{1}{n} \sum_{j=1}^n L^1_{i,j} \Exp{\norm{\nabla_i f_j(X^k)}_{(i) \star}} + \beta_i \sigma_i.
    \end{eqnarray*}
\end{proof}

\begin{lemma}\label{lemma:layer_gen_smooth}
    Let Assumptions \ref{as:lower_bound} and \ref{as:layer_gen_smoothness} hold. Then
    \begin{eqnarray*}
        \sum_{i=1}^p \frac{\norm{\nabla_i f(X)}^2_{(i) \star}}{2 \parens{L_i^0 + L_i^1 \norm{\nabla_i f(X)}_{(i) \star}}} \leq f(X) - f^{\star}
    \end{eqnarray*}
    for any $X = [X_1, \ldots, X_p] \in \cS$.
\end{lemma}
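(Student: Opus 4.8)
The plan is to construct an explicit comparison point $Y$ at which $f(Y)$ is provably much smaller than $f(X)$, and then appeal to the global lower bound. Concretely, I would take one layer-wise steepest-descent step in the dual geometry: put $Y = [Y_1,\ldots,Y_p]$ with $Y_i = X_i - \gamma_i \parens{\nabla_i f(X)}^{\sharp}$ for per-layer stepsizes $\gamma_i>0$ to be optimized at the end. The sharp operator is the right search direction here because it obeys $\inp{G}{G^{\sharp}} = \norm{G^{\sharp}}^2 = \norm{G}_{\star}^2$, so it plays exactly the role that the gradient plays in the classical Euclidean derivation of this kind of inequality.

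The first step is to apply the layer-wise $(L^0,L^1)$--smoothness descent inequality (\Cref{lemma:layer_asym_gen_smooth}, the same bound invoked in the proof of \Cref{lemma:descent2}):
\[
    f(Y) \le f(X) + \sum_{i=1}^p \inp{\nabla_i f(X)}{Y_i - X_i}_{(i)} + \sum_{i=1}^p \frac{L^0_i + L^1_i \norm{\nabla_i f(X)}_{(i) \star}}{2} \norm{Y_i - X_i}_{(i)}^2 .
\]
Substituting $Y_i - X_i = -\gamma_i \parens{\nabla_i f(X)}^{\sharp}$ and using the sharp-operator identities $\inp{\nabla_i f(X)}{\parens{\nabla_i f(X)}^{\sharp}}_{(i)} = \norm{\nabla_i f(X)}_{(i) \star}^2$ and $\norm{\parens{\nabla_i f(X)}^{\sharp}}_{(i)} = \norm{\nabla_i f(X)}_{(i) \star}$, the right-hand side becomes
\[
    f(X) + \sum_{i=1}^p \parens{-\gamma_i + \frac{L^0_i + L^1_i \norm{\nabla_i f(X)}_{(i) \star}}{2}\,\gamma_i^2}\norm{\nabla_i f(X)}_{(i) \star}^2 .
\]
Each summand is a scalar quadratic in $\gamma_i$; for layers with $\nabla_i f(X)\neq 0$ it is minimized at $\gamma_i = \parens{L^0_i + L^1_i \norm{\nabla_i f(X)}_{(i) \star}}^{-1}$, where the bracketed factor equals $-\frac{1}{2}\parens{L^0_i + L^1_i \norm{\nabla_i f(X)}_{(i) \star}}^{-1}$ (layers with $\nabla_i f(X)=0$ contribute zero to both sides of the claim and may be dropped). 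With this choice,
\[
    f(Y) \le f(X) - \sum_{i=1}^p \frac{\norm{\nabla_i f(X)}_{(i) \star}^2}{2\parens{L^0_i + L^1_i \norm{\nabla_i f(X)}_{(i) \star}}},
\]
and combining with $f^{\star} \le f(Y)$ from \Cref{as:lower_bound} and rearranging yields the lemma.

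The computation is routine; the only delicate point is confirming that the descent inequality of \Cref{lemma:layer_asym_gen_smooth} is applicable at the chosen stepsize, since some formulations of the $(L^0,L^1)$ descent lemma require the step to satisfy a smallness condition such as $\norm{Y_i - X_i}_{(i)} \le 1/L^1_i$. Here $\norm{Y_i - X_i}_{(i)} = \gamma_i \norm{\nabla_i f(X)}_{(i) \star} = \frac{\norm{\nabla_i f(X)}_{(i) \star}}{L^0_i + L^1_i \norm{\nabla_i f(X)}_{(i) \star}} < 1/L^1_i$ because $L^0_i>0$, so any such restriction is automatically met — that is the one hypothesis of the cited lemma I would double-check against its precise statement.
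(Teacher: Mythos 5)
Your proof is correct and essentially the same as the paper's: both construct the one-step comparison point $Y_i = X_i - \frac{\norm{\nabla_i f(X)}_{(i)\star}}{L_i^0 + L_i^1 \norm{\nabla_i f(X)}_{(i)\star}} H_i$ (the paper takes $H_i \in \partial\norm{\cdot}_{(i)\star}(\nabla_i f(X))$, which coincides with your normalized sharp direction $(\nabla_i f(X))^{\sharp}/\norm{\nabla_i f(X)}_{(i)\star}$), apply \Cref{lemma:layer_asym_gen_smooth}, and conclude from \Cref{as:lower_bound}. Your final worry is moot: \Cref{lemma:layer_asym_gen_smooth} holds for all $X,Y\in\cS$ with no step-size restriction, so no smallness condition needs checking.
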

\begin{proof}
    Let $Y = [Y_1, \ldots, Y_p] \in \cS$, where $Y_i = X_i - \frac{\norm{\nabla_i f(X)}_{(i) \star}}{L_i^0 + L_i^1 \norm{\nabla_i f(X)}_{(i) \star}} H_i$ for some $H_i \in \partial \norm{\cdot}_{(i) \star} (\nabla_i f(X))$. Then, \Cref{lemma:layer_asym_gen_smooth} and the definition of subdifferential give
    \begin{eqnarray*}
        f(Y) &\leq& f(X) + \inp{\nabla f(X)}{Y-X} + \sum_{i=1}^p \frac{L^0_i + L^1_i \norm{\nabla_i f(X)}_{(i) \star}}{2} \norm{X_i - Y_i}_{(i)}^2 \\
        &=& f(X) + \sum_{i=1}^p \inp{\nabla_i f(X)}{Y_i - X_i}_{(i)} + \sum_{i=1}^p \frac{L^0_i + L^1_i \norm{\nabla_i f(X)}_{(i) \star}}{2} \norm{X_i - Y_i}_{(i)}^2 \\
        &=& f(X) - \sum_{i=1}^p \frac{\norm{\nabla_i f(X)}_{(i) \star}}{L_i^0 + L_i^1 \norm{\nabla_i f(X)}_{(i) \star}} \inp{\nabla_i f(X)}{H_i}_{(i)} \\
        &&+ \sum_{i=1}^p \parens{\frac{L^0_i + L^1_i \norm{\nabla_i f(X)}_{(i) \star}}{2} \frac{\norm{\nabla_i f(X)}^2_{(i) \star}}{\parens{L_i^0 + L_i^1 \norm{\nabla_i f(X)}_{(i) \star}}^2} \norm{H_i}_{(i)}^2} \\
        &\overset{\eqref{eq:subdiff}}{=}& f(X) + \sum_{i=1}^p \parens{- \frac{\norm{\nabla_i f(X)}^2_{(i) \star}}{L_i^0 + L_i^1 \norm{\nabla_i f(X)}_{(i) \star}} + \frac{\norm{\nabla_i f(X)}^2_{(i) \star}}{2 \parens{L_i^0 + L_i^1 \norm{\nabla_i f(X)}_{(i) \star}}}} \\
        &=& f(X) - \sum_{i=1}^p \frac{\norm{\nabla_i f(X)}^2_{(i) \star}}{2 \parens{L_i^0 + L_i^1 \norm{\nabla_i f(X)}_{(i) \star}}},
    \end{eqnarray*}
    and hence
    \begin{eqnarray*}
        \sum_{i=1}^p \frac{\norm{\nabla_i f(X)}^2_{(i) \star}}{2 \parens{L_i^0 + L_i^1 \norm{\nabla_i f(X)}_{(i) \star}}} \leq f(X) - f(Y) \leq f(X) - f^{\star}
    \end{eqnarray*}
    as needed.
\end{proof}

\begin{lemma}\label{lemma:layer_gen_smooth3}
    Let Assumptions \ref{as:lower_bound} and \ref{as:layer_gen_smoothness} hold. Then, for any $x_i >0$, $i\in[p]$, we have
    \begin{eqnarray*}
        \sum_{i=1}^p x_i \norm{\nabla_i f(X)}_{(i) \star}
        \leq 4 \max_{i\in[p]} (x_i L_i^1) \parens{f(X) - f^{\star}} + \frac{\sum_{i=1}^p x_i^2 L_i^0}{\max_{i\in[p]} (x_i L_i^1)}
    \end{eqnarray*}
    for all $X \in \cS$.
\end{lemma}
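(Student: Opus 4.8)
The plan is to obtain the bound directly from \Cref{lemma:layer_gen_smooth}, which already controls $\sum_{i=1}^p \tfrac{\norm{\nabla_i f(X)}^2_{(i) \star}}{2(L_i^0 + L_i^1 \norm{\nabla_i f(X)}_{(i) \star})}$ by $f(X) - f^{\star}$, via a per-layer Young-type estimate. Abbreviate $g_i \eqdef \norm{\nabla_i f(X)}_{(i) \star}$ and $M \eqdef \max_{i\in[p]}(x_i L_i^1)$, so that $x_i L_i^1 \le M$ for all $i$. The heart of the argument is the single-layer inequality
\begin{align*}
    x_i g_i \;\le\; \frac{2M g_i^2}{L_i^0 + L_i^1 g_i} + \frac{x_i^2 L_i^0}{M}.
\end{align*}
Given this, summing over $i$, using $\sum_{i=1}^p \tfrac{2M g_i^2}{L_i^0 + L_i^1 g_i} = 4M \sum_{i=1}^p \tfrac{g_i^2}{2(L_i^0 + L_i^1 g_i)} \le 4M(f(X) - f^{\star})$ by \Cref{lemma:layer_gen_smooth}, and recognizing $\tfrac{1}{M}\sum_{i=1}^p x_i^2 L_i^0$ as the remaining term, yields exactly the claim.

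To establish the single-layer inequality, I would first multiply through by the positive quantity $L_i^0 + L_i^1 g_i$ and weaken $\tfrac{x_i^2 (L_i^0)^2}{M(L_i^0 + L_i^1 g_i)}$ to $\tfrac{x_i^2 L_i^0}{M}$ using $L_i^0 + L_i^1 g_i \ge L_i^0$; it then suffices to show $x_i L_i^0 g_i + x_i L_i^1 g_i^2 \le 2M g_i^2 + \tfrac{x_i^2 (L_i^0)^2}{M}$. The quadratic term is handled by $x_i L_i^1 g_i^2 \le M g_i^2$ (from $x_i L_i^1 \le M$), and the linear term by the AM--GM inequality $x_i L_i^0 g_i \le \tfrac{1}{2}\bigl(M g_i^2 + \tfrac{x_i^2 (L_i^0)^2}{M}\bigr)$; adding the two bounds gives $\tfrac{3}{2} M g_i^2 + \tfrac{1}{2}\tfrac{x_i^2(L_i^0)^2}{M} \le 2M g_i^2 + \tfrac{x_i^2 (L_i^0)^2}{M}$, as needed. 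The degenerate cases are immediate: if $g_i = 0$ the left-hand side is zero, and if $L_i^0 = 0$ one only needs the step $x_i L_i^1 g_i^2 \le M g_i^2$.

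I do not expect a genuine obstacle; the proof is short Young's-inequality bookkeeping. The only points requiring a little care are choosing the weights in AM--GM so that the final constant is exactly $4M$ rather than something weaker, and noting that $M > 0$ whenever the statement is non-trivial (i.e.\ at least one $L_i^1 > 0$), so that dividing by $M$ is legitimate.
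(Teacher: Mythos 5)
Your proof is correct, but it takes a genuinely different route from the paper. The paper proves the lemma by combining \Cref{lemma:layer_gen_smooth} with the Cauchy--Schwarz-type inequality of \Cref{lemma:ineq} (applied with $y_i = \norm{\nabla_i f(X)}_{(i)\star}$, $z_i = L_i^0 + L_i^1\norm{\nabla_i f(X)}_{(i)\star}$), which produces a bound on $\bigl(\sum_i x_i \norm{\nabla_i f(X)}_{(i)\star}\bigr)^2$ with the sum itself appearing in the denominator; it then resolves this self-referential bound by a two-case analysis (depending on whether $\sum_i x_i^2 L_i^0 / \max_i(x_i L_i^1)$ dominates the sum), arriving first at a $\max$ of the two terms and then relaxing the $\max$ to their sum, following the scheme of Khirirat et al. You instead prove the per-layer estimate $x_i g_i \le \tfrac{2M g_i^2}{L_i^0 + L_i^1 g_i} + \tfrac{x_i^2 L_i^0}{M}$ with $M \eqdef \max_{i\in[p]}(x_i L_i^1)$ (via $x_i L_i^1 \le M$ and one AM--GM step, after clearing the denominator and discarding the harmless cross term $x_i^2 L_i^0 L_i^1 g_i / M$), and then simply sum and invoke \Cref{lemma:layer_gen_smooth}; I checked the single-layer inequality and the constants, and they come out exactly matching the claimed bound. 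What your argument buys is the elimination of both the auxiliary \Cref{lemma:ineq} and the case split: it is more elementary, fully modular across layers, and would transfer verbatim to the local variant (\Cref{lemma:layer_gen_smooth2}) by replacing $f$ with $f_j$. What the paper's route buys is the slightly sharper intermediate conclusion in the form of a $\max$ rather than a sum (though this is discarded in the final statement anyway) and uniformity with the existing Euclidean analysis it cites. Your remarks on the degenerate cases ($g_i = 0$, $L_i^0 = 0$) and on $M>0$ are the right caveats; they are needed equally by the paper's own proof.
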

\begin{proof}
    We follow an approach similar to that in \citet[Lemma 2]{khirirat2024error}.
    Applying \Cref{lemma:layer_gen_smooth} and \Cref{lemma:ineq} with $y_i = \norm{\nabla_i f(X)}_{(i) \star}$, $z_i = L_i^0 + L_i^1 \norm{\nabla_i f(X)}_{(i) \star}$ and any positive~$x_i$, we have
    \begin{eqnarray*}
        2 \parens{f(X) - f^{\star}}
        &\geq& \sum_{i=1}^p \frac{\norm{\nabla_i f(X)}^2_{(i) \star}}{L_i^0 + L_i^1 \norm{\nabla_i f(X)}_{(i) \star}} \\
        &\geq& \frac{\parens{\sum_{i=1}^p x_i \norm{\nabla_i f(X)}_{(i) \star}}^2}{\sum_{i=1}^p x_i^2 L_i^0 + \sum_{i=1}^p x_i^2 L_i^1 \norm{\nabla_i f(X)}_{(i) \star}} \\
        &\geq& \frac{\parens{\sum_{i=1}^p x_i \norm{\nabla_i f(X)}_{(i) \star}}^2}{\sum_{i=1}^p x_i^2 L_i^0 + \max_{i\in[p]} (x_i L_i^1) \sum_{i=1}^p x_i \norm{\nabla_i f(X)}_{(i) \star}} \\
        &\geq& \begin{cases}
            \frac{\parens{\sum_{i=1}^p x_i \norm{\nabla_i f(X)}_{(i) \star}}^2}{2 \sum_{i=1}^p x_i^2 L_i^0} & \textnormal{if } \frac{\sum_{i=1}^p x_i^2 L_i^0}{\max_{i\in[p]} (x_i L_i^1)} \geq \sum_{i=1}^p x_i \norm{\nabla_i f(X)}_{(i) \star}, \\
            \frac{\sum_{i=1}^p x_i \norm{\nabla_i f(X)}_{(i) \star}}{2 \max_{i\in[p]} (x_i L_i^1)} & \textnormal{otherwise}.
        \end{cases}
    \end{eqnarray*}
    Therefore,
    \begin{eqnarray*}
        \sum_{i=1}^p x_i \norm{\nabla_i f(X)}_{(i) \star}
        &\leq& \max\brac{4 \max_{i\in[p]} (x_i L_i^1) \parens{f(X) - f^{\star}}, \frac{\sum_{i=1}^p x_i^2 L_i^0}{\max_{i\in[p]} (x_i L_i^1)}} \\
        &\leq& 4 \max_{i\in[p]} (x_i L_i^1) \parens{f(X) - f^{\star}} + \frac{\sum_{i=1}^p x_i^2 L_i^0}{\max_{i\in[p]} (x_i L_i^1)}.
    \end{eqnarray*}
\end{proof}

\begin{lemma}\label{lemma:layer_gen_smooth2}
    Let Assumptions \ref{as:lower_bound}, \ref{as:worker_lower_bound} and \ref{as:workers_layer_gen_smoothness} hold. Then, for any $x_i >0$, $i\in[p]$, we have
    \begin{eqnarray*}
        \sum_{i=1}^p x_i \norm{\nabla_i f_j(X)}_{(i) \star}
        &\leq& 4 \max_{i\in[p]} (x_i L_{i,j}^1) \parens{f_j(X) - f^{\star}} + 4 \max_{i\in[p]} (x_i L_{i,j}^1) \parens{f^{\star} - f_j^{\star}} \\
        &&+ \frac{\sum_{i=1}^p x_i^2 L_{i,j}^0}{\max_{i\in[p]} (x_i L_{i,j}^1)}
    \end{eqnarray*}
    for all $X \in \cS$.
\end{lemma}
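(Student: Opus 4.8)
The plan is to obtain Lemma~\ref{lemma:layer_gen_smooth2} as an immediate consequence of Lemma~\ref{lemma:layer_gen_smooth3} applied \emph{per worker}, followed by a one-line splitting of the suboptimality gap. The key observation is that Lemmas~\ref{lemma:layer_gen_smooth} and~\ref{lemma:layer_gen_smooth3} only use two facts about the function they are stated for: (i) it is bounded below, and (ii) it is layer-wise $(L^0,L^1)$--smooth. Nothing in those proofs relies on $f$ being an average of the $f_j$'s. Hence they apply verbatim with $f$ replaced by $f_j$, $f^\star$ replaced by $f_j^\star$, and the smoothness constants $(L_i^0, L_i^1)$ replaced by $(L_{i,j}^0, L_{i,j}^1)$: Assumption~\ref{as:worker_lower_bound} supplies the lower bound $f_j \ge f_j^\star$, and Assumption~\ref{as:workers_layer_gen_smoothness} supplies the layer-wise $(L_j^0, L_j^1)$--smoothness.

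Concretely, first I would invoke Lemma~\ref{lemma:layer_gen_smooth3} for the function $f_j$ to get, for any positive $x_i$,
\begin{eqnarray*}
    \sum_{i=1}^p x_i \norm{\nabla_i f_j(X)}_{(i) \star}
    \leq 4 \max_{i\in[p]} (x_i L_{i,j}^1) \parens{f_j(X) - f_j^{\star}} + \frac{\sum_{i=1}^p x_i^2 L_{i,j}^0}{\max_{i\in[p]} (x_i L_{i,j}^1)}.
\end{eqnarray*}
Then I would write $f_j(X) - f_j^{\star} = \parens{f_j(X) - f^{\star}} + \parens{f^{\star} - f_j^{\star}}$ and distribute the factor $4 \max_{i\in[p]} (x_i L_{i,j}^1)$ across the two summands, which produces exactly the claimed bound
\begin{eqnarray*}
    \sum_{i=1}^p x_i \norm{\nabla_i f_j(X)}_{(i) \star}
    &\leq& 4 \max_{i\in[p]} (x_i L_{i,j}^1) \parens{f_j(X) - f^{\star}} + 4 \max_{i\in[p]} (x_i L_{i,j}^1) \parens{f^{\star} - f_j^{\star}} \\
    &&+ \frac{\sum_{i=1}^p x_i^2 L_{i,j}^0}{\max_{i\in[p]} (x_i L_{i,j}^1)}.
\end{eqnarray*}
Note that Assumption~\ref{as:lower_bound} is only needed to guarantee $f^\star$ exists so that the split is meaningful (and, implicitly, that $f^\star - f_j^\star$ is a finite constant); it does not otherwise enter the estimate.

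There is essentially no substantive obstacle here — the only point requiring a moment of care is the justification that Lemma~\ref{lemma:layer_gen_smooth3} (equivalently Lemma~\ref{lemma:layer_gen_smooth}) is legitimately ``portable'' from $f$ to $f_j$, i.e.\ that its proof nowhere exploits the averaging structure or any coupling between the workers. Inspecting that proof, it uses only \Cref{lemma:layer_asym_gen_smooth}, the subdifferential identity~\eqref{eq:subdiff}, \Cref{lemma:ineq}, and the lower bound, all applied to a single function; so the transfer is valid. If a reader objects to reusing a lemma stated for $f$, one can instead spell out the argument directly: apply \Cref{lemma:layer_gen_smooth} to $f_j$ (again valid for the same reason) to get $\sum_i \frac{\norm{\nabla_i f_j(X)}^2_{(i)\star}}{2(L_{i,j}^0 + L_{i,j}^1 \norm{\nabla_i f_j(X)}_{(i)\star})} \le f_j(X) - f_j^\star$, then repeat the Cauchy--Schwarz / case-split steps of Lemma~\ref{lemma:layer_gen_smooth3}'s proof and finish with the same decomposition of $f_j(X) - f_j^\star$.
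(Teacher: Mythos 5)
Your proposal is correct and follows essentially the same route as the paper: the paper's proof also applies the Lemma~\ref{lemma:layer_gen_smooth}/Lemma~\ref{lemma:ineq} chain (i.e., the argument of Lemma~\ref{lemma:layer_gen_smooth3}) directly to $f_j$ with the worker constants $(L_{i,j}^0,L_{i,j}^1)$ and lower bound $f_j^\star$, and then finishes by writing $f_j(X)-f_j^\star = (f_j(X)-f^\star)+(f^\star-f_j^\star)$. Your remark on why the lemma is portable from $f$ to $f_j$ is exactly the (implicit) justification the paper relies on.
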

\begin{proof}
    The proof is similar to that of \Cref{lemma:layer_gen_smooth3}. Applying \Cref{lemma:layer_gen_smooth} and \Cref{lemma:ineq} with $y_i = \norm{\nabla_i f_j(X)}_{(i) \star}$, $z_i = L_{i,j}^0 + L_{i,j}^1 \norm{\nabla_i f_j(X)}_{(i) \star}$ and any positive~$x_i$, we have
    \begin{eqnarray*}
        2 \parens{f_j(X) - f_j^{\star}}
        &\geq& \sum_{i=1}^p \frac{\norm{\nabla_i f_j(X)}^2_{(i) \star}}{L_{i,j}^0 + L_{i,j}^1 \norm{\nabla_i f_j(X)}_{(i) \star}} \\
        &\geq& \frac{\parens{\sum_{i=1}^p x_i \norm{\nabla_i f_j(X)}_{(i) \star}}^2}{\sum_{i=1}^p x_i^2 L_{i,j}^0 + \sum_{i=1}^p x_i^2 L_{i,j}^1 \norm{\nabla_i f_j(X)}_{(i) \star}} \\
        &\geq& \frac{\parens{\sum_{i=1}^p x_i \norm{\nabla_i f_j(X)}_{(i) \star}}^2}{\sum_{i=1}^p x_i^2 L_{i,j}^0 + \max_{i\in[p]} (x_i L_{i,j}^1) \sum_{i=1}^p x_i \norm{\nabla_i f_j(X)}_{(i) \star}} \\
        &\geq& \begin{cases}
            \frac{\parens{\sum_{i=1}^p x_i \norm{\nabla_i f_j(X)}_{(i) \star}}^2}{2 \sum_{i=1}^p x_i^2 L_{i,j}^0} & \textnormal{if } \frac{\sum_{i=1}^p x_i^2 L_{i,j}^0}{\max_{i\in[p]} (x_i L_{i,j}^1)} \geq \sum_{i=1}^p x_i \norm{\nabla_i f_j(X)}_{(i) \star}, \\
            \frac{\sum_{i=1}^p x_i \norm{\nabla_i f_j(X)}_{(i) \star}}{2 \max_{i\in[p]} (x_i L_{i,j}^1)} & \textnormal{otherwise}.
        \end{cases}
    \end{eqnarray*}
    Therefore,
    \begin{eqnarray*}
        \sum_{i=1}^p x_i \norm{\nabla_i f_j(X)}_{(i) \star}
        &\leq& \max\brac{4 \max_{i\in[p]} (x_i L_{i,j}^1) \parens{f_j(X) - f_j^{\star}}, \frac{\sum_{i=1}^p x_i^2 L_{i,j}^0}{\max_{i\in[p]} (x_i L_{i,j}^1)}} \\
        &\leq& 4 \max_{i\in[p]} (x_i L_{i,j}^1) \parens{f_j(X) - f_j^{\star}} + \frac{\sum_{i=1}^p x_i^2 L_{i,j}^0}{\max_{i\in[p]} (x_i L_{i,j}^1)} \\
        &=& 4 \max_{i\in[p]} (x_i L_{i,j}^1) \parens{f_j(X) - f^{\star}} + 4 \max_{i\in[p]} (x_i L_{i,j}^1) \parens{f^{\star} - f_j^{\star}} \\
        &&+ \frac{\sum_{i=1}^p x_i^2 L_{i,j}^0}{\max_{i\in[p]} (x_i L_{i,j}^1)}.
    \end{eqnarray*}
\end{proof}

\subsection{Deterministic Setting}

\subsubsection{Layer-Wise Smooth Case}\label{sec:layer_smooth_det}

\begin{theorem}\label{thm:ef_gluon_det_sq}
        Let Assumptions \ref{as:lower_bound}, \ref{as:layer_smoothness} and \ref{as:workers_layer_smoothness} hold. Let $\{X^k\}_{k=0}^{K-1}$, $K \geq 1$, be the iterates of \Cref{alg:ef_gluon_det} run with $\cC_i^k \in \mathbb{B}(\alpha_P)$, $\cC_{i,j}^k \in \mathbb{B}_\star(\alpha_D)$, and
    \begin{align*}
        0 \leq \gamma_i^k \equiv \gamma_i \leq \frac{1}{2 L_i^0 + \frac{4}{\alpha_D} \sqrt{12 + \frac{66}{\alpha_P^2}} \tilde{L}_i^0}, \qquad i=1,\ldots,p.
    \end{align*}
    Then
    \begin{eqnarray*}
        \frac{1}{K} \sum_{k=0}^{K-1} \sum_{i=1}^p \frac{\gamma_i}{\frac{1}{p} \sum_{l=1}^p \gamma_l} \Exp{\norm{\nabla_i f(X^k)}^2_{(i) \star}}
        \leq \frac{1}{K} \frac{4 \Psi^0}{\frac{1}{p} \sum_{l=1}^p \gamma_l},
    \end{eqnarray*}
    where
    \begin{eqnarray*}
        \Psi^k &\eqdef& f(X^k) - f^{\star} + \sum_{i=1}^p \frac{6 \gamma_i}{\alpha_D} \frac{1}{n} \sum_{j=1}^n \norm{\nabla_i f_j(X^k) - G_{i,j}^k}^2_{(i) \star} \\
        &&+ \sum_{i=1}^p \frac{66 \gamma_i}{\alpha_D^2} \parens{\frac{2}{\alpha_P} - 1} (\tilde{L}_i^0)^2 \norm{X_i^k - W_i^k}_{(i)}^2.
    \end{eqnarray*}
\end{theorem}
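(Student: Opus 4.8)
The plan is to run a Lyapunov descent around the potential $\Psi^k$ displayed in the statement, whose three summands---the objective gap $f(X^k)-f^\star$, the per-layer averaged estimator error $\tfrac1n\sum_{j=1}^n\norm{\nabla_i f_j(X^k)-G_{i,j}^k}_{(i)\star}^2$, and the per-layer primal shift error $\norm{X_i^k-W_i^k}_{(i)}^2$---each satisfy a one-step inequality. Combining these with the assumed stepsize bound should give $\Exp{\Psi^{k+1}}\le\Exp{\Psi^k}-\sum_{i=1}^p\tfrac{\gamma_i}{4}\Exp{\norm{\nabla_i f(X^k)}_{(i)\star}^2}$; telescoping over $k=0,\dots,K-1$ and dividing by $K\cdot\tfrac1p\sum_{l=1}^p\gamma_l$ then finishes the proof.

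For the three one-step estimates: (i) since the LMO update $X_i^{k+1}=X_i^k-\gamma_i^k\parens{G_i^k}^\sharp$ is a deterministic function of $(X^k,G^k)$, \Cref{lemma:descent1} applies verbatim, and writing $\nabla_i f(X^k)-G_i^k=\tfrac1n\sum_j(\nabla_i f_j(X^k)-G_{i,j}^k)$ and using convexity of $\norm{\cdot}_{(i)\star}^2$ turns its first term into $\sum_i\tfrac{3\gamma_i}{2}\tfrac1n\sum_j\norm{\nabla_i f_j(X^k)-G_{i,j}^k}_{(i)\star}^2$, leaving $-\sum_i\tfrac{\gamma_i}{4}\norm{\nabla_i f(X^k)}_{(i)\star}^2$ and $-\sum_i(\tfrac1{4\gamma_i}-\tfrac{L_i^0}{2})(\gamma_i^k)^2\norm{G_i^k}_{(i)\star}^2$. (ii) For the estimator error, $G_{i,j}^{k+1}=G_{i,j}^k+\cC_{i,j}^k(\nabla_i f_j(W^{k+1})-G_{i,j}^k)$ with $\cC_{i,j}^k\in\mathbb{B}_\star(\alpha_D)$, so dual-norm contractivity gives $\ExpSub{\cC}{\norm{\nabla_i f_j(W^{k+1})-G_{i,j}^{k+1}}_{(i)\star}^2}\le(1-\alpha_D)\norm{\nabla_i f_j(W^{k+1})-G_{i,j}^k}_{(i)\star}^2$; the twist is that $G^{k+1}$ tracks the gradient at the \emph{shifted} point $W^{k+1}$ whereas both \Cref{lemma:descent1} and $\Psi$ live at $X$, so I would route from $W^{k+1}$ back to $X^k$ through $X^{k+1}$ by the triangle inequality, bound each gradient difference by \Cref{as:workers_layer_smoothness}, use $\norm{X_i^{k+1}-X_i^k}_{(i)}=\gamma_i^k\norm{\parens{G_i^k}^\sharp}_{(i)}=\gamma_i^k\norm{G_i^k}_{(i)\star}$, average with $\tfrac1n\sum_j(L_{i,j}^0)^2=(\tilde L_i^0)^2$, and pick Young's weights of order $1/\alpha_D$ so the surviving contraction factor is $1-\alpha_D/4$ (this is what pins the coefficient $6/\alpha_D$ in $\Psi$). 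The outcome is $\tfrac1n\sum_j\Exp{\norm{\nabla_i f_j(X^{k+1})-G_{i,j}^{k+1}}_{(i)\star}^2}\le(1-\tfrac{\alpha_D}{4})\tfrac1n\sum_j\Exp{\norm{\nabla_i f_j(X^k)-G_{i,j}^k}_{(i)\star}^2}+\tfrac{(\tilde L_i^0)^2}{\alpha_D}(c_1\Exp{\norm{X_i^{k+1}-W_i^{k+1}}_{(i)}^2}+c_2(\gamma_i^k)^2\Exp{\norm{G_i^k}_{(i)\star}^2})$ with explicit constants $c_1\le11$, $c_2\le8$. (iii) The primal error recursion is exactly \Cref{lemma:xw_sq_rec_sharp}.

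The assembly plugs (i)--(iii) into $\Exp{\Psi^{k+1}}$ with the coefficients $\tfrac{6\gamma_i}{\alpha_D}$ and $\tfrac{66\gamma_i}{\alpha_D^2}(\tfrac2{\alpha_P}-1)(\tilde L_i^0)^2$ from the statement, first folding the iteration-$(k{+}1)$ primal error created by (ii) into the primal term and then eliminating it via \Cref{lemma:xw_sq_rec_sharp}, so the right-hand side depends only on iteration-$k$ quantities. Three coefficient-matching inequalities remain: (a) $\tfrac{3\gamma_i}{2}+\tfrac{6\gamma_i}{\alpha_D}(1-\tfrac{\alpha_D}{4})\le\tfrac{6\gamma_i}{\alpha_D}$, which holds with equality because $\tfrac{3\gamma_i}{2}=\tfrac{6\gamma_i}{\alpha_D}\cdot\tfrac{\alpha_D}{4}$; (b) the extra $\norm{X_i^k-W_i^k}_{(i)}^2$ mass, $\tfrac{6\gamma_i}{\alpha_D}\cdot\tfrac{c_1}{\alpha_D}(1-\tfrac{\alpha_P}{2})(\tilde L_i^0)^2$, must not exceed the freed budget $\tfrac{66\gamma_i}{\alpha_D^2}(\tfrac2{\alpha_P}-1)\tfrac{\alpha_P}{2}(\tilde L_i^0)^2=\tfrac{66\gamma_i}{\alpha_D^2}(1-\tfrac{\alpha_P}{2})(\tilde L_i^0)^2$, i.e. $6c_1\le66$ --- this is exactly why $\tfrac2{\alpha_P}-1$ (and not $\tfrac1{\alpha_P}$) appears in $\Psi$; (c) the net coefficient of $(\gamma_i^k)^2\norm{G_i^k}_{(i)\star}^2$ is $\le0$, which after collecting the $1/\alpha_D$ and $1/\alpha_P$ contributions reduces, using $6c_2\le48$ and $6c_1=66$, to $\tfrac1{4\gamma_i}-\tfrac{L_i^0}{2}\ge\tfrac4{\alpha_D^2}(12+\tfrac{66}{\alpha_P^2})(\tilde L_i^0)^2\gamma_i$, which is implied by the hypothesis $\gamma_i\le(2L_i^0+\tfrac4{\alpha_D}\sqrt{12+\tfrac{66}{\alpha_P^2}}\,\tilde L_i^0)^{-1}$: with $R\eqdef\tfrac1{\alpha_D}\sqrt{12+66/\alpha_P^2}\,\tilde L_i^0$ one has $\tfrac1{4\gamma_i}-\tfrac{L_i^0}{2}\ge R$ and $\gamma_i\le\tfrac1{4R}$, hence $4R^2\gamma_i\le R$. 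Granting (a)--(c), the per-step descent holds; since $\tfrac2{\alpha_P}-1>0$ every summand of $\Psi$ is nonnegative, so $\Psi^K\ge f(X^K)-f^\star\ge0$ by \Cref{as:lower_bound}, and summing gives $\sum_{k=0}^{K-1}\sum_i\gamma_i\Exp{\norm{\nabla_i f(X^k)}_{(i)\star}^2}\le4\Psi^0$; dividing by $K\cdot\tfrac1p\sum_l\gamma_l$ is the stated bound.

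I expect the real work to sit in step (ii) together with (b)--(c): the estimator-error recursion must absorb the $W^{k+1}\!\leftrightarrow\!X^{k+1}\!\leftrightarrow\!X^k$ mismatch---an artifact of compressing the gradient at the shifted iterate $W^{k+1}$ rather than at $X^{k+1}$---without destroying the contraction, and the Young's weights must be tuned tightly enough (coefficient at most $11/\alpha_D$ on the freshly created primal error and $8/\alpha_D$ on $(\gamma_i^k)^2\norm{G_i^k}_{(i)\star}^2$) for the already-fixed constants $6,66$ in $\Psi$ and the stated $\gamma_i$ to make everything close. All remaining steps are mechanical, the only further care being to thread conditional versus total expectations---the step-$k$ compressors being independent of the history---consistently through the three recursions.
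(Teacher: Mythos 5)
Your plan is correct and follows essentially the same route as the paper's proof: the identical Lyapunov function with $A_i=\tfrac{6\gamma_i}{\alpha_D}$ and $B_i=\tfrac{66\gamma_i}{\alpha_D^2}\parens{\tfrac{2}{\alpha_P}-1}(\tilde L_i^0)^2$, Descent Lemma I plus Jensen, the estimator-error recursion with contraction $1-\tfrac{\alpha_D}{4}$ and coefficients $\tfrac{8}{\alpha_D}$ and $\tfrac{11}{\alpha_D}$ obtained exactly as in the paper's Step I, \Cref{lemma:xw_sq_rec_sharp} for the primal shift, and the same stepsize verification via $\zeta_i=\tfrac{4}{\alpha_D^2}\parens{12+\tfrac{66}{\alpha_P^2}}(\tilde L_i^0)^2$. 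Your coefficient checks (a)--(c) and the telescoping argument match the paper's computations.
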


\begin{remark}
    \Cref{thm:ef_gluon_det_sq_p1} follows as a corollary of the more general result above by setting $p=1$ and initializing with $X^0 = W^0$ and $G_j^0 = \nabla f_j(X^0)$ for all $j \in [n]$.
\end{remark}

\begin{remark}
    In the Euclidean case and when $p=1$, our convergence guarantees recover several existing results. When primal compression is disabled (i.e., $\alpha_P = 1$), they match the rate of \citet[Theorem 1]{richtarik2021ef21}, up to constant factors. With primal compression, the rate coincides with that of \algnamesmall{EF21-BC} in \citet[Theorem 21]{fatkhullin2021ef21}. Additionally, our results match those of \algnamesmall{Byz-EF21-BC} (a bidirectionally compressed method with error feedback for Byzantine-robust learning) from \citet[Theorem 3.1]{rammal2024communication}, in the absence of Byzantine workers.
\end{remark}

\begin{proof}[Proof of \Cref{thm:ef_gluon_det_sq}]
    Let $A_i, B_i > 0$ be some constants to be determined later, and define
    \begin{eqnarray*}
        \Psi^k \eqdef f(X^k) - f^{\star} + \sum_{i=1}^p A_i \frac{1}{n} \sum_{j=1}^n \norm{\nabla_i f_j(X^k) - G_{i,j}^k}^2_{(i) \star} + \sum_{i=1}^p B_i \norm{X_i^k - W_i^k}_{(i)}^2.
    \end{eqnarray*}

    \paragraph{Step I: Bounding $\Exp{\norm{\nabla_i f_j(X^{k+1}) - G_{i,j}^{k+1}}^2_{(i) \star}}$.}
    The algorithm's update rule gives
    \begin{eqnarray*}
        &&\hspace{-1cm}\ExpCond{\norm{\nabla_i f_j(X^{k+1}) - G_{i,j}^{k+1}}^2_{(i) \star}}{X^{k+1}, W^{k+1}, G_{i,j}^k} \\
        &=& \ExpCond{\norm{\nabla_i f_j(X^{k+1}) - G_{i,j}^k - \cC_{i,j}^k(\nabla_i f_j(W^{k+1}) - G_{i,j}^k)}^2_{(i) \star}}{X^{k+1}, W^{k+1}, G_{i,j}^k} \\
        &\overset{\eqref{eq:young}}{\leq}& \parens{1 + \frac{\alpha_D}{2}} \ExpCond{\norm{\nabla_i f_j(W^{k+1}) - G_{i,j}^k - \cC_{i,j}^k(\nabla_i f_j(W^{k+1}) - G_{i,j}^k)}^2_{(i) \star}}{X^{k+1}, W^{k+1}, G_{i,j}^k} \\
        &&+ \parens{1 + \frac{2}{\alpha_D}} \ExpCond{\norm{\nabla_i f_j(X^{k+1}) - \nabla_i f_j(W^{k+1})}^2_{(i) \star}}{X^{k+1}, W^{k+1}, G_{i,j}^k} \\
        &\leq& \parens{1 + \frac{\alpha_D}{2}} (1-\alpha_D) \ExpCond{\norm{\nabla_i f_j(W^{k+1}) - G_{i,j}^k}^2_{(i) \star}}{X^{k+1}, W^{k+1}, G_{i,j}^k} \\
        &&+ \parens{1 + \frac{2}{\alpha_D}} \ExpCond{\norm{\nabla_i f_j(X^{k+1}) - \nabla_i f_j(W^{k+1})}^2_{(i) \star}}{X^{k+1}, W^{k+1}, G_{i,j}^k} \\
        &\overset{\eqref{eq:ineq1}}{\leq}& \parens{1 - \frac{\alpha_D}{2}} \norm{\nabla_i f_j(W^{k+1}) - G_{i,j}^k}^2_{(i) \star}
        + \parens{1 + \frac{2}{\alpha_D}} \norm{\nabla_i f_j(X^{k+1}) - \nabla_i f_j(W^{k+1})}^2_{(i) \star} \\
        &\overset{\eqref{eq:young}}{\leq}& \parens{1 - \frac{\alpha_D}{2}} \parens{1 + \frac{\alpha_D}{4}} \norm{\nabla_i f_j(X^k) - G_{i,j}^k}^2_{(i) \star} \\
        &&+ \parens{1 - \frac{\alpha_D}{2}} \parens{1 + \frac{4}{\alpha_D}} \norm{\nabla_i f_j(W^{k+1}) - \nabla_i f_j(X^k)}^2_{(i) \star} \\
        &&+ \parens{1 + \frac{2}{\alpha_D}} \norm{\nabla_i f_j(X^{k+1}) - \nabla_i f_j(W^{k+1})}^2_{(i) \star} \\
        &\overset{\eqref{eq:ineq1},\eqref{eq:ineq2}}{\leq}& \parens{1 - \frac{\alpha_D}{4}} \norm{\nabla_i f_j(X^k) - G_{i,j}^k}^2_{(i) \star} + \frac{4}{\alpha_D} \norm{\nabla_i f_j(W^{k+1}) - \nabla_i f_j(X^k)}^2_{(i) \star} \\
        &&+ \parens{1 + \frac{2}{\alpha_D}} \norm{\nabla_i f_j(X^{k+1}) - \nabla_i f_j(W^{k+1})}^2_{(i) \star}.
    \end{eqnarray*}
    Therefore, using smoothness,
    \begin{eqnarray*}
        &&\hspace{-1cm}\ExpCond{\norm{\nabla_i f_j(X^{k+1}) - G_{i,j}^{k+1}}^2_{(i) \star}}{X^{k+1}, W^{k+1}, G_{i,j}^k} \\
        &\overset{\eqref{as:workers_layer_smoothness}}{\leq}& \parens{1 - \frac{\alpha_D}{4}} \norm{\nabla_i f_j(X^k) - G_{i,j}^k}^2_{(i) \star} + \frac{4}{\alpha_D} (L_{i,j}^0)^2 \norm{W_i^{k+1} - X_i^k}^2_{(i)} \\
        &&+ \parens{1 + \frac{2}{\alpha_D}} (L_{i,j}^0)^2 \norm{X_i^{k+1} - W_i^{k+1}}^2_{(i)} \\
        &\overset{\eqref{eq:young}}{\leq}& \parens{1 - \frac{\alpha_D}{4}} \norm{\nabla_i f_j(X^k) - G_{i,j}^k}^2_{(i) \star} + \frac{8}{\alpha_D} (L_{i,j}^0)^2 \norm{X_i^{k+1} - X_i^k}^2_{(i)} \\
        &&+ \frac{8}{\alpha_D} (L_{i,j}^0)^2 \norm{X_i^{k+1} - W_i^{k+1}}^2_{(i)} + \parens{1 + \frac{2}{\alpha_D}} (L_{i,j}^0)^2 \norm{X_i^{k+1} - W_i^{k+1}}^2_{(i)} \\
        &\leq& \parens{1 - \frac{\alpha_D}{4}} \norm{\nabla_i f_j(X^k) - G_{i,j}^k}^2_{(i) \star} + \frac{8}{\alpha_D} (L_{i,j}^0)^2 \gamma_i^2 \norm{G_i^k}_{(i) \star}^2 \\
        &&+ \frac{11}{\alpha_D} (L_{i,j}^0)^2 \norm{X_i^{k+1} - W_i^{k+1}}^2_{(i)}.
    \end{eqnarray*}
    Taking expectation, we obtain the recursion
    \begin{eqnarray}\label{eq:aoisgrh}
        &&\hspace{-1cm}\Exp{\norm{\nabla_i f_j(X^{k+1}) - G_{i,j}^{k+1}}^2_{(i) \star}} \nonumber \\
        &\leq& \parens{1 - \frac{\alpha_D}{4}} \Exp{\norm{\nabla_i f_j(X^k) - G_{i,j}^k}^2_{(i) \star}} + \frac{8}{\alpha_D} (L_{i,j}^0)^2 \gamma_i^2 \Exp{\norm{G_i^k}_{(i) \star}^2} \nonumber \\
        &&+ \frac{11}{\alpha_D} (L_{i,j}^0)^2 \Exp{\norm{X_i^{k+1} - W_i^{k+1}}^2_{(i)}}.
    \end{eqnarray}

    \paragraph{Step II: Bounding $\Exp{\norm{X_i^{k+1} - W_i^{k+1}}_{(i)}^2}$.}
    By \Cref{lemma:xw_sq_rec_sharp}
    \begin{eqnarray}\label{eq:posjgbh}
        \Exp{\norm{X_i^{k+1} - W_i^{k+1}}_{(i)}^2}
        \leq \parens{1 - \frac{\alpha_P}{2}} \Exp{\norm{X_i^k - W_i^k}_{(i)}^2} + \frac{2}{\alpha_P} \gamma_i^2 \Exp{\norm{G_i^k}_{(i) \star}^2}.
    \end{eqnarray}

    \paragraph{Step III: Bounding $\Psi^{k+1}$.}
    By \Cref{lemma:descent1} and Jensen's inequality
    \begin{eqnarray*}
        &&\hspace{-0.7cm}\Psi^{k+1} \\
        &=& f(X^{k+1}) - f^{\star} + \sum_{i=1}^p A_i \frac{1}{n} \sum_{j=1}^n \norm{\nabla_i f_j(X^{k+1}) - G_{i,j}^{k+1}}^2_{(i) \star} + \sum_{i=1}^p B_i \norm{X_i^{k+1} - W_i^{k+1}}_{(i)}^2 \\
        &\leq& f(X^k) - f^{\star} + \sum_{i=1}^p \frac{3 \gamma_i}{2} \norm{\nabla_i f(X^k) - G_i^k}^2_{(i) \star} - \sum_{i=1}^p \frac{\gamma_i}{4} \norm{\nabla_i f(X^k)}^2_{(i) \star} \\
        &&- \sum_{i=1}^p \parens{\frac{1}{4 \gamma_i} - \frac{L_i^0}{2}} \gamma_i^2 \norm{G_i^k}_{(i) \star}^2 \\
        &&+ \sum_{i=1}^p A_i \frac{1}{n} \sum_{j=1}^n \norm{\nabla_i f_j(X^{k+1}) - G_{i,j}^{k+1}}^2_{(i) \star} + \sum_{i=1}^p B_i \norm{X_i^{k+1} - W_i^{k+1}}_{(i)}^2 \\
        &\leq& f(X^k) - f^{\star} + \sum_{i=1}^p \frac{3 \gamma_i}{2} \frac{1}{n} \sum_{j=1}^n \norm{\nabla_i f_j(X^k) - G_{i,j}^k}^2_{(i) \star} - \sum_{i=1}^p \frac{\gamma_i}{4} \norm{\nabla_i f(X^k)}^2_{(i) \star} \\
        &&- \sum_{i=1}^p \parens{\frac{1}{4 \gamma_i} - \frac{L_i^0}{2}} \gamma_i^2 \norm{G_i^k}_{(i) \star}^2 \\
        &&+ \sum_{i=1}^p A_i \frac{1}{n} \sum_{j=1}^n \norm{\nabla_i f_j(X^{k+1}) - G_{i,j}^{k+1}}^2_{(i) \star} + \sum_{i=1}^p B_i \norm{X_i^{k+1} - W_i^{k+1}}_{(i)}^2.
    \end{eqnarray*}
    Taking expectation and using \eqref{eq:aoisgrh} gives
    \begin{align*}
        &\Exp{\Psi^{k+1}} \\
        &\leq \Exp{f(X^k) - f^{\star}} + \sum_{i=1}^p \frac{3 \gamma_i}{2} \frac{1}{n} \sum_{j=1}^n \Exp{\norm{\nabla_i f_j(X^k) - G_{i,j}^k}^2_{(i) \star}} - \sum_{i=1}^p \frac{\gamma_i}{4} \Exp{\norm{\nabla_i f(X^k)}^2_{(i) \star}} \\
        &\quad- \sum_{i=1}^p \parens{\frac{1}{4 \gamma_i} - \frac{L_i^0}{2}} \gamma_i^2 \Exp{\norm{G_i^k}_{(i) \star}^2} + \sum_{i=1}^p A_i \frac{1}{n} \sum_{j=1}^n \parens{1 - \frac{\alpha_D}{4}} \Exp{\norm{\nabla_i f_j(X^k) - G_{i,j}^k}^2_{(i) \star}} \\
        &\quad+ \sum_{i=1}^p A_i \frac{1}{n} \sum_{j=1}^n \frac{8}{\alpha_D} (L_{i,j}^0)^2 \gamma_i^2 \Exp{\norm{G_i^k}_{(i) \star}^2} + \sum_{i=1}^p A_i \frac{1}{n} \sum_{j=1}^n \frac{11}{\alpha_D} (L_{i,j}^0)^2 \Exp{\norm{X_i^{k+1} - W_i^{k+1}}^2_{(i)}} \\
        &\quad+ \sum_{i=1}^p B_i \Exp{\norm{X_i^{k+1} - W_i^{k+1}}_{(i)}^2} \\
        &= \Exp{f(X^k) - f^{\star}} + \sum_{i=1}^p \parens{\frac{3 \gamma_i}{2} + A_i \parens{1 - \frac{\alpha_D}{4}}} \frac{1}{n} \sum_{j=1}^n \Exp{\norm{\nabla_i f_j(X^k) - G_{i,j}^k}^2_{(i) \star}} \\
        &\quad- \sum_{i=1}^p \frac{\gamma_i}{4} \Exp{\norm{\nabla_i f(X^k)}^2_{(i) \star}} - \sum_{i=1}^p \parens{\frac{1}{4 \gamma_i} - \frac{L_i^0}{2} - A_i \frac{8}{\alpha_D} (\tilde{L}_i^0)^2} \gamma_i^2 \Exp{\norm{G_i^k}_{(i) \star}^2} \\
        &\quad+ \sum_{i=1}^p \parens{A_i \frac{11}{\alpha_D} (\tilde{L}_i^0)^2 + B_i} \Exp{\norm{X_i^{k+1} - W_i^{k+1}}_{(i)}^2}.
    \end{align*}
    Next, applying \eqref{eq:posjgbh}, we get
    \begin{align*}
        \Exp{\Psi^{k+1}}
        &\leq \Exp{f(X^k) - f^{\star}} + \sum_{i=1}^p \parens{\frac{3 \gamma_i}{2} + A_i \parens{1 - \frac{\alpha_D}{4}}} \frac{1}{n} \sum_{j=1}^n \Exp{\norm{\nabla_i f_j(X^k) - G_{i,j}^k}^2_{(i) \star}} \\
        &\quad- \sum_{i=1}^p \frac{\gamma_i}{4} \Exp{\norm{\nabla_i f(X^k)}^2_{(i) \star}} - \sum_{i=1}^p \parens{\frac{1}{4 \gamma_i} - \frac{L_i^0}{2} - A_i \frac{8}{\alpha_D} (\tilde{L}_i^0)^2} \gamma_i^2 \Exp{\norm{G_i^k}_{(i) \star}^2} \\
        &\quad+ \sum_{i=1}^p \parens{A_i \frac{11}{\alpha_D} (\tilde{L}_i^0)^2 + B_i} \frac{2}{\alpha_P} \gamma_i^2 \Exp{\norm{G_i^k}_{(i) \star}^2} \\
        &\quad+ \sum_{i=1}^p \parens{A_i \frac{11}{\alpha_D} (\tilde{L}_i^0)^2 + B_i} \parens{1 - \frac{\alpha_P}{2}} \Exp{\norm{X_i^k - W_i^k}_{(i)}^2}.
    \end{align*}
    Taking $A_i = \frac{6 \gamma_i}{\alpha_D}$ and $B_i = A_i \frac{11}{\alpha_D} \parens{\frac{2}{\alpha_P} - 1} (\tilde{L}_i^0)^2 = \frac{66 \gamma_i}{\alpha_D^2} \parens{\frac{2}{\alpha_P} - 1} (\tilde{L}_i^0)^2$ yields
    \begin{eqnarray*}
        \frac{3 \gamma_i}{2} + A_i \parens{1 - \frac{\alpha_D}{4}} = A_i, \\
        \parens{A_i \frac{11}{\alpha_D} (\tilde{L}_i^0)^2 + B_i} \parens{1 - \frac{\alpha_P}{2}} = B_i,
    \end{eqnarray*}
    and consequently,
    \begin{eqnarray*}
        \Exp{\Psi^{k+1}}
        &\leq& \Exp{f(X^k) - f^{\star}} + \sum_{i=1}^p A_i \frac{1}{n} \sum_{j=1}^n \Exp{\norm{\nabla_i f_j(X^k) - G_{i,j}^k}^2_{(i) \star}} \\
        &&- \sum_{i=1}^p \frac{\gamma_i}{4} \Exp{\norm{\nabla_i f(X^k)}^2_{(i) \star}} - \sum_{i=1}^p \parens{\frac{1}{4 \gamma_i} - \frac{L_i^0}{2} - \frac{8 A_i}{\alpha_D} (\tilde{L}_i^0)^2} \gamma_i^2 \Exp{\norm{G_i^k}_{(i) \star}^2} \\
        &&+ \sum_{i=1}^p \frac{B_i}{1 - \frac{\alpha_P}{2}} \frac{2}{\alpha_P} \gamma_i^2 \Exp{\norm{G_i^k}_{(i) \star}^2} + \sum_{i=1}^p B_i \Exp{\norm{X_i^k - W_i^k}_{(i)}^2} \\
        &=& \Exp{\Psi^k} - \sum_{i=1}^p \frac{\gamma_i}{4} \Exp{\norm{\nabla_i f(X^k)}^2_{(i) \star}} \\
        &&- \sum_{i=1}^p \parens{\frac{1}{4 \gamma_i} - \frac{L_i^0}{2} - \frac{8 A_i}{\alpha_D} (\tilde{L}_i^0)^2 - \frac{4 B_i}{\alpha_P (2-\alpha_P)}} \gamma_i^2 \Exp{\norm{G_i^k}_{(i) \star}^2}.
    \end{eqnarray*}
    Now, note that
    \begin{eqnarray*}
        \frac{1}{4 \gamma_i} - \frac{L_i^0}{2} - \frac{8 A_i}{\alpha_D} (\tilde{L}_i^0)^2 - \frac{4 B_i}{\alpha_P (2-\alpha_P)}
        = \frac{1}{4 \gamma_i} - \frac{L_i^0}{2} - \underbrace{\parens{\frac{48}{\alpha_D^2} (\tilde{L}_i^0)^2 + \frac{264}{\alpha_P^2 \alpha_D^2} (\tilde{L}_i^0)^2}}_{\eqdef \zeta_i} \gamma_i
        \geq 0
    \end{eqnarray*}
    for $\gamma_i \leq \frac{1}{2 L_i^0 + 2 \sqrt{\zeta_i}}$. For such a choice of the stepsizes, we have
    \begin{eqnarray*}
        \Exp{\Psi^{k+1}} \leq \Exp{\Psi^k} - \sum_{i=1}^p \frac{\gamma_i}{4} \Exp{\norm{\nabla_i f(X^k)}^2_{(i) \star}},
    \end{eqnarray*}
    and hence
    \begin{eqnarray*}
        \sum_{k=0}^{K-1} \sum_{i=1}^p \gamma_i \Exp{\norm{\nabla_i f(X^k)}^2_{(i) \star}} \leq 4 \sum_{k=0}^{K-1} \parens{\Exp{\Psi^k} - \Exp{\Psi^{k+1}}}
        \leq 4 \Psi^0.
    \end{eqnarray*}
    Lastly, dividing by $\frac{K}{p} \sum_{l=1}^p \gamma_l$, we obtain
    \begin{eqnarray*}
        \frac{1}{K} \sum_{k=0}^{K-1} \sum_{i=1}^p \frac{\gamma_i}{\frac{1}{p} \sum_{l=1}^p \gamma_l} \Exp{\norm{\nabla_i f(X^k)}^2_{(i) \star}}
        \leq \frac{1}{K} \frac{4 \Psi^0}{\frac{1}{p} \sum_{l=1}^p \gamma_l}.
    \end{eqnarray*}
\end{proof}

\subsubsection{Layer-Wise $(L^0,L^1)$--Smooth Case}\label{sec:gen_smooth_det}

We now consider a deterministic variant of {\newalgsmall} (\Cref{alg:ef_gluon_det}) without primal compression, which iterates
\begin{align*}
    X_i^{k+1} &= \lmo{\cB(X_i^k,t_i^k)}{G_i^k}, \\
    G_{i,j}^{k+1} &= G_{i,j}^k + \cC_{i,j}^k(\nabla_i f_j(X^{k+1}) - G_{i,j}^k), \\
    G_i^{k+1} &= \frac{1}{n}\sum_{j = 1}^n G_{i,j}^{k+1} = G_i^k + \frac{1}{n}\sum_{j = 1}^n \cC_{i,j}^k(\nabla_i f_j(X^{k+1}) - G_{i,j}^k).
\end{align*}
This corresponds to using identity compressors on the server side.

\begin{theorem}\label{thm:ef_gluon_no_p_l0l1}
    Let Assumptions \ref{as:lower_bound}, \ref{as:worker_lower_bound}, \ref{as:layer_gen_smoothness} and \ref{as:workers_layer_gen_smoothness} hold and let $\{X^k\}_{k=0}^{K-1}$, $K \geq 1$, be the iterates of \Cref{alg:ef_gluon_det} run with $\cC_i^k \equiv \cI$ (the identity compressor), $\cC_{i,j}^k \in \mathbb{B}_\star(\alpha_D)$, and
    \begin{align*}
        t_i^k \equiv t_i = \frac{\eta_i}{\sqrt{K+1}}, \qquad i=1,\ldots,p,
    \end{align*}
    for some $\eta_i > 0$.
    Then,
    \begin{eqnarray*}
        &&\hspace{-6mm}\min_{k=0,\ldots,K} \sum_{i=1}^p \frac{\eta_i}{\frac{1}{p} \sum_{l=1}^p \eta_i} \Exp{\norm{\nabla_i f(X^k)}_{(i) \star}} \\
        &\leq& \frac{\exp\parens{4 \max_{i\in[p], j\in[n]} (\eta_i^2 C_i L_{i,j}^1)}}{\sqrt{K+1} \parens{\frac{1}{p} \sum_{l=1}^p \eta_i}} \Psi^0 \\
        &&+ \frac{1}{\sqrt{K+1} \parens{\frac{1}{p} \sum_{l=1}^p \eta_i}} \parens{\frac{1}{n} \sum_{j=1}^n 4 \max_{i\in[p]} (\eta_i^2 C_i L_{i,j}^1) \parens{f^{\star} - f_j^{\star}} 
        + \frac{1}{n} \sum_{j=1}^n \sum_{i=1}^p \frac{\eta_i^2 C_i L_{i,j}^0}{L_{i,j}^1}
        + \sum_{i=1}^p \eta_i^2 D_i}.
    \end{eqnarray*}
    where $C_i \eqdef \frac{L_i^1}{2} + \frac{2 \sqrt{1-\alpha_D} L^1_{i,\max}}{1 - \sqrt{1-\alpha_D}}$, $D_i \eqdef \frac{L_i^0}{2} + \frac{2 \sqrt{1-\alpha_D} \bar{L}^0_i}{1 - \sqrt{1-\alpha_D}}$ and
    \begin{eqnarray*}
        \Psi^k \eqdef f(X^k) - f^{\star} + \sum_{i=1}^p \frac{2 t_i}{1 - \sqrt{1-\alpha_D}} \frac{1}{n} \sum_{j=1}^n \norm{\nabla_i f_j(X^k) - G_{i,j}^k}_{(i) \star}.
    \end{eqnarray*}
\end{theorem}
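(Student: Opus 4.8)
The plan is to run a Lyapunov argument for the quantity $\Psi^k$ in the statement, derive a one-step inequality of the form ``$\Exp{\Psi^{k+1}}\le(1+\varepsilon^k)\Exp{\Psi^k}-(\text{descent})+R^k$'', and then unroll it. First I would invoke \Cref{lemma:descent2} (Descent Lemma II is precisely the LMO-based descent estimate for layer-wise $(L^0,L^1)$--smoothness), which bounds $f(X^{k+1})$ in terms of $f(X^k)$, the aggregate compression error $\norm{\nabla_i f(X^k)-G_i^k}_{(i)\star}$, the ``good'' term $-t_i^k\norm{\nabla_i f(X^k)}_{(i)\star}$, and quadratic-in-$t_i^k$ residuals. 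Since $\cC_i^k\equiv\cI$ forces $G_i^k=\tfrac1n\sum_j G_{i,j}^k$ with each $G_{i,j}^k$ tracking $\nabla_i f_j$, the triangle inequality gives $\norm{\nabla_i f(X^k)-G_i^k}_{(i)\star}\le\tfrac1n\sum_j\norm{\nabla_i f_j(X^k)-G_{i,j}^k}_{(i)\star}$, and similarly $\norm{\nabla_i f(X^k)}_{(i)\star}\le\tfrac1n\sum_j\norm{\nabla_i f_j(X^k)}_{(i)\star}$. In parallel, \Cref{lemma:ng_rec_l0l1} supplies, after taking expectations, a recursion for the per-worker error $e_{i,j}^k\eqdef\Exp{\norm{\nabla_i f_j(X^k)-G_{i,j}^k}_{(i)\star}}$ with contraction factor $\sqrt{1-\alpha_D}$ and an additive term proportional to $(L_{i,j}^0+L_{i,j}^1\Exp{\norm{\nabla_i f_j(X^k)}_{(i)\star}})\,t_i^k$.

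The key algebraic step is to weight the compression errors in $\Psi^k$ by $A_i=\tfrac{2t_i}{1-\sqrt{1-\alpha_D}}$: then the incoming error mass $2t_i$ from the descent lemma plus $A_i\sqrt{1-\alpha_D}$ from the recursion collapses exactly to $A_i$, so the $\tfrac1n\sum_j e_{i,j}^k$ terms telescope and the surviving inequality reads
\[
\Exp{\Psi^{k+1}}\;\le\;\Exp{\Psi^k}-\sum_{i=1}^p t_i\,\Exp{\norm{\nabla_i f(X^k)}_{(i)\star}}+\sum_{i=1}^p (t_i)^2 D_i+\frac1n\sum_{j=1}^n\sum_{i=1}^p (t_i)^2 C_i\,\Exp{\norm{\nabla_i f_j(X^k)}_{(i)\star}},
\]
where $C_i$ and $D_i$ appear exactly as defined once one collects the $L^1$-- and $L^0$--contributions of both lemmas and uses $L_{i,j}^1\le L^1_{i,\max}$ and $\bar L^0_i=\tfrac1n\sum_j L^0_{i,j}$ (the $-t_i\norm{\nabla_i f(X^k)}_{(i)\star}$ term comes straight from the descent lemma, never needing a stepsize restriction here, unlike the smooth case).

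The remaining obstacle is the last sum: these are \emph{first} powers of gradient norms carrying a factor $(t_i)^2$, so they cannot be absorbed into $-t_i\norm{\nabla_i f(X^k)}_{(i)\star}$ (wrong order in $t_i$) nor folded back into the descent inequality. I would therefore apply \Cref{lemma:layer_gen_smooth2} with weights $x_i=(t_i)^2 C_i$ to rewrite $\sum_i (t_i)^2 C_i\norm{\nabla_i f_j(X^k)}_{(i)\star}$ as $4\max_i\!\big((t_i)^2 C_i L_{i,j}^1\big)\big(f_j(X^k)-f^\star\big)+4\max_i\!\big((t_i)^2 C_i L_{i,j}^1\big)\big(f^\star-f_j^\star\big)+\sum_i\tfrac{(t_i)^2 C_i L_{i,j}^0}{L_{i,j}^1}$, the last term after bounding the denominator $\max_i(\cdot)$ from below termwise. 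Averaging over $j$, the term $\tfrac1n\sum_j(f_j(X^k)-f^\star)$ equals $f(X^k)-f^\star\le\Psi^k$ (the signs being handled by splitting $f_j-f^\star=(f_j-f_j^\star)-(f^\star-f_j^\star)$ and using \Cref{as:worker_lower_bound}), which upgrades the inequality to $\Exp{\Psi^{k+1}}\le(1+\varepsilon^k)\Exp{\Psi^k}-\sum_i t_i\Exp{\norm{\nabla_i f(X^k)}_{(i)\star}}+R^k$ with $\varepsilon^k=4\max_{i\in[p],j\in[n]}\big((t_i)^2 C_i L_{i,j}^1\big)$ and $R^k$ collecting the heterogeneity and $L^0$ constants.

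Finally I would unroll over $k=0,\dots,K$ with the constant choice $t_i=\eta_i/\sqrt{K+1}$. The accumulated multiplier $\prod_k(1+\varepsilon^k)$ is bounded by $\exp\!\big(\sum_k\varepsilon^k\big)=\exp\!\big(4\max_{i,j}(\eta_i^2 C_i L_{i,j}^1)\big)$ because $(K+1)\varepsilon^k=4\max_{i,j}(\eta_i^2 C_i L_{i,j}^1)$ and $1+x\le e^x$; meanwhile $\sum_k R^k$ collapses to the stated constants since each $(t_i)^2=\eta_i^2/(K+1)$ cancels the factor $K+1$. Rearranging for $\sum_k\sum_i t_i\Exp{\norm{\nabla_i f(X^k)}_{(i)\star}}$, using $\min_k(\cdot)\le\tfrac1{K+1}\sum_k(\cdot)$, dividing by $\tfrac1p\sum_l\eta_l$ and plugging in the definitions of $C_i,D_i$ then gives the claim. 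The genuine difficulty I anticipate is the bookkeeping in the last two steps: keeping the exponential constant sharp (hence $\max_{i,j}$ rather than a sum over layers), while ensuring the per-worker heterogeneity terms $f^\star-f_j^\star$ and the $L^0/L^1$ ratios retain their correct layer- and worker-dependent weights, and double-checking that the $\eta_i$--scaling of the radii is used only to collapse the accumulated constants, not for any descent or contraction estimate.
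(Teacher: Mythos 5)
Your plan is correct and is essentially the paper's own proof: Descent Lemma II plus the per-worker recursion of \Cref{lemma:ng_rec_l0l1}, the Lyapunov weights $A_i = \tfrac{2t_i}{1-\sqrt{1-\alpha_D}}$ chosen so that $2t_i + A_i\sqrt{1-\alpha_D}=A_i$, the conversion of the $(t_i)^2$-weighted first-power gradient terms via \Cref{lemma:layer_gen_smooth2} (yielding exactly $C_i$, $D_i$ and the heterogeneity terms), and finally an unrolling of $\Exp{\Psi^{k+1}}\le(1+a_1)\Exp{\Psi^k}-\sum_i t_i\Exp{\norm{\nabla_i f(X^k)}_{(i)\star}}+a_2$. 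The only step to execute carefully is the last one: to keep the exponential factor on $\Psi^0$ only (and not on the $a_2$ constants), the unrolling should use the geometrically weighted average of \Cref{lemma:rec2} rather than a plain ``$\min\le$ uniform average'' after multiplying out $(1+a_1)^{K+1}$, which is precisely how the paper closes the argument.
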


\begin{remark}
    \Cref{thm:ef_gluon_no_p_l0l1_p1} follows as a corollary of the result in \Cref{thm:ef_gluon_no_p_l0l1} by setting $p=1$ and initializing with $G_j^0 = \nabla f_j(X^0)$ for all $j \in [n]$.
\end{remark}

\begin{proof}
    Let $A_i > 0$ be some constants to be determined later, and define
    \begin{eqnarray*}
        \Psi^k \eqdef f(X^k) - f^{\star} + \sum_{i=1}^p A_i \frac{1}{n} \sum_{j=1}^n \norm{\nabla_i f_j(X^k) - G_{i,j}^k}_{(i) \star}.
    \end{eqnarray*}
    By \Cref{lemma:descent2} and Jensen's inequality
    \begin{eqnarray*}
        \Psi^{k+1} &=& f(X^{k+1}) - f^{\star} + \sum_{i=1}^p A_i \frac{1}{n} \sum_{j=1}^n \norm{\nabla_i f_j(X^{k+1}) - G_{i,j}^{k+1}}_{(i) \star} \\
        &\leq& f(X^k) - f^{\star} + \sum_{i=1}^p 2 t_i \norm{\nabla_i f(X^k) - G_i^k}_{(i) \star} - \sum_{i=1}^p t_i \norm{\nabla_i f(X^k)}_{(i) \star} \\
        &&+ \sum_{i=1}^p \frac{L^0_i + L^1_i \norm{\nabla_i f(X^k)}_{(i) \star}}{2} t_i^2
        + \sum_{i=1}^p A_i \frac{1}{n} \sum_{j=1}^n \norm{\nabla_i f_j(X^{k+1}) - G_{i,j}^{k+1}}_{(i) \star} \\
        &\leq& f(X^k) - f^{\star} + \sum_{i=1}^p 2 t_i \frac{1}{n} \sum_{j=1}^n \norm{\nabla_i f_j(X^k) - G_{i,j}^k}_{(i) \star} - \sum_{i=1}^p t_i \norm{\nabla_i f(X^k)}_{(i) \star} \\
        &&+ \sum_{i=1}^p \frac{L^0_i + L^1_i \norm{\nabla_i f(X^k)}_{(i) \star}}{2} t_i^2
        + \sum_{i=1}^p A_i \frac{1}{n} \sum_{j=1}^n \norm{\nabla_i f_j(X^{k+1}) - G_{i,j}^{k+1}}_{(i) \star}.
    \end{eqnarray*}
    Taking expectation conditioned on $[X^{k+1}, X^k, G^k]$ and using \Cref{lemma:ng_rec_l0l1} gives
    \begin{eqnarray*}
        &&\hspace{-6mm}\ExpCond{\Psi^{k+1}}{X^{k+1}, X^k, G^k} \\
        &\leq& f(X^k) - f^{\star} + \sum_{i=1}^p 2 t_i \frac{1}{n} \sum_{j=1}^n \norm{\nabla_i f_j(X^k) - G_{i,j}^k}_{(i) \star} - \sum_{i=1}^p t_i \norm{\nabla_i f(X^k)}_{(i) \star} \\
        &&+ \sum_{i=1}^p \frac{L^0_i + L^1_i \norm{\nabla_i f(X^k)}_{(i) \star}}{2} t_i^2 \\
        &&+ \sum_{i=1}^p A_i \frac{1}{n} \sum_{j=1}^n \ExpCond{\norm{\nabla_i f_j(X^{k+1}) - G_{i,j}^{k+1}}_{(i) \star}}{X^{k+1}, X^k, G^k} \\
        &\overset{\eqref{lemma:ng_rec_l0l1}}{\leq}& f(X^k) - f^{\star} + \sum_{i=1}^p 2 t_i \frac{1}{n} \sum_{j=1}^n \norm{\nabla_i f_j(X^k) - G_{i,j}^k}_{(i) \star} - \sum_{i=1}^p t_i \norm{\nabla_i f(X^k)}_{(i) \star} \\
        &&+ \sum_{i=1}^p \frac{L^0_i + L^1_i \norm{\nabla_i f(X^k)}_{(i) \star}}{2} t_i^2 \\
        &&+ \sum_{i=1}^p A_i \sqrt{1-\alpha_D} \frac{1}{n} \sum_{j=1}^n \parens{\norm{\nabla_i f_j(X^k) - G_{i,j}^k}_{(i) \star} + \parens{L^0_{i,j} + L^1_{i,j} \norm{\nabla_i f_j(X^k)}_{(i) \star}} t_i} \\
        &=& f(X^k) - f^{\star} + \sum_{i=1}^p \parens{2 t_i + A_i \sqrt{1-\alpha_D}} \frac{1}{n} \sum_{j=1}^n \norm{\nabla_i f_j(X^k) - G_{i,j}^k}_{(i) \star} - \sum_{i=1}^p t_i \norm{\nabla_i f(X^k)}_{(i) \star} \\
        &&+ \sum_{i=1}^p \frac{L^1_i t_i^2}{2} \norm{\nabla_i f(X^k)}_{(i) \star}
        + \sqrt{1-\alpha_D} \sum_{i=1}^p A_i t_i \parens{\frac{1}{n} \sum_{j=1}^n L^1_{i,j} \norm{\nabla_i f_j(X^k)}_{(i) \star}} \\
        &&+ \sum_{i=1}^p \frac{t_i^2 L^0_i}{2} + \sqrt{1-\alpha_D} \sum_{i=1}^p A_i t_i \bar{L}^0_i.
    \end{eqnarray*}
    Now, letting $A_i = \frac{2 t_i}{1 - \sqrt{1-\alpha_D}}$, we have
    \begin{eqnarray*}
        2 t_i + A_i \sqrt{1-\alpha_D}
        = 2 t_i + \frac{2 t_i}{1 - \sqrt{1-\alpha_D}} \sqrt{1-\alpha_D}
        = A_i,
    \end{eqnarray*}
    and consequently,
    \begin{eqnarray*}
        &&\hspace{-6mm}\ExpCond{\Psi^{k+1}}{X^{k+1}, X^k, G^k} \\
        &\leq& f(X^k) - f^{\star} + \sum_{i=1}^p A_i \frac{1}{n} \sum_{j=1}^n \norm{\nabla_i f_j(X^k) - G_{i,j}^k}_{(i) \star} - \sum_{i=1}^p t_i \norm{\nabla_i f(X^k)}_{(i) \star} \\
        &&+ \sum_{i=1}^p \frac{L^1_i t_i^2}{2} \norm{\nabla_i f(X^k)}_{(i) \star}
        + \sqrt{1-\alpha_D} \sum_{i=1}^p A_i t_i \parens{\frac{1}{n} \sum_{j=1}^n L^1_{i,j} \norm{\nabla_i f_j(X^k)}_{(i) \star}} \\
        &&+ \sum_{i=1}^p \frac{t_i^2 L^0_i}{2} + \sqrt{1-\alpha_D} \sum_{i=1}^p A_i t_i \bar{L}^0_i \\
        &\leq& \Psi^k - \sum_{i=1}^p t_i \norm{\nabla_i f(X^k)}_{(i) \star} + \sum_{i=1}^p \frac{L^1_i t_i^2}{2} \parens{\frac{1}{n} \sum_{j=1}^n \norm{\nabla_i f_j(X^k)}_{(i) \star}} \\
        &&+ \sum_{i=1}^p \frac{2 \sqrt{1-\alpha_D} L^1_{i,\max}}{1 - \sqrt{1-\alpha_D}} t_i^2 \parens{\frac{1}{n} \sum_{j=1}^n \norm{\nabla_i f_j(X^k)}_{(i) \star}}
        + \sum_{i=1}^p \parens{\frac{t_i^2 L^0_i}{2} + \frac{2 \sqrt{1-\alpha_D}}{1 - \sqrt{1-\alpha_D}} t_i^2 \bar{L}^0_i} \\
        &=& \Psi^k - \sum_{i=1}^p t_i \norm{\nabla_i f(X^k)}_{(i) \star} \\
        &&+ \sum_{i=1}^p \parens{\underbrace{\parens{\frac{L_i^1}{2} + \frac{2 \sqrt{1-\alpha_D} L^1_{i,\max}}{1 - \sqrt{1-\alpha_D}}}}_{\eqdef C_i} \frac{1}{n} \sum_{j=1}^n \norm{\nabla_i f_j(X^k)}_{(i) \star} + \underbrace{\frac{L_i^0}{2} + \frac{2 \sqrt{1-\alpha_D} \bar{L}^0_i}{1 - \sqrt{1-\alpha_D}}}_{\eqdef D_i}} t_i^2.
    \end{eqnarray*}
    Taking $t_i = \frac{\eta_i}{\sqrt{K+1}}$ for some $\eta_i>0$ and using \Cref{lemma:layer_gen_smooth2} with $x_i=\eta_i^2 C_i$, we get
    \begin{eqnarray*}
        &&\hspace{-7mm}\ExpCond{\Psi^{k+1}}{X^{k+1}, X^k, G^k} \\
        &\leq& \Psi^k - \sum_{i=1}^p t_i \norm{\nabla_i f(X^k)}_{(i) \star}
        + \frac{1}{K+1} \frac{1}{n} \sum_{j=1}^n \sum_{i=1}^p \eta_i^2 C_i \norm{\nabla_i f_j(X^k)}_{(i) \star}
        + \sum_{i=1}^p D_i t_i^2 \\
        &\overset{\eqref{lemma:layer_gen_smooth2}}{\leq}& \Psi^k - \sum_{i=1}^p t_i \norm{\nabla_i f(X^k)}_{(i) \star}
        + \sum_{i=1}^p D_i t_i^2
        + \frac{1}{K+1} \frac{1}{n} \sum_{j=1}^n 4 \max_{i\in[p]} (\eta_i^2 C_i L_{i,j}^1) \parens{f_j(X^k) - f^{\star}} \\
        &&+ \frac{1}{K+1} \frac{1}{n} \sum_{j=1}^n 4 \max_{i\in[p]} (\eta_i^2 C_i L_{i,j}^1) \parens{f^{\star} - f_j^{\star}} 
        + \frac{1}{K+1} \frac{1}{n} \sum_{j=1}^n \frac{\sum_{i=1}^p \eta_i^4 C_i^2 L_{i,j}^0}{\max_{i\in[p]} (\eta_i^2 C_i L_{i,j}^1)} \\
        &\leq& \Psi^k - \frac{1}{\sqrt{K+1}} \sum_{i=1}^p \eta_i \norm{\nabla_i f(X^k)}_{(i) \star}
        + \frac{4}{K+1} \max_{i\in[p], j\in[n]} (\eta_i^2 C_i L_{i,j}^1) \frac{1}{n} \sum_{j=1}^n \parens{f_j(X^k) - f^{\star}} \\
        &&+ \frac{1}{K+1} \parens{\frac{1}{n} \sum_{j=1}^n 4 \max_{i\in[p]} (\eta_i^2 C_i L_{i,j}^1) \parens{f^{\star} - f_j^{\star}} 
        + \frac{1}{n} \sum_{j=1}^n \sum_{i=1}^p \frac{\eta_i^2 C_i L_{i,j}^0}{L_{i,j}^1}
        + \sum_{i=1}^p \eta_i^2 D_i}.
    \end{eqnarray*}
    Now, since $\frac{1}{n} \sum_{j=1}^n \parens{f_j(X^k) - f^{\star}} = f(X^k) - f^{\star} \leq \Psi^k$, we obtain
    \begin{eqnarray*}
        &&\hspace{-7mm}\ExpCond{\Psi^{k+1}}{X^{k+1}, X^k, G^k} \\
        &\leq& \parens{1 + \frac{4}{K+1} \max_{i\in[p], j\in[n]} (\eta_i^2 C_i L_{i,j}^1)} \Psi^k - \frac{1}{\sqrt{K+1}} \sum_{i=1}^p \eta_i \norm{\nabla_i f(X^k)}_{(i) \star} \\
        &&+ \frac{1}{K+1} \parens{\frac{1}{n} \sum_{j=1}^n 4 \max_{i\in[p]} (\eta_i^2 C_i L_{i,j}^1) \parens{f^{\star} - f_j^{\star}} 
        + \frac{1}{n} \sum_{j=1}^n \sum_{i=1}^p \frac{\eta_i^2 C_i L_{i,j}^0}{L_{i,j}^1}
        + \sum_{i=1}^p \eta_i^2 D_i}.
    \end{eqnarray*}
    Taking expectation,
    \begin{eqnarray*}
        &&\hspace{-7mm}\Exp{\Psi^{k+1}} \\
        &\leq& \parens{1 + \underbrace{\frac{4}{K+1} \max_{i\in[p], j\in[n]} (\eta_i^2 C_i L_{i,j}^1)}_{\eqdef a_1}} \Exp{\Psi^k} - \frac{1}{\sqrt{K+1}} \sum_{i=1}^p \eta_i \Exp{\norm{\nabla_i f(X^k)}_{(i) \star}} \\
        &&+ \underbrace{\frac{1}{K+1} \parens{\frac{1}{n} \sum_{j=1}^n 4 \max_{i\in[p]} (\eta_i^2 C_i L_{i,j}^1) \parens{f^{\star} - f_j^{\star}} 
        + \frac{1}{n} \sum_{j=1}^n \sum_{i=1}^p \frac{\eta_i^2 C_i L_{i,j}^0}{L_{i,j}^1}
        + \sum_{i=1}^p \eta_i^2 D_i}}_{\eqdef a_2},
    \end{eqnarray*}
    and hence, applying \Cref{lemma:rec2} with $A^k = \Exp{\Psi^k}$ and $B_i^k = \frac{\eta_i}{\sqrt{K+1}} \Exp{\norm{\nabla_i f(X^k)}_{(i) \star}}$,
    \begin{eqnarray*}
        &&\hspace{-6mm}\min_{k=0,\ldots,K} \sum_{i=1}^p \frac{\eta_i}{\sqrt{K+1}} \Exp{\norm{\nabla_i f(X^k)}_{(i) \star}} \\
        &\leq& \frac{\exp\parens{\frac{4}{K+1} \max_{i\in[p], j\in[n]} (\eta_i^2 C_i L_{i,j}^1) (K+1)}}{(K+1)} \Psi^0 \\
        &&+ \frac{1}{K+1} \parens{\frac{1}{n} \sum_{j=1}^n 4 \max_{i\in[p]} (\eta_i^2 C_i L_{i,j}^1) \parens{f^{\star} - f_j^{\star}} 
        + \frac{1}{n} \sum_{j=1}^n \sum_{i=1}^p \frac{\eta_i^2 C_i L_{i,j}^0}{L_{i,j}^1}
        + \sum_{i=1}^p \eta_i^2 D_i}.
    \end{eqnarray*}
    Dividing by $\frac{\frac{1}{p} \sum_{l=1}^p \eta_i}{\sqrt{K+1}}$ finishes the proof.
\end{proof}

\subsection{Stochastic Setting}

\subsubsection{Layer-Wise Smooth Case}\label{sec:layer_smooth_stoch}

\begin{theorem}\label{thm:ef_layer_gluon_m}
    Let Assumptions \ref{as:lower_bound}, \ref{as:layer_smoothness}, \ref{as:workers_layer_smoothness} and \ref{as:bounded_layer_var} hold. Let $\{X^k\}_{k=0}^{K-1}$, $K \geq 1$, be the iterates of \Cref{alg:ef_layer_muon_m_dist} run with $\cC_i^k \in \mathbb{B}(\alpha_P)$, $\cC_{i,j}^k \in \mathbb{B}_2(\alpha_D)$, any $\beta_i \in (0,1]$, and
    \begin{align*}
        0 \leq \gamma_i^k \equiv \gamma_i \leq \frac{1}{2 L_i^0 + 2 \sqrt{\zeta_i}}, \qquad i=1,\ldots,p,
    \end{align*}
    where $\zeta_i \eqdef \frac{\bar{\rho}_i^2}{\underline{\rho}_i^2} \parens{\frac{12}{\beta_i^2} (L_i^0)^2 + \frac{24 \parens{\beta_i+2}}{\alpha_P^2} (L_i^0)^2 + \frac{36 \parens{\beta_i^2+4}}{\alpha_D^2} (\tilde{L}_i^0)^2 + \frac{144 \beta_i^2 \parens{2 \beta_i + 5}}{\alpha_P^2 \alpha_D^2} (\tilde{L}_i^0)^2}$. Then
    \begin{eqnarray}\label{eq:ef_layer_gluon_m_rate}
        &&\hspace{-0.6cm}\frac{1}{K} \sum_{k=0}^{K-1} \sum_{i=1}^p \frac{\gamma_i}{\frac{1}{p} \sum_{l=1}^p \gamma_l} \Exp{\norm{\nabla_i f(X^k)}^2_{(i) \star}} \nonumber \\
        &\leq& \frac{1}{K} \frac{4 \Psi^0}{\frac{1}{p} \sum_{l=1}^p \gamma_l} + 24 \sum_{i=1}^p \parens{\frac{1}{n} + \frac{(1-\alpha_D) \beta_i}{\alpha_D} + \frac{12 \beta_i^2}{\alpha_D^2}} \frac{\sigma_i^2 \bar{\rho}_i^2 \beta_i \gamma_i}{\frac{1}{p} \sum_{l=1}^p \gamma_l},
    \end{eqnarray}
    where
    \begin{align*}
        \Psi^0 \eqdef&\, f(X^0) - f^{\star} + \sum_{i=1}^p \frac{6 \bar{\rho}_i^2}{\beta_i} \gamma_i \Exp{\norm{\nabla_i f(X^0) - M_i^0}^2_2} \\
        &+ \sum_{i=1}^p \frac{72 \bar{\rho}_i^2 \beta_i}{\alpha_D^2} \gamma_i \frac{1}{n} \sum_{j=1}^n \Exp{\norm{\nabla_i f_j(X^0) - M_{i,j}^0}^2_2} + \sum_{i=1}^p \frac{6 \bar{\rho}_i^2}{\alpha_D} \gamma_i \frac{1}{n} \sum_{j=1}^n \Exp{\norm{M_{i,j}^0 - G_{i,j}^0}^2_2}
    \end{align*}
    and $M_i^k \eqdef \frac{1}{n} \sum_{j=1}^n M_{i,j}^k$.
\end{theorem}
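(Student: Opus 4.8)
The plan is to adapt the Lyapunov-function argument from the proof of \Cref{thm:ef_gluon_det_sq}, now carrying \emph{four} coupled per-layer error sequences instead of two: the dual compression error $\mathcal{E}_{3,i}^k \eqdef \frac1n\sum_{j}\norm{M_{i,j}^k - G_{i,j}^k}_2^2$, the per-worker momentum error $\mathcal{E}_{2,i}^k \eqdef \frac1n\sum_{j}\norm{\nabla_i f_j(X^k) - M_{i,j}^k}_2^2$, the averaged momentum error $\mathcal{E}_{1,i}^k \eqdef \norm{\nabla_i f(X^k) - M_i^k}_2^2$ (with $M_i^k = \frac1n\sum_j M_{i,j}^k$), and the primal compression error $\mathcal{E}_{4,i}^k \eqdef \norm{X_i^k - W_i^k}_{(i)}^2$. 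I would use the potential $\Psi^k \eqdef f(X^k) - f^\star + \sum_{i=1}^p\big(a_i\mathcal{E}_{1,i}^k + b_i\mathcal{E}_{2,i}^k + c_i\mathcal{E}_{3,i}^k + d_i\mathcal{E}_{4,i}^k\big)$ for nonnegative coefficients $a_i,b_i,c_i,d_i$, all proportional to $\gamma_i$, to be pinned down along the way; since the algorithm starts with $X^0 = W^0$ we have $\mathcal{E}_{4,i}^0 = 0$, which is why only three of the error terms appear in the stated $\Psi^0$.

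First I would apply Descent Lemma~I (\Cref{lemma:descent1}) to $X_i^{k+1} = X_i^k - \gamma_i\parens{G_i^k}^{\sharp}$, yielding a one-step decrease of $f$ in terms of $\norm{\nabla_i f(X^k) - G_i^k}_{(i)\star}^2$, $\norm{\nabla_i f(X^k)}_{(i)\star}^2$, and $\norm{G_i^k}_{(i)\star}^2$. The key move is to pass to the Euclidean norm via $\norm{\cdot}_{(i)\star}\le\bar\rho_i\norm{\cdot}_2$ and then split $\norm{\nabla_i f(X^k) - G_i^k}_{(i)\star}^2 \le 2\bar\rho_i^2\parens{\mathcal{E}_{1,i}^k + \mathcal{E}_{3,i}^k}$, using $M_i^k - G_i^k = \frac1n\sum_j(M_{i,j}^k - G_{i,j}^k)$ and Jensen. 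Then I would assemble the one-step recursions already available in the excerpt: \Cref{lemma:nm_sq_rec_m_sharp2} for $\mathcal{E}_{1,i}^{k+1}$ and $\mathcal{E}_{2,i}^{k+1}$ (both contracting at rate $1-\beta_i/2$, forced by $\gamma_i^2\norm{G_i^k}_{(i)\star}^2$, by $\norm{X_i^{k+1} - W_i^{k+1}}_{(i)}^2$, and by $\sigma_i^2$), \Cref{lemma:mg_sq_rec_m_sharp2} for $\mathcal{E}_{3,i}^{k+1}$ (contracting at rate $1-\alpha_D/2$, additionally forced by $\mathcal{E}_{2,i}^k$), and \Cref{lemma:xw_sq_rec_sharp} for $\mathcal{E}_{4,i}^{k+1}$ (contracting at rate $1-\alpha_P/2$, forced by $\gamma_i^2\norm{G_i^k}_{(i)\star}^2$). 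Substituting the $\mathcal{E}_{4,i}^{k+1}$ recursion into the other three turns every $\norm{X_i^{k+1} - W_i^{k+1}}_{(i)}^2$ into a combination of $\mathcal{E}_{4,i}^k$ and $\gamma_i^2\norm{G_i^k}_{(i)\star}^2$.

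Next I would fix the coefficients so that $\Psi^k$ is non-increasing in expectation, which is forced in a staircase: $c_i$ must absorb the $\tfrac32\cdot 2\bar\rho_i^2\gamma_i$ weight on $\mathcal{E}_{3,i}^k$ coming from the descent, giving $c_i = 6\bar\rho_i^2\gamma_i/\alpha_D$; then $b_i$ must dominate $c_i$ enough to absorb the $\frac{6\beta_i^2}{\alpha_D}\mathcal{E}_{2,i}^k$ feedback inside the $\mathcal{E}_{3,i}$ recursion, giving $b_i = 72\beta_i\bar\rho_i^2\gamma_i/\alpha_D^2$; $a_i$ must absorb the descent's weight on $\mathcal{E}_{1,i}^k$, giving $a_i = 6\bar\rho_i^2\gamma_i/\beta_i$ (these three match the weights displayed in $\Psi^0$); finally $d_i$ must absorb the pooled $\norm{X_i^{k+1} - W_i^{k+1}}_{(i)}^2$-weight $P_i$ produced by $a_i,b_i,c_i$ (using $\frac1n\sum_j(L_{i,j}^0)^2 = (\tilde L_i^0)^2$ and $L_i^0\le\tilde L_i^0$), so $d_i = P_i(2/\alpha_P - 1)$, and correspondingly the $\mathcal{E}_{4,i}$-recursion contributes $4P_i/\alpha_P^2$ to the $\gamma_i^2\norm{G_i^k}_{(i)\star}^2$ weight. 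With these choices all four error terms are self-financing and one obtains $\Exp{\Psi^{k+1}} \le \Exp{\Psi^k} - \sum_i\frac{\gamma_i}{4}\Exp{\norm{\nabla_i f(X^k)}_{(i)\star}^2} - \sum_i\big(\frac{1}{4\gamma_i} - \frac{L_i^0}{2} - \zeta_i\gamma_i\big)\gamma_i^2\Exp{\norm{G_i^k}_{(i)\star}^2} + \sum_i 6\bar\rho_i^2\gamma_i\beta_i\sigma_i^2\big(\frac1n + \frac{(1-\alpha_D)\beta_i}{\alpha_D} + \frac{12\beta_i^2}{\alpha_D^2}\big)$, where collecting all $\gamma_i^2\norm{G_i^k}_{(i)\star}^2$-coefficients (from the descent lemma, from the three momentum/compression recursions, and from the twice-applied primal recursion) reproduces exactly the stated $\zeta_i$. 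Imposing $\gamma_i \le (2L_i^0 + 2\sqrt{\zeta_i})^{-1}$ makes the $\norm{G_i^k}_{(i)\star}^2$ term nonpositive by the same completion of the square as in \Cref{thm:ef_gluon_det_sq}, so it can be dropped; telescoping over $k=0,\dots,K-1$, using $\Psi^k\ge 0$ from \Cref{as:lower_bound}, multiplying by $4/K$, and dividing by $\frac1p\sum_l\gamma_l$ then yields \eqref{eq:ef_layer_gluon_m_rate}. Setting $\sigma_i = 0$, $\beta_i = 1$ recovers \Cref{thm:ef_gluon_det_sq}.

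The step I expect to be hardest is precisely this bookkeeping: the descent couples to $\mathcal{E}_{1,i}$ and $\mathcal{E}_{3,i}$, $\mathcal{E}_{3,i}$ couples back to $\mathcal{E}_{2,i}$, and all of $\mathcal{E}_{1,i},\mathcal{E}_{2,i},\mathcal{E}_{3,i}$ couple \emph{forward} to $\mathcal{E}_{4,i}^{k+1}$, which must itself be unrolled once more; the dependencies have to be resolved in exactly this order, and one then has to check that the resulting $P_i$ and $\zeta_i$ are finite for every admissible $(\alpha_P,\alpha_D,\beta_i)$ and agree with the stated constants --- a mechanical but error-prone computation. Additional care is needed in threading the norm-equivalence constants $\underline\rho_i,\bar\rho_i$ through every layer-wise smoothness estimate and in taking the conditional expectations over the compressor and gradient randomness in the correct order when invoking the auxiliary lemmas.
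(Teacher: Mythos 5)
Your proposal is correct and follows essentially the same route as the paper's proof: the same four-term Lyapunov function with weights $6\bar\rho_i^2\gamma_i/\beta_i$, $72\bar\rho_i^2\beta_i\gamma_i/\alpha_D^2$, $6\bar\rho_i^2\gamma_i/\alpha_D$ and the $(2/\alpha_P-1)$-scaled primal weight, the same use of Descent Lemma I together with Lemmas \ref{lemma:xw_sq_rec_sharp}, \ref{lemma:mg_sq_rec_m_sharp2}, \ref{lemma:nm_sq_rec_m_sharp2}, the same collection of $\gamma_i^2\norm{G_i^k}_{(i)\star}^2$ coefficients into $\zeta_i$, and the same telescoping with $X^0=W^0$ killing the primal error at initialization. (The inequality $L_i^0\le\tilde L_i^0$ you mention is not actually needed — the paper keeps the $(L_i^0)^2$ and $(\tilde L_i^0)^2$ contributions separate — but this does not affect correctness.)
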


\begin{corollary}\label{cor:ef_layer_gluon_m}
    Let the assumptions of \Cref{thm:ef_layer_gluon_m} hold and let $\{X^k\}_{k=0}^{K-1}$, $K \geq 1$, be the iterates of \Cref{alg:ef_layer_muon_m_dist} initialized with $M_{i,j}^0 = G_{i,j}^0 = \nabla_i f_j(X^0; \xi_j^0)$, $j\in[n]$. Then, the result in \Cref{thm:ef_layer_gluon_m} guarantees that
    \begin{eqnarray*}
        &&\hspace{-0.6cm}\frac{1}{K} \sum_{k=0}^{K-1} \sum_{i=1}^p \frac{\gamma_i}{\frac{1}{p} \sum_{l=1}^p \gamma_l} \Exp{\norm{\nabla_i f(X^k)}^2_{(i) \star}} \nonumber \\
        &\leq& \frac{4 \parens{f(X^0) - f^{\star}}}{K \frac{1}{p} \sum_{l=1}^p \gamma_l}
        + \frac{24}{K} \sum_{i=1}^p \parens{\frac{1}{\sqrt{n} \beta_i} + \frac{12 \beta_i}{\alpha_D^2}} \frac{\sigma_i \bar{\rho}_i^2 \gamma_i}{\frac{1}{p} \sum_{l=1}^p \gamma_l} \\
        &&+ 24 \sum_{i=1}^p \parens{\frac{1}{n} + \frac{(1-\alpha_D) \beta_i}{\alpha_D} + \frac{12 \beta_i^2}{\alpha_D^2}} \frac{\sigma_i^2 \bar{\rho}_i^2 \beta_i \gamma_i}{\frac{1}{p} \sum_{l=1}^p \gamma_l}.
    \end{eqnarray*}
\end{corollary}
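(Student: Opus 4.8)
The plan is to obtain \Cref{cor:ef_layer_gluon_m} as an immediate specialization of \Cref{thm:ef_layer_gluon_m}: one only needs to upper bound the Lyapunov initialization $\Psi^0$ appearing in \eqref{eq:ef_layer_gluon_m_rate} under the concrete choice $M_{i,j}^0 = G_{i,j}^0 = \nabla_i f_j(X^0;\xi_j^0)$, and then substitute. I would bound the four summands defining $\Psi^0$ one at a time. The term $f(X^0)-f^{\star}$ is kept as is and supplies the leading $4(f(X^0)-f^{\star})/(K\cdot\tfrac1p\sum_l\gamma_l)$ contribution. The initial compression-error term $\sum_{i=1}^p\tfrac{6\bar\rho_i^2}{\alpha_D}\gamma_i\tfrac1n\sum_{j=1}^n\Exp{\norm{M_{i,j}^0 - G_{i,j}^0}_2^2}$ vanishes identically, since by construction the initial gradient estimator coincides with the initial momentum. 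Only the two initial momentum-mismatch terms remain, and both are controlled by \Cref{as:bounded_layer_var}.

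For the per-worker mismatch, $M_{i,j}^0 = \nabla_i f_j(X^0;\xi_j^0)$ is a single unbiased sample of $\nabla_i f_j(X^0)$, so \Cref{as:bounded_layer_var} gives $\Exp{\norm{\nabla_i f_j(X^0) - M_{i,j}^0}_2^2}\le\sigma_i^2$ for every $j\in[n]$, a bound that is preserved under averaging over $j$. For the averaged mismatch I would write $\nabla_i f(X^0) - M_i^0 = \tfrac1n\sum_{j=1}^n\bigl(\nabla_i f_j(X^0) - \nabla_i f_j(X^0;\xi_j^0)\bigr)$, whose summands are zero-mean by unbiasedness and independent across $j$ since the oracle draws $\xi_j^0$ are independent; hence the cross terms vanish and $\Exp{\norm{\nabla_i f(X^0) - M_i^0}_2^2} = \tfrac1{n^2}\sum_{j=1}^n\Exp{\norm{\nabla_i f_j(X^0) - \nabla_i f_j(X^0;\xi_j^0)}_2^2}\le\sigma_i^2/n$. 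Plugging these two estimates into the definition of $\Psi^0$, substituting the resulting bound on $\Psi^0$ into \eqref{eq:ef_layer_gluon_m_rate}, and simplifying the numerical constants ($4\cdot6 = 24$, $4\cdot72 = 24\cdot12$, etc.) collapses the $\Psi^0$-driven part into the two sums appearing in the corollary, while the last, stochasticity-induced term of \Cref{thm:ef_layer_gluon_m} is carried over verbatim. This yields the claimed inequality.

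There is no real obstacle; the corollary is a substitution. The one step that deserves a moment of care — and the only one that uses the distributed structure — is the variance-of-an-average computation for $\Exp{\norm{\nabla_i f(X^0) - M_i^0}_2^2}$: one must invoke both unbiasedness and cross-worker independence to eliminate the off-diagonal contributions and recover the $1/n$ (rather than $1$) factor, which is precisely what sharpens the dependence on the number of workers in the final rate. Everything else is routine constant bookkeeping.
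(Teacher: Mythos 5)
Your overall route coincides with the paper's: specialize \Cref{thm:ef_layer_gluon_m} by bounding $\Psi^0$ under the initialization $M_{i,j}^0=G_{i,j}^0=\nabla_i f_j(X^0;\xi_j^0)$ (so the $\Exp{\norm{M_{i,j}^0-G_{i,j}^0}_2^2}$ term vanishes), control the two remaining initialization terms via \Cref{as:bounded_layer_var} with cross-worker independence supplying the $1/n$ improvement for the averaged term, and substitute into \eqref{eq:ef_layer_gluon_m_rate}. Up to that point everything you write is fine.

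The gap is in your last claim, that plugging your estimates into $\Psi^0$ ``collapses the $\Psi^0$-driven part into the two sums appearing in the corollary.'' It does not. Your bounds are on the second moments, $\Exp{\norm{\nabla_i f(X^0)-M_i^0}_2^2}\leq \nicefrac{\sigma_i^2}{n}$ and $\frac{1}{n}\sum_{j=1}^n\Exp{\norm{\nabla_i f_j(X^0)-M_{i,j}^0}_2^2}\leq\sigma_i^2$, which is indeed what the theorem's $\Psi^0$ calls for; but substituting them gives $\Psi^0\leq f(X^0)-f^{\star}+\sum_{i=1}^p\frac{6\bar{\rho}_i^2\gamma_i\sigma_i^2}{n\beta_i}+\sum_{i=1}^p\frac{72\bar{\rho}_i^2\beta_i\gamma_i\sigma_i^2}{\alpha_D^2}$, hence a middle term proportional to $\parens{\frac{1}{n\beta_i}+\frac{12\beta_i}{\alpha_D^2}}\sigma_i^2$, whereas the corollary's displayed middle term is $\parens{\frac{1}{\sqrt{n}\beta_i}+\frac{12\beta_i}{\alpha_D^2}}\sigma_i$, i.e., first power of $\sigma_i$ and a $1/\sqrt{n}$ factor. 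The paper reaches that form by a different computation: it bounds the \emph{first} moments, $\Exp{\norm{\nabla_i f(X^0)-M_i^0}_2}\leq \nicefrac{\sigma_i}{\sqrt{n}}$ (via Jensen) and $\frac{1}{n}\sum_{j=1}^n\Exp{\norm{\nabla_i f_j(X^0)-M_{i,j}^0}_2}\leq\sigma_i$, and then substitutes these into the $\Psi^0$ expression (note this mixes first and second moments; your second-moment route is the one that literally matches the theorem's $\Psi^0$, it simply produces a bound that is not the displayed inequality). So as a proof of the statement as written, your argument stops one step short: either reproduce the paper's first-moment bounds and substitution to obtain the $\sigma_i/\sqrt{n}$-type terms, or state explicitly that what you have established is the variant with $\nicefrac{\sigma_i^2}{n}$ and $\sigma_i^2$ in the $\cO(\nicefrac{1}{K})$ term.
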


\begin{remark}
    \Cref{thm:ef_layer_gluon_m_p1} follows as a corollary of the result above by setting $p=1$.
\end{remark}

\begin{corollary}\label{cor:ef_gluon_m_p1}
    Let the assumptions of \Cref{thm:ef_layer_gluon_m} hold and let $\{X^k\}_{k=0}^{K-1}$, $K \geq 1$, be the iterates of \Cref{alg:ef_muon_m_dist} (\Cref{alg:ef_layer_muon_m_dist} with $p=1$) run with $\cC_i^k \in \mathbb{B}(\alpha_P)$, $\cC_{i,j}^k \in \mathbb{B}_2(\alpha_D)$. Choosing the stepsize
    \begin{eqnarray}\label{eq:ef_gluon_m_gamma}
        \gamma_1 = \frac{1}{\sqrt{2\zeta_1} + 2L_1^0}
        = \cO \parens{\parens{\frac{\bar{\rho}_1^2 L_1^0}{\underline{\rho}_1^2 \beta_1} + \frac{\bar{\rho}_1^2 \tilde{L}_1^0}{\underline{\rho}_1^2 \alpha_P \alpha_D}}^{-1}}
    \end{eqnarray}
    and momentum
    \begin{eqnarray}\label{eq:ef_gluon_m_momentum}
        \beta_1 = \min\brac{1, \parens{\frac{\Psi^0 L_1^0 n}{\underline{\rho}_1^2 \sigma_1^2 K}}^{1/2}, \parens{\frac{\Psi^0 L_1^0 \alpha_D}{\underline{\rho}_1^2 \sigma_1^2 K}}^{1/3}, \parens{\frac{\Psi^0 L_1^0 \alpha_D^2}{\underline{\rho}_1^2 \sigma_1^2 K}}^{1/4}},
    \end{eqnarray}
    the result in \Cref{thm:ef_layer_gluon_m} guarantees that
    \begin{align*}
        &\frac{1}{K} \sum_{k=0}^{K-1} \Exp{\norm{\nabla f(X^k)}_{\star}^2} \\
        &= \cO \parens{
        \frac{\Psi^0 \bar{\rho}_1^2 \tilde{L}_1^0}{\underline{\rho}_1^2 \alpha_P \alpha_D K}
        + \parens{\frac{\Psi^0 \bar{\rho}_1^4 \sigma_1^2 L_1^0}{\underline{\rho}_1^2 n K}}^{1/2}
        + \parens{\frac{\Psi^0 \bar{\rho}_1^3 \sigma_1 L_1^0}{\underline{\rho}_1^2 \sqrt{\alpha_D} K}}^{2/3}
        + \parens{\frac{\Psi^0 \underline{\rho}_1^{8/3} \sigma_1^{2/3} L_1^0}{\bar{\rho}_1^2 \alpha_D^{2/3} K}}^{3/4}}.
    \end{align*}
\end{corollary}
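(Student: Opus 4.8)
The plan is to instantiate \Cref{thm:ef_layer_gluon_m} at $p=1$ (so that $\tfrac1p\sum_{l=1}^p\gamma_l=\gamma_1$), substitute the prescribed $\gamma_1$ and $\beta_1$, and reduce the claim to elementary scalar manipulations. With $p=1$, bound~\eqref{eq:ef_layer_gluon_m_rate} reads
\begin{equation*}
    \frac1K\sum_{k=0}^{K-1}\Exp{\norm{\nabla f(X^k)}_\star^2}\le\frac{4\Psi^0}{K\gamma_1}+24\parens{\frac1n+\frac{(1-\alpha_D)\beta_1}{\alpha_D}+\frac{12\beta_1^2}{\alpha_D^2}}\sigma_1^2\bar\rho_1^2\beta_1,
\end{equation*}
so the argument has three parts: (i) lower-bound $\gamma_1$; (ii) expand $\tfrac{4\Psi^0}{K\gamma_1}$; (iii) choose $\beta_1$ to balance the remainder of (ii) against the variance terms.

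For (i) I would prove the order estimate in \eqref{eq:ef_gluon_m_gamma}, i.e.\ $1/\gamma_1=\cO\big(\tfrac{\bar\rho_1^2L_1^0}{\underline\rho_1^2\beta_1}+\tfrac{\bar\rho_1^2\tilde L_1^0}{\underline\rho_1^2\alpha_P\alpha_D}\big)$. Since $1/\gamma_1$ is of order $L_1^0+\sqrt{\zeta_1}$ and $\zeta_1=\tfrac{\bar\rho_1^2}{\underline\rho_1^2}\big(\tfrac{12}{\beta_1^2}(L_1^0)^2+\tfrac{24(\beta_1+2)}{\alpha_P^2}(L_1^0)^2+\tfrac{36(\beta_1^2+4)}{\alpha_D^2}(\tilde L_1^0)^2+\tfrac{144\beta_1^2(2\beta_1+5)}{\alpha_P^2\alpha_D^2}(\tilde L_1^0)^2\big)$, I would bound $\sqrt{\zeta_1}$ by the sum of the square roots of its four summands and simplify each using $\beta_1\le1$, $\alpha_P,\alpha_D\le1$, $L_1^0\le\tilde L_1^0$, and $\underline\rho_1\le\bar\rho_1$ (hence $\bar\rho_1/\underline\rho_1\le\bar\rho_1^2/\underline\rho_1^2$): the $1/\beta_1$ summand contributes the $\tfrac{\bar\rho_1^2L_1^0}{\underline\rho_1^2\beta_1}$ term, while the remaining three summands and the standalone $L_1^0$ are all $\cO\big(\tfrac{\bar\rho_1^2\tilde L_1^0}{\underline\rho_1^2\alpha_P\alpha_D}\big)$. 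Feeding this into $\tfrac{4\Psi^0}{K\gamma_1}$ yields the first term of the target, $\cO\big(\tfrac{\Psi^0\bar\rho_1^2\tilde L_1^0}{\underline\rho_1^2\alpha_P\alpha_D K}\big)$, plus a remainder $T_0\eqdef\tfrac{\Psi^0\bar\rho_1^2L_1^0}{\underline\rho_1^2\beta_1K}$.

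For (iii) I would balance $T_0$ against $T_1\eqdef\tfrac{\sigma_1^2\bar\rho_1^2\beta_1}{n}$, $T_2\eqdef\tfrac{\sigma_1^2\bar\rho_1^2\beta_1^2}{\alpha_D}$, and $T_3\eqdef\tfrac{12\sigma_1^2\bar\rho_1^2\beta_1^3}{\alpha_D^2}$ (the three terms in the display above, using $1-\alpha_D\le1$). Solving $T_0=T_\ell$ for $\ell=1,2,3$ produces exactly $\beta_1^2=\tfrac{\Psi^0L_1^0n}{\underline\rho_1^2\sigma_1^2K}$, $\beta_1^3=\tfrac{\Psi^0L_1^0\alpha_D}{\underline\rho_1^2\sigma_1^2K}$, $\beta_1^4=\tfrac{\Psi^0L_1^0\alpha_D^2}{\underline\rho_1^2\sigma_1^2K}$; call these $\beta_n,\beta_a,\beta_{a^2}$—the three nontrivial entries of the minimum in \eqref{eq:ef_gluon_m_momentum}. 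Because $\beta_1=\min\{1,\beta_n,\beta_a,\beta_{a^2}\}$, each $T_\ell$ ($\ell\ge1$) is at most its value at the matching balance point, which—after raising the $\beta$'s to the appropriate power and multiplying—evaluates to the second, third, and fourth terms of the claimed bound; and $1/\beta_1=\max\{1,\beta_n^{-1},\beta_a^{-1},\beta_{a^2}^{-1}\}\le1+\beta_n^{-1}+\beta_a^{-1}+\beta_{a^2}^{-1}$, so $T_0$ splits into a $\beta_1$-free piece $\tfrac{\Psi^0\bar\rho_1^2L_1^0}{\underline\rho_1^2K}$—absorbed into the first target term since $L_1^0\le\tilde L_1^0$ and $\alpha_P\alpha_D\le1$—plus three pieces matching the same second/third/fourth terms. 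Summing all contributions gives the four-term $\cO$-estimate.

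I expect the main work to lie in (a) the well-posedness of the parameter choice and (b) the exponent bookkeeping. For (a): $\zeta_1$ depends on $\beta_1$, $\Psi^0$ depends on $\gamma_1$ and $\beta_1$, and $\beta_1$ is defined through $\Psi^0$, so I would either read \eqref{eq:ef_gluon_m_gamma}--\eqref{eq:ef_gluon_m_momentum} as specifying this triple implicitly and consistently, or first replace $\Psi^0$ by an initialization-dependent upper bound free of $\gamma_1,\beta_1$ (e.g.\ via $\gamma_1\le1$ and \Cref{as:bounded_layer_var}), after which $\beta_1$ and then $\gamma_1$ are genuinely determined; this should be noted explicitly. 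For (b): no individual step is hard, but one must carefully track the powers of $\bar\rho_1/\underline\rho_1$ and of $\Psi^0,\sigma_1,L_1^0,\alpha_D,K$ through the three balancing computations so that the terms land precisely on the displayed exponents—this is most of the calculation.
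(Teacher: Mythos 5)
Your proposal is correct and follows essentially the same route as the paper: the paper likewise substitutes $\gamma_1$ to obtain $\cO\big(\tfrac{\Psi^0\bar\rho_1^2\tilde L_1^0}{\underline\rho_1^2\alpha_P\alpha_D K}+\tfrac{\Psi^0\bar\rho_1^2 L_1^0}{\underline\rho_1^2\beta_1 K}+\tfrac{\bar\rho_1^2\beta_1\sigma_1^2}{n}+\tfrac{\bar\rho_1^2\beta_1^2\sigma_1^2}{\alpha_D}+\tfrac{\bar\rho_1^2\beta_1^3\sigma_1^2}{\alpha_D^2}\big)$ and then uses the min-structure of $\beta_1$ to bound the three variance terms by $\tfrac{\Psi^0\bar\rho_1^2 L_1^0}{\underline\rho_1^2\beta_1 K}$ and expand $1/\beta_1$, which is exactly your balancing step (and your observation about the circular dependence of $\beta_1$ on $\Psi^0$ applies equally to the paper, which leaves it implicit). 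As a minor check on your exponent bookkeeping: the computation indeed yields $\bar\rho_1^{8/3}/\underline\rho_1^2$ inside the fourth term, so the $\underline\rho_1^{8/3}/\bar\rho_1^2$ in the displayed statement appears to be a typo rather than something your derivation needs to reproduce.
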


\begin{remark}\label{rem:ef_layer_gluon_m1}
    In the Euclidean case ($\bar{\rho}_i^2 = \underline{\rho}_i^2 = 1$), without primal compression ($\alpha_P = 1$), and for $p=1$, the result in \Cref{thm:ef_layer_gluon_m} simplifies to
    \begin{eqnarray*}
        \frac{1}{K} \sum_{k=0}^{K-1} \Exp{\norm{\nabla f(X^k)}^2}
        = \cO\parens{\frac{\Psi^0}{K \gamma} + \parens{\frac{1}{n} + \frac{\beta}{\alpha_D} + \frac{\beta^2}{\alpha_D^2}} \beta \sigma^2},
    \end{eqnarray*}
    for $\gamma = \cO\parens{\frac{\beta}{L_1^0} + \frac{\alpha_D}{\tilde{L}_1^0}}$, which recovers the rate of \algnamesmall{EF21-SDGM} in \citet[Theorem 3]{fatkhullin2023momentum} (up to a constant).
\end{remark}

\begin{remark}\label{rem:ef_layer_gluon_m_det}
    In the absence of stochasticity and momentum, i.e., when $\sigma_i^2 = 0$ and $\beta_i = 1$, and under the initialization $W^0 = X^0$, $M_j^0 = G_j^0 = \nabla f_j(X^0)$, \Cref{alg:ef_layer_muon_m_dist} reduces to \Cref{alg:ef_gluon_det}. In this setting, \Cref{thm:ef_layer_gluon_m} guarantees that
    \begin{eqnarray*}
        \frac{1}{K} \sum_{k=0}^{K-1} \sum_{i=1}^p \frac{\gamma_i}{\frac{1}{p} \sum_{l=1}^p \gamma_l} \Exp{\norm{\nabla_i f(X^k)}^2_{(i) \star}}
        \leq \frac{1}{K} \frac{4 \parens{f(X^0) - f^{\star}}}{\frac{1}{p} \sum_{l=1}^p \gamma_l},
    \end{eqnarray*}
    for
    \begin{align*}
        0 \leq \gamma_i^k \equiv \gamma_i \leq \frac{1}{2 L_i^0 + 2 \sqrt{\zeta_i}}, \qquad i=1,\ldots,p,
    \end{align*}
    where $\zeta_i \eqdef \frac{\bar{\rho}_i^2}{\underline{\rho}_i^2} \parens{12 (L_i^0)^2 + \frac{72}{\alpha_P^2} (L_i^0)^2 + \frac{180}{\alpha_D^2} (\tilde{L}_i^0)^2 + \frac{1008}{\alpha_P^2 \alpha_D^2} (\tilde{L}_i^0)^2}$. This recovers the guarantee in \Cref{thm:ef_gluon_det_sq}, up to a constant factor.
\end{remark}

\begin{remark}\label{rem:ef_layer_gluon_m_comp}
    Alternatively, one may use compressors $\cC_i^k \in \mathbb{B}_2(\alpha_P)$ in \Cref{thm:ef_layer_gluon_m}. The proof is essentially the same, with the only modification being the replacement of \Cref{lemma:xw_sq_rec_sharp} by the recursion
    \begin{eqnarray*}
        &&\hspace{-1cm}\ExpSub{\cC}{\norm{X_i^{k+1} - W_i^{k+1}}_2^2} \\
        &=& \ExpSub{\cC}{\norm{W_i^k + \cC_i^k(X_i^{k+1} - W_i^k) - X_i^{k+1}}_2^2} \\
        &\leq& (1-\alpha_P) \norm{X_i^{k+1} - W_i^k}_2^2 \\
        &\overset{\eqref{eq:young}}{\leq}& (1-\alpha_P) \parens{1 + \frac{\alpha_P}{2}} \norm{X_i^k - W_i^k}_2^2
        + (1-\alpha_P) \parens{1 + \frac{2}{\alpha_P}} \norm{X_i^{k+1} - X_i^k}_2^2 \\
        &\overset{\eqref{eq:ineq1}, \eqref{eq:ineq2}}{\leq}& \parens{1 - \frac{\alpha_P}{2}} \norm{X_i^k - W_i^k}_2^2 + \frac{2 \bar{\rho}_i^2}{\alpha_P} \norm{X_i^{k+1} - X_i^k}_{(i)}^2 \\
        &=& \parens{1 - \frac{\alpha_P}{2}} \norm{X_i^k - W_i^k}_2^2 + \frac{2 \bar{\rho}_i^2}{\alpha_P} (\gamma_i^k)^2 \norm{G_i^k}_{(i) \star}^2.
    \end{eqnarray*}
    The resulting convergence guarantee matches that of \Cref{thm:ef_layer_gluon_m} up to a modification of the constant $\zeta_i$, which now becomes
    \begin{align*}
        \zeta_i = \frac{\bar{\rho}_i^2}{\underline{\rho}_i^2} \parens{\frac{12}{\beta_i^2} (L_i^0)^2 + \frac{24 {\color{CrimsonRed} \bar{\rho}_i^2} \parens{\beta_i+2}}{\alpha_P^2} (L_i^0)^2 + \frac{36 \parens{\beta_i^2+4}}{\alpha_D^2} (\tilde{L}_i^0)^2 + \frac{144 {\color{CrimsonRed} \bar{\rho}_i^2} \beta_i^2 \parens{2 \beta_i + 5}}{\alpha_P^2 \alpha_D^2} (\tilde{L}_i^0)^2},
    \end{align*}
    where the additional norm equivalence factors highlighted in {\color{CrimsonRed} red} arise due to the use of Euclidean compressors.
\end{remark}

\begin{proof}[Proof of \Cref{thm:ef_layer_gluon_m}]
    \Cref{lemma:descent1} and Young's and Jensen's inequalities give
    \begin{eqnarray*}
        f(X^{k+1}) &\overset{\eqref{lemma:descent1}}{\leq}& f(X^k) + \frac{3}{2} \sum_{i=1}^p \gamma_i \norm{\nabla_i f(X^k) - G_i^k}^2_{(i) \star} - \frac{1}{4} \sum_{i=1}^p \gamma_i \norm{\nabla_i f(X^k)}^2_{(i) \star} \\
        &&- \sum_{i=1}^p \parens{\frac{1}{4 \gamma_i} - \frac{L_i^0}{2}} \gamma_i^2 \norm{G_i^k}_{(i) \star}^2 \\
        &\overset{\eqref{eq:young}}{\leq}& f(X^k) + 3 \sum_{i=1}^p \bar{\rho}_i^2 \gamma_i \parens{\norm{\nabla_i f(X^k) - M_i^k}^2_2 + \frac{1}{n} \sum_{j=1}^n \norm{M_{i,j}^k - G_{i,j}^k}^2_2} \\
        &&- \frac{1}{4} \sum_{i=1}^p \gamma_i \norm{\nabla_i f(X^k)}^2_{(i) \star} - \sum_{i=1}^p \parens{\frac{1}{4 \gamma_i} - \frac{L_i^0}{2}} \gamma_i^2 \norm{G_i^k}_{(i) \star}^2.
    \end{eqnarray*}
    Recall that by Lemmas \ref{lemma:xw_sq_rec_sharp}, \ref{lemma:mg_sq_rec_m_sharp2} and \ref{lemma:nm_sq_rec_m_sharp2}, we have
    \begin{align*}
        \Exp{\norm{X_i^{k+1} - W_i^{k+1}}_{(i)}^2}
        &\overset{\eqref{lemma:xw_sq_rec_sharp}}{\leq} \parens{1 - \frac{\alpha_P}{2}} \Exp{\norm{X_i^k - W_i^k}_{(i)}^2} + \frac{2}{\alpha_P} \gamma_i^2 \Exp{\norm{G_i^k}_{(i) \star}^2}, \\
        \Exp{\norm{M_{i,j}^{k+1} - G_{i,j}^{k+1}}^2_2}
        &\overset{\eqref{lemma:mg_sq_rec_m_sharp2}}{\leq} \parens{1 - \frac{\alpha_D}{2}} \Exp{\norm{M_{i,j}^k - G_{i,j}^k}^2_2} + \frac{6 \beta_i^2}{\alpha_D} \Exp{\norm{M_{i,j}^k - \nabla_i f_j(X^k)}^2_2} \\
        &\quad+ \frac{6 \beta_i^2 (L_{i,j}^0)^2}{\alpha_D \underline{\rho}_i^2} \gamma_i^2 \Exp{\norm{G_i^k}_{\star}^2} + \frac{6 \beta_i^2 (L_{i,j}^0)^2}{\alpha_D \underline{\rho}_i^2} \Exp{\norm{X_i^{k+1} - W_i^{k+1}}_{(i)}^2} \\
        &\quad+ (1-\alpha_D) \beta_i^2 \sigma_i^2, \\
        \Exp{\norm{\nabla_i f_j(X^{k+1}) - M_{i,j}^{k+1}}^2_2}
        &\overset{\eqref{lemma:nm_sq_rec_m_sharp2}}{\leq} \parens{1 - \frac{\beta_i}{2}} \Exp{\norm{\nabla_i f_j(X^k) - M_{i,j}^k}^2_2} + \frac{2 (L_{i,j}^0)^2}{\beta_i \underline{\rho}_i^2} \gamma_i^2 \Exp{\norm{G_i^k}_{(i) \star}^2} \\
        &\quad+ \frac{\beta_i^2}{\underline{\rho}_i^2} \parens{1 + \frac{2}{\beta_i}} (L_{i,j}^0)^2 \Exp{\norm{X_i^{k+1} - W_i^{k+1}}_{(i)}^2} + \beta_i^2 \sigma_i^2, \\
        \Exp{\norm{\nabla_i f(X^{k+1}) - M_i^{k+1}}^2_2}
        &\overset{\eqref{lemma:nm_sq_rec_m_sharp2}}{\leq} \parens{1 - \frac{\beta_i}{2}} \Exp{\norm{\nabla_i f(X^k) - M_i^k}^2_2} + \frac{2 (L_i^0)^2}{\beta_i \underline{\rho}_i^2} \gamma_i^2 \Exp{\norm{G_i^k}_{(i) \star}^2} \\
        &\quad+ \frac{\beta_i^2}{\underline{\rho}_i^2} \parens{1 + \frac{2}{\beta_i}} (L_i^0)^2 \Exp{\norm{X_i^{k+1} - W_i^{k+1}}_{(i)}^2} + \frac{\beta_i^2 \sigma_i^2}{n},
    \end{align*}
    where $M_i^k \eqdef \frac{1}{n} \sum_{j=1}^n M_{i,j}^k$.
    To simplify the notation, let us define $\delta^k \eqdef \Exp{f(X^k) - f^{\star}}$,
    $P_i^k \eqdef \gamma_i \Exp{\norm{\nabla_i f(X^k) - M_i^k}^2_2}$,
    $\tilde{P}_i^k \eqdef \gamma_i \frac{1}{n} \sum_{j=1}^n \Exp{\norm{\nabla_i f_j(X^k) - M_{i,j}^k}^2_2}$,
    $\tilde{S}_i^k \eqdef \gamma_i \frac{1}{n} \sum_{j=1}^n \Exp{\norm{M_{i,j}^k - G_{i,j}^k}^2_2}$
    and $R_i^k \eqdef \gamma_i \Exp{\norm{X_i^k - W_i^k}_{(i)}^2}$. Then, the above inequalities yield
    \begin{eqnarray}
        \delta^{k+1} &\leq& \delta^k + 3 \sum_{i=1}^p \bar{\rho}_i^2 P_i^k + 3 \sum_{i=1}^p \bar{\rho}_i^2 \tilde{S}_i^k - \frac{1}{4} \sum_{i=1}^p \gamma_i \Exp{\norm{\nabla_i f(X^k)}^2_{(i) \star}} \nonumber \\
        &&- \sum_{i=1}^p \parens{\frac{1}{4 \gamma_i} - \frac{L_i^0}{2}} \gamma_i^2 \Exp{\norm{G_i^k}_{(i) \star}^2}, \label{eq:inml1} \\
        R_i^{k+1} &\leq& \parens{1 - \frac{\alpha_P}{2}} R_i^k + \frac{2}{\alpha_P} \gamma_i^3 \Exp{\norm{G_i^k}_{(i) \star}^2}, \label{eq:inml2} \\
        \tilde{S}_i^{k+1} &\leq& \parens{1 - \frac{\alpha_D}{2}} \tilde{S}_i^k + \frac{6 \beta_i^2}{\alpha_D} \tilde{P}_i^k + \frac{6 \beta_i^2 (\tilde{L}_i^0)^2}{\alpha_D \underline{\rho}_i^2} \gamma_i^3 \Exp{\norm{G_i^k}_{(i) \star}^2} \nonumber \\
        &&+ \frac{6 \beta_i^2 (\tilde{L}_i^0)^2}{\alpha_D \underline{\rho}_i^2} R_i^{k+1} + (1-\alpha_D) \sigma_i^2 \beta_i^2 \gamma_i, \label{eq:inml3} \\
        \tilde{P}_i^{k+1} &\leq& \parens{1 - \frac{\beta_i}{2}} \tilde{P}_i^k + \frac{2 (\tilde{L}_i^0)^2}{\beta_i \underline{\rho}_i^2} \gamma_i^3 \Exp{\norm{G_i^k}_{(i) \star}^2} \nonumber \\
        &&+ \frac{\beta_i^2}{\underline{\rho}_i^2} \parens{1 + \frac{2}{\beta_i}} (\tilde{L}_i^0)^2 R_i^{k+1} + \sigma_i^2 \beta_i^2 \gamma_i, \label{eq:inml4} \\
        P_i^{k+1} &\leq& \parens{1 - \frac{\beta_i}{2}} P_i^k + \frac{2 (L_i^0)^2}{\beta_i \underline{\rho}_i^2} \gamma_i^3 \Exp{\norm{G_i^k}_{(i) \star}^2} \nonumber \\
        &&+ \frac{\beta_i^2}{\underline{\rho}_i^2} \parens{1 + \frac{2}{\beta_i}} (L_i^0)^2 R_i^{k+1} + \frac{\sigma_i^2 \beta_i^2 \gamma_i}{n}. \label{eq:inml5}
    \end{eqnarray}
    Now, let $A_i, B_i, C_i, D_i > 0$ be some constants to be determined later, and define
    \begin{eqnarray*}
        \Psi^k &\eqdef& \delta^k + \sum_{i=1}^p A_i P_i^k + \sum_{i=1}^p B_i \tilde{P}_i^k + \sum_{i=1}^p C_i \tilde{S}_i^k + \sum_{i=1}^p D_i R_i^k.
    \end{eqnarray*}
    Then, applying \eqref{eq:inml1}, \eqref{eq:inml3}, \eqref{eq:inml4}, and \eqref{eq:inml5}, we have
    \begin{eqnarray*}
        &&\hspace{-0.6cm}\Psi^{k+1} \\
        &=& \delta^{k+1} + \sum_{i=1}^p A_i P_i^{k+1} + \sum_{i=1}^p B_i \tilde{P}_i^{k+1} + \sum_{i=1}^p C_i \tilde{S}_i^{k+1} + \sum_{i=1}^p D_i R_i^{k+1} \\
        &\leq& \delta^k + 3 \sum_{i=1}^p \bar{\rho}_i^2 P_i^k + 3 \sum_{i=1}^p \bar{\rho}_i^2 \tilde{S}_i^k - \frac{1}{4} \sum_{i=1}^p \gamma_i \Exp{\norm{\nabla_i f(X^k)}^2_{(i) \star}} \\
        &&- \sum_{i=1}^p \parens{\frac{1}{4 \gamma_i} - \frac{L_i^0}{2}} \gamma_i^2 \Exp{\norm{G_i^k}_{(i) \star}^2} \\
        &&+ \sum_{i=1}^p A_i \parens{\parens{1 - \frac{\beta_i}{2}} P_i^k + \frac{2 (L_i^0)^2}{\beta_i \underline{\rho}_i^2} \gamma_i^3 \Exp{\norm{G_i^k}_{(i) \star}^2} + \frac{\beta_i^2}{\underline{\rho}_i^2} \parens{1 + \frac{2}{\beta_i}} (L_i^0)^2 R_i^{k+1} + \frac{\sigma_i^2 \beta_i^2 \gamma_i}{n}} \\
        &&+ \sum_{i=1}^p B_i \parens{\parens{1 - \frac{\beta_i}{2}} \tilde{P}_i^k + \frac{2 (\tilde{L}_i^0)^2}{\beta_i \underline{\rho}_i^2} \gamma_i^3 \Exp{\norm{G_i^k}_{(i) \star}^2} + \frac{\beta_i^2}{\underline{\rho}_i^2} \parens{1 + \frac{2}{\beta_i}} (\tilde{L}_i^0)^2 R_i^{k+1} + \sigma_i^2 \beta_i^2 \gamma_i} \\
        &&+ \sum_{i=1}^p C_i \parens{\parens{1 - \frac{\alpha_D}{2}} \tilde{S}_i^k + \frac{6 \beta_i^2}{\alpha_D} \tilde{P}_i^k + \frac{6 \beta_i^2 (\tilde{L}_i^0)^2}{\alpha_D \underline{\rho}_i^2} \gamma_i^3 \Exp{\norm{G_i^k}_{(i) \star}^2}} \\
        &&+ \sum_{i=1}^p C_i \parens{\frac{6 \beta_i^2 (\tilde{L}_i^0)^2}{\alpha_D \underline{\rho}_i^2} R_i^{k+1} + (1-\alpha_D) \sigma_i^2 \beta_i^2 \gamma_i}
        + \sum_{i=1}^p D_i R_i^{k+1}\\
        &=& \delta^k + \sum_{i=1}^p \parens{3 \bar{\rho}_i^2 + A_i \parens{1 - \frac{\beta_i}{2}}} P_i^k + \sum_{i=1}^p \parens{B_i \parens{1 - \frac{\beta_i}{2}} + C_i \frac{6 \beta_i^2}{\alpha_D}} \tilde{P}_i^k \\
        &&+ \sum_{i=1}^p \parens{3 \bar{\rho}_i^2 + C_i \parens{1 - \frac{\alpha_D}{2}}} \tilde{S}_i^k - \frac{1}{4} \sum_{i=1}^p \gamma_i \Exp{\norm{\nabla_i f(X^k)}^2_{(i) \star}} \\
        &&+ \sum_{i=1}^p \parens{A_i \frac{\beta_i^2}{\underline{\rho}_i^2} \parens{1 + \frac{2}{\beta_i}} (L_i^0)^2 + B_i \frac{\beta_i^2}{\underline{\rho}_i^2} \parens{1 + \frac{2}{\beta_i}} (\tilde{L}_i^0)^2 + C_i \frac{6 \beta_i^2 (\tilde{L}_i^0)^2}{\alpha_D \underline{\rho}_i^2} + D_i} R_i^{k+1} \\
        &&+ \sum_{i=1}^p \parens{A_i \frac{2 (L_i^0)^2}{\beta_i \underline{\rho}_i^2} + B_i \frac{2 (\tilde{L}_i^0)^2}{\beta_i \underline{\rho}_i^2} + C_i \frac{6 \beta_i^2 (\tilde{L}_i^0)^2}{\alpha_D \underline{\rho}_i^2}} \gamma_i^3 \Exp{\norm{G_i^k}_{(i) \star}^2} \\
        &&- \sum_{i=1}^p \parens{\frac{1}{4 \gamma_i} - \frac{L_i^0}{2}} \gamma_i^2 \Exp{\norm{G_i^k}_{(i) \star}^2} + \sum_{i=1}^p \parens{\frac{A_i}{n} + B_i + C_i (1-\alpha_D)} \sigma_i^2 \beta_i^2 \gamma_i.
    \end{eqnarray*}
    Then, using \eqref{eq:inml2} gives
    \begin{align*}
        &\Psi^{k+1} \\
        &\leq \delta^k + \sum_{i=1}^p \parens{3 \bar{\rho}_i^2 + A_i \parens{1 - \frac{\beta_i}{2}}} P_i^k + \sum_{i=1}^p \parens{B_i \parens{1 - \frac{\beta_i}{2}} + C_i \frac{6 \beta_i^2}{\alpha_D}} \tilde{P}_i^k \\
        &\quad+ \sum_{i=1}^p \parens{3 \bar{\rho}_i^2 + C_i \parens{1 - \frac{\alpha_D}{2}}} \tilde{S}_i^k - \frac{1}{4} \sum_{i=1}^p \gamma_i \Exp{\norm{\nabla_i f(X^k)}^2_{(i) \star}} \\
        &\quad+ \sum_{i=1}^p \parens{A_i \frac{\beta_i^2}{\underline{\rho}_i^2} \parens{1 + \frac{2}{\beta_i}} (L_i^0)^2 + B_i \frac{\beta_i^2}{\underline{\rho}_i^2} \parens{1 + \frac{2}{\beta_i}} (\tilde{L}_i^0)^2 + C_i \frac{6 \beta_i^2 (\tilde{L}_i^0)^2}{\alpha_D \underline{\rho}_i^2} + D_i} \parens{1 - \frac{\alpha_P}{2}} R_i^k \\
        &\quad+ \sum_{i=1}^p \parens{A_i \frac{\beta_i^2}{\underline{\rho}_i^2} \parens{1 + \frac{2}{\beta_i}} (L_i^0)^2 + B_i \frac{\beta_i^2}{\underline{\rho}_i^2} \parens{1 + \frac{2}{\beta_i}} (\tilde{L}_i^0)^2 + C_i \frac{6 \beta_i^2 (\tilde{L}_i^0)^2}{\alpha_D \underline{\rho}_i^2} + D_i} \frac{2}{\alpha_P} \gamma_i^3 \Exp{\norm{G_i^k}_{(i) \star}^2} \\
        &\quad+ \sum_{i=1}^p \parens{A_i \frac{2 (L_i^0)^2}{\beta_i \underline{\rho}_i^2} + B_i \frac{2 (\tilde{L}_i^0)^2}{\beta_i \underline{\rho}_i^2} + C_i \frac{6 \beta_i^2 (\tilde{L}_i^0)^2}{\alpha_D \underline{\rho}_i^2}} \gamma_i^3 \Exp{\norm{G_i^k}_{(i) \star}^2} \\
        &\quad- \sum_{i=1}^p \parens{\frac{1}{4 \gamma_i} - \frac{L_i^0}{2}} \gamma_i^2 \Exp{\norm{G_i^k}_{(i) \star}^2} + \sum_{i=1}^p \parens{\frac{A_i}{n} + B_i + C_i (1-\alpha_D)} \sigma_i^2 \beta_i^2 \gamma_i.
    \end{align*}
    Taking $A_i = \frac{6 \bar{\rho}_i^2}{\beta_i}$, $B_i = \frac{72 \bar{\rho}_i^2 \beta_i}{\alpha_D^2}$, $C_i = \frac{6 \bar{\rho}_i^2}{\alpha_D}$ and
    \begin{eqnarray*}
        D_i &=& \parens{A_i \frac{\beta_i^2}{\underline{\rho}_i^2} \parens{1 + \frac{2}{\beta_i}} (L_i^0)^2 + B_i \frac{\beta_i^2}{\underline{\rho}_i^2} \parens{1 + \frac{2}{\beta_i}} (\tilde{L}_i^0)^2 + C_i \frac{6 \beta_i^2 (\tilde{L}_i^0)^2}{\alpha_D \underline{\rho}_i^2}} \parens{\frac{2}{\alpha_P} - 1} \\
        &=& \frac{6 \bar{\rho}_i^2}{\underline{\rho}_i^2} \parens{\parens{\beta_i + 2} (L_i^0)^2 + \frac{6 \beta_i^2 \parens{2 \beta_i + 5}}{\alpha_D^2} (\tilde{L}_i^0)^2} \parens{\frac{2}{\alpha_P} - 1},
    \end{eqnarray*}
    we obtain
    \begin{align*}
        3 \bar{\rho}_i^2 + A_i \parens{1 - \frac{\beta_i}{2}} = 3 \bar{\rho}_i^2 + \frac{6 \bar{\rho}_i^2}{\beta_i} \parens{1 - \frac{\beta_i}{2}} &= A_i, \\
        B_i \parens{1 - \frac{\beta_i}{2}} + C_i \frac{6 \beta_i^2}{\alpha_D}
        = \frac{72 \bar{\rho}_i^2 \beta_i}{\alpha_D^2} \parens{1 - \frac{\beta_i}{2}} + \frac{6 \bar{\rho}_i^2}{\alpha_D} \frac{6 \beta_i^2}{\alpha_D}
        &= B_i, \\
        3 \bar{\rho}_i^2 + C_i \parens{1 - \frac{\alpha_D}{2}} = 3 \bar{\rho}_i^2 + \frac{6 \bar{\rho}_i^2}{\alpha_D} \parens{1 - \frac{\alpha_D}{2}} &= C_i,
    \end{align*}
    and
    \begin{eqnarray*}
        &&\hspace{-0.7cm}\parens{A_i \frac{\beta_i^2}{\underline{\rho}_i^2} \parens{1 + \frac{2}{\beta_i}} (L_i^0)^2 + B_i \frac{\beta_i^2}{\underline{\rho}_i^2} \parens{1 + \frac{2}{\beta_i}} (\tilde{L}_i^0)^2 + C_i \frac{6 \beta_i^2 (\tilde{L}_i^0)^2}{\alpha_D \underline{\rho}_i^2} + D_i} \parens{1 - \frac{\alpha_P}{2}} \\
        &=& \parens{\frac{D_i}{\frac{2}{\alpha_P} - 1} + D_i} \parens{1 - \frac{\alpha_P}{2}}
        = D_i.
    \end{eqnarray*}
    Consequently,
    \begin{eqnarray*}
        &&\hspace{-0.6cm}\Psi^{k+1} \\
        &\leq& \delta^k + \sum_{i=1}^p A_i P_i^k + \sum_{i=1}^p B_i \tilde{P}_i^k + \sum_{i=1}^p C_i \tilde{S}_i^k + \sum_{i=1}^p D_i R_i^k - \frac{1}{4} \sum_{i=1}^p \gamma_i \Exp{\norm{\nabla_i f(X^k)}^2_{(i) \star}} \\
        &&+ \sum_{i=1}^p \parens{\frac{D_i}{\frac{2}{\alpha_P} - 1} + D_i} \frac{2}{\alpha_P} \gamma_i^3 \Exp{\norm{G_i^k}_{(i) \star}^2} - \sum_{i=1}^p \parens{\frac{1}{4 \gamma_i} - \frac{L_i^0}{2}} \gamma_i^2 \Exp{\norm{G_i^k}_{(i) \star}^2} \\
        &&+ \sum_{i=1}^p \parens{\frac{6 \bar{\rho}_i^2}{\beta_i} \frac{2 (L_i^0)^2}{\beta_i \underline{\rho}_i^2} + \frac{72 \bar{\rho}_i^2 \beta_i}{\alpha_D^2} \frac{2 (\tilde{L}_i^0)^2}{\beta_i \underline{\rho}_i^2} + \frac{6 \bar{\rho}_i^2}{\alpha_D} \frac{6 \beta_i^2 (\tilde{L}_i^0)^2}{\alpha_D\underline{\rho}_i^2}} \gamma_i^3 \Exp{\norm{G_i^k}_{(i) \star}^2} \\
        &&+ \sum_{i=1}^p \parens{\frac{1}{n} \frac{6 \bar{\rho}_i^2}{\beta_i} + \frac{72 \bar{\rho}_i^2 \beta_i}{\alpha_D^2} + \frac{6 \bar{\rho}_i^2}{\alpha_D} (1-\alpha_D)} \sigma_i^2 \beta_i^2 \gamma_i \\
        &=& \Psi^k - \frac{1}{4} \sum_{i=1}^p \gamma_i \Exp{\norm{\nabla_i f(X^k)}^2_{(i) \star}} - \sum_{i=1}^p \parens{\frac{1}{4 \gamma_i} - \frac{L_i^0}{2}} \gamma_i^2 \Exp{\norm{G_i^k}_{(i) \star}^2}\\
        &&+ \sum_{i=1}^p \parens{\frac{12 \bar{\rho}_i^2}{\beta_i^2 \underline{\rho}_i^2} (L_i^0)^2 + \frac{144 \bar{\rho}_i^2}{\alpha_D^2 \underline{\rho}_i^2} (\tilde{L}_i^0)^2 + \frac{36 \beta_i^2 \bar{\rho}_i^2}{\alpha_D^2 \underline{\rho}_i^2} (\tilde{L}_i^0)^2 + \frac{4 D_i}{\alpha_P (2 - \alpha_P)}} \gamma_i^3 \Exp{\norm{G_i^k}_{(i) \star}^2} \\
        &&+ 6 \sum_{i=1}^p \parens{\frac{1}{n} + \frac{12 \beta_i^2}{\alpha_D^2} + \frac{(1-\alpha_D) \beta_i}{\alpha_D}} \sigma_i^2 \bar{\rho}_i^2 \beta_i \gamma_i.
    \end{eqnarray*}
    Now, note that
    \begin{align*}
        & \frac{1}{4 \gamma_i} - \frac{L_i^0}{2} - \gamma_i \parens{\frac{12 \bar{\rho}_i^2}{\beta_i^2 \underline{\rho}_i^2} (L_i^0)^2 + \frac{144 \bar{\rho}_i^2}{\alpha_D^2 \underline{\rho}_i^2} (\tilde{L}_i^0)^2 + \frac{36 \beta_i^2 \bar{\rho}_i^2}{\alpha_D^2 \underline{\rho}_i^2} (\tilde{L}_i^0)^2 + \frac{4 D_i}{\alpha_P (2 - \alpha_P)}} \\
        &= \frac{1}{4 \gamma_i} - \frac{L_i^0}{2} \\
        &\quad- \gamma_i \underbrace{\frac{\bar{\rho}_i^2}{\underline{\rho}_i^2} \parens{\frac{12}{\beta_i^2} (L_i^0)^2 + \frac{24 \parens{\beta_i+2}}{\alpha_P^2} (L_i^0)^2 + \frac{36 \parens{\beta_i^2+4}}{\alpha_D^2} (\tilde{L}_i^0)^2 + \frac{144 \beta_i^2 \parens{2 \beta_i + 5}}{\alpha_P^2 \alpha_D^2} (\tilde{L}_i^0)^2}}_{\eqdef \zeta_i}
        \geq 0
    \end{align*}
    for $\gamma_i \leq \frac{1}{2 \sqrt{\zeta_i} + 2 L_i^0}$. For such a choice of the stepsizes, we have
    \begin{eqnarray*}
        \Psi^{k+1}
        \leq \Psi^k - \frac{1}{4} \sum_{i=1}^p \gamma_i \Exp{\norm{\nabla_i f(X^k)}^2_{(i) \star}} + \sum_{i=1}^p \underbrace{6 \parens{\frac{1}{n} + \frac{12 \beta_i^2}{\alpha_D^2} + \frac{(1-\alpha_D) \beta_i}{\alpha_D}} \sigma_i^2 \bar{\rho}_i^2 \beta_i}_{\eqdef \xi_i} \gamma_i.
    \end{eqnarray*}
    Summing over the first $K$ iterations gives
    \begin{eqnarray*}
        \sum_{k=0}^{K-1} \sum_{i=1}^p \gamma_i \Exp{\norm{\nabla_i f(X^k)}^2_{(i) \star}}
        \leq 4 \sum_{k=0}^{K-1} \parens{\Psi^k - \Psi^{k+1}} + 4 \sum_{k=0}^{K-1} \sum_{i=1}^p \xi_i \gamma_i
        \leq 4 \Psi^0 + 4 K \sum_{i=1}^p \xi_i \gamma_i,
    \end{eqnarray*}
    and lastly, dividing by $\frac{K}{p} \sum_{l=1}^p \gamma_l$, we obtain
    \begin{eqnarray*}
        \frac{1}{K} \sum_{k=0}^{K-1} \sum_{i=1}^p \frac{\gamma_i}{\frac{1}{p} \sum_{l=1}^p \gamma_l} \Exp{\norm{\nabla_i f(X^k)}^2_{(i) \star}}
        \leq \frac{4 \Psi^0 p}{K \sum_{l=1}^p \gamma_l} + \frac{4 \sum_{i=1}^p \xi_i \gamma_i}{\frac{1}{p} \sum_{i=1}^p \gamma_i}.
    \end{eqnarray*}
    Substituting $X_i^0 = W_i^0$ proves the theorem statement.
\end{proof}

\begin{proof}[Proof of \Cref{cor:ef_layer_gluon_m}]
    Substituting the initialization, we have
    \begin{eqnarray*}
        \Exp{\norm{\nabla_i f(X^0) - M_i^0}_2}
        &=& \Exp{\norm{\frac{1}{n} \sum_{j=1}^n \parens{\nabla_i f_j(X^0) - \nabla_i f_j(X^0; \xi_j^0)}}_2} \\
        &\leq& \sqrt{\Exp{\norm{\frac{1}{n} \sum_{j=1}^n \parens{\nabla_i f_j(X^0) - \nabla_i f_j(X^0; \xi_j^0)}}^2_2}}
        \overset{\eqref{as:bounded_layer_var}}{\leq} \frac{\sigma_i}{\sqrt{n}}, \\
        \frac{1}{n} \sum_{j=1}^n \Exp{\norm{\nabla_i f_j(X^0) - M_{i,j}^0}_2}
        &=& \frac{1}{n} \sum_{j=1}^n \Exp{\norm{\nabla_i f_j(X^0) - \nabla_i f_j(X^0; \xi_j^0)}_2}
        \overset{\eqref{as:bounded_layer_var}}{\leq} \sigma_i,
    \end{eqnarray*}
    and hence
    \begin{align*}
        \Psi^0 &\eqdef f(X^0) - f^{\star} + \sum_{i=1}^p \frac{6 \bar{\rho}_i^2}{\beta_i} \gamma_i \Exp{\norm{\nabla_i f(X^0) - M_i^0}^2_2} \\
        &\quad+ \sum_{i=1}^p \frac{72 \bar{\rho}_i^2 \beta_i}{\alpha_D^2} \gamma_i \frac{1}{n} \sum_{j=1}^n \Exp{\norm{\nabla_i f_j(X^0) - M_{i,j}^0}^2_2} + \sum_{i=1}^p \frac{6 \bar{\rho}_i^2}{\alpha_D} \gamma_i \frac{1}{n} \sum_{j=1}^n \Exp{\norm{M_{i,j}^0 - G_{i,j}^0}^2_2} \\
        &\leq f(X^0) - f^{\star} + \sum_{i=1}^p \frac{6 \bar{\rho}_i^2}{\sqrt{n} \beta_i} \gamma_i \sigma_i
        + \sum_{i=1}^p \frac{72 \bar{\rho}_i^2 \beta_i}{\alpha_D^2} \gamma_i \sigma_i.
    \end{align*}
    Substituting this in the rate, we get
    \begin{eqnarray*}
        &&\hspace{-0.6cm}\frac{1}{K} \sum_{k=0}^{K-1} \sum_{i=1}^p \frac{\gamma_i}{\frac{1}{p} \sum_{l=1}^p \gamma_l} \Exp{\norm{\nabla_i f(X^k)}^2_{(i) \star}} \nonumber \\
        &\leq& \frac{4 \parens{f(X^0) - f^{\star}}}{K \frac{1}{p} \sum_{l=1}^p \gamma_l}
        + \frac{24}{K} \sum_{i=1}^p \parens{\frac{1}{\sqrt{n} \beta_i} + \frac{12 \beta_i}{\alpha_D^2}} \frac{\sigma_i \bar{\rho}_i^2 \gamma_i}{\frac{1}{p} \sum_{l=1}^p \gamma_l} \\
        &&+ 24 \sum_{i=1}^p \parens{\frac{1}{n} + \frac{(1-\alpha_D) \beta_i}{\alpha_D} + \frac{12 \beta_i^2}{\alpha_D^2}} \frac{\sigma_i^2 \bar{\rho}_i^2 \beta_i \gamma_i}{\frac{1}{p} \sum_{l=1}^p \gamma_l}.
    \end{eqnarray*}
\end{proof}

\begin{proof}[Proof of \Cref{cor:ef_gluon_m_p1}]
    Substituting the choice of $\gamma$ from \eqref{eq:ef_gluon_m_gamma} in \eqref{eq:ef_layer_gluon_m_rate}, we have
    \begin{align*}
        \frac{1}{K} \sum_{k=0}^{K-1} \Exp{\norm{\nabla f(X^k)}_{\star}^2} 
        &\leq \frac{4 \Psi^0}{K \gamma_1} + 24 \parens{\frac{1}{n} + \frac{(1-\alpha_D) \beta_1}{\alpha_D} + \frac{12 \beta_1^2}{\alpha_D^2}} \sigma_1^2 \bar{\rho}_1^2 \beta_1 \\
        &= \cO \parens{\frac{\Psi^0 \bar{\rho}_1^2 \tilde{L}_1^0}{\underline{\rho}_1^2 \alpha_P \alpha_D K} + \frac{\Psi^0 \bar{\rho}_1^2 L_1^0}{\underline{\rho}_1^2 \beta_1 K} + \frac{\bar{\rho}_1^2 \beta_1 \sigma_1^2}{n} + \frac{\bar{\rho}_1^2 \beta_1^2 \sigma_1^2}{\alpha_D} + \frac{\bar{\rho}_1^2 \beta_1^3 \sigma_1^2}{\alpha_D^2}}.
    \end{align*}
    Then, choosing $\beta_1$ as in \eqref{eq:ef_gluon_m_momentum} guarantees that $\frac{\bar{\rho}_1^2 \beta_1 \sigma_1^2}{n}, \frac{\bar{\rho}_1^2 \beta_1^2 \sigma_1^2}{\alpha_D}, \frac{\bar{\rho}_1^2 \beta_1^3 \sigma_1^2}{\alpha_D^2} \leq \frac{\Psi^0 \bar{\rho}_1^2 L_1^0}{\underline{\rho}_1^2 \beta_1 K}$. Substituting this into the upper bound gives
    \begin{align*}
        &\frac{1}{K} \sum_{k=0}^{K-1} \Exp{\norm{\nabla f(X^k)}_{\star}^2} \\
        &\leq \cO \parens{
        \frac{\Psi^0 \bar{\rho}_1^2 \tilde{L}_1^0}{\underline{\rho}_1^2 \alpha_P \alpha_D K}
        + \parens{\frac{\Psi^0 \bar{\rho}_1^4 \sigma_1^2 L_1^0}{\underline{\rho}_1^2 n K}}^{1/2}
        + \parens{\frac{\Psi^0 \bar{\rho}_1^3 \sigma_1 L_1^0}{\underline{\rho}_1^2 \sqrt{\alpha_D} K}}^{2/3}
        + \parens{\frac{\Psi^0 \underline{\rho}_1^{8/3} \sigma_1^{2/3} L_1^0}{\bar{\rho}_1^2 \alpha_D^{2/3} K}}^{3/4}}
    \end{align*}
    as needed.
\end{proof}

\subsubsection{Layer-Wise $(L^0,L^1)$--Smooth Case}\label{sec:gen_smooth_stoch}

As in \Cref{sec:gen_smooth_det}, in the generalized smooth setting we consider {\newalgsmall} without primal compression.

\begin{theorem}\label{thm:ef_gluon_m_no_primal_l0l1_2}
    Let Assumptions \ref{as:lower_bound}, \ref{as:worker_lower_bound}, \ref{as:layer_gen_smoothness}, \ref{as:workers_layer_gen_smoothness} and \ref{as:bounded_layer_var} hold. Let $\{X^k\}_{k=0}^{K-1}$, $K \geq 1$, be the iterates of \Cref{alg:ef_layer_muon_m_dist} run with $\cC_i^k \equiv \cI$ (the identity compressor), $\cC_{i,j}^k \in \mathbb{B}_2(\alpha_D)$, $\beta_i \equiv \beta = \frac{1}{(K+1)^{1/2}}$ and
    \begin{align*}
        0 \leq t_i^k \equiv t_i = \frac{\eta_i}{(K+1)^{3/4}}, \qquad i=1,\ldots,p,
    \end{align*}
    where $\eta_i^2 \leq \min\brac{\frac{(K+1)^{1/2}}{6 (L^1_i)^2}, \frac{(1 - \sqrt{1-\alpha_D}) \underline{\rho}_i (K+1)^{1/2}}{24 (K+1) \sqrt{1-\alpha_D} \bar{\rho}_i (L^1_{i,\max})^2}, \frac{\beta_{\min} \underline{\rho}_i (K+1)^{1/2}}{24 \bar{\rho}_i (L_{i,\max}^1)^2}, 1}$.
    Then
    \begin{eqnarray*}
        &&\hspace{-8mm}\min_{k=0,\ldots,K} \sum_{i=1}^p \frac{\eta_i}{\frac{1}{p} \sum_{l=1}^p \eta_l} \Exp{\norm{\nabla_i f(X^k)}_{(i) \star}} \\
        &\leq& \frac{3 \Psi^0}{(K+1)^{1/4} \frac{1}{p} \sum_{l=1}^p \eta_l}
        + \frac{6}{(K+1)^{1/2}} \sum_{i=1}^p \frac{\eta_i \bar{\rho}_i}{\frac{1}{p} \sum_{l=1}^p \eta_l} \Exp{\norm{\nabla_i f(X^0) - M_i^0}_2} \\
        &&+ \parens{\frac{8}{(K+1)^{1/4}} + \frac{8 \sqrt{1-\alpha_D}}{(1 - \sqrt{1-\alpha_D}) (K+1)^{3/4}}} \frac{1}{n} \sum_{j=1}^n \frac{\max_{i\in[p]} \eta_i^2 \frac{\bar{\rho}_i}{\underline{\rho}_i} (L_{i,j}^1)^2}{\frac{1}{p} \sum_{l=1}^p \eta_l} \parens{f^{\star} - f_j^{\star}} \\
        &&+ \sum_{i=1}^p \frac{\eta_i^2}{\frac{1}{p} \sum_{l=1}^p \eta_l} \parens{\frac{L_i^0}{(K+1)^{3/4}} + \frac{4 \bar{\rho}_i \bar{L}^0_i}{\underline{\rho}_i (K+1)^{1/4}} + \frac{4 \bar{\rho}_i \sqrt{1-\alpha_D} \bar{L}^0_i}{\underline{\rho}_i (1 - \sqrt{1-\alpha_D}) (K+1)^{3/4}}} \\
        &&+ \sum_{i=1}^p \frac{\eta_i \bar{\rho}_i \sigma_i}{\frac{1}{p} \sum_{l=1}^p \eta_l} \parens{\frac{4 \sqrt{1-\alpha_D}}{(1 - \sqrt{1-\alpha_D}) (K+1)^{1/2}} + \frac{2}{\sqrt{n} (K+1)^{1/4}}},
    \end{eqnarray*}
    where $M_i^0 \eqdef \frac{1}{n} \sum_{j=1}^n M_{i,j}^0$ and
    \begin{eqnarray*}
        \Psi^0 &\eqdef& f(X^0) - f^{\star} + \sum_{i=1}^p \frac{2 t_i \bar{\rho}_i}{1 - \sqrt{1-\alpha_D}} \frac{1}{n} \sum_{j=1}^n \Exp{\norm{M_{i,j}^0 - G_{i,j}^0}_2} \\
        &&+ \sum_{i=1}^p \frac{2 t_i \bar{\rho}_i \sqrt{1-\alpha_D}}{1 - \sqrt{1-\alpha_D}} \frac{1}{n} \sum_{j=1}^n \Exp{\norm{\nabla_i f_j(X^0) - M_{i,j}^0}_2}.
    \end{eqnarray*}
\end{theorem}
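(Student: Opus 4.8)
The plan is to follow the architecture of the proof of \Cref{thm:ef_gluon_no_p_l0l1} (the deterministic layer-wise $(L^0,L^1)$--smooth result), upgrading it to momentum and stochastic gradients exactly as \Cref{thm:ef_layer_gluon_m} upgrades \Cref{thm:ef_gluon_det_sq}. Since there is no primal compression ($\cC_i^k\equiv\cI$), the update $S_i^k = X_i^{k+1}-W_i^k$ forces $W_i^{k+1}=X_i^{k+1}$, so all gradients are evaluated at the iterates. First I would apply \Cref{lemma:descent2} to the LMO step, producing
\[
 f(X^{k+1}) \le f(X^k) + \sum_{i=1}^p 2 t_i \norm{\nabla_i f(X^k) - G_i^k}_{(i)\star} - \sum_{i=1}^p t_i \norm{\nabla_i f(X^k)}_{(i)\star} + \sum_{i=1}^p \frac{L_i^0 + L_i^1 \norm{\nabla_i f(X^k)}_{(i)\star}}{2} t_i^2 ,
\]
and then split the consistency gap as $\norm{\nabla_i f(X^k) - G_i^k}_{(i)\star} \le \bar{\rho}_i\norm{\nabla_i f(X^k) - M_i^k}_2 + \bar{\rho}_i \frac{1}{n}\sum_j \norm{M_{i,j}^k - G_{i,j}^k}_2$ via the triangle inequality, Jensen's inequality, and norm equivalence. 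This exposes three quantities to control: the per-worker compression error $\norm{M_{i,j}^k - G_{i,j}^k}_2$, the per-worker momentum error $\norm{M_{i,j}^k - \nabla_i f_j(X^k)}_2$, and the averaged momentum error $\norm{\nabla_i f(X^k) - M_i^k}_2$.

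Next I would introduce the Lyapunov function
\[
 \Psi^k \eqdef \Exp{f(X^k) - f^\star} + \sum_{i=1}^p A_i \frac{1}{n}\sum_{j=1}^n \Exp{\norm{M_{i,j}^k - G_{i,j}^k}_2} + \sum_{i=1}^p B_i \frac{1}{n}\sum_{j=1}^n \Exp{\norm{M_{i,j}^k - \nabla_i f_j(X^k)}_2} ,
\]
and plug in \Cref{lemma:mg_rec_m_l0l1_no_p} (the compression error contracts at rate $\sqrt{1-\alpha_D}$, with a $\sqrt{1-\alpha_D}\,\beta_i$--weighted feed from the per-worker momentum error, a $t_i$--scaled $(L^0,L^1)$ term, and noise $\sqrt{1-\alpha_D}\,\beta_i\sigma_i$) together with the second inequality of \Cref{lemma:nm_rec_m_l0l12} (the per-worker momentum error contracts at rate $1-\beta_i$, with its own $t_i$--scaled $(L^0,L^1)$ term and noise $\beta_i\sigma_i$). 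Taking $A_i = \frac{2 t_i \bar{\rho}_i}{1-\sqrt{1-\alpha_D}}$ and $B_i = \sqrt{1-\alpha_D}\,A_i$ makes the two error blocks self-reproduce, so after conditioning the recursion collapses to $\Psi^{k+1}\le \Psi^k - \sum_i t_i\norm{\nabla_i f(X^k)}_{(i)\star} + 2\sum_i t_i\bar{\rho}_i\norm{\nabla_i f(X^k)-M_i^k}_2 + \sum_i(\tilde{C}_i\frac1n\sum_j\norm{\nabla_i f_j(X^k)}_{(i)\star} + \tilde{D}_i) t_i + (\text{noise})$, where $\tilde{C}_i,\tilde{D}_i$ aggregate $L_i^0,L_i^1,L_{i,\max}^0,L_{i,\max}^1,\bar{L}_i^0$ with the $\alpha_D$-- and $\beta_i$--dependent factors.

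The crucial step is disposing of the leftover gradient-norm terms. I would apply \Cref{lemma:layer_gen_smooth2} per worker with weights proportional to $t_i\tilde{C}_i$, converting $\frac1n\sum_j\sum_i t_i\tilde{C}_i\norm{\nabla_i f_j(X^k)}_{(i)\star}$ into $4\max_i(t_i\tilde{C}_i L_{i,j}^1)$ times $\frac1n\sum_j(f_j(X^k)-f^\star)=f(X^k)-f^\star\le\Psi^k$, plus a heterogeneity term $\propto f^\star-f_j^\star$ and an $L^0$--residual; this yields $\Exp{\Psi^{k+1}}\le(1+a_1)\Exp{\Psi^k} - \sum_i t_i\Exp{\norm{\nabla_i f(X^k)}_{(i)\star}} + 2\sum_i t_i\bar{\rho}_i\Exp{\norm{\nabla_i f(X^k)-M_i^k}_2} + a_2$. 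The prescribed upper bounds on $\eta_i^2$ are exactly what keeps the cumulative multiplicative factor from $(1+a_1)$ over $K+1$ steps at $\cO(1)$ (absorbed into the numerical constants, so that no $\exp$ appears in the final bound) and all the auxiliary coefficients ($\tilde{D}_i$, the contraction margins, and the surplus in the $\frac{1}{2}L_i^1 t_i^2$--type terms) nonnegative. I would then invoke the recursion-unrolling lemma already used in the deterministic proof (\Cref{lemma:rec2}) to extract $\min_{k\le K}\sum_i t_i\Exp{\norm{\nabla_i f(X^k)}_{(i)\star}}$, bounded by $\frac{1}{K+1}$ times $\Psi^0$, $a_2$, and $\sum_k 2\sum_i t_i\bar{\rho}_i\Exp{\norm{\nabla_i f(X^k)-M_i^k}_2}$. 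This last sum I would estimate with the first inequality of \Cref{lemma:nm_rec_m_l0l12} (the Cutkosky--Mehta-type unrolled bound), contributing the $\sqrt n$--reduced noise $\sigma_i\sqrt{\beta_i/n}$, the initialization $\Exp{\norm{\nabla_i f(X^0)-M_i^0}_2}$, and one more gradient-dependent piece again routed through \Cref{lemma:layer_gen_smooth2} into $\Psi^k$. Substituting $t_i=\eta_i/(K+1)^{3/4}$ and $\beta_i=1/(K+1)^{1/2}$, collecting powers of $K+1$, folding the initial compression and momentum errors into $\Psi^0$, and dividing by $\frac1p\sum_l\eta_l$ then produces the stated bound; \Cref{thm:ef_gluon_m_no_primal_l0l1_2_p1} follows by setting $p=1$ with the stated initialization.

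The main obstacle is the bookkeeping of three coupled error quantities while the $(L^0,L^1)$ geometry re-injects $\norm{\nabla_i f_j(X^k)}_{(i)\star}$ at every level of the recursion (the descent lemma, the compression-error recursion, and both momentum-error recursions). Each injection must be absorbed back into $\Psi^k$ with a coefficient small enough that the multiplicative growth over $K+1$ steps stays $\cO(1)$ — which is precisely what the $\eta_i^2$ constraints, the $\beta_i=(K+1)^{-1/2}$ and $t_i=\eta_i(K+1)^{-3/4}$ scalings, and the sub-unit contraction $\sqrt{1-\alpha_D}<1$ are tuned to guarantee — and the averaged momentum error must be treated outside the Lyapunov recursion (via the unrolled bound) to retain the $\sqrt n$ variance reduction, all without double-counting.
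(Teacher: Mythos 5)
Your proposal is correct and follows essentially the same route as the paper's proof: Descent Lemma II, the triangle-inequality/norm-equivalence split into compression error, per-worker momentum error, and averaged momentum error, the identical Lyapunov function with $A_i = \frac{2 t_i \bar{\rho}_i}{1-\sqrt{1-\alpha_D}}$, $B_i = \sqrt{1-\alpha_D}\,A_i$, the averaged momentum error kept outside the Lyapunov recursion via the unrolled Cutkosky--Mehta-type bound to preserve the $\sqrt{n}$ reduction, and the gradient-norm injections converted to function gaps via the generalized-smoothness lemmas with the $\eta_i^2$ constraints keeping the multiplicative growth $\cO(1)$. The only cosmetic deviation is that the paper does not literally reuse \Cref{lemma:rec2} but runs a bespoke weighting-sequence telescoping (with weight ratio $1 + C_1 + C_2/\beta_{\min}$) to accommodate the history-dependent sums $\sum_{l<k}(1-\beta_{\min})^{k-l}\Psi^l$ produced by the unrolled momentum bound, which is exactly the adjustment your plan implicitly requires.
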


\begin{corollary}\label{cor:ef_gluon_m_no_primal_l0l1_2}
    Let the assumptions of \Cref{thm:ef_gluon_m_no_primal_l0l1_2} hold and let $\{X^k\}_{k=0}^{K-1}$, $K \geq 1$, be the iterates of \Cref{alg:ef_layer_muon_m_dist} initialized with $M_{i,j}^0 = \nabla_i f_j(X^0; \xi_j^0)$, $G_{i,j}^0 = \cC_{i,j}^0(\nabla_i f_j(X^0; \xi_j^0))$, $j\in[n]$, and run with $\cC_i^k \equiv \cI$ (the identity compressor), $\cC_{i,j}^k \in \mathbb{B}_2(\alpha_D)$, $\beta_i \equiv \beta = \frac{1}{(K+1)^{1/2}}$ and
    \begin{align*}
        0 \leq t_i^k \equiv t_i = \frac{\eta_i}{(K+1)^{3/4}}, \qquad i=1,\ldots,p,
    \end{align*}
    where $\eta_i^2 \leq \min\brac{\frac{(K+1)^{1/2}}{6 (L^1_i)^2}, \frac{(1 - \sqrt{1-\alpha_D}) \underline{\rho}_i (K+1)^{1/2}}{24 (K+1) \sqrt{1-\alpha_D} \bar{\rho}_i (L^1_{i,\max})^2}, \frac{\beta_{\min} \underline{\rho}_i (K+1)^{1/2}}{24 \bar{\rho}_i (L_{i,\max}^1)^2}, 1}$.
    Then, the result in \Cref{thm:ef_layer_gluon_m} guarantees that
    \begin{eqnarray*}
        &&\hspace{-8mm}\min_{k=0,\ldots,K} \sum_{i=1}^p \frac{\eta_i}{\frac{1}{p} \sum_{l=1}^p \eta_l} \Exp{\norm{\nabla_i f(X^k)}_{(i) \star}} \\
        &\leq& \frac{3}{(K+1)^{1/4} \frac{1}{p} \sum_{l=1}^p \eta_l} \parens{f(X^0) - f^{\star} + \sum_{i=1}^p \frac{4 \sqrt{1-\alpha_D} \eta_i \bar{\rho}_i \sigma_i}{(K+1)^{3/4} (1 - \sqrt{1-\alpha_D})}} \\
        &&+ \frac{6}{(K+1)^{1/2}} \sum_{i=1}^p \frac{\bar{\rho}_i \eta_i \sigma_i}{\sqrt{n} \frac{1}{p} \sum_{l=1}^p \eta_l} \\
        &&+ \parens{\frac{8}{(K+1)^{1/4}} + \frac{8 \sqrt{1-\alpha_D}}{(K+1)^{3/4} (1 - \sqrt{1-\alpha_D})}} \frac{1}{n} \sum_{j=1}^n \frac{\max_{i\in[p]} \eta_i^2 \frac{\bar{\rho}_i}{\underline{\rho}_i} (L_{i,j}^1)^2}{\frac{1}{p} \sum_{l=1}^p \eta_l} \parens{f^{\star} - f_j^{\star}} \\
        &&+ \sum_{i=1}^p \frac{\eta_i^2}{\frac{1}{p} \sum_{l=1}^p \eta_l} \parens{\frac{L_i^0}{(K+1)^{3/4}} + \frac{4 \bar{\rho}_i \bar{L}^0_i}{\underline{\rho}_i (K+1)^{1/4}} + \frac{4 \bar{\rho}_i \sqrt{1-\alpha_D} \bar{L}^0_i}{\underline{\rho}_i (K+1)^{3/4} (1 - \sqrt{1-\alpha_D})}} \\
        &&+ \sum_{i=1}^p \frac{\eta_i \bar{\rho}_i \sigma_i}{\frac{1}{p} \sum_{l=1}^p \eta_l} \parens{\frac{4 \sqrt{1-\alpha_D}}{(K+1)^{1/2} (1 - \sqrt{1-\alpha_D})} + \frac{2}{\sqrt{n} (K+1)^{1/4}}}.
    \end{eqnarray*}
\end{corollary}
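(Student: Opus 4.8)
The plan is to follow the template of the deterministic $(L^0,L^1)$ analysis (\Cref{thm:ef_gluon_no_p_l0l1}), augmented by the momentum-layer bookkeeping already developed for \Cref{alg:ef_layer_muon_m_dist} in \Cref{lemma:mg_rec_m_l0l1_no_p} and \Cref{lemma:nm_rec_m_l0l12}. I would begin from the LMO descent estimate of \Cref{lemma:descent2},
\[
f(X^{k+1}) \le f(X^k) + \sum_{i=1}^p 2 t_i \norm{\nabla_i f(X^k) - G_i^k}_{(i)\star} - \sum_{i=1}^p t_i \norm{\nabla_i f(X^k)}_{(i)\star} + \sum_{i=1}^p \tfrac{L^0_i + L^1_i \norm{\nabla_i f(X^k)}_{(i)\star}}{2} t_i^2,
\]
and decompose the estimator gap using $M_i^k \eqdef \tfrac1n\sum_j M_{i,j}^k$ as $\norm{\nabla_i f(X^k) - G_i^k}_{(i)\star} \le \norm{\nabla_i f(X^k) - M_i^k}_{(i)\star} + \tfrac1n\sum_j \norm{M_{i,j}^k - G_{i,j}^k}_{(i)\star}$, passing to the Euclidean norm via the equivalence constants $\bar{\rho}_i$. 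The averaged momentum gap $\norm{\nabla_i f(X^k)-M_i^k}_2$ is kept outside the potential and controlled by unrolling the first recursion of \Cref{lemma:nm_rec_m_l0l12}; the per-worker gaps are tracked inside it.

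I would then introduce the potential
\[
\Psi^k \eqdef f(X^k) - f^\star + \sum_{i=1}^p A_i \tfrac1n \sum_{j=1}^n \Exp{\norm{M_{i,j}^k - G_{i,j}^k}_2} + \sum_{i=1}^p B_i \tfrac1n \sum_{j=1}^n \Exp{\norm{\nabla_i f_j(X^k) - M_{i,j}^k}_2},
\]
substitute \Cref{lemma:mg_rec_m_l0l1_no_p} for the $M-G$ gap (which contracts by $\sqrt{1-\alpha_D}$ and feeds a $\sqrt{1-\alpha_D}\,\beta_i$ multiple of the per-worker momentum gap) and the second recursion of \Cref{lemma:nm_rec_m_l0l12} for that momentum gap (which contracts by $1-\beta_i$), and match coefficients. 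This forces $A_i = \tfrac{2 t_i \bar{\rho}_i}{1-\sqrt{1-\alpha_D}}$ (from $2 t_i \bar{\rho}_i + \sqrt{1-\alpha_D}\,A_i = A_i$) and $B_i = \sqrt{1-\alpha_D}\,A_i$ (from $\sqrt{1-\alpha_D}\,\beta_i A_i + (1-\beta_i) B_i = B_i$) -- precisely the $\Psi^0$ in the statement. After cancellation one is left with $\Psi^{k+1} \le \Psi^k - \sum_i t_i \norm{\nabla_i f(X^k)}_{(i)\star} + (\text{residual})$, the residual collecting the $\tfrac{L^1_i t_i^2}{2}\norm{\nabla_i f(X^k)}_{(i)\star}$ term from \Cref{lemma:descent2} (which is $\le \tfrac{t_i}{2}\norm{\nabla_i f(X^k)}_{(i)\star}$ since $L^1_i t_i \le 1$ under the $\eta_i$ bound), the $A_i$- and $B_i$-weighted $\tfrac{t_i}{\underline{\rho}_i}(L^0_{i,j} + L^1_{i,j}\norm{\nabla_i f_j(X^k)}_{(i)\star})$ terms, the unrolled averaged-momentum contributions (an $\norm{\nabla_i f(X^0)-M_i^0}_2$ geometric term, a $\tfrac{t_i \bar{L}^0_i}{\beta_i \underline{\rho}_i}$ term, a geometrically weighted sum $\tfrac{t_i}{\underline{\rho}_i}\sum_{l\le k}(1-\beta_i)^{k-l}\norm{\nabla_i f_j(X^l)}_{(i)\star}$, and an $\sigma_i\sqrt{\beta_i/n}$ term), and the $\tfrac{L^0_i}{2}t_i^2$ and $\beta_i\sigma_i$ variance terms.

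The main obstacle is absorbing every $\norm{\nabla_i f_j(X^l)}_{(i)\star}$ and $\norm{\nabla_i f(X^k)}_{(i)\star}$ that appears with a positive sign in the generalized-smoothness recursions into the single negative term $-\sum_i t_i\norm{\nabla_i f(X^k)}_{(i)\star}$. The device is \Cref{lemma:layer_gen_smooth2} applied with weights $x_i \propto \eta_i^2$ (after summing over $k$ turns $t_i^2/\beta_i$ into $\eta_i^2/(K+1)$), which bounds $\sum_i x_i\norm{\nabla_i f_j(X^l)}_{(i)\star}$ by $4\max_i(x_i L^1_{i,j})(f_j(X^l)-f^\star) + 4\max_i(x_i L^1_{i,j})(f^\star-f_j^\star) + \tfrac{\sum_i x_i^2 L^0_{i,j}}{\max_i(x_i L^1_{i,j})}$; averaging over $j$ turns the first piece into $f(X^l)-f^\star\le\Psi^l$, producing a multiplicative factor $1+a_1$ with $a_1=\cO(1/(K+1))$ -- which is exactly what the three upper bounds on $\eta_i^2$ guarantee. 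The recursion is then in the form $\Psi^{k+1}\le(1+a_1)\Psi^k - c\sum_i t_i\Exp{\norm{\nabla_i f(X^k)}_{(i)\star}} + a_2$ handled by \Cref{lemma:rec2}, giving $\min_k\sum_i t_i\Exp{\norm{\nabla_i f(X^k)}_{(i)\star}} \lesssim \exp((K+1)a_1)\Psi^0 + (K+1)a_2$; dividing by $\tfrac1p\sum_l\eta_l\cdot(K+1)^{-1/4}$ and collecting powers of $K+1$ yields the claimed bound. \Cref{cor:ef_gluon_m_no_primal_l0l1_2} then follows by substituting $M_{i,j}^0=\nabla_i f_j(X^0;\xi_j^0)$, $G_{i,j}^0=\cC_{i,j}^0(M_{i,j}^0)$, using contractivity with the variance bound to get $\Exp{\norm{M_{i,j}^0-G_{i,j}^0}_2}\le\sqrt{1-\alpha_D}(\norm{\nabla_i f_j(X^0)}_2+\sigma_i)$, $\Exp{\norm{\nabla_i f_j(X^0)-M_{i,j}^0}_2}\le\sigma_i$, $\Exp{\norm{\nabla_i f(X^0)-M_i^0}_2}\le\sigma_i/\sqrt n$, and disposing of the residual $\norm{\nabla_i f_j(X^0)}$ with \Cref{lemma:layer_gen_smooth2} once more.
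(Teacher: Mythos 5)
Your proposal retraces the paper's proof of \Cref{thm:ef_gluon_m_no_primal_l0l1_2} in essentially the same way -- the descent estimate of \Cref{lemma:descent2}, the split of $\norm{\nabla_i f(X^k)-G_i^k}_{(i)\star}$ through $M_i^k$ with norm equivalence, the potential with exactly the paper's coefficients $A_i=\frac{2t_i\bar{\rho}_i}{1-\sqrt{1-\alpha_D}}$ and $B_i=\sqrt{1-\alpha_D}\,A_i$, the unrolled averaged-momentum recursion from \Cref{lemma:nm_rec_m_l0l12}, and the absorption of stray gradient norms via \Cref{lemma:layer_gen_smooth2} -- followed by substitution of the initialization, which is all the paper's proof of the corollary itself consists of. So the route is the same; the differences are in three details. (i) After the absorption step the recursion still contains the convolution term $C_2\sum_{l<k}(1-\beta_{\min})^{k-l}\Psi^l$, so \Cref{lemma:rec2} does not apply verbatim as you claim; one needs the weighted-sequence argument with $w^k=w^{k-1}/(1+C_1+C_2/\beta_{\min})$ and an exchange of the order of summation, which is precisely what the paper does in place of \Cref{lemma:rec2}. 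This is a gloss rather than a wrong idea, but it must be spelled out. (ii) Handling $\frac{L^1_i t_i^2}{2}\norm{\nabla_i f(X^k)}_{(i)\star}$ by the crude bound $L^1_i t_i\le 1$ sacrifices half of the negative term $-\sum_i t_i\norm{\nabla_i f(X^k)}_{(i)\star}$ and therefore doubles every constant in the final display; the paper instead converts this term via \Cref{lemma:layer_gen_smooth3} into a multiple of $\delta^k$ plus $\sum_i t_i^2 L_i^0$, keeping the full coefficient $t_i$ and hence the stated constants $3,6,8,\dots$.

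(iii) On the initialization you are actually more careful than the paper: your bound $\Exp{\norm{M^0_{i,j}-G^0_{i,j}}_2}\le\sqrt{1-\alpha_D}\parens{\norm{\nabla_i f_j(X^0)}_2+\sigma_i}$ is the correct consequence of contractivity, whereas the paper's own computation of $\tilde{S}_i^0$ discards $\norm{\nabla_i f_j(X^0)}_2$ through an inequality that does not hold in general. However, disposing of this residual with \Cref{lemma:layer_gen_smooth2}, as you propose, injects additional (lower-order) terms involving $f(X^0)-f^{\star}$, $f^{\star}-f_j^{\star}$ and $\bar{L}^0_i$ into $\Psi^0$, so your argument delivers a bound with the same dependence on $K$ but not the literal inequality of the corollary; reproducing the stated display requires $\tilde{S}_i^0\le\sqrt{1-\alpha_D}\,\sigma_i$ exactly as the paper asserts. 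In short: same approach, correct in substance up to constants, with one technical step (the weighted recursion) left implicit, and one place where you flag -- correctly -- a step the paper itself takes for granted.
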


\begin{remark}
    \Cref{thm:ef_gluon_m_no_primal_l0l1_2_p1} follows from \Cref{cor:ef_gluon_m_no_primal_l0l1_2} by setting $p=1$:
    \begin{eqnarray*}
        &&\hspace{-8mm}\min_{k=0,\ldots,K} \Exp{\norm{\nabla f(X^k)}_{\star}} \\
        &\leq& \frac{3 \parens{f(X^0) - f^{\star}}}{\eta (K+1)^{1/4}} + \frac{12 \sqrt{1-\alpha_D} \bar{\rho} \sigma}{(1 - \sqrt{1-\alpha_D}) (K+1)}
        + \frac{6 \bar{\rho} \sigma}{\sqrt{n} (K+1)^{1/2}} \\
        &&+ \frac{\eta \bar{\rho}}{\underline{\rho}} \parens{\frac{8}{(K+1)^{1/4}} + \frac{8 \sqrt{1-\alpha_D}}{(1 - \sqrt{1-\alpha_D}) (K+1)^{3/4}}} \frac{1}{n} \sum_{j=1}^n (L_j^1)^2 \parens{f^{\star} - f_j^{\star}} \\
        &&+ \frac{\eta L^0}{(K+1)^{3/4}} + \frac{\eta \bar{\rho}}{\underline{\rho}} \parens{\frac{4}{(K+1)^{1/4}} + \frac{4 \sqrt{1-\alpha_D}}{(1 - \sqrt{1-\alpha_D}) (K+1)^{3/4}}} \bar{L}^0 \\
        &&+ \frac{4 \bar{\rho} \sigma \sqrt{1-\alpha_D}}{(1 - \sqrt{1-\alpha_D}) (K+1)^{1/2}} + \frac{2 \bar{\rho} \sigma}{\sqrt{n} (K+1)^{1/4}} \\
        &\leq& \frac{3 \parens{f(X^0) - f^{\star}}}{\eta (K+1)^{1/4}} + \frac{16 \sqrt{1-\alpha_D} \bar{\rho} \sigma}{(1 - \sqrt{1-\alpha_D}) (K+1)^{1/2}} + \frac{\eta L^0}{(K+1)^{3/4}} + \frac{8 \bar{\rho} \sigma}{\sqrt{n} (K+1)^{1/4}}\\
        &&+ \frac{\eta \bar{\rho}}{\underline{\rho}} \parens{\frac{8}{(K+1)^{1/4}} + \frac{8 \sqrt{1-\alpha_D}}{(1 - \sqrt{1-\alpha_D}) (K+1)^{3/4}}} \parens{\frac{1}{n} \sum_{j=1}^n (L_j^1)^2 \parens{f^{\star} - f_j^{\star}} + \bar{L}^0}.
    \end{eqnarray*}
\end{remark}

\begin{proof}[Proof of \Cref{thm:ef_gluon_m_no_primal_l0l1_2}]
    By \Cref{lemma:descent2} and Jensen's inequality
    \begin{eqnarray*}
        f(X^{k+1}) &\leq& f(X^k) + \sum_{i=1}^p 2 t_i \norm{\nabla_i f(X^k) - G_i^k}_{(i) \star} - \sum_{i=1}^p t_i \norm{\nabla_i f(X^k)}_{(i) \star} \\
        &&+ \sum_{i=1}^p \frac{L^0_i + L^1_i \norm{\nabla_i f(X^k)}_{(i) \star}}{2} t_i^2 \\
        &\leq& f(X^k) + \sum_{i=1}^p \parens{2 t_i \norm{\nabla_i f(X^k) - M_i^k}_{(i) \star} + 2 t_i \norm{M_i^k - G_i^k}_{(i) \star}} \\
        &&- \sum_{i=1}^p t_i \norm{\nabla_i f(X^k)}_{(i) \star}
        + \sum_{i=1}^p \parens{\frac{L^0_i}{2} t_i^2 + \frac{L^1_i}{2} \norm{\nabla_i f(X^k)}_{(i) \star} t_i^2} \\
        &\leq& f(X^k) + \sum_{i=1}^p \parens{2 \bar{\rho}_i t_i \norm{\nabla_i f(X^k) - M_i^k}_2 + 2 \bar{\rho}_i t_i \frac{1}{n} \sum_{j=1}^n \Exp{\norm{M_{i,j}^k - G_{i,j}^k}_2}} \\
        &&- \sum_{i=1}^p t_i \norm{\nabla_i f(X^k)}_{(i) \star}
        + \sum_{i=1}^p \parens{\frac{L^0_i}{2} t_i^2 + \frac{L^1_i}{2} \norm{\nabla_i f(X^k)}_{(i) \star} t_i^2}.
    \end{eqnarray*}
    To simplify the notation, let $\delta^k \eqdef \Exp{f(X^k) - f^{\star}}$,
    $P_i^k \eqdef \Exp{\norm{\nabla_i f(X^k) - M_i^k}_2}$,
    $\tilde{P}_i^k \eqdef \frac{1}{n} \sum_{j=1}^n \Exp{\norm{\nabla_i f_j(X^k) - M_{i,j}^k}_2}$ and
    $\tilde{S}_i^k \eqdef \frac{1}{n} \sum_{j=1}^n \Exp{\norm{M_{i,j}^k - G_{i,j}^k}_2}$.
    Then, Lemmas~\ref{lemma:mg_rec_m_l0l1_no_p},~\ref{lemma:nm_rec_m_l0l12}, and the descent inequality above yield
    \begin{eqnarray}
       \tilde{S}_i^{k+1}
        &\overset{\eqref{lemma:mg_rec_m_l0l1_no_p}}{\leq}& \sqrt{1-\alpha_D} \tilde{S}_i^k
        + \sqrt{1-\alpha_D} \beta_i \tilde{P}_i^k + \frac{t_i \sqrt{1-\alpha_D} \beta_i}{\underline{\rho}_i} \parens{\frac{1}{n} \sum_{j=1}^n L^1_{i,j} \Exp{\norm{\nabla_i f_j(X^k)}_{(i) \star}}} \nonumber \\
        &&+ \frac{t_i \sqrt{1-\alpha_D} \beta_i \bar{L}^0_i}{\underline{\rho}_i} + \sqrt{1-\alpha_D} \beta_i \sigma_i, \label{eq:adibn} \\
        P_i^k
        &\overset{\eqref{lemma:nm_rec_m_l0l12}}{\leq}& (1-\beta_i)^k P_i^0 + \frac{t_i}{\underline{\rho}_i} \frac{1}{n} \sum_{j=1}^n L^1_{i,j} \sum_{l=0}^{k-1} (1-\beta_i)^{k-l} \Exp{\norm{\nabla_i f_j(X^l)}_{(i) \star}} \nonumber \\
        &&+ \frac{t_i \bar{L}^0_i}{\underline{\rho}_i \beta_i} + \sigma_i \sqrt{\frac{\beta_i}{n}}, \label{eq:tudtuvhk} \\
        \tilde{P}_i^{k+1} &\overset{\eqref{lemma:nm_rec_m_l0l12}}{\leq}& (1-\beta_i) \tilde{P}_i^k + \frac{t_i (1-\beta_i)}{\underline{\rho}_i} \frac{1}{n} \sum_{j=1}^n L^1_{i,j} \Exp{\norm{\nabla_i f_j(X^k)}_{(i) \star}} \nonumber \\
        &&+ \frac{t_i (1-\beta_i) \bar{L}^0_i}{\underline{\rho}_i} + \beta_i \sigma_i, \label{eq:syjfgdd} \\
        \delta^{k+1} &\leq& \delta^k - \sum_{i=1}^p t_i \Exp{\norm{\nabla_i f(X^k)}_{(i) \star}} \label{eq:pkdbf} \\
        &&+ \sum_{i=1}^p \parens{2 t_i \bar{\rho}_i P_i^k + 2 t_i \bar{\rho}_i \tilde{S}_i^k + \frac{t_i^2 L^0_i}{2} + \frac{t_i^2 L^1_i}{2} \Exp{\norm{\nabla_i f(X^k)}_{(i) \star}}}. \nonumber
    \end{eqnarray}
    Let $A_i, B_i > 0$ be some constants to be determined later, and define
    \begin{align*}
        \Psi^k \eqdef \delta^k + \sum_{i=1}^p A_i \tilde{S}_i^k + \sum_{i=1}^p B_i \tilde{P}_i^k.
    \end{align*}
    Then, using \eqref{eq:adibn}, \eqref{eq:syjfgdd} and \eqref{eq:pkdbf}
    \begin{eqnarray*}
        &&\hspace{-7mm}\Psi^{k+1} \\
        &=& \delta^{k+1} + \sum_{i=1}^p A_i \tilde{S}_i^{k+1} + \sum_{i=1}^p B_i \tilde{P}_i^{k+1} \\
        &\leq& \delta^k - \sum_{i=1}^p t_i \Exp{\norm{\nabla_i f(X^k)}_{(i) \star}} + \sum_{i=1}^p \parens{2 t_i \bar{\rho}_i P_i^k + 2 t_i \bar{\rho}_i \tilde{S}_i^k + \frac{t_i^2 L^0_i}{2} + \frac{t_i^2 L^1_i}{2} \Exp{\norm{\nabla_i f(X^k)}_{(i) \star}}} \\
        &&+ \sum_{i=1}^p A_i \parens{\sqrt{1-\alpha_D} \tilde{S}_i^k
        + \sqrt{1-\alpha_D} \beta_i \tilde{P}_i^k + \frac{t_i \sqrt{1-\alpha_D} \beta_i}{\underline{\rho}_i} \parens{\frac{1}{n} \sum_{j=1}^n L^1_{i,j} \Exp{\norm{\nabla_i f_j(X^k)}_{(i) \star}}}} \\
        &&+ \sum_{i=1}^p A_i \parens{\frac{t_i \sqrt{1-\alpha_D} \beta_i \bar{L}^0_i}{\underline{\rho}_i} + \sqrt{1-\alpha_D} \beta_i \sigma_i} \\
        &&+ \sum_{i=1}^p B_i \parens{(1-\beta_i) \tilde{P}_i^k + \frac{t_i (1-\beta_i)}{\underline{\rho}_i} \parens{\frac{1}{n} \sum_{j=1}^n L^1_{i,j} \Exp{\norm{\nabla_i f_j(X^k)}_{(i) \star}}} + \frac{t_i (1-\beta_i) \bar{L}^0_i}{\underline{\rho}_i} + \beta_i \sigma_i} \\
        &=& \delta^k - \sum_{i=1}^p t_i \Exp{\norm{\nabla_i f(X^k)}_{(i) \star}} + \sum_{i=1}^p \parens{2 t_i \bar{\rho}_i + A_i \sqrt{1-\alpha_D}} \tilde{S}_i^k \\
        &&+ \sum_{i=1}^p \parens{A_i \sqrt{1-\alpha_D} \beta_i + B_i (1-\beta_i)} \tilde{P}_i^k
        + \sum_{i=1}^p 2 t_i \bar{\rho}_i P_i^k 
        + \sum_{i=1}^p \frac{t_i^2 L^1_i}{2} \Exp{\norm{\nabla_i f(X^k)}_{(i) \star}} \\
        &&+ \sum_{i=1}^p \frac{t_i}{\underline{\rho}_i} \parens{A_i \sqrt{1-\alpha_D} \beta_i + B_i (1-\beta_i)} \parens{\frac{1}{n} \sum_{j=1}^n L^1_{i,j} \Exp{\norm{\nabla_i f_j(X^k)}_{(i) \star}}} \\
        &&+ \sum_{i=1}^p \frac{t_i^2 L^0_i}{2} + \sum_{i=1}^p A_i \frac{t_i \sqrt{1-\alpha_D} \beta_i \bar{L}^0_i}{\underline{\rho}_i} + \sum_{i=1}^p B_i \frac{t_i (1-\beta_i) \bar{L}^0_i}{\underline{\rho}_i} \\
        &&+ \sum_{i=1}^p A_i \sqrt{1-\alpha_D} \beta_i \sigma_i + \sum_{i=1}^p B_i \beta_i \sigma_i.
    \end{eqnarray*}
    Taking $A_i = \frac{2 t_i \bar{\rho}_i}{1 - \sqrt{1-\alpha_D}}$ and $B_i = A_i \sqrt{1-\alpha_D} = \frac{2 t_i \bar{\rho} \sqrt{1-\alpha_D}}{1 - \sqrt{1-\alpha_D}}$, we obtain
    \begin{align*}
        2 t_i \bar{\rho} + A_i \sqrt{1-\alpha_D} = 2 t_i \bar{\rho} + \frac{2 t_i \bar{\rho}}{1 - \sqrt{1-\alpha_D}} \sqrt{1-\alpha_D} &= A_i, \\
        A_i \sqrt{1-\alpha_D} \beta_i + B_i (1-\beta_i)
        = A_i \sqrt{1-\alpha_D} \beta_i + A_i \sqrt{1-\alpha_D} (1-\beta_i)
        &= B_i.
    \end{align*}
    Consequently,
    \begin{eqnarray}\label{eq:adfbigao}
        &&\hspace{-7mm}\Psi^{k+1} \nonumber \\
        &\leq& \delta^k - \sum_{i=1}^p t_i \Exp{\norm{\nabla_i f(X^k)}_{(i) \star}}
        + \sum_{i=1}^p A_i \tilde{S}_i^k + \sum_{i=1}^p B_i \tilde{P}_i^k + \sum_{i=1}^p 2 t_i \bar{\rho}_i P_i^k \nonumber \\
        &&+ \sum_{i=1}^p \frac{t_i^2 L^1_i}{2} \Exp{\norm{\nabla_i f(X^k)}_{(i) \star}} + \sum_{i=1}^p \frac{2 t_i^2 \bar{\rho}_i \sqrt{1-\alpha_D}}{\underline{\rho}_i (1 - \sqrt{1-\alpha_D})} \parens{\frac{1}{n} \sum_{j=1}^n L^1_{i,j} \Exp{\norm{\nabla_i f_j(X^k)}_{(i) \star}}} \nonumber \\
        &&+ \sum_{i=1}^p\frac{ t_i^2 L^0_i}{2} + \sum_{i=1}^p \frac{2 t_i^2 \bar{\rho}_i \sqrt{1-\alpha_D} \bar{L}^0_i}{\underline{\rho}_i (1 - \sqrt{1-\alpha_D})}
        + \sum_{i=1}^p \frac{4 t_i \bar{\rho}_i \sqrt{1-\alpha_D} \beta_i \sigma_i}{1 - \sqrt{1-\alpha_D}} \nonumber \\
        &\overset{\eqref{eq:tudtuvhk}}{\leq}& \delta^k - \sum_{i=1}^p t_i \Exp{\norm{\nabla_i f(X^k)}_{(i) \star}}
        + \sum_{i=1}^p A_i \tilde{S}_i^k + \sum_{i=1}^p B_i \tilde{P}_i^k \nonumber  \\
        &&+ \sum_{i=1}^p 2 t_i \bar{\rho}_i \parens{(1-\beta_i)^k P_i^0 + \frac{t_i}{\underline{\rho}_i} \frac{1}{n} \sum_{j=1}^n L^1_{i,j} \sum_{l=0}^{k-1} (1-\beta_i)^{k-l} \Exp{\norm{\nabla_i f_j(X^l)}_{(i) \star}} + \frac{t_i \bar{L}^0_i}{\underline{\rho}_i \beta_i} + \sigma_i \sqrt{\frac{\beta_i}{n}}} \nonumber \\
        &&+ \sum_{i=1}^p \frac{t_i^2 L^1_i}{2} \Exp{\norm{\nabla_i f(X^k)}_{(i) \star}} + \sum_{i=1}^p \frac{2 t_i^2 \bar{\rho}_i \sqrt{1-\alpha_D}}{\underline{\rho}_i (1 - \sqrt{1-\alpha_D})} \parens{\frac{1}{n} \sum_{j=1}^n L^1_{i,j} \Exp{\norm{\nabla_i f_j(X^k)}_{(i) \star}}} \nonumber \\
        &&+ \sum_{i=1}^p \frac{t_i^2 L^0_i}{2} + \sum_{i=1}^p \frac{2 t_i^2 \bar{\rho}_i \sqrt{1-\alpha_D} \bar{L}^0_i}{\underline{\rho}_i (1 - \sqrt{1-\alpha_D})}
        + \sum_{i=1}^p \frac{4 t_i \bar{\rho}_i \sqrt{1-\alpha_D} \beta_i \sigma_i}{1 - \sqrt{1-\alpha_D}} \nonumber \\
        &=& \Psi^k - \sum_{i=1}^p t_i \Exp{\norm{\nabla_i f(X^k)}_{(i) \star}}
        + \sum_{i=1}^p 2 t_i \bar{\rho}_i (1-\beta_i)^k P_i^0 + \frac{1}{2} \sum_{i=1}^p t_i^2 L^1_i \Exp{\norm{\nabla_i f(X^k)}_{(i) \star}} \nonumber \\
        &&+ 2 \sum_{i=1}^p \frac{t_i^2 \bar{\rho}_i}{\underline{\rho}_i} \sum_{l=0}^{k-1} (1-\beta_i)^{k-l} \parens{\frac{1}{n} \sum_{j=1}^n L^1_{i,j} \Exp{\norm{\nabla_i f_j(X^l)}_{(i) \star}}} \nonumber \\
        &&+ \frac{2 \sqrt{1-\alpha_D}}{1 - \sqrt{1-\alpha_D}} \sum_{i=1}^p \frac{t_i^2 \bar{\rho}_i}{\underline{\rho}_i} \parens{\frac{1}{n} \sum_{j=1}^n L^1_{i,j} \Exp{\norm{\nabla_i f_j(X^k)}_{(i) \star}}}
        + \sum_{i=1}^p \frac{t_i^2 L^0_i}{2} \nonumber \\
        &&+ \sum_{i=1}^p \frac{2 t_i^2 \bar{\rho}_i \bar{L}^0_i}{\underline{\rho}_i \beta_i} + \sum_{i=1}^p \frac{2 t_i^2 \bar{\rho}_i \sqrt{1-\alpha_D} \bar{L}^0_i}{\underline{\rho}_i (1 - \sqrt{1-\alpha_D})}
        + \sum_{i=1}^p \frac{4 t_i \bar{\rho}_i \sqrt{1-\alpha_D} \beta_i \sigma_i}{1 - \sqrt{1-\alpha_D}} + \sum_{i=1}^p 2 t_i \bar{\rho}_i \sigma_i \sqrt{\frac{\beta_i}{n}}.
    \end{eqnarray}
    Let us bound the terms involving the norms of the gradients. Using \Cref{lemma:layer_gen_smooth3}, we get
    \begin{eqnarray*}
        \sum_{i=1}^p t_i^2 L^1_i \Exp{\norm{\nabla_i f(X^k)}_{(i) \star}}
        &\overset{\eqref{lemma:layer_gen_smooth3}}{\leq}& 4 \max_{i\in[p]} (t_i^2 (L^1_i)^2) \Exp{f(X^k) - f^{\star}} + \frac{\sum_{i=1}^p (t_i^2 L^1_i)^2 L_i^0}{\max_{i\in[p]} (t_i^2 (L^1_i)^2)} \\
        &\leq& 4 \max_{i\in[p]} (t_i^2 (L^1_i)^2) \delta^k + \sum_{i=1}^p t_i^2 L_i^0.
    \end{eqnarray*}
    Similarly, \Cref{lemma:layer_gen_smooth2} gives
    \begin{eqnarray*}
        &&\hspace{-8mm}\sum_{i=1}^p \frac{t_i^2 \bar{\rho}_i}{\underline{\rho}_i} \sum_{l=0}^{k-1} (1-\beta_i)^{k-l} \frac{1}{n} \sum_{j=1}^n L^1_{i,j} \Exp{\norm{\nabla_i f_j(X^l)}_{(i) \star}} \\
        &=& \frac{1}{n} \sum_{j=1}^n \sum_{l=0}^{k-1} \sum_{i=1}^p \frac{t_i^2 \bar{\rho}_i}{\underline{\rho}_i} (1-\beta_i)^{k-l} L^1_{i,j} \Exp{\norm{\nabla_i f_j(X^l)}_{(i) \star}} \\
        &\overset{\eqref{lemma:layer_gen_smooth2}}{\leq}& \frac{1}{n} \sum_{j=1}^n \sum_{l=0}^{k-1} \parens{4 \max_{i\in[p]} \parens{\frac{t_i^2 \bar{\rho}_i}{\underline{\rho}_i} (1-\beta_i)^{k-l} (L_{i,j}^1)^2} \Exp{f_j(X^l) - f^{\star}}} \\
        &&+ \frac{1}{n} \sum_{j=1}^n \sum_{l=0}^{k-1} \parens{4 \max_{i\in[p]} \parens{\frac{t_i^2 \bar{\rho}_i}{\underline{\rho}_i} (1-\beta_i)^{k-l} (L_{i,j}^1)^2} \parens{f^{\star} - f_j^{\star}}} \\
        &&+ \frac{1}{n} \sum_{j=1}^n \sum_{l=0}^{k-1} \parens{\frac{\sum_{i=1}^p \parens{\frac{t_i^2 \bar{\rho}_i}{\underline{\rho}_i} (1-\beta_i)^{k-l} L^1_{i,j}}^2 L_{i,j}^0}{\max_{i\in[p]} \parens{\frac{t_i^2 \bar{\rho}_i}{\underline{\rho}_i} (1-\beta_i)^{k-l} (L_{i,j}^1)^2}}} \\
        &\leq& \sum_{l=0}^{k-1} 4 \max_{i\in[p], j\in[n]} \parens{\frac{t_i^2 \bar{\rho}_i}{\underline{\rho}_i} (1-\beta_i)^{k-l} (L_{i,j}^1)^2} \delta^l \\
        &&+ \frac{1}{n} \sum_{j=1}^n \sum_{l=0}^{k-1} \parens{4 \max_{i\in[p]} \parens{\frac{t_i^2 \bar{\rho}_i}{\underline{\rho}_i} (1-\beta_i)^{k-l} (L_{i,j}^1)^2} \parens{f^{\star} - f_j^{\star}} + \sum_{i=1}^p \frac{t_i^2 \bar{\rho}_i}{\underline{\rho}_i} (1-\beta_i)^{k-l} L_{i,j}^0}
    \end{eqnarray*}
    and
    \begin{eqnarray*}
        &&\hspace{-8mm}\sum_{i=1}^p \frac{t_i^2 \bar{\rho}_i}{\underline{\rho}_i} \frac{1}{n} \sum_{j=1}^n L^1_{i,j} \Exp{\norm{\nabla_i f_j(X^k)}_{(i) \star}} \\
        &=& \frac{1}{n} \sum_{j=1}^n \sum_{i=1}^p \frac{t_i^2 \bar{\rho}_i L^1_{i,j}}{\underline{\rho}_i} \Exp{\norm{\nabla_i f_j(X^k)}_{(i) \star}} \\
        &\overset{\eqref{lemma:layer_gen_smooth2}}{\leq}& \frac{1}{n} \sum_{j=1}^n \parens{4 \max_{i\in[p]} \frac{t_i^2 \bar{\rho}_i (L^1_{i,j})^2}{\underline{\rho}_i} \Exp{f_j(X^k) - f^{\star}} + 4 \max_{i\in[p]} \frac{t_i^2 \bar{\rho}_i (L^1_{i,j})^2}{\underline{\rho}_i} \parens{f^{\star} - f_j^{\star}}} \\
        &&+ \frac{1}{n} \sum_{j=1}^n \parens{\frac{\sum_{i=1}^p \parens{\frac{t_i^2 \bar{\rho}_i}{\underline{\rho}_i} L^1_{i,j}}^2 L_{i,j}^0}{\max_{i\in[p]} \parens{\frac{t_i^2 \bar{\rho}_i}{\underline{\rho}_i} (L_{i,j}^1)^2}}} \\
        &\leq& 4 \max_{i\in[p], j\in[n]} \parens{\frac{t_i^2 \bar{\rho}_i}{\underline{\rho}_i} (L^1_{i,j})^2} \delta^k + \frac{1}{n} \sum_{j=1}^n \parens{4 \max_{i\in[p]} \frac{t_i^2 \bar{\rho}_i (L^1_{i,j})^2}{\underline{\rho}_i} \parens{f^{\star} - f_j^{\star}} + \sum_{i=1}^p \frac{t_i^2 \bar{\rho}_i L_{i,j}^0}{\underline{\rho}_i}}.
    \end{eqnarray*}
    Substituting these bounds in \eqref{eq:adfbigao}, we obtain
    \begin{eqnarray*}
        &&\hspace{-6mm}\Psi^{k+1} \\
        &\leq& \Psi^k - \sum_{i=1}^p t_i \Exp{\norm{\nabla_i f(X^k)}_{(i) \star}}
        + \sum_{i=1}^p 2 t_i \bar{\rho}_i (1-\beta_i)^k P_i^0 + 2 \max_{i\in[p]} (t_i^2 (L^1_i)^2) \delta^k + \frac{1}{2} \sum_{i=1}^p t_i^2 L_i^0 \\
        &&+ 8 \sum_{l=0}^{k-1} \max_{i\in[p], j\in[n]} \parens{\frac{t_i^2 \bar{\rho}_i}{\underline{\rho}_i} (1-\beta_i)^{k-l} (L_{i,j}^1)^2} \delta^l \\
        &&+ 2 \frac{1}{n} \sum_{j=1}^n \sum_{l=0}^{k-1} \parens{4 \max_{i\in[p]} \parens{\frac{t_i^2 \bar{\rho}_i}{\underline{\rho}_i} (1-\beta_i)^{k-l} (L_{i,j}^1)^2} \parens{f^{\star} - f_j^{\star}} + \sum_{i=1}^p \frac{t_i^2 \bar{\rho}_i}{\underline{\rho}_i} (1-\beta_i)^{k-l} L_{i,j}^0} \\
        &&+ \frac{8 \sqrt{1-\alpha_D}}{1 - \sqrt{1-\alpha_D}} \max_{i\in[p], j\in[n]} \parens{\frac{t_i^2 \bar{\rho}_i}{\underline{\rho}_i} (L^1_{i,j})^2} \delta^k \\
        &&+ \frac{2 \sqrt{1-\alpha_D}}{1 - \sqrt{1-\alpha_D}} \frac{1}{n} \sum_{j=1}^n \parens{4 \max_{i\in[p]} \frac{t_i^2 \bar{\rho}_i (L^1_{i,j})^2}{\underline{\rho}_i} \parens{f^{\star} - f_j^{\star}} + \sum_{i=1}^p \frac{t_i^2 \bar{\rho}_i L_{i,j}^0}{\underline{\rho}_i}} + \sum_{i=1}^p \frac{t_i^2 L^0_i}{2} \\
        &&+ \sum_{i=1}^p \frac{2 t_i^2 \bar{\rho}_i \bar{L}^0_i}{\underline{\rho}_i \beta_i} + \sum_{i=1}^p \frac{2 t_i^2 \bar{\rho}_i \sqrt{1-\alpha_D} \bar{L}^0_i}{\underline{\rho}_i (1 - \sqrt{1-\alpha_D})}
        + \sum_{i=1}^p \frac{4 t_i \bar{\rho}_i \sqrt{1-\alpha_D} \beta_i \sigma_i}{1 - \sqrt{1-\alpha_D}} + \sum_{i=1}^p 2 t_i \bar{\rho}_i \sigma_i \sqrt{\frac{\beta_i}{n}}.
    \end{eqnarray*}
    Since $\delta^k \leq \Psi^k$, it follows that
    \begin{eqnarray*}
        &&\hspace{-6mm}\Psi^{k+1} \\
        &\leq& \parens{1 + 2 \max_{i\in[p]} (t_i^2 (L^1_i)^2) + \frac{8 \sqrt{1-\alpha_D}}{1 - \sqrt{1-\alpha_D}} \max_{i\in[p], j\in[n]} \parens{\frac{t_i^2 \bar{\rho}_i}{\underline{\rho}_i} (L^1_{i,j})^2}} \Psi^k \\
        &&- \sum_{i=1}^p t_i \Exp{\norm{\nabla_i f(X^k)}_{(i) \star}}
        + \sum_{i=1}^p 2 t_i \bar{\rho}_i (1-\beta_i)^k P_i^0 \\
        &&+ 8 \max_{i\in[p], j\in[n]} \parens{\frac{t_i^2 \bar{\rho}_i (L_{i,j}^1)^2}{\underline{\rho}_i}} \sum_{l=0}^{k-1} \max_{i\in[p]} ((1-\beta_i)^{k-l} \Psi^l) \\
        &&+ 8 \frac{1}{n} \sum_{j=1}^n \parens{\max_{i\in[p]} \frac{t_i^2 \bar{\rho}_i (L_{i,j}^1)^2}{\underline{\rho}_i} \sum_{l=0}^{k-1} \max_{i\in[p]} \parens{(1-\beta_i)^{k-l}} \parens{f^{\star} - f_j^{\star}}} \\
        &&+ 2 \sum_{i=1}^p \frac{t_i^2 \bar{\rho}_i \bar{L}_i^0}{\underline{\rho}_i} \sum_{l=0}^{k-1} (1-\beta_i)^{k-l}
        + \frac{8 \sqrt{1-\alpha_D}}{1 - \sqrt{1-\alpha_D}} \frac{1}{n} \sum_{j=1}^n \parens{\max_{i\in[p]} \frac{t_i^2 \bar{\rho}_i (L^1_{i,j})^2}{\underline{\rho}_i} \parens{f^{\star} - f_j^{\star}}} \\
        &&+ \frac{2 \sqrt{1-\alpha_D}}{1 - \sqrt{1-\alpha_D}} \sum_{i=1}^p \frac{t_i^2 \bar{\rho}_i \bar{L}_i^0}{\underline{\rho}_i}
        + \sum_{i=1}^p t_i^2 L_i^0 \\
        &&+ \sum_{i=1}^p \frac{2 t_i^2 \bar{\rho}_i \bar{L}^0_i}{\underline{\rho}_i \beta_i} + \sum_{i=1}^p \frac{2 t_i^2 \bar{\rho}_i \sqrt{1-\alpha_D} \bar{L}^0_i}{\underline{\rho}_i (1 - \sqrt{1-\alpha_D})}
        + \sum_{i=1}^p \frac{4 t_i \bar{\rho}_i \sqrt{1-\alpha_D} \beta_i \sigma_i}{1 - \sqrt{1-\alpha_D}} + \sum_{i=1}^p 2 t_i \bar{\rho}_i \sigma_i \sqrt{\frac{\beta_i}{n}} \\
        &\leq& \parens{1 + \underbrace{2 \max_{i\in[p]} (t_i^2 (L^1_i)^2) + \frac{8 \sqrt{1-\alpha_D}}{1 - \sqrt{1-\alpha_D}} \max_{i\in[p], j\in[n]} \parens{\frac{t_i^2 \bar{\rho}_i (L^1_{i,j})^2}{\underline{\rho}_i}}}_{\eqdef C_1}} \Psi^k \\
        &&- \sum_{i=1}^p t_i \Exp{\norm{\nabla_i f(X^k)}_{(i) \star}}
        + \sum_{i=1}^p 2 t_i \bar{\rho}_i (1-\beta_i)^k P_i^0 \\
        &&+ \underbrace{8 \max_{i\in[p], j\in[n]} \parens{\frac{t_i^2 \bar{\rho}_i (L_{i,j}^1)^2}{\underline{\rho}_i}}}_{\eqdef C_2} \sum_{l=0}^{k-1} ((1-\beta_{\min})^{k-l} \Psi^l) \\
        &&+ \underbrace{8 \parens{\frac{1}{\beta_{\min}} + \frac{\sqrt{1-\alpha_D}}{1 - \sqrt{1-\alpha_D}}} \frac{1}{n} \sum_{j=1}^n \parens{\max_{i\in[p]} \frac{t_i^2 \bar{\rho}_i (L^1_{i,j})^2}{\underline{\rho}_i} \parens{f^{\star} - f_j^{\star}}}}_{\eqdef C_3} \\
        &&+ \sum_{i=1}^p t_i^2 \underbrace{\parens{L_i^0 + \frac{4 \bar{\rho}_i \bar{L}^0_i}{\underline{\rho}_i \beta_i} + \frac{4 \bar{\rho}_i \sqrt{1-\alpha_D} \bar{L}^0_i}{\underline{\rho}_i (1 - \sqrt{1-\alpha_D})}}}_{\eqdef C_{4,i}}
        + \sum_{i=1}^p t_i \underbrace{\bar{\rho}_i \sigma_i \parens{\frac{4 \sqrt{1-\alpha_D} \beta_i}{1 - \sqrt{1-\alpha_D}} + 2 \sqrt{\frac{\beta_i}{n}}}}_{\eqdef C_{5,i}} \\
        &=& \parens{1 + C_1} \Psi^k - \sum_{i=1}^p t_i \Exp{\norm{\nabla_i f(X^k)}_{(i) \star}}
        + \sum_{i=1}^p 2 t_i \bar{\rho}_i (1-\beta_i)^k P_i^0 \\
        &&+ C_2 \sum_{l=0}^{k-1} ((1-\beta_{\min})^{k-l} \Psi^l) + C_3 + \sum_{i=1}^p t_i^2 C_{4,i}
        + \sum_{i=1}^p t_i C_{5,i}.
    \end{eqnarray*}
    Now, define a weighting sequence $w^k \eqdef \frac{w^{k-1}}{1 + C_1 + \frac{C_2}{\beta_{\min}}}$, where $w^{-1} = 1$. Then, multiplying the above inequality by $w^k$ and summing over the first $K+1$ iterations, we obtain
    \begin{eqnarray*}
        &&\hspace{-6mm}\sum_{k=0}^{K} w^k \Psi^{k+1} \\
        &\leq& \sum_{k=0}^{K} w^k \parens{1 + C_1} \Psi^k + \sum_{k=0}^{K} w^k C_2 \sum_{l=0}^{k-1} ((1-\beta_{\min})^{k-l} \Psi^l) - \sum_{k=0}^{K} w^k \sum_{i=1}^p t_i \Exp{\norm{\nabla_i f(X^k)}_{(i) \star}} \\
        &&+ \sum_{k=0}^{K} w^k \sum_{i=1}^p 2 t_i \bar{\rho}_i (1-\beta_i)^k P_i^0 + \sum_{k=0}^{K} w^k C_3 + \sum_{k=0}^{K} w^k \sum_{i=1}^p t_i^2 C_{4,i} + \sum_{k=0}^{K} w^k \sum_{i=1}^p t_i C_{5,i} \\
        &=& \parens{1 + C_1} \sum_{k=0}^{K} w^k \Psi^k + C_2 \sum_{k=0}^{K} w^k \sum_{l=0}^{k-1} ((1-\beta_{\min})^{k-l} \Psi^l) - \sum_{k=0}^{K} w^k \sum_{i=1}^p t_i \Exp{\norm{\nabla_i f(X^k)}_{(i) \star}} \\
        &&+ \sum_{k=0}^{K} w^k \sum_{i=1}^p 2 t_i \bar{\rho}_i (1-\beta_i)^k P_i^0 + W^K C_3 + W^K \sum_{i=1}^p t_i^2 C_{4,i} + W^K \sum_{i=1}^p t_i C_{5,i}.
    \end{eqnarray*}
    where $W^K \eqdef \sum_{k=0}^{K} w^k$. Since, by definition, $w^k \leq w^{k-1} \leq w^{-1} = 1$, we have
    \begin{eqnarray*}
        &&\hspace{-6mm}\sum_{k=0}^{K} w^k \Psi^{k+1} \\
        &\leq& \parens{1 + C_1} \sum_{k=0}^{K} w^k \Psi^k + C_2 \sum_{k=0}^{K} \sum_{l=0}^{k-1} (w^l (1-\beta_{\min})^{k-l} \Psi^l) - \sum_{k=0}^{K} w^k \sum_{i=1}^p t_i \Exp{\norm{\nabla_i f(X^k)}_{(i) \star}} \\
        &&+ \sum_{k=0}^{K} \sum_{i=1}^p 2 t_i \bar{\rho}_i (1-\beta_i)^k P_i^0 + W^K C_3 + W^K \sum_{i=1}^p t_i^2 C_{4,i} + W^K \sum_{i=1}^p t_i C_{5,i} \\
        &\leq& \parens{1 + C_1} \sum_{k=0}^{K} w^k \Psi^k
        + C_2 \sum_{l=0}^{\infty} (1-\beta_{\min})^l \sum_{k=0}^{K} w^k \Psi^k
        - \sum_{k=0}^{K} w^k \sum_{i=1}^p t_i \Exp{\norm{\nabla_i f(X^k)}_{(i) \star}} \\
        &&+ 2 \sum_{i=1}^p \frac{t_i \bar{\rho}_i}{\beta_i} P_i^0
        + W^K C_3 + W^K \sum_{i=1}^p t_i^2 C_{4,i} + W^K \sum_{i=1}^p t_i C_{5,i} \\
        &=& \parens{1 + C_1 + \frac{C_2}{\beta_{\min}}} \sum_{k=0}^{K} w^k \Psi^k
        - \sum_{k=0}^{K} w^k \sum_{i=1}^p t_i \Exp{\norm{\nabla_i f(X^k)}_{(i) \star}} \\
        &&+ 2 \sum_{i=1}^p \frac{t_i \bar{\rho}_i}{\beta_i} P_i^0
        + W^K C_3 + W^K \sum_{i=1}^p t_i^2 C_{4,i} + W^K \sum_{i=1}^p t_i C_{5,i} \\
        &=& \sum_{k=0}^{K} w^{k-1} \Psi^k
        - \sum_{k=0}^{K} w^k \sum_{i=1}^p t_i \Exp{\norm{\nabla_i f(X^k)}_{(i) \star}}
        + 2 \sum_{i=1}^p \frac{t_i \bar{\rho}_i}{\beta_i} P_i^0 \\
        &&+ W^K C_3 + W^K \sum_{i=1}^p t_i^2 C_{4,i} + W^K \sum_{i=1}^p t_i C_{5,i}.
    \end{eqnarray*}
    Rearranging the terms and dividing by $W^K$ gives
    \begin{eqnarray*}
        &&\hspace{-8mm}\min_{k=0,\ldots,K} \sum_{i=1}^p t_i \Exp{\norm{\nabla_i f(X^k)}_{(i) \star}} \\
        &\leq& \sum_{k=0}^{K} \sum_{i=1}^p \frac{w^k}{W^K} t_i \Exp{\norm{\nabla_i f(X^k)}_{(i) \star}} \\
        &\leq& \frac{1}{W^K} \sum_{k=0}^{K} \parens{w^{k-1} \Psi^k - w^k \Psi^{k+1}}
        + \frac{2}{W^K} \sum_{i=1}^p \frac{t_i \bar{\rho}_i}{\beta_i} P_i^0
        + C_3 + \sum_{i=1}^p t_i^2 C_{4,i} + \sum_{i=1}^p t_i C_{5,i} \\
        &\leq& \frac{\Psi^0}{W^K}
        + \frac{2}{W^K} \sum_{i=1}^p \frac{t_i \bar{\rho}_i}{\beta_i} P_i^0
        + C_3 + \sum_{i=1}^p t_i^2 C_{4,i} + \sum_{i=1}^p t_i C_{5,i}.
    \end{eqnarray*}
    Now, note that
    \begin{align*}
        W^K &= \sum_{k=0}^{K} w^k
        \geq (K+1) w^K
        = \frac{(K+1) w^{-1}}{(1 + C_1 + \frac{C_2}{\beta_{\min}})^{K+1}}
        \geq \frac{K+1}{\exp\parens{(K+1) (C_1 + \frac{C_2}{\beta_{\min}})}}.
    \end{align*}
    Taking $t_i = \frac{\eta_i}{(K+1)^{3/4}}$, where $\eta_i^2 \leq \min\brac{\frac{(K+1)^{1/2}}{6 (L^1_i)^2}, \frac{(1 - \sqrt{1-\alpha_D}) \underline{\rho}_i (K+1)^{1/2}}{24 (K+1) \sqrt{1-\alpha_D} \bar{\rho}_i (L^1_{i,\max})^2}, \frac{\beta_{\min} \underline{\rho}_i (K+1)^{1/2}}{24 \bar{\rho}_i (L_{i,\max}^1)^2}, 1}$ to ensure that
    \begin{align*}
        2 (K+1) \max_{i\in[p]} (t_i^2 (L^1_i)^2) &\leq \frac{1}{3}, \\
        (K+1) \frac{8 \sqrt{1-\alpha_D}}{1 - \sqrt{1-\alpha_D}} \max_{i\in[p], j\in[n]} \parens{\frac{t_i^2 \bar{\rho}_i (L^1_{i,j})^2}{\underline{\rho}_i}} &\leq \frac{1}{3}, \\
        (K+1) \frac{8}{\beta_{\min}} \max_{i\in[p], j\in[n]} \parens{\frac{t_i^2 \bar{\rho}_i (L^1_{i,j})^2}{\underline{\rho}_i}} &\leq \frac{1}{3},
    \end{align*}
    we have $(K+1) (C_1 + \frac{C_2}{\beta_{\min}}) \leq 1$, and so $W^K \geq \frac{K+1}{\exp(1)} \geq \frac{K+1}{3}$. Therefore,
    \begin{eqnarray*}
        &&\hspace{-8mm}\min_{k=0,\ldots,K} \sum_{i=1}^p t_i \Exp{\norm{\nabla_i f(X^k)}_{(i) \star}} \\
        &\leq& \frac{3 \Psi^0}{K+1}
        + \frac{6}{K+1} \sum_{i=1}^p \frac{t_i \bar{\rho}_i}{\beta_i} P_i^0
        + C_3 + \sum_{i=1}^p t_i^2 C_{4,i} + \sum_{i=1}^p t_i C_{5,i} \\
        &=& \frac{3 \Psi^0}{K+1}
        + \frac{6}{K+1} \sum_{i=1}^p \frac{\eta_i \bar{\rho}_i}{\beta_i (K+1)^{3/4}} P_i^0 \\
        &&+ \frac{8}{(K+1)^{3/2}} \parens{\frac{1}{\beta_{\min}} + \frac{\sqrt{1-\alpha_D}}{1 - \sqrt{1-\alpha_D}}} \frac{1}{n} \sum_{j=1}^n \parens{\max_{i\in[p]} \frac{\eta_i^2 \bar{\rho}_i (L_{i,j}^1)^2}{\underline{\rho}_i} \parens{f^{\star} - f_j^{\star}}} \\
        &&+ \sum_{i=1}^p \frac{\eta_i^2}{(K+1)^{3/2}} \parens{L_i^0 + \frac{4 \bar{\rho}_i \bar{L}^0_i}{\underline{\rho}_i \beta_i} + \frac{4 \bar{\rho}_i \sqrt{1-\alpha_D} \bar{L}^0_i}{\underline{\rho}_i (1 - \sqrt{1-\alpha_D})}} \\
        &&+ \sum_{i=1}^p \frac{\eta_i}{(K+1)^{3/4}} \bar{\rho}_i \sigma_i \parens{\frac{4 \sqrt{1-\alpha_D} \beta_i}{1 - \sqrt{1-\alpha_D}} + 2 \sqrt{\frac{\beta_i}{n}}}.
    \end{eqnarray*}
    Lastly, dividing by $\frac{1}{p} \sum_{l=1}^p t_l = \frac{1}{(K+1)^{3/4}} \frac{1}{p} \sum_{l=1}^p \eta_l$ gives
    \begin{eqnarray*}
        &&\hspace{-8mm}\min_{k=0,\ldots,K} \sum_{i=1}^p \frac{\eta_i}{\frac{1}{p} \sum_{l=1}^p \eta_l} \Exp{\norm{\nabla_i f(X^k)}_{(i) \star}} \\
        &\leq& \frac{3 \Psi^0}{(K+1)^{1/4} \frac{1}{p} \sum_{l=1}^p \eta_l}
        + \frac{6}{K+1} \sum_{i=1}^p \frac{\eta_i}{\frac{1}{p} \sum_{l=1}^p \eta_l} \frac{\bar{\rho}_i}{\beta_i} P_i^0 \\
        &&+ \frac{8}{(K+1)^{3/4}} \parens{\frac{1}{\beta_{\min}} + \frac{\sqrt{1-\alpha_D}}{1 - \sqrt{1-\alpha_D}}} \frac{1}{n} \sum_{j=1}^n \frac{\max_{i\in[p]} \frac{\eta_i^2 \bar{\rho}_i (L_{i,j}^1)^2}{\underline{\rho}_i}}{\frac{1}{p} \sum_{l=1}^p \eta_l} \parens{f^{\star} - f_j^{\star}} \\
        &&+ \sum_{i=1}^p \frac{\eta_i^2}{(K+1)^{3/4} \frac{1}{p} \sum_{l=1}^p \eta_l} \parens{L_i^0 + \frac{4 \bar{\rho}_i \bar{L}^0_i}{\underline{\rho}_i \beta_i} + \frac{4 \bar{\rho}_i \sqrt{1-\alpha_D} \bar{L}^0_i}{\underline{\rho}_i (1 - \sqrt{1-\alpha_D})}} \\
        &&+ \sum_{i=1}^p \frac{\eta_i \bar{\rho}_i \sigma_i}{\frac{1}{p} \sum_{l=1}^p \eta_l} \parens{\frac{4 \sqrt{1-\alpha_D} \beta_i}{1 - \sqrt{1-\alpha_D}} + 2 \sqrt{\frac{\beta_i}{n}}} \\
        &=& \frac{3 \Psi^0}{(K+1)^{1/4} \frac{1}{p} \sum_{l=1}^p \eta_l}
        + \frac{6}{(K+1)^{1/2}} \sum_{i=1}^p \frac{\eta_i \bar{\rho}_i}{\frac{1}{p} \sum_{l=1}^p \eta_l} P_i^0 \\
        &&+ \parens{\frac{8}{(K+1)^{1/4}} + \frac{8 \sqrt{1-\alpha_D}}{(1 - \sqrt{1-\alpha_D}) (K+1)^{3/4}}} \frac{1}{n} \sum_{j=1}^n \frac{\max_{i\in[p]} \frac{\eta_i^2 \bar{\rho}_i (L_{i,j}^1)^2}{\underline{\rho}_i}}{\frac{1}{p} \sum_{l=1}^p \eta_l} \parens{f^{\star} - f_j^{\star}} \\
        &&+ \sum_{i=1}^p \frac{\eta_i^2}{\frac{1}{p} \sum_{l=1}^p \eta_l} \parens{\frac{L_i^0}{(K+1)^{3/4}} + \frac{4 \bar{\rho}_i \bar{L}^0_i}{\underline{\rho}_i (K+1)^{1/4}} + \frac{4 \bar{\rho}_i \sqrt{1-\alpha_D} \bar{L}^0_i}{\underline{\rho}_i (1 - \sqrt{1-\alpha_D}) (K+1)^{3/4}}} \\
        &&+ \sum_{i=1}^p \frac{\eta_i \bar{\rho}_i \sigma_i}{\frac{1}{p} \sum_{l=1}^p \eta_l} \parens{\frac{4 \sqrt{1-\alpha_D}}{(1 - \sqrt{1-\alpha_D}) (K+1)^{1/2}} + \frac{2}{\sqrt{n} (K+1)^{1/4}}},
    \end{eqnarray*}
    where in the last equality we set $\beta_i = \frac{1}{(K+1)^{1/2}}$.
\end{proof}

\begin{proof}[Proof of \Cref{cor:ef_gluon_m_no_primal_l0l1_2}]
    Substituting the initialization, we have
    \begin{eqnarray*}
        P_i^0 &\eqdef& \Exp{\norm{\nabla_i f(X^0) - M_i^0}_2}
        = \Exp{\norm{\frac{1}{n} \sum_{j=1}^n \parens{\nabla_i f_j(X^0) - \nabla_i f_j(X^0; \xi_j^0)}}_2} \\
        &\leq& \sqrt{\Exp{\norm{\frac{1}{n} \sum_{j=1}^n \parens{\nabla_i f_j(X^0) - \nabla_i f_j(X^0; \xi_j^0)}}^2_2}}
        \overset{\eqref{as:bounded_layer_var}}{\leq} \frac{\sigma_i}{\sqrt{n}}, \\
        \tilde{P}_i^0 &\eqdef& \frac{1}{n} \sum_{j=1}^n \Exp{\norm{\nabla_i f_j(X^0) - M_{i,j}^0}_2}
        = \frac{1}{n} \sum_{j=1}^n \Exp{\norm{\nabla_i f_j(X^0) - \nabla_i f_j(X^0; \xi_j^0)}_2}
        \leq \sigma_i, \\
        \tilde{S}_i^0 &\eqdef& \frac{1}{n} \sum_{j=1}^n \Exp{\norm{M_{i,j}^0 - G_{i,j}^0}_2}
        = \frac{1}{n} \sum_{j=1}^n \Exp{\norm{\nabla_i f_j(X^0; \xi_j^0) - \cC_{i,j}^0(\nabla_i f_j(X^0; \xi_j^0))}_2} \\
        &\overset{\eqref{def:contractive_compressor}}{\leq}& \sqrt{1-\alpha_D} \frac{1}{n} \sum_{j=1}^n \Exp{\norm{\nabla_i f_j(X^0; \xi_j^0)}_2} \\
        &\leq& \sqrt{1-\alpha_D} \frac{1}{n} \sum_{j=1}^n \Exp{\norm{\nabla_i f_j(X^0; \xi_j^0) - \nabla_i f_j(X^0)}_2}
        \overset{\eqref{as:bounded_layer_var}}{\leq} \sqrt{1-\alpha_D} \sigma_i,
    \end{eqnarray*}
    and hence
    \begin{eqnarray*}
        \Psi^0 &\eqdef& f(X^0) - f^{\star} + \sum_{i=1}^p \frac{2 t_i \bar{\rho}_i}{1 - \sqrt{1-\alpha_D}} \frac{1}{n} \sum_{j=1}^n \Exp{\norm{M_{i,j}^0 - G_{i,j}^0}_2} \\
        &&+ \sum_{i=1}^p \frac{2 t_i \bar{\rho}_i \sqrt{1-\alpha_D}}{1 - \sqrt{1-\alpha_D}} \frac{1}{n} \sum_{j=1}^n \Exp{\norm{\nabla_i f_j(X^0) - M_{i,j}^0}_2} \\
        &\leq& f(X^0) - f^{\star} + \sum_{i=1}^p \frac{2 t_i \bar{\rho}_i}{1 - \sqrt{1-\alpha_D}} \sqrt{1-\alpha_D} \sigma_i
        + \sum_{i=1}^p \frac{2 t_i \bar{\rho}_i \sqrt{1-\alpha_D}}{1 - \sqrt{1-\alpha_D}} \sigma_i \\
        &=& f(X^0) - f^{\star} + \sum_{i=1}^p \frac{4 \sqrt{1-\alpha_D} t_i \bar{\rho}_i \sigma_i}{1 - \sqrt{1-\alpha_D}}.
    \end{eqnarray*}
    Substituting this in the rate, we get
    \begin{eqnarray*}
        &&\hspace{-8mm}\min_{k=0,\ldots,K} \sum_{i=1}^p \frac{\eta_i}{\frac{1}{p} \sum_{l=1}^p \eta_l} \Exp{\norm{\nabla_i f(X^k)}_{(i) \star}} \\
        &\leq& \frac{3}{(K+1)^{1/4} \frac{1}{p} \sum_{l=1}^p \eta_l} \parens{f(X^0) - f^{\star} + \sum_{i=1}^p \frac{4 \sqrt{1-\alpha_D} \eta_i \bar{\rho}_i \sigma_i}{(K+1)^{3/4} (1 - \sqrt{1-\alpha_D})}} \\
        &&+ \frac{6}{(K+1)^{1/2}} \sum_{i=1}^p \frac{\bar{\rho}_i \eta_i \sigma_i}{\sqrt{n} \frac{1}{p} \sum_{l=1}^p \eta_l} \\
        &&+ \parens{\frac{8}{(K+1)^{1/4}} + \frac{8 \sqrt{1-\alpha_D}}{(K+1)^{3/4} (1 - \sqrt{1-\alpha_D})}} \frac{1}{n} \sum_{j=1}^n \frac{\max_{i\in[p]} \eta_i^2 \frac{\bar{\rho}_i}{\underline{\rho}_i} (L_{i,j}^1)^2}{\frac{1}{p} \sum_{l=1}^p \eta_l} \parens{f^{\star} - f_j^{\star}} \\
        &&+ \sum_{i=1}^p \frac{\eta_i^2}{\frac{1}{p} \sum_{l=1}^p \eta_l} \parens{\frac{L_i^0}{(K+1)^{3/4}} + \frac{4 \bar{\rho}_i \bar{L}^0_i}{\underline{\rho}_i (K+1)^{1/4}} + \frac{4 \bar{\rho}_i \sqrt{1-\alpha_D} \bar{L}^0_i}{\underline{\rho}_i (K+1)^{3/4} (1 - \sqrt{1-\alpha_D})}} \\
        &&+ \sum_{i=1}^p \frac{\eta_i \bar{\rho}_i \sigma_i}{\frac{1}{p} \sum_{l=1}^p \eta_l} \parens{\frac{4 \sqrt{1-\alpha_D}}{(K+1)^{1/2} (1 - \sqrt{1-\alpha_D})} + \frac{2}{\sqrt{n} (K+1)^{1/4}}}.
    \end{eqnarray*}
\end{proof}

\newpage

\section{Useful Facts and Lemmas}

For all $X,Y \in \cS$, $t>0$ and $\alpha\in(0,1]$, we have:
\begin{align}
    \label{eq:young}
    \norm{X+Y}^2 &\leq (1+t) \norm{X}^2 + (1+t^{-1}) \norm{Y}^2, \\
    \label{eq:fenchel}
    \inp{X}{Y} &\leq \frac{\norm{X}^2}{2t} + \frac{t\norm{Y}^2}{2}, \\
    \label{eq:ineq1}
    \left( 1 - \alpha \right)\left( 1 + \frac{\alpha}{2} \right) &\leq 1 - \frac{\alpha}{2}, \\
    \label{eq:ineq2}
    \left( 1 - \alpha \right)\left( 1 + \frac{2}{\alpha} \right) &\leq \frac{2}{\alpha}, \\
    \label{eq:inplmo}
    \inp{G}{\lmo{\cB(X,t)}{G}} &= -t \norm{G}_\star \\
    \label{eq:inpsharp}
    \inp{X}{X^\sharp} &= \norm{X^\sharp}^2, \\
    \label{eq:normsharp}
    \norm{X}_{\star} &= \norm{X^\sharp}.
\end{align}

\begin{lemma}[\citet{riabinin2025gluon}, Lemma 3]\label{lemma:ineq}
    Suppose that $x_1, \ldots, x_p, y_1, \ldots, y_p \in \R$, $\max_{i\in[p]} |x_i| > 0$ and $z_1, \ldots, z_p > 0$. Then
    \begin{align*}
        \sum_{i=1}^p \frac{y_i^2}{z_i} \geq \frac{\parens{\sum_{i=1}^p x_i y_i}^2}{\sum_{i=1}^p z_i x_i^2}.
    \end{align*}
\end{lemma}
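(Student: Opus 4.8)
The plan is to recognize this as a weighted form of the Cauchy--Schwarz inequality (the Engel form, sometimes called Sedrakyan's or Titu's lemma), and to derive it in essentially one line. First I would note that the hypotheses guarantee the right-hand side is well defined: since $z_i > 0$ for every $i$ and $\max_{i\in[p]} |x_i| > 0$, at least one term $z_i x_i^2$ is strictly positive, so $\sum_{i=1}^p z_i x_i^2 > 0$ and division by it is legitimate. (The strictness is exactly why that hypothesis is imposed; otherwise the denominator could vanish.)

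Next I would rewrite each summand in the cross term as $x_i y_i = \parens{y_i/\sqrt{z_i}}\cdot\parens{x_i\sqrt{z_i}}$, which is valid because each $z_i$ is strictly positive, and then apply Cauchy--Schwarz to the two vectors $\parens{y_i/\sqrt{z_i}}_{i=1}^p$ and $\parens{x_i\sqrt{z_i}}_{i=1}^p$ in $\R^p$:
\begin{align*}
    \parens{\sum_{i=1}^p x_i y_i}^2 \leq \parens{\sum_{i=1}^p \frac{y_i^2}{z_i}}\parens{\sum_{i=1}^p z_i x_i^2}.
\end{align*}
Dividing both sides by $\sum_{i=1}^p z_i x_i^2 > 0$ gives the stated inequality. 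Since this is precisely Lemma~3 of \citet{riabinin2025gluon}, one could alternatively just invoke that reference, but the self-contained argument above is immediate and arguably cleaner to include.

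I do not anticipate any genuine obstacle: signs of the $y_i$ and vanishing of individual $x_i$ are harmless for Cauchy--Schwarz, and the only point requiring care — positivity of $\sum_i z_i x_i^2$ so that the final division is valid — is handled directly by the assumptions $z_i > 0$ and $\max_{i\in[p]}|x_i| > 0$. No limiting or degenerate cases need separate treatment.
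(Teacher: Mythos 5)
Your proof is correct: the substitution $x_i y_i = (y_i/\sqrt{z_i})(x_i\sqrt{z_i})$ followed by Cauchy--Schwarz and division by $\sum_{i=1}^p z_i x_i^2 > 0$ (guaranteed by $z_i>0$ and $\max_i |x_i|>0$) is exactly the standard argument for this inequality. The paper itself gives no proof, simply citing Lemma~3 of \citet{riabinin2025gluon}, and your self-contained derivation matches that standard route, so there is nothing to add.
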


\begin{lemma}[Variance decomposition]\label{lemma:vardecomp}
    For any random vector $X\in\cS$ and any non-random $c\in\cS$, we have
    \begin{align*}
       \Exp{\norm{X-c}_2^2} = \Exp{\norm{X - \Exp{X}}_2^2} + \norm{\Exp{X}-c}_2^2.
    \end{align*}
\end{lemma}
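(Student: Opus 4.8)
The plan is to use the elementary bias--variance split. First I would set $\bar{X} \eqdef \Exp{X}$ and write the centered decomposition $X - c = (X - \bar{X}) + (\bar{X} - c)$, in which the first bracket has mean zero and the second is deterministic. Expanding the squared Euclidean norm through the inner product on $\cS$ gives
\begin{align*}
    \norm{X-c}_2^2 = \norm{X - \bar{X}}_2^2 + 2 \inp{X - \bar{X}}{\bar{X} - c} + \norm{\bar{X} - c}_2^2.
\end{align*}

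Next I would take expectations of both sides and invoke linearity. Because $\bar{X} - c$ is non-random, the cross term becomes $2 \inp{\Exp{X - \bar{X}}}{\bar{X} - c}$, and since $\Exp{X - \bar{X}} = \bar{X} - \bar{X} = 0$ this term vanishes. The last term is deterministic and therefore passes through the expectation unchanged. Collecting what survives yields
\begin{align*}
    \Exp{\norm{X-c}_2^2} = \Exp{\norm{X - \Exp{X}}_2^2} + \norm{\Exp{X}-c}_2^2,
\end{align*}
which is precisely the claimed identity.

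There is no genuine obstacle here; the only point deserving care is the justification that the cross term drops, which is exactly where the hypothesis ``$c$ non-random'' is used (it allows one to pull $\bar{X} - c$ out of the expectation), and which implicitly presumes $\Exp{\norm{X}_2^2} < \infty$ so that all the quantities are well defined --- a mild condition met whenever the lemma is applied in this paper. Since $\cS$ is a finite-dimensional inner product space, no further issues with interchanging expectations and the (finite) coordinate sums defining $\norm{\cdot}_2$ and $\inp{\cdot}{\cdot}$ arise.
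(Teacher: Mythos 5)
Your proof is correct: the standard decomposition $X - c = (X - \Exp{X}) + (\Exp{X} - c)$ with the cross term vanishing in expectation is exactly the canonical argument behind this identity, which the paper states as a known fact without proof. No issues to flag.
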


\begin{lemma}[\citet{riabinin2025gluon}, Lemma 1]\label{lemma:layer_asym_gen_smooth}
    Let \Cref{as:layer_gen_smoothness} hold. Then, for any $X, Y \in \cS$,
    \begin{align*}
        \left|f(Y)-f(X)-\inp{\nabla f(X)}{Y-X}\right| \leq \sum_{i=1}^p \frac{L^0_i + L^1_i \norm{\nabla_i f(X)}_{(i) \star}}{2} \norm{X_i - Y_i}_{(i)}^2.
    \end{align*}
\end{lemma}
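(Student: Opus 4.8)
The statement is a layer-wise descent lemma, and the natural route is the fundamental theorem of calculus along the segment joining $X$ and $Y$. First I would observe that \Cref{as:layer_gen_smoothness} forces $\nabla f$ to be continuous on $\cS$: letting $Y\to X$ in the defining inequality shows $\norm{\nabla f(Y)-\nabla f(X)}_{\star}\to 0$. Hence for fixed $X,Y\in\cS$ the map $t\mapsto f(X+t(Y-X))$ is $C^1$ on $[0,1]$, and writing $Z_t\eqdef X+t(Y-X)$ we get
\[
    f(Y)-f(X)-\inp{\nabla f(X)}{Y-X} = \int_0^1 \inp{\nabla f(Z_t)-\nabla f(X)}{Y-X}\,dt .
\]
Since $\cS=\bigotimes_{i=1}^p\cS_i$ with $\inp{\cdot}{\cdot}=\sum_{i=1}^p\inp{\cdot}{\cdot}_{(i)}$, and $Z_t-X=[\,t(Y_1-X_1),\ldots,t(Y_p-X_p)\,]$, the integrand decomposes into a sum over layers.

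Next I would bound each layer's contribution. By the definition of the dual norm $\norm{\cdot}_{(i)\star}$,
\[
    \left|\inp{\nabla_i f(Z_t)-\nabla_i f(X)}{Y_i-X_i}_{(i)}\right| \leq \norm{\nabla_i f(Z_t)-\nabla_i f(X)}_{(i)\star}\,\norm{Y_i-X_i}_{(i)} .
\]
Then I would invoke \Cref{as:layer_gen_smoothness} on the pair $(X,Z_t)$ — keeping $X$ as the base point, so that the gradient-norm factor is evaluated at the fixed endpoint $X$ — which gives $\norm{\nabla_i f(Z_t)-\nabla_i f(X)}_{(i)\star}\leq\parens{L^0_i+L^1_i\norm{\nabla_i f(X)}_{(i)\star}}\norm{(Z_t)_i-X_i}_{(i)} = t\parens{L^0_i+L^1_i\norm{\nabla_i f(X)}_{(i)\star}}\norm{Y_i-X_i}_{(i)}$. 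Combining, summing over $i$, and integrating,
\[
    \left|f(Y)-f(X)-\inp{\nabla f(X)}{Y-X}\right| \leq \sum_{i=1}^p\parens{L^0_i+L^1_i\norm{\nabla_i f(X)}_{(i)\star}}\norm{Y_i-X_i}_{(i)}^2\int_0^1 t\,dt ,
\]
and $\int_0^1 t\,dt=\tfrac12$ yields the claim. The absolute value on the left survives because the integrand was bounded in absolute value pointwise in $t$.

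The computation is routine; the only point requiring care is the choice of base point when applying the asymmetric $(L^0,L^1)$--smoothness condition. Taking $X$ (the fixed endpoint of the segment) makes $\norm{\nabla_i f(X)}_{(i)\star}$ a constant that pulls out of the $t$-integral, while the other choice would leave $\norm{\nabla_i f(Z_t)}_{(i)\star}$ inside the integral and force an auxiliary Gr\"onwall-type estimate comparing $\norm{\nabla_i f(Z_t)}_{(i)\star}$ to $\norm{\nabla_i f(X)}_{(i)\star}$. Since \Cref{as:layer_gen_smoothness} holds for every pair of points in $\cS$, the convenient base point is directly available and no such detour is needed.
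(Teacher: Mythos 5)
Your proof is correct: the integral-remainder identity, the layer-wise decomposition of the inner product, and the application of Assumption \ref{as:layer_gen_smoothness} to the pair $(X, Z_t)$ with $X$ as the base point (so the factor $\norm{\nabla_i f(X)}_{(i)\star}$ is constant in $t$) is exactly the standard argument, and it is the one used for this lemma in its source. Note that this paper itself does not reprove the result but imports it from \citet{riabinin2025gluon} (their Lemma 1), so your write-up matches the intended proof rather than offering a genuinely different route.
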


\begin{lemma}\label{lemma:rec2}
    Let $\{A^k\}_{k\geq0}$, $\{B_i^k\}_{k\geq0}$, $i\in[p]$ be non-negative sequences such that
    \begin{eqnarray*}
        A^{k+1} \leq \parens{1 + a_1} A^k - \sum_{i=1}^p B_i^k + a_2,
    \end{eqnarray*}
    where $a_1, a_2 \geq 0$. Then
    \begin{eqnarray*}
        \min_{k=0,\ldots,K} \sum_{i=1}^p B_i^k
        \leq \frac{\exp(a_1 (K+1))}{(K+1)} A^0 + a_2.
    \end{eqnarray*}
\end{lemma}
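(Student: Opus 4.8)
The plan is to run a standard weighted telescoping argument that absorbs the multiplicative factor $(1+a_1)$. First I would rearrange the hypothesis into the form $\sum_{i=1}^p B_i^k \leq (1+a_1) A^k - A^{k+1} + a_2$, and introduce the geometric weights $w^k \eqdef (1+a_1)^{-(k+1)}$ for $k\geq -1$, so that $w^{-1} = 1$ and, crucially, $w^k (1+a_1) = w^{k-1}$. Multiplying the rearranged inequality by $w^k \geq 0$ gives $w^k \sum_{i=1}^p B_i^k \leq w^{k-1} A^k - w^k A^{k+1} + w^k a_2$, which now telescopes cleanly.

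Next I would sum this over $k = 0, 1, \ldots, K$. The first two terms on the right collapse to $w^{-1} A^0 - w^K A^{K+1} = A^0 - w^K A^{K+1}$, and since $A^{K+1} \geq 0$ and $w^K \geq 0$ this is at most $A^0$. Hence $\sum_{k=0}^{K} w^k \sum_{i=1}^p B_i^k \leq A^0 + a_2 \sum_{k=0}^{K} w^k$. Bounding the left-hand side from below by $\big(\min_{k=0,\ldots,K} \sum_{i=1}^p B_i^k\big) \sum_{k=0}^{K} w^k$ and dividing through by $W^K \eqdef \sum_{k=0}^{K} w^k > 0$ yields $\min_{k=0,\ldots,K} \sum_{i=1}^p B_i^k \leq A^0 / W^K + a_2$.

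Finally I would lower-bound $W^K$. Since the weights are decreasing, $W^K \geq (K+1) w^K = (K+1)(1+a_1)^{-(K+1)}$, and using $1 + a_1 \leq \exp(a_1)$ gives $W^K \geq (K+1)\exp(-a_1(K+1))$. Substituting this bound produces $\min_{k=0,\ldots,K} \sum_{i=1}^p B_i^k \leq \frac{\exp(a_1(K+1))}{K+1} A^0 + a_2$, as claimed. There is no real obstacle here: the argument is entirely routine, and the only point requiring a moment's care is choosing the weights so that $w^k(1+a_1) = w^{k-1}$ and then not losing too much when passing from $\sum_k w^k$ to the crude lower bound $(K+1)w^K$ — this is exactly the step that injects the $\exp(a_1(K+1))$ factor into the final rate. (Note that this lemma is applied in the proof of \Cref{thm:ef_gluon_no_p_l0l1} with $A^k = \Exp{\Psi^k}$ and $B_i^k = \tfrac{\eta_i}{\sqrt{K+1}}\Exp{\norm{\nabla_i f(X^k)}_{(i)\star}}$, and there $a_1 = \tfrac{4}{K+1}\max_{i,j}(\eta_i^2 C_i L_{i,j}^1)$ so that $a_1(K+1) = \cO(1)$ and the exponential factor is just a constant.)

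\qed
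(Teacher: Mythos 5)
Your proposal is correct and follows essentially the same route as the paper's proof: the same geometric weights $w^k = (1+a_1)^{-(k+1)}$ with $w^{-1}=1$, the same telescoping after multiplying by $w^k$, the same lower bound $\sum_{k=0}^K w^k \geq (K+1)(1+a_1)^{-(K+1)}$, and the same use of $1+a_1 \leq \exp(a_1)$ to obtain the stated factor. No gaps; the argument is complete as written.
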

\begin{proof}
    Let us define a weighting sequence $w^k \eqdef \frac{w^{k-1}}{1 + a_1}$, where $w^{-1} = 1$. Then
    \begin{eqnarray*}
        w^k A^{k+1} \leq w^k \parens{1 + a_1} A^k - w^k \sum_{i=1}^p B_i^k + w^k a_2
        = w^{k-1} A^k - w^k \sum_{i=1}^p B_i^k + w^k a_2,
    \end{eqnarray*}
    and hence
    \begin{eqnarray*}
        \min_{k=0,\ldots,K} \sum_{i=1}^p B_i^k
        &\leq& \frac{1}{\sum_{k=0}^{K} w^k} \sum_{k=0}^{K} w^k \sum_{i=1}^p B_i^k \\
        &\leq& \frac{1}{\sum_{k=0}^{K} w^k} \sum_{k=0}^{K} \parens{w^{k-1} A^k - w^k A^{k+1}} + \frac{1}{\sum_{k=0}^{K} w^k} \sum_{k=0}^{K} w^k a_2 \\
        &=& \frac{1}{\sum_{k=0}^{K} w^k} \parens{w^{-1} A^0 - w^{K} A^{K+1}} + a_2.
    \end{eqnarray*}
    Using the fact that $w^{-1} = 1$ and $\sum_{k=0}^{K} w^k = \sum_{k=0}^{K} \frac{1}{(1 + a_1)^{k+1}} \geq \frac{K+1}{(1 + a_1)^{K+1}}$, we get
    \begin{eqnarray*}
        \min_{k=0,\ldots,K} \sum_{i=1}^p B_i^k
        \leq \frac{(1 + a_1)^{K+1}}{(K+1)} \parens{A^0 - w^{K} A^{K+1}} + a_2
        \leq \frac{\exp(a_1 (K+1))}{(K+1)} A^0 + a_2,
    \end{eqnarray*}
    which finishes the proof.
\end{proof}

\section{Experiments}\label{sec:appendix_exp_details}

This section provides additional experimental results and setup details complementing \Cref{sec:experiments}.

\subsection{Setup Details}

\Cref{tab:model_config,tab:training_config} summarize the model and optimizer hyperparameters. The \emph{scale} parameters (Hidden/Head Scale) in \Cref{tab:training_config} specify the LMO trust-region radius as 
\[
\textnormal{radius} = \textnormal{scale} \times \textnormal{learning rate},
\]
following \citet{pethick2025generalized,riabinin2025gluon}.

\begin{table*}[h!]
    \centering
    \caption{\texttt{NanoGPT}-124M model configuration.}
    \label{tab:model_config}
    \begin{tabular}{p{0.48\textwidth} p{0.48\textwidth}}
        \toprule
        \textbf{Hyperparameter} & \textbf{Value} \\
        \midrule
        Total Parameters          & 124M \\
        Vocabulary Size           & 50,304 \\
        Number of Transformer Layers & 12 \\
        Attention Heads           & 6 \\
        Hidden Size               & 768 \\
        FFN Hidden Size           & 3,072 \\
        Positional Embedding      & RoPE \citep{su2024roformer} \\
        Activation Function       & Squared ReLU \citep{so2021searching} \\
        Normalization             & RMSNorm \citep{zhang2019root} \\
        Bias Parameters           & None \\
        \bottomrule
    \end{tabular}
\end{table*}

\begin{table*}[h!]
    \centering
    \caption{Optimizer configuration.}
    \label{tab:training_config}
    \begin{tabular}{p{0.48\textwidth} p{0.48\textwidth}}
        \toprule
        \textbf{Hyperparameter} & \textbf{Value} \\
        \midrule
        Sequence Length          & 1024 \\
        Batch Size               & 256 \\
        Optimizer                & {\newalgsmall} \\
        Weight Decay             & 0 \\
        Hidden Layer Norm        & Spectral norm \\
        Hidden Layer Scale       & 50 \\
        Newton–Schulz Iterations & 5 \\
        Embedding and Head Layers Norm          & $\ell_\infty$ norm \\
        Embedding and Head Layers Scale         & 3000 \\
        Initial Learning Rate    & For non-compressed: $3.6 \times 10^{-4}$ \\
        Learning Rate Schedule   & Constant followed by linear decreasing \\
        Learning Rate Constant Phase Length   & 40\% of tokens \\
        Momentum                 & 0.9 \\
        \bottomrule
    \end{tabular}
\end{table*}

\subsection{Top$K$ Compression Details}\label{app:compression_details}

Top$K$ compressor requires transmitting both the selected values and their corresponding indices to reconstruct the original tensors. At high compression levels, this introduces significant communication overhead, especially in compositional schemes such as Top$K$ combined with the Natural compressor, where the cost of transmitting indices can even exceed that of the quantized values. To illustrate this effect, we analyze the largest parameter matrices in the \texttt{NanoGPT} model: the token embedding layer and the classification head, each of size $50,304 \times 768$. Representing an index for any element in these matrices requires $\log_2(50{,}304 \cdot 768) < 26$ bits. We use this calculation when visualizing communication costs.

\subsection{Learning Rate Ablation}\label{app:learning_rate}

To ensure a fair and robust comparison, we perform a learning rate hyperparameter sweep for each compression configuration, as detailed in \Cref{fig:lr_ablation}. For every method, the search space is initialized at the optimal learning rate of the uncompressed baseline (taken from the \algnamesmall{Gluon} repository \citep{code_gluon}) and spans downward by up to an order of magnitude. We consistently observe that more aggressive compression schemes require a smaller learning rate for stable convergence.

This tuning protocol is applied uniformly across all experiments for models trained with 2.5B (\Cref{app:2.5B_exps}) and 5B token budgets.

\begin{figure}[h]
    \centering
    \includegraphics[width=1\linewidth]{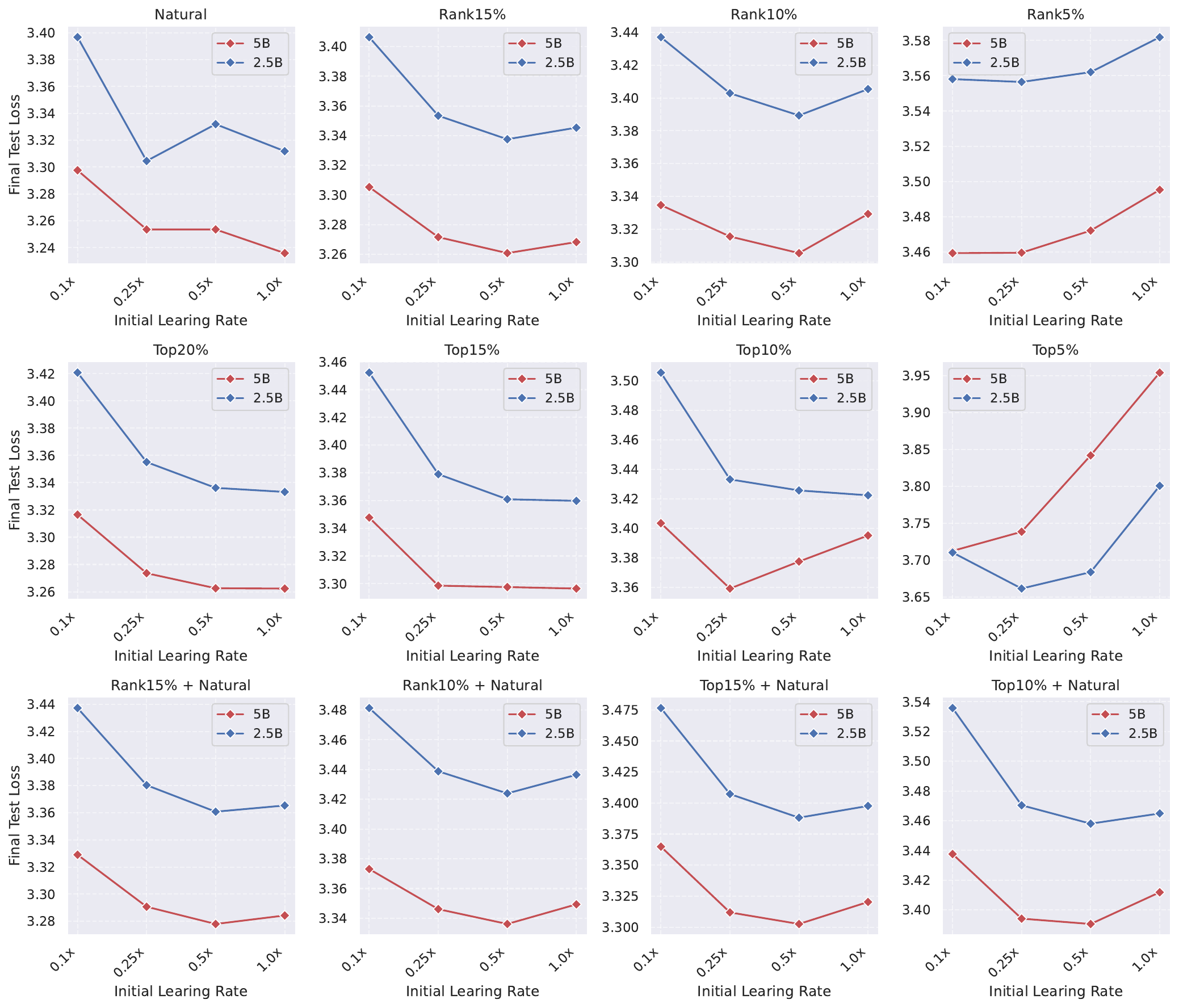}
    \caption{\textbf{Learning rate ablation.} The grid spans from the optimal learning rate of the non-compressed baseline, $3.6 \times 10^{-4}$ (denoted as $1.0\times$), down to $0.1\times$. Red curves correspond to experiments processing 5B tokens (\Cref{sec:experiments}), while blue curves correspond to 2.5B tokens (\Cref{app:2.5B_exps}).}
    \label{fig:lr_ablation}
\end{figure}

\subsection{Compression Level Ablation}\label{sec:compression_ablation}

This section presents an ablation study on the compression ratio, governed by the parameter $K$. \Cref{fig:5B_K_ablation} illustrates the convergence curves for various compression configurations, each trained with its optimal learning rate (see \Cref{app:learning_rate}). \Cref{fig:compression_ablation_full} summarizes the final loss as a function of~$K$.

\begin{figure}[h!]
    \centering
    \begin{minipage}{0.48\linewidth}
        \centering
        \includegraphics[width=\linewidth]{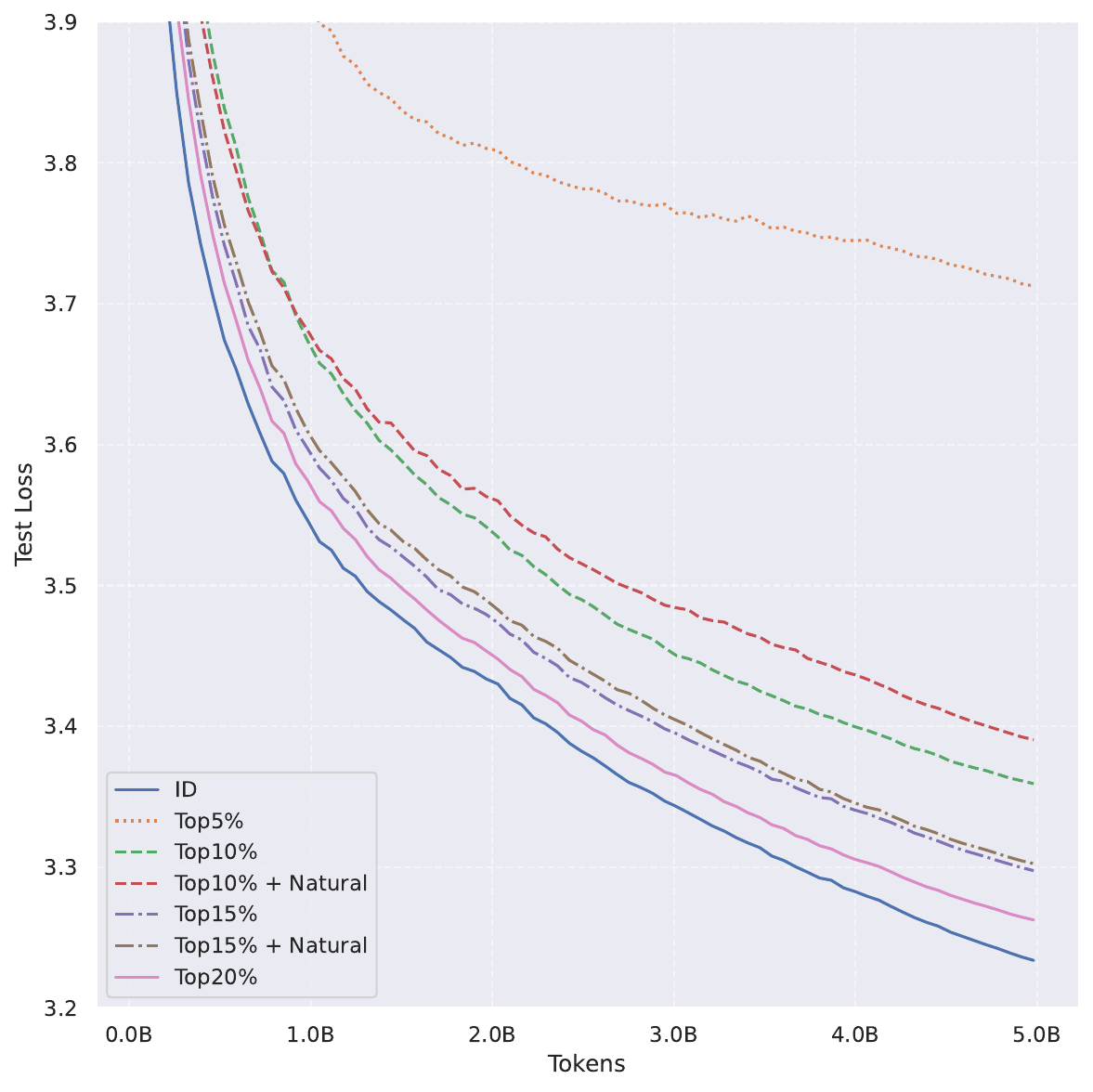}
    \end{minipage}
    \hfill
    \begin{minipage}{0.48\linewidth}
        \centering
        \includegraphics[width=\linewidth]{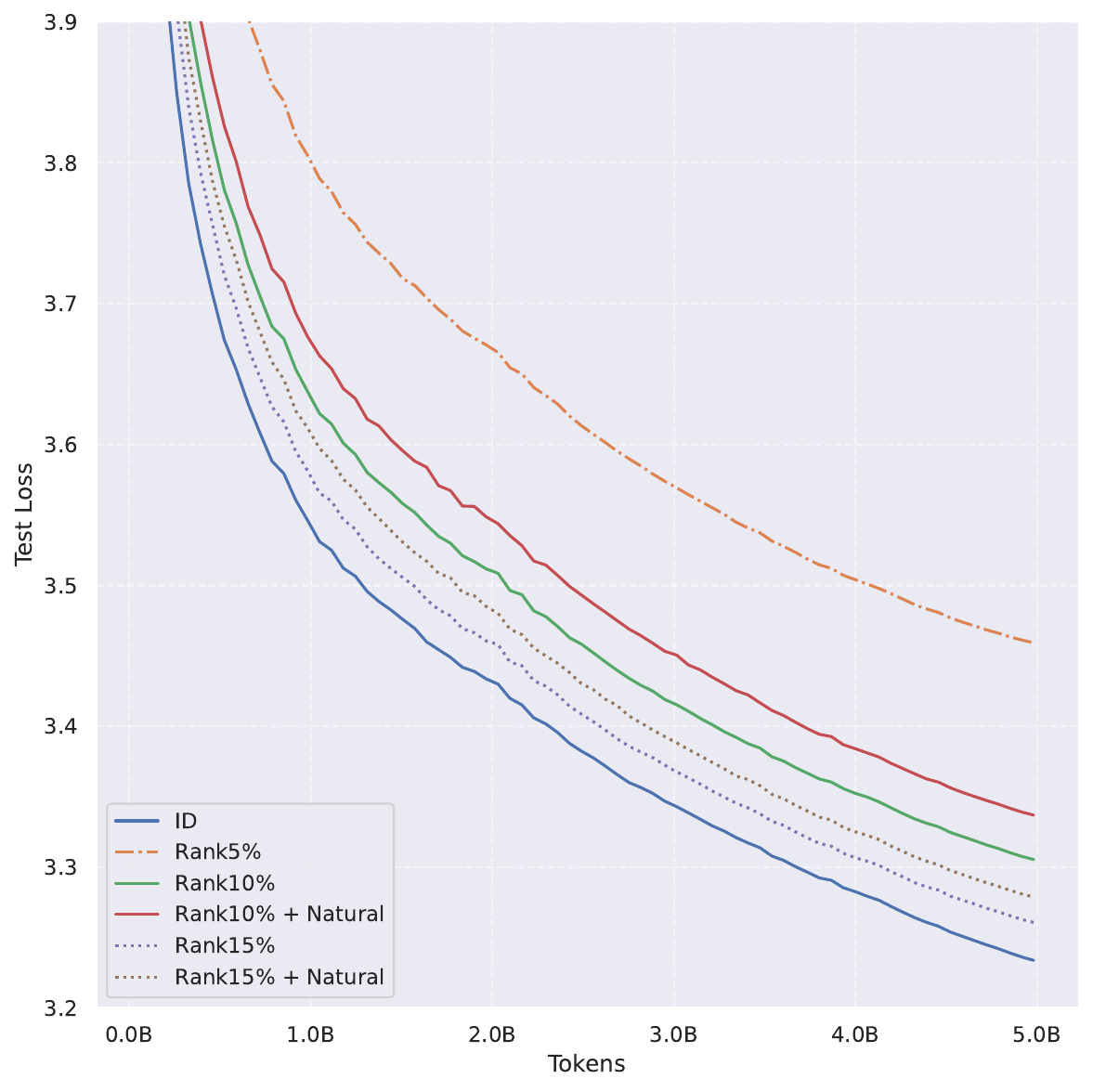}
    \end{minipage}
    \caption{\textbf{Test loss vs. $\#$ of tokens processed.} Top$X\%$/Rank$X\%$ = Top$K$/Rank$K$ compressor with sparsification level $X\%$;  ID = no compression. ``+ Natural'' corresponds to applying Natural compression after Top$K$/Rank$K$ compressor. Experiments use a tokens budget of 5B.}
    \label{fig:5B_K_ablation}
\end{figure}

\begin{figure}[h]
    \centering
    \includegraphics[width=\linewidth]{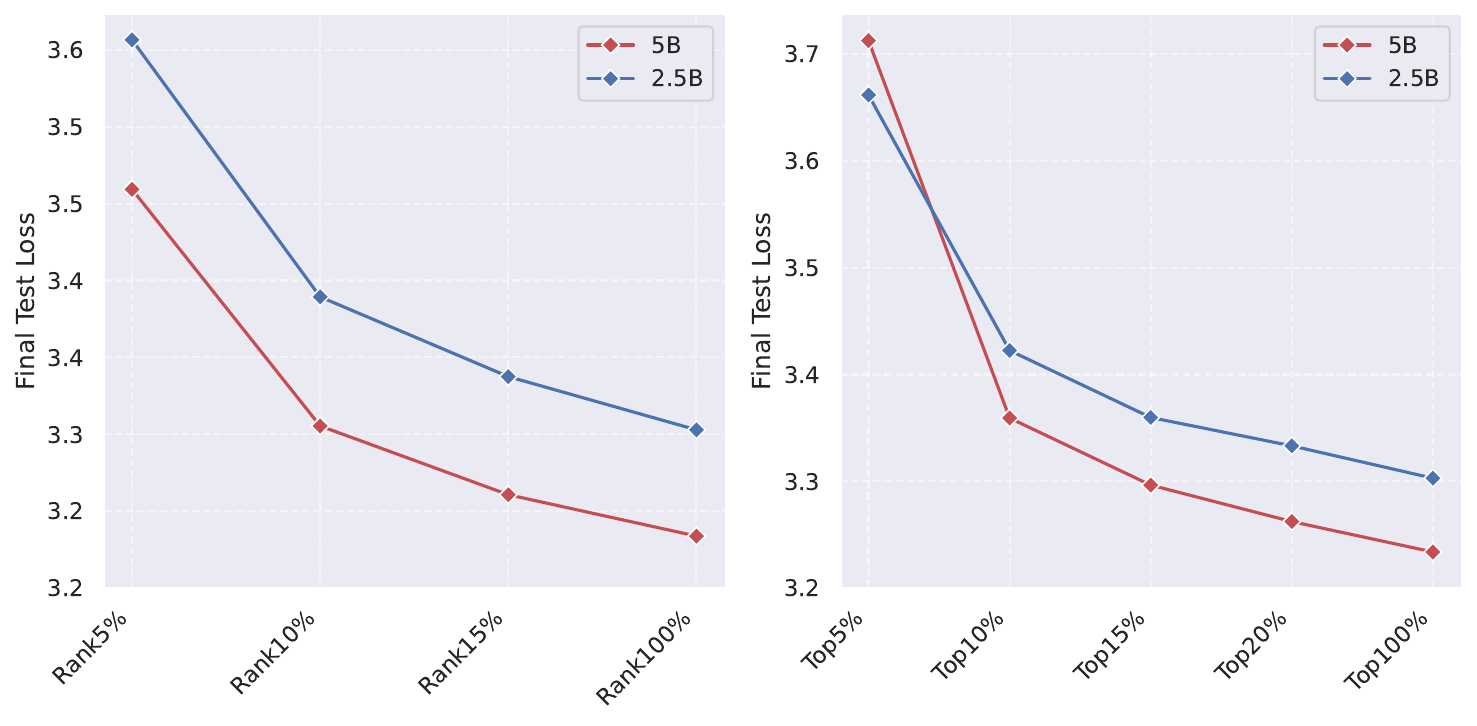}
    \caption{\textbf{Final test loss vs. compression parameter $K$.} Results are shown after processing 5B tokens (red) and 2.5B tokens (blue) for Rank$K$ (left) and Top$K$ (right) compressors. $K=100\%$ corresponds to the non-compressed baseline. In the Top$K$ plot, the 2.5B setup outperforms 5B due to differences in scheduler behavior, as the runs execute a different number of steps.}
    \label{fig:compression_ablation_full}
\end{figure}

Our results show that for Top$K$ and Rand$K$ compressors, an aggressive compression ratio of $K=5\%$ quite severely impairs convergence (see \Cref{fig:compression_ablation_full}), while configurations with $K \geq 10\%$ achieve satisfactory loss reduction. Notably, when these compressors are composed with the Natural compressor, convergence degradation is more pronounced for $K=10\%$ than for the less aggressive $K=15\%$ setup.

We also examine a more challenging loss threshold of $3.28$ (\Cref{fig:5B_threhold_2.28}). The communication cost improvement at this threshold is even more pronounced than for $3.31$ (\Cref{fig:5B_tokens_to_reach_main}), but this comes at a cost: only a subset of compressors can reach the threshold within the 5B token budget.

\begin{figure}[h!]
    \centering  
    \includegraphics[width=\linewidth]{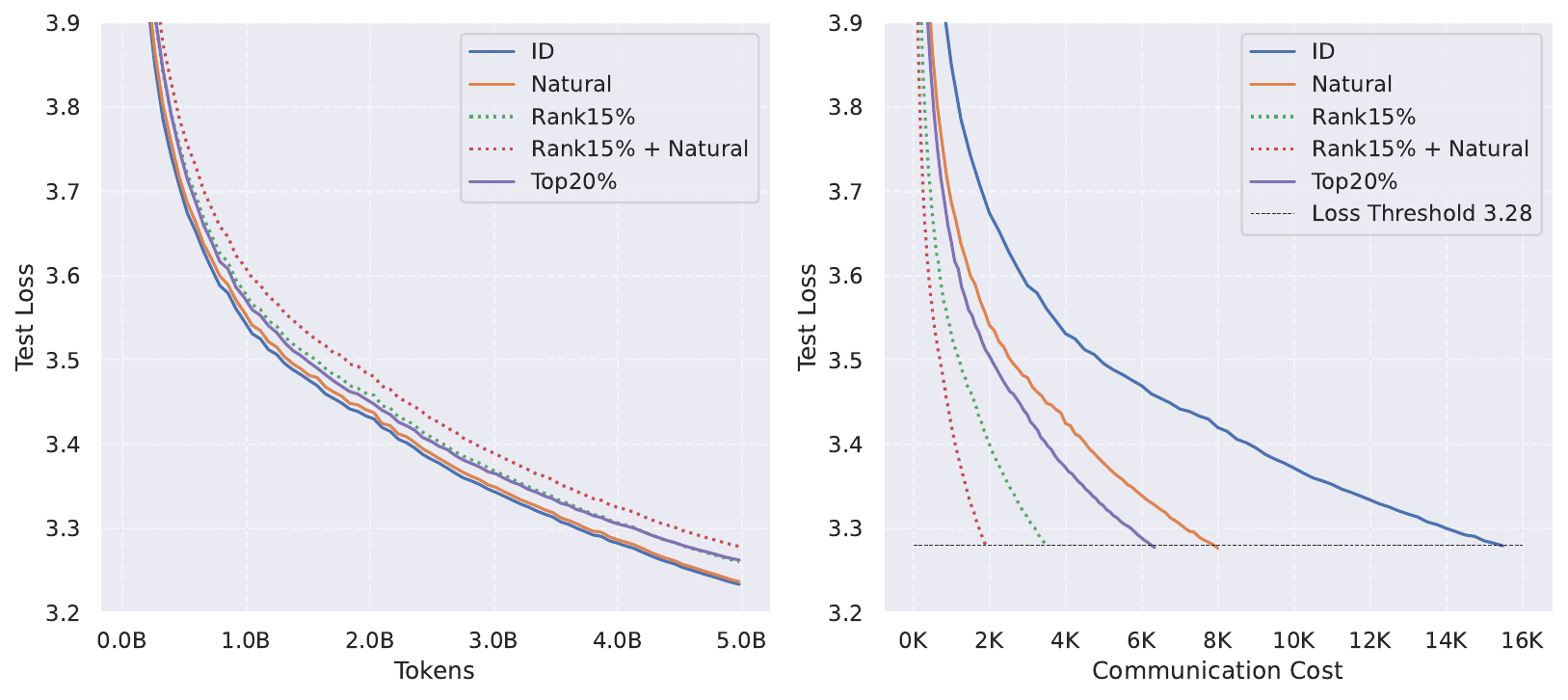}
    \caption{Left: \textbf{Test loss vs. $\#$ of tokens processed.} Right: \textbf{Test loss vs. $\#$ of bytes sent to the server from each worker} normalized by model size to reach test loss $3.28$. Rank$X\%$/Top$X\%$ = Rank$K$/Top$K$ compressor with sparsification level $X\%$;  ID = no compression.}
    \label{fig:5B_threhold_2.28}
\end{figure}

\subsection{2.5B Tokens Experiment}\label{app:2.5B_exps}

In \Cref{sec:experiments}, we report runs with a 5B token budget ($>40\times$ model size). Testing convergence over a large number of tokens is important, as the limitations of compressors relative to the baseline become more pronounced after many steps. At the same time, evaluating compressed runs with a smaller token budget is useful for cases with limited resources. Accordingly, we provide a learning rate ablation in \Cref{fig:lr_ablation}, a summarized comparison in \Cref{fig:compression_ablation_full}, and convergence trajectories for the 2.5B-token setup in \Cref{fig:2.5B_K_ablation}.

\begin{figure}[h!]
    \centering
    \begin{minipage}{0.48\linewidth}
        \centering
        \includegraphics[width=\linewidth]{5B_topk_conv.pdf}
    \end{minipage}
    \hfill
    \begin{minipage}{0.48\linewidth}
        \centering
        \includegraphics[width=\linewidth]{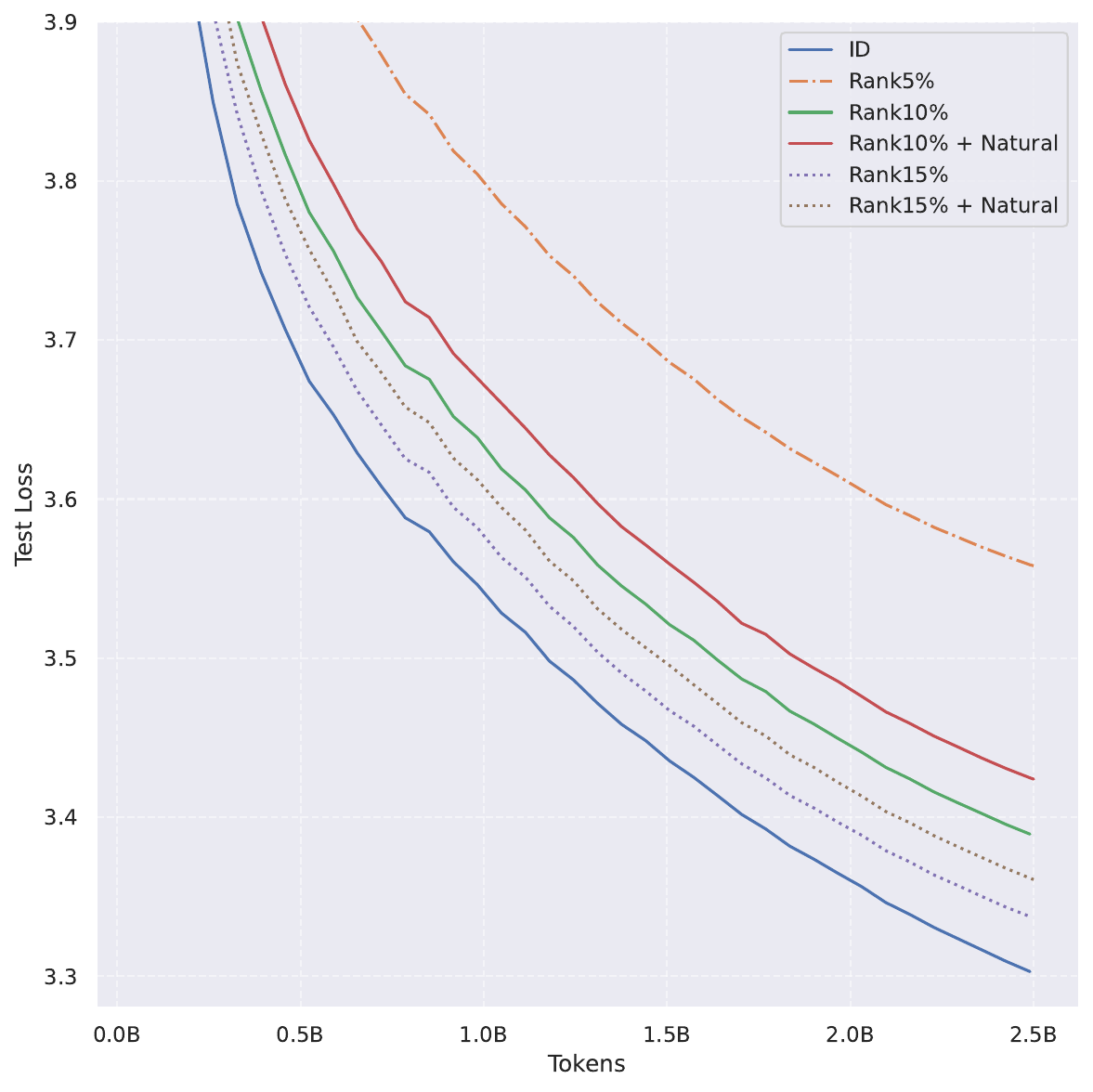}
    \end{minipage}
    \caption{\textbf{Test loss vs. $\#$ of tokens processed.} Top$X\%$/Rank$X\%$ = Top$K$/Rank$K$ compressor with sparsification level $X\%$;  ID = no compression. ``+ Natural'' corresponds to applying Natural compression after Top$K$/Rank$K$ compressor. Experiments use a tokens budget of 2.5B.}
    \label{fig:2.5B_K_ablation}
\end{figure}

\subsection{Limitations}

Reporting results for all compressors on the same token budget (for instance, 5B) and then measuring the prefix needed to reach a given loss threshold may not be fully consistent, as results can be affected by the scheduler. To mitigate this, we use a relatively strong loss threshold that ensures a significant number of tokens are processed beyond the constant learning rate phase. Additionally, tuning the initial learning rate can help stabilize the results.

\end{document}